\xpatchcmd\thmt@restatable{%
\csname #2\@xa\endcsname\ifx\@nx#1\@nx\else[{#1}]\fi
}{%
\ifthmt@thisistheone
\csname #2\@xa\endcsname\ifx\@nx#1\@nx\else[{#1}]\fi
\else
\csname #2\@xa\endcsname[{Restated}]
\fi}{}{}
\definecolor{amaranth}{rgb}{0.8, 0, 0.34}
\newcolumntype{C}{>{\centering\arraybackslash}X}
\def\FedAvg{\textnormal{\textsc{FedAvg}}\xspace}
\def\FedAvgBold{{\bfseries\textsc{FedAvg}}\xspace}
\def\FedAvgwDG{\textnormal{\textsc{FedAvg-D}}\xspace}
\def\Scaffold{\textsc{Scaffold}\xspace}
\def\SGD{\textsc{SGD}\xspace}
\def\titan
\def\step{\gamma}
\newcommandx{\mhstep}[1]{\gamma_{#1}}
\def\nrounds{T}
\def\nagent{N}
\def\nlupdates{H}
\newcommandx{\restew}[2][1=c]{\mathcal{R}^{#1}_{#2}}
\newcommandx{\resteintw}[2][1=c]{\textsf{r}^{#1}_{#2}}
\newcommandx{\reste}[3][1=c]{\mathcal{R}^{#1}_{#2}(#3)}
\newcommandx{\resteint}[3][1=c]{\resteintw[#1]{#2}(#3)}
\def\Msmoothcstvar{\tau}
\def\Msmoothcstvarbis{\widetilde{\tau}}
\def\smoothcstvar{k}
\def\invopcov{\mathbf{A}}
\def\paramw{\theta}
\def\varparamw{\vartheta}
\newcommandx{\param}[1][1=]{\paramw_{#1}}
\newcommandx{\varparam}[1][1=]{\varparamw}
\newcommandx{\sparam}[1][1=]{\tilde{\paramw}}
\newcommandx{\svarparam}[1][1=]{\tilde{\varparamw}}
\def\paramlim{\theta^{\star}}
\def\cvar{\xi}
\newcommandx{\randState}[2][1=,2=]{Z_{#1}^{#2}}
\newcommandx{\randStatebis}[2][1=,2=]{Y_{#1}^{#2}}
\newcommandx{\globRandState}[1][1=]{Z_{#1}}
\def\markovkernel{\kappa}
\newcommandx{\brset}[1]{\mcB(\rset^{#1})}
\def\covfunc{\mathcal{C}}
\def\strcvx{\mu}
\def\lip{L}
\def\explip{L}
\newcommandx{\pseudogradw}[1][1=]{\msg^{(#1)}}
\newcommandx{\pseudograd}[2][1=]{\pseudogradw[#1](#2)}
\newcommandx{\spseudogradw}[2][1=]{\msg_{#2}^{(#1)}}
\newcommandx{\spseudograd}[3][1=]{\spseudogradw[#1]{#3}(#2)}
\newcommandx{\pseudogradws}[1][1=]{\msG^{(#1)}}
\newcommandx{\pseudograds}[3][1=]{\pseudogradws[#1](#2; #3)}
\newcommandx{\dfedavgop}[1][1=]{\mathsf{T}^{(#1)}}
\newcommandx{\dfedavgopc}[1][1=]{\mathsf{T}^{(#1)}_c}
\newcommandx{\sdfedavgop}[2][1=]{\mathsf{\widetilde T}_{#1}^{(#2)}}
\newcommandx{\hsdfedavgop}[2][1=]{\mathsf{\widetilde V}_{#1}^{(#2)}}
\newcommand{\statdist}[1]{\pi^{(#1)}}
\newcommand{\statdistlim}[1]{\bar{\paramw}_{\textnormal{sto}}^{(#1)}}
\newcommand{\statdistlimv}[1]{\bar{\vartheta}_{\textnormal{sto}}^{(#1)}}
\def\biasHetero{\mathrm{b}_{\mathrm{h}}}
\def\biasSto{\mathrm{b}_{\mathrm{s}}}
\newcommand{\covstatdistlim}[1]{\bar{\boldsymbol{\Sigma}}_{\textnormal{sto}}^{(#1)}}
\newcommand{\detlim}[1]{\bar{\paramw}_{\textnormal{det}}^{(#1)}}
\newcommandx{\heterboundw}[1][1=1]{\zeta_{\star,#1}}
\def\heterboundgrad{\heterboundw[1]}
\def\heterbound{\heterboundw[2]}
\newcommandx{\hetcof}[1][1=]{\chi_\textnormal{het}^{#1}}
\def\fw{f}
\newcommandx{\nfw}[1][1=c]{f_{#1}}
\newcommandx{\nFw}[1][1=c]{F_{#1}}
\newcommandx{\nFww}[2][1=c]{\nFw[#1]^{#2}}
\newcommandx{\nf}[2][1=c]{\nfw[#1](#2)}
\newcommandx{\nfs}[3][1=c]{\nFw[#1]^{#3}(#2)}
\def\gfww{\nabla \fw}
\newcommandx{\gfw}[1]{\nabla \fw(#1)}
\newcommandx{\gnfw}[1][1=c]{\nabla \nfw[#1]}
\newcommandx{\gnf}[2][1=c]{\nabla \nfw[#1](#2)}
\newcommandx{\gnfs}[3][1=c]{\nabla \nFw[#1]^{#3}(#2)}
\newcommandx{\gnfws}[2][1=c]{\nabla \nFw[#1]^{#2}}
\newcommandx{\gfs}[3][1=c]{\nabla \fw[#1]^{#3}(#2)}
\newcommandx{\hnfw}[1][1=c]{\nabla^2 \nfw[#1]}
\newcommandx{\hnf}[2][1=c]{\nabla^2 \nfw[#1](#2)}
\newcommandx{\hf}[1]{\nabla^2 \fw(#1)}
\newcommandx{\hnfs}[3][1=c]{\nabla^2 \nFw[#1]^{#3}(#2)}
\newcommandx{\hhnfw}[1][1=c]{\nabla^3 \nfw[#1]}
\newcommandx{\hhnf}[2][1=c]{\nabla^3 \nfw[#1](#2)}
\newcommandx{\hhf}[1]{\nabla^3 \fw(#1)}
\newcommandx{\remainder}[2][1=c]{\bar{D}_{(#1)}(#2)}
\newcommandx{\remainders}[3][1=c]{D^{(#1)}_{#3}(#2)}
\newcommandx{\avgremainder}[1]{\bar{D}(#1)}
\newcommandx{\avgremainders}[2]{D_{#2}(#1)}
\newcommandx{\inthess}[2][1=c]{\bar{D}_{#1}^{(#2)}}
\newcommandx{\inthesss}[3][1=c]{D^{(#1)}_{#3}_{#2}}
\newcommandx{\contract}[2][1=c]{\Gamma_{#1}^{#2}}
\newcommandx{\fullcontract}[1]{\Gamma^{#1}}
\newcommandx{\tcontract}[2][1=c]{\widetilde{\Gamma}_{#1}^{#2}}
\newcommandx{\accmat}[2][1=c]{C^{(#1)}_{#2}}
\newcommandx{\taccmat}[2][1=c]{\widetilde{C}^{(#1)}_{#2}}
\newcommandx{\updatefuncnoise}[2][1=c]{\varepsilon_{#1}^{#2}}
\newcommandx{\roundfuncnoise}[2][1=c]{\mathcal{E}^{(#1)}_{#2}}
\newcommandx{\fullfuncnoise}[1]{\mathcal{E}_{#1}}
\newcommandx{\funcnoise}[2][2=c]{\omega^{#2}(#1)}
\newcommandx{\locfuncnoise}[2][2=c]{\varepsilon^{#2}(#1)}
\newcommandx{\roundbias}[1]{b_\biasSto{#1}}
\newcommandx{\fullbias}[1]{\mcB_{#1}}
\newcommandx{\fullfluct}[1]{\mcF \!\ell_{#1}}
\def\detcontractw{\digamma}
\newcommandx{\detcontract}[2][1=c]{\detcontractw_{#1}^{#2}}
\newcommandx{\detfullcontract}[1]{\detcontractw^{#1}}
\newcommandx{\flparam}[2][2=]{\tilde{\param}_{#1}^{#2}}
\newcommandx{\trparam}[1]{\hat{\param}_{#1}}
\newcommandx{\flcvar}[2][1=c]{\widetilde{\cvar}^{(#1)}_{#2}}
\newcommandx{\trcvar}[2][1=c]{\hat{\cvar}^{(#1)}_{#2}}
\newcommandx{\locflparam}[2][1=c]{\widetilde{\param}^{(#1)}_{#2}}
\newcommandx{\loctrparam}[2][1=c]{\hat{\param}^{(#1)}_{#2}}
\newcommandx{\flpp}[1]{\Delta^{\param,\param}_{#1}}
\newcommandx{\flpc}[2][1=c]{\Delta^{\param,\cvar^{(#1)}}_{#2}}
\newcommandx{\flcp}[2][1=c]{\Delta^{\cvar^{(#1)},\param}_{#2}}
\newcommandx{\flcc}[3][1=c,2=c]{\Delta^{\cvar^{(#1)},\cvar^{(#2)}}_{#3}}
\def\noiseVec{\mathcal{E}}
\def\oneVec{\mathbf{1}}
\newcommandx{\VarNoise}[1]{\Sigma_{\noiseVec}^{(#1)}}
\def\gennoiseround{\varepsilon}
\newcommandx{\noiseround}[2][1=]{\gennoiseround^{(#1)}_{#2}}
\newcommandx{\nfuncbw}[1][1=]{\funcbw^{{#1}}}
\newcommandx{\nfuncAw}[1][1=]{\funcAw^{{#1}}}
\newcommandx{\nfuncb}[2][1=]{\nfuncbw[#1](#2)}
\newcommandx{\nfuncA}[2][1=]{\nfuncAw[#1](#2)}
\def\funnoiseomega{\omega}
\def\funcAw{\mathbf{A}}
\newcommandx{\funcnoiselim}[2][2=]{\funnoiseomega^{#2}(#1)}
\newcommand{\gf}[1]{\nabla f(#1)}
\def\rset{\mathbb{R}}
\def\nset{\mathbb{N}}
\newcommandx{\firstsentence}[1]{\textcolor{purple}{#1}}
\newcommandx{\paul}[1]{\todo[color=purple!20]{PM: #1}}
\newcommandx{\alain}[1]{\todo[color=purple!20]{AD: #1}}
\newcommandx{\alaini}[1]{\todo[color=purple!20,inline]{AD: #1}}
\newcommandx{\aymeric}[1]{{\color{orange}{AD: }#1}}
\newcommandx{\som}[1]{\todo[color=green!20]{SS: #1}}
\newcommandx{\pauli}[1]{\todo[inline,color=purple!20]{PM: #1}}
\newcommandx{\todoi}[1]{\todo[inline,color=red!50]{TODO: #1}}
\def\eqsp{\enspace}
\def\rmd{\mathrm{d}}
\def\Id{\mathrm{Id}}
\newcommand{\indi}[1]{\mathbf{1}_{#1}}
\def\PE{\mathbb{E}}
\newcommandx{\CPE}[2]{\PE \left[ #1 ~\middle|~ #2 \right]}
\newcommandx{\norm}[2][2=]{ \lVert #1 \rVert_{#2} }
\newcommandx{\bnorm}[2][2=]{ \Big\lVert #1 \Big\rVert_{#2} }
\newcommandx{\pscal}[2]{ \langle #1 , #2 \rangle }
\newcommandx{\bpscal}[2]{ \Big\langle #1 , #2 \Big\rangle }
\newcommandx{\abs}[1]{ | #1 | }
\newcommandx{\babs}[1]{ \Big| #1 \Big| }
\def\wasserstein{\mathbf{W}_2}
\newtheorem{theorem}{Theorem}%
\newtheorem{lemma}{Lemma}%
\newtheorem{proposition}{Proposition}%
\newtheorem{remark}{Remark}%
\DeclareMathOperator*{\argmin}{\arg\min}
\newcommandx{\nbarA}[1][1=]{\bar{A}_{#1}}
\newcommand{\eqdef}{\overset{\Delta}{=}}
\newcommandx{\xic}[1][1=c]{\xi_{#1}}
\renewcommand{\iint}[2]{\{#1,\ldots,#2\}}
\def\nsets{\nset^*}
\newcommandx{\const}[1][1=]{\mathrm{C}_{#1}}
\newcommand{\expe}[2][]{\ifthenelse{\equal{#1}{}}{\mathbb{E}\left[ #2  \right]}{\mathbb{E}_{#1}\left[ #2  \right]}}
\newcommand{\expeLigne}[2][]{\ifthenelse{\equal{#1}{}}{\mathbb{E}[ #2 ]}{\mathbb{E}_{#1}[ #2  ]}}
\newcommand{\expeLine}[2][]{\ifthenelse{\equal{#1}{}}{\mathbb{E}[ #2 ]}{\mathbb{E}_{#1}[ #2  ]}}
\newcommandx{\normLigne}[2][1=]{\ensuremath{\Vert #2 \Vert^{#1}}}
\def\mrl{\mathrm{L}}
\NewDocumentCommand{\definealphabet}{mmmm}
 {%
  \int_step_inline:nnn { `#3 } { `#4 }
   {
    \cs_new_protected:cpx { #1 \char_generate:nn { ##1 }{ 11 } }
     {
      \exp_not:N #2 { \char_generate:nn { ##1 } { 11 } }
     }
   }
 }
\newcommand*{\ie}{i.e.\@\xspace}
\newcommand*{\etc}{%
    \@ifnextchar{.}%
        {etc}%
        {etc.\@\xspace}%
}
\newtheorem{assum}{\textbf{A}\hspace{-2pt}}
\crefname{assum}{A\hspace{-2pt}}{A\hspace{-2pt}}
\begin{document}

\runningtitle{Refined Analysis of Federated Averaging and Federated Richardson-Romberg}

\runningauthor{P. Mangold, A. Durmus, A. Dieuleveut, S. Samsonov, E. Moulines}

\twocolumn[

\aistatstitle{Refined Analysis of Constant Step Size Federated Averaging and Federated Richardson-Romberg Extrapolation}

\aistatsauthor{Paul Mangold \textsuperscript{1} \And %
Alain Durmus \textsuperscript{1} \And %
Aymeric Dieuleveut \textsuperscript{1}}
\aistatsauthor{Sergey Samsonov \textsuperscript{2} \And %
Eric Moulines \textsuperscript{1,3}}
\aistatsaddress{
\textsuperscript{1} CMAP, CNRS, École polytechnique, Institut Polytechnique de Paris, 91120
Palaiseau, France \\ 
\textsuperscript{2} HSE University, Russia 
\qquad \qquad \qquad \qquad
\textsuperscript{3} MBZUAI
} 
]

\begin{abstract}
In this paper, we present a novel analysis of \FedAvg with constant step size, relying on the Markov property of the underlying process. We demonstrate that the global iterates of the algorithm converge to a stationary distribution and analyze its resulting bias and variance relative to the problem's solution.
We provide a first-order bias expansion in both homogeneous and heterogeneous settings. Interestingly, this bias decomposes into two distinct components: one that depends solely on stochastic gradient noise and another on client heterogeneity.
Finally, we introduce a new algorithm based on the Richardson-Romberg extrapolation technique to mitigate this bias.
\end{abstract}

\section{INTRODUCTION}
\label{sec:intro}
Federated averaging (\FedAvg) \citep{mcmahan2017communication}
 has become a cornerstone of federated
 learning.
 It allows multiple clients to collaborate on a shared optimization problem without having to exchange their local data directly. While \FedAvg\ has proven practical efficiency in many federated learning scenarios, its convergence can be significantly affected by the heterogeneity of clients. In fact, \FedAvg\ performs several local updates to speed up the training process and reduce communication costs. However, this leads to the \emph{local drift} phenomenon \citep{karimireddy2020scaffold}: as the number of local steps increases, each client tends to converge to an optimum that matches its local data, rather than the global optimum of the entire coalition, leading to biases in the resulting conclusions.

Several methods have been proposed to mitigate the bias of \FedAvg caused by the heterogeneity across clients. These approaches typically fall into two categories: control variates-based methods \citep{karimireddy2020scaffold, mishchenko2022proxskip, malinovsky2022variance} and primal-dual proximal approaches \citep{sadiev2022communication, grudzien2023can}. These techniques allow for more local steps while complying with lower bounds on the number of communications required for federated learning \citep{arjevani2015communication}.

Recently, it was found that \FedAvg\ suffers from a second type of bias known as \emph{iterate bias}. 
This bias appeared in multiple analyse of federated averaging
\citet{khaled2020tighter,glasgow2022sharp,wang2024Unreasonable}, as an additional term that scales with the variance of the gradients and the number of local steps.
This bias arises from the use of local stochastic gradients, similar to what was observed in previous work on \SGD \citep{pflug1986stochastic, dieuleveut2020bridging}. In this paper, we propose a new analysis of \FedAvg\ for strongly convex and smooth local objective functions.
Our analysis gives new insights on \FedAvg's convergence and bias.
It also allows us to design a simple mechanism that reduces the algorithm's bias. Our main contributions are as follows:

\begin{enumerate}[label=$\bullet$]
\item First, we propose a refined analysis of \FedAvg, with any number of local step, in the deterministic setting, where the local gradients are exact. We recall that, in the presence of client heterogeneity, \FedAvg\ suffers from a bias: it does not converge to the global optimum, but rather to another point that lies in its neighborhood. %
Then, we derive an exact first-order expansion in $O(\gamma H)$ of this bias, where $\step$ is the step size and $\nlupdates$ the number of local updates. %
\item We then extend this analysis to \FedAvg\ with \textit{stochastic} gradients. 
We highlight the Markov property of \FedAvg's iterates, showing similarity with \SGD, as studied by \cite{dieuleveut2020bridging}.
Leveraging this property, we show that, for any number of local steps, \FedAvg's iterates sequence admits a unique stationary distribution and converges exponentially fast in the second-order Wasserstein distance. This allows us to provide a sharp analysis of \FedAvg, establishing an explicit first-order expansion of its bias in $O(\gamma H)$. We show that the bias can be decomposed into two terms: one depending solely on the covariance of the stochastic gradients, and one depending solely on client heterogeneity. The scaling of these terms is influenced by both \emph{gradient} and \emph{Hessian} dissimilarity, extending existing results. %

\item We propose a novel approach for mitigating bias, addressing both heterogeneity and stochastic noise using the Richardson-Romberg extrapolation procedure. In contrast to \Scaffold, this method does not use control variates, and thus does not incur additional memory cost at the client level.
To the best of our knowledge, this is the first method capable of reducing the stochastic bias inherent in \FedAvg. We validate this approach numerically, demonstrating that it can outperform existing bias-correction techniques, such as \Scaffold, particularly in scenarios where gradient variance is substantial.
\end{enumerate}

\paragraph{Notation.}
In this paper, we denote by $\pscal{\cdot}{\cdot}$ the euclidean dot product, and $\norm{\cdot}$ the associated norm.
Vectors are column vectors, we denote $\Id$ the identity matrix, and $\oneVec_{n}$ the vector of size $n$ filled with $1$'s. For a three times differentiable function $f$ and $i \in \{1,2,3\}$ we denote $\nabla^i f$ its $i$-th order derivatives.
For a sequence of matrices $M_1, \dots M_k$, we denote the product by $\prod_{\ell=1}^k M_\ell = M_k M_{k-1} \cdots M_1$.
For two matrices $A, B$, we denote $A \otimes B$ the linear operator $M \mapsto A M B$, where $A, B$ and $M$ are matrices of compatible sizes.
Furthermore, we denote $M^{\otimes k}$ the $k^{\textnormal{th}}$ tensor power of a tensor $M$.
Let $\brset{d}$ be the Borel $\sigma$-field of $\rset^d$. 
For two probability measures $\lambda, \nu$ over $\rset^d$ with finite second moment, we define the second-order Wasserstein distance as $\wasserstein^2(\lambda, \nu) = \inf_{\xi \in \Pi(\lambda, \nu)} \int \norm{ \theta - \varparam }^2 \xi(\rmd \theta, \rmd \varparam)$, where $\Pi(\lambda, \nu)$ is the set of probability measures on $\rset^d \times \rset^d$ such that $\xi(\msA \times \rset^d) = \lambda(\msA)$ and $\xi(\rset^d \times \msA) = \nu(\msA)$ for all $\msA \in \brset{d}$.

\section{PRELIMINARIES}
\label{sec:prelim}
\begin{algorithm}[tb]
\caption{\FedAvg}
\label{alg:fedavg}
\textbf{Input}: step size $\step > 0$, initial $\param[0] \in \rset^d$, number of rounds $\nrounds > 0$, number of clients $\nagent > 0$, number of local steps $\nlupdates > 0$ 
\begin{algorithmic}[1] %
\FOR{$t=0$ to $\nrounds-1$}
\FOR{$c=1$ to $\nagent$}
\STATE Initialize $\param[c,t]^{0} = \param[t]$
\FOR{$h=0$ to $\nlupdates-1$}
\STATE Receive random state $\randState[c,t][h+1]$
\STATE Set $\param[c,t]^{h+1} = \param[c,t]^{h} - \step \gnfs{\param[c,t]^{h}}{\randState[c,t][h+1]}$
\ENDFOR
\ENDFOR
\STATE
Average:
$\param[t+1] = \tfrac{1}{\nagent} \sum\nolimits_{c=1}^{\nagent} \param[c,t]^{\nlupdates}$
\ENDFOR
\STATE \textbf{Return: } $\param[T]$    
\end{algorithmic}
\end{algorithm}

\paragraph{Federated Averaging.}
We study the federated stochastic optimization problem
\begin{align}
    \label{pb:smooth-fl}
    \paramlim \in \argmin_{\paramw \in \rset^d} 
        \fw(\paramw) = \frac{1}{\nagent} \sum_{c=1}^\nagent \nf[c]{\paramw} \eqsp,
\end{align}
where for each $c \in \iint{1}{\nagent}$, $\nf[c]{\paramw} = \PE[ \nfs[c]{\paramw}{ \randState[c][] } ]$, with $\randState[c][]$ a random variable with distribution $\xic$,
taking values in a measurable set $(\msZ,\mcZ)$, and $(z,\theta)\mapsto \nFww[c]{z}(\theta)$ are measurable functions. To solve \eqref{pb:smooth-fl}, we consider $\nagent$ clients indexed by $c \in {1, \dots, \nagent}$, and assume that each client $c$ has access to its own function $\nfw[c]$ through stochastic sampling of $\nFww[c]{\randState[c][]}$.
In this case, \FedAvg\ solves the problem~\eqref{pb:smooth-fl} by performing local stochastic gradient updates on each client. These local iterations are sent at regular intervals to a central server, which aggregates them by calculating the average and sends this updated estimate back to the clients. The clients then restart their local updates based on this new estimate. Starting from a common initial point $\param [0]$ shared by all clients and the server, in each round $t \in\nsets$ the server sends its current estimate $\param[t]$ to each client $c \in {1, \dots, \nagent}$. Then each client $c$ starts with this updated value and sets $\param[c,t]^{0} = \param[t]$, and performs $\nlupdates \in\nsets$ local updates: for $ h \in \{0, \dots, \nlupdates-1\}$,
\begin{equation*}
    {\param[c,t]^{h+1}
     = 
      \param[c,t]^{h}
      - \step \gnfs[c]{\param[c,t]^{h}}{\randState[c,t][h]}} 
      \eqsp, 
\end{equation*}
where $\gamma >0$ is a common step size shared by the clients, and $\{\smash{\randState[\tilde{c},\tilde{t}][\tilde{h}]} \, : \, \tilde{c} \in \iint{1}{\nagent}, \tilde{h} \in \iint{0}{\nlupdates-1}, \tilde{t} \in \nset\}$ are independent random variables, so that for each $\tilde{c} \in\iint{1}{\nagent}$, $\tilde{h} \in \iint{0}{\nlupdates-1}$ and $\tilde{t}\in\nset$, $\smash{\randState[\tilde{c},\tilde{t}][\tilde{h}]}$ has distribution $\xic$. Once the local updates are complete, each client sends its last iteration $\smash{\param[c,t]^{H}}$ to the central server, which updates the global parameters as
\begin{equation}
  \label{eq:def_global_estim}
  \param[t+1] = \frac{1}{\nagent} \sum_{c= 1}^N     \param[c,t]^{H} \eqsp.
\end{equation}
We give the pseudocode of \FedAvg in \Cref{alg:fedavg}.
The main challenge with this algorithm is that using local updates introduces bias when the clients' local functions are heterogeneous, a phenomenon that we formally characterize in \Cref{sec:deterministic-fedavg} and \Cref{sec:stochastic-fedavg}.

\paragraph{Assumptions.}
Throughout this paper, we consider the following assumptions.
\begin{assum}[Regularity]
\label{assum:local_functions}
For every $c \in \{1, \dots, \nagent\}$, the function $f_{c}$ is three times differentiable. In addition, suppose that for every $c \in \{1, \dots, \nagent\}$:
\begin{enumerate}[label=(\alph*),leftmargin=*]
\item \label{assum:item_strong_convex}
The function $f_c$ is $\mu$-strongly convex with $\mu > 0$, that is $ \hnf[c]{\paramw}
      \succcurlyeq
      \mu \Id$.
Moreover, for all $z \in \msZ$, the function $\nFww[c]{z}$ if convex.
\item \label{assum:smoothness} There exists a constant $\explip > 0$ such that, for all $z \in \msZ$, the function $\nFww[c]{z}$ is $\explip$-smooth. 
In particular, for all $\paramw, \varparam \in \rset^d$, it holds that
  \begin{align*}
 &  \norm{ \gnfs[c]{\paramw}{Z_{c}} - \gnfs[c]{\varparam}{Z_{c} }}^2 
    \le\\
    & \qquad \qquad 
    \explip \pscal{ \paramw - \varparam}{ \gnfs[c]{\paramw}{z} - \gnfs[c]{\varparam}{z} } 
      \eqsp.
  \end{align*}
\item \label{assum:smoothness_c} For all $\param \in \rset^d$, it holds that $ \hnf[c]{\paramw} \preccurlyeq \explip \Id$.
\item \label{assum:bounded-third-lip} The third derivative of $\nfw[c]$ is uniformly bounded.
\end{enumerate}
\end{assum}

Note that under \Cref{assum:local_functions}, 
  $\nagent^{-1} \sum_{c=1}^\nagent \nfw[c]$ is $\mu$-strongly convex and therefore has a unique
  minimizer $\paramlim$, and the operator $\Id \otimes \hf{\paramlim} + \hf{\paramlim} \otimes \Id$ is invertible.

\begin{assum}[Heterogeneity Measure] 
\label{assum:heterogeneity}
  There exist $\heterboundgrad, \heterbound > 0$ such that for any $c \in \{1, \dots, \nagent\}$, with $\paramlim$ as in \eqref{pb:smooth-fl},
  \begin{align*}
      \frac{1}{\nagent} \sum_{c=1}^\nagent 
      \norm{ \nabla^i \nf[c]{\paramlim} - \nabla^i \fw(\paramlim) }^2
    & \le \heterboundw[i]^2
    \,
    \text{ for } i \in \{1 , 2\}
    \eqsp.
  \end{align*}
  where we recall that $\gf{\paramlim} = 0$.
\end{assum}
Note that when the solution of \eqref{pb:smooth-fl} is unique, which is notably the case under \Cref{assum:local_functions}, this assumption also holds.

\section{RELATED WORK}
\label{sec:related-work}
\paragraph{Analysis of Federated Averaging.}
\FedAvg\ was first introduced by \citet{mcmahan2017communication}. Since then, numerous analyses have been developed. Initial studies primarily relied on assumptions of homogeneity \citep{stich2019local,wang2018cooperative,haddadpour2019convergence,yu2019parallel,wang2018cooperative,li2019communication}. 
Several works have proposed to study \FedAvg a fixed-point method by \citet{malinovskiy2020local,wang2021local}, and multiple works have shown convergence of \FedAvg with deterministic gradients to a biased point, whose distance to the solution depends on the number of local steps and heterogeneity levels \citep{malinovskiy2020local,charles2021convergence,pathak2020fedsplit}, with an explicit characterization of the bias in the quadratic case.
Over time, various heterogeneity measures have been proposed to derive upper bounds on the error of \FedAvg. Among the most common assumptions is \emph{bounded gradient dissimilarity} \citep{yu2019linear,khaled2020tighter,karimireddy2020scaffold,reddi2021adaptive,zindari2023convergence,crawshaw2024federated}. Other measures include second-order similarity \citep{arjevani2015communication,khaled2020tighter}, relaxed first-order heterogeneity \citep{glasgow2022sharp}, and average drift at the optimum \citep{wang2024Unreasonable,patel2023still}. It has also been demonstrated that \FedAvg\ can achieve linear speed-up in the number of clients \citep{yang2021achieving, qu2021federated}.

\paragraph{Correcting Heterogeneity Bias.}
A first approach for addressing heterogeneity is based on control variates, pioneered by the \Scaffold algorithm \citep{karimireddy2020scaffold}. \citet{mishchenko2022proxskip} later demonstrated that \Scaffold effectively accelerates training, and since then, other control variates schemes have been developed  \citep{condat2022randprox, malinovsky2022variance, condat2022provably, grudzien2023can, mangold2024scafflsa}. In addition, a class of algorithms relying on dual-primal approaches has been proposed to address heterogeneity \citep{sadiev2022communication, grudzien2023can}. 
While both approaches allow for more local training steps and effectively correct heterogeneity bias, they do not address the bias caused by stochasticity when using fixed steps ize.

\paragraph{Stochastic Bias.}
Even in the single-client setting, \SGD with fixed step size have been shown to exhibit bias \citep{lan2012optimal, defossez2015averaged, dieuleveut2016nonparametric, chee2017convergence}. \citet{dieuleveut2020bridging} proposed framing \SGD iterates with a constant step size as a Markov chain, drawing connections to established results in stochastic processes \citep{pflug1986stochastic}. Stochastic bias has also been observed in the analysis of federated learning methods. For instance, \citet{khaled2020tighter} identified this bias in their bounds on client drift, and similar observations were made in the convergence analyses of \citet{glasgow2022sharp, wang2024Unreasonable}, which compared \SGD's iterates to those of deterministic gradient descent. In this work, we investigate the iterate bias of \FedAvg, demonstrating that the stationary distribution of \SGD's iterates is inherently biased.
\paragraph{Richardson-Romberg.}
The Richardson-Romberg extrapolation technique, originally introduced by \citet{richardson1911ix}, is a classical method in numerical analysis. This approach has been widely applied across various fields, including time-varying autoregressive processes \citep{moulines2005recursive}, data science \citep{bach2021effectiveness}, and many others \citep{stoer2013introduction}. Specifically, it has been utilized in the context of \SGD by \citet{dieuleveut2020bridging} and \citet{sheshukova2024nonasymptotic}. In this work, we extend these ideas to the federated learning setting, demonstrating that this form of extrapolation effectively mitigates both heterogeneity and stochastic bias.

\section{DETERMINISTIC FEDAVG}
\label{sec:deterministic-fedavg}
In this section, we present a new analysis of \FedAvg with deterministic gradients (\FedAvgwDG), where $\nFww[c]{z} = \nfw[c]$ for all $c \in \iint{1}{\nagent}$ and $z \in\msZ$. This analysis highlights the core philosophy of the method developed in this paper. Unlike previous analyses, we demonstrate that \FedAvgwDG converges to a point $\detlim{\step, \nlupdates}$ that differs from the optimal solution $\paramlim$. We then provide an explicit expression for the distance between these two points, allowing us to establish tight upper bounds on the bias of \FedAvgwDG.

In the \FedAvgwDG setting, we use the formulation of \FedAvgwDG using fixed-point methods \citep{malinovskiy2020local}.
We thus define the local updates of the client $c$ by induction, starting from the point $\paramw \in \rset^{d}$: 
\begin{equation*}
  \dfedavgop[\gamma, h+1]_c(\paramw)
  \eqdef   
  (\Id -\gamma \nabla f^{(c)})(\dfedavgop[\gamma, h]_c(\paramw)) ~, \eqsp \dfedavgop[\gamma, 0]_c(\paramw) \eqdef \paramw~,
\end{equation*}
where $h \in \iint{0}{\nlupdates-1}$. 
The global updates from \eqref{eq:def_global_estim} can thus be rewritten as
\begin{equation*}
\dfedavgop[\gamma, \nlupdates](\paramw) \eqdef  \frac{1}{\nagent} \sum_{c=1}^\nagent
  \dfedavgop[\gamma, \nlupdates]_c(\paramw)
  \eqsp,
\end{equation*}
or, equivalently, we can write $\dfedavgop[\gamma, H](\theta) = \paramw - \step \pseudograd[\step, \nlupdates]{\paramw}$, with the pseudo-gradient
\begin{align*}
  \pseudograd[\step, \nlupdates]{\paramw}
  & \eqdef
    \frac{1}{\nagent} \sum_{c=1}^\nagent \sum_{h=0}^{\nlupdates-1}  \gnf[c]{\dfedavgop[\gamma, h]_c(\paramw)}
    \eqsp.
\end{align*}
First, we show that \FedAvgwDG with deterministic updates converges to a fixed point of $\dfedavgop[\gamma, H]$. %
\begin{restatable}[Stationary Point of \FedAvgwDG]{proposition}{statpointfedavgdet}
\label{prop:convergence-fedavg-to-point}
Assume \Cref{assum:local_functions}. Then for all $\nlupdates > 0$ and $\step \le 1/\explip$, \FedAvgwDG converges to a unique point $\detlim{\step, \nlupdates}$ that satisfies $\dfedavgop[ \gamma,\nlupdates](\detlim{\step, \nlupdates}) = \detlim{\step, \nlupdates}$ and $\pseudogradw[\step, \nlupdates](\detlim{\step, \nlupdates}) = 0$. 
Moreover, the iterates of \FedAvgwDG satisfy
\begin{align*}
\norm{ \param[t] - \detlim{\step, \nlupdates}}^2
& \le
(1 - \step \strcvx)^{\nlupdates t} \norm{ \param[0] - \detlim{\step, \nlupdates} }^2
\eqsp.
\end{align*}
\end{restatable}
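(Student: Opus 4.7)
The plan is to recognize that $\dfedavgop[\gamma,\nlupdates]$ is a contraction on $\rset^d$ and then apply the Banach fixed-point theorem. The rate $(1-\step\strcvx)^{\nlupdates}$ per communication round will come from accumulating contraction factors through the $\nlupdates$ local steps, and the averaging over clients will preserve contraction via convexity of the squared norm.

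First I would show that for any single client $c$, the one-step local map $\paramw \mapsto \paramw - \step \gnf[c]{\paramw}$ is a contraction. The key inequality is obtained by expanding
\begin{equation*}
\norm{(\Id - \step \gnfw[c])(\paramw) - (\Id - \step \gnfw[c])(\varparamw)}^2
\end{equation*}
and using the co-coercivity-type bound of \Cref{assum:local_functions}\ref{assum:smoothness} (specialized to the deterministic gradient) to write the cross term as
\begin{equation*}
\norm{\gnf[c]{\paramw} - \gnf[c]{\varparamw}}^2 \le 2\explip \pscal{\paramw-\varparamw}{\gnf[c]{\paramw} - \gnf[c]{\varparamw}}.
\end{equation*}
For $\step \le 1/(2\explip)$ this yields a residual of at least $-\step \pscal{\paramw-\varparamw}{\gnf[c]{\paramw} - \gnf[c]{\varparamw}}$, and invoking $\strcvx$-strong convexity from \Cref{assum:local_functions}\ref{assum:item_strong_convex} on this inner product gives the one-step contraction $\norm{(\Id - \step \gnfw[c])(\paramw) - (\Id - \step \gnfw[c])(\varparamw)}^2 \le (1-\step\strcvx)\norm{\paramw-\varparamw}^2$.

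Composing this estimate $\nlupdates$ times yields $\norm{\dfedavgop[\gamma,\nlupdates]_c(\paramw) - \dfedavgop[\gamma,\nlupdates]_c(\varparamw)}^2 \le (1-\step\strcvx)^{\nlupdates} \norm{\paramw - \varparamw}^2$ for each client $c$. Then convexity of $u \mapsto \norm{u}^2$ (Jensen's inequality) applied to the average $\dfedavgop[\gamma,\nlupdates] = \nagent^{-1} \sum_c \dfedavgop[\gamma,\nlupdates]_c$ gives
\begin{equation*}
\norm{\dfedavgop[\gamma,\nlupdates](\paramw) - \dfedavgop[\gamma,\nlupdates](\varparamw)}^2 \le (1-\step\strcvx)^{\nlupdates} \norm{\paramw - \varparamw}^2,
\end{equation*}
so $\dfedavgop[\gamma,\nlupdates]$ is a strict contraction on $\rset^d$. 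The Banach fixed-point theorem then provides a unique $\detlim{\step,\nlupdates}$ with $\dfedavgop[\gamma,\nlupdates](\detlim{\step,\nlupdates}) = \detlim{\step,\nlupdates}$. Rewriting $\dfedavgop[\gamma,\nlupdates](\paramw) = \paramw - \step\, \pseudogradw[\step,\nlupdates](\paramw)$ shows this is equivalent to $\pseudogradw[\step,\nlupdates](\detlim{\step,\nlupdates}) = 0$. Finally, applying the contraction bound to $\param[t+1] = \dfedavgop[\gamma,\nlupdates](\param[t])$ and iterating yields the geometric decay $\norm{\param[t] - \detlim{\step,\nlupdates}}^2 \le (1-\step\strcvx)^{\nlupdates t} \norm{\param[0] - \detlim{\step,\nlupdates}}^2$.

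There is no single hard step here; the argument is a clean composition of standard ingredients. The only point requiring modest care is verifying that the single step is a contraction in the squared norm with factor exactly $1-\step\strcvx$ under $\step \le 1/(2\explip)$ — specifically, making sure the $\step^2$ term arising from expanding the squared norm is absorbed thanks to the co-coercivity form of smoothness stated in \Cref{assum:local_functions}\ref{assum:smoothness}, rather than having to invoke a tighter $\step \le 2/(\strcvx+\explip)$ condition. Everything else follows by composition and Jensen.
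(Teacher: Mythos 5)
Your proposal is correct and follows essentially the same route as the paper: a one-step contraction via co-coercivity and strong convexity under $\step \le 1/(2\explip)$, composed over the $\nlupdates$ local steps, averaged with Jensen's inequality, and concluded with the Banach fixed-point theorem. The paper's own proof (\Cref{lem:contract-op-fedavg-loc}, \Cref{lem:contract-op-fedavg}, and the proof of \Cref{prop:convergence-fedavg-to-point}) proceeds identically.
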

We note that similar results have been derived by \citet{malinovskiy2020local,pathak2020fedsplit,charles2021convergence}, using the fact that local updates are contractive.
Nonetheless, we provide a proof of this statement in \Cref{sec:proof-thm-expand-deterministic-prelim-prop} for completeness.
This result shows that taking a larger number of local updates $\nlupdates$ effectively speeds up the process, although this can also move the limit point $\detlim{\step,\nlupdates}$ away from the solution $\paramlim$.

To characterize this stationary point, we derive an explicit expression for the bias $\detlim{\step, \nlupdates} - \paramlim$ of \FedAvg. We define the matrices, for $h \in \iint{1}{\nlupdates}$,
\begin{equation*}
\inthess[c]{\step, h} \eqdef \int_{0}^1 \hnf[c]{ u \, \dfedavgop[\gamma, h]_c(\detlim{\step, h}) + (1 - u) \paramlim} \rmd u \eqsp.
\end{equation*}
We also define the following matrix products, that allow expressing the update of the error when starting from the point $\detlim{\step, \nlupdates}$
\begin{align}
\label{eq:def-contract-det}
\detcontract[c]{\star,h+1:\nlupdates}
\!\eqdef \!\!
\!\prod_{\ell=h+1}^{\nlupdates-1} \! \!\!\left(\Id - \step \inthess[c]{\step,\ell}\right)
,
~
\detfullcontract{\star} \!\eqdef\! 
\frac{1}{\nagent} \sum_{c=1}^\nagent \detcontract[c]{\star}
\!\eqsp,
\end{align}
where $\detcontract[c]{\star} = \detcontract[c]{\star,1:\nlupdates}$.
We now provide an expression and an upper bound on the bias of \FedAvgwDG.

\begin{restatable}[Bias of \FedAvgwDG]{proposition}{biasdetfedavgwdg}
\label{prop:bias-det-fedavg}
Assume \Cref{assum:local_functions} and  \Cref{assum:heterogeneity}. Then for all $\nlupdates > 0$ and $\step \le 1/\lip$, we have
\begin{align*}
\detlim{\step, \nlupdates} - \paramlim
& = 
\frac{1}{\nagent}
\sum_{c=1}^\nagent
\sum_{h=1}^\nlupdates
\Upsilon^{(\step, h)}_c \gnf[c]{\paramlim}
\eqsp,
\end{align*}
where $\Upsilon^{(\step, h)}_c = (\Id - \detfullcontract{\star})^{-1} \detcontract[c]{\star,h+1:\nlupdates}$ and $\detcontract[c]{\star},\detfullcontract{\star}$ are defined in \eqref{eq:def-contract-det}.
Furthermore, if $\step \strcvx \nlupdates \le 1$, then
\begin{align*}
\norm{ \detlim{\step, \nlupdates} - \paramlim }
& \le \gamma (\nlupdates-1) \const[1] \eqsp, \, \text{ with } \const[1] \eqdef 
\lip  \heterboundgrad/{\mu} 
\eqsp.
\end{align*}
\end{restatable}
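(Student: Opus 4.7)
The strategy is to exploit the fixed-point identity $\dfedavgop[\step, \nlupdates](\detlim{\step, \nlupdates}) = \detlim{\step, \nlupdates}$ given by \Cref{prop:convergence-fedavg-to-point}, combined with a first-order Taylor expansion of each local gradient at $\paramlim$, to reduce the bias computation to a finite-dimensional linear problem.

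I would first introduce, for $c \in \iint{1}{\nagent}$ and $h \in \iint{0}{\nlupdates}$, the local error $X^c_h \eqdef \dfedavgop[\step, h]_c(\detlim{\step, \nlupdates}) - \paramlim$. Since $\dfedavgop[\step, 0]_c = \Id$, the quantity $X^c_0 = \detlim{\step, \nlupdates} - \paramlim$ is independent of $c$, and the fixed-point identity reads $\nagent^{-1}\sum_c X^c_\nlupdates = X^c_0$. The fundamental theorem of calculus applied between $\paramlim$ and $\dfedavgop[\step, h]_c(\detlim{\step, \nlupdates})$ gives $\gnf[c]{\dfedavgop[\step, h]_c(\detlim{\step, \nlupdates})} = \gnf[c]{\paramlim} + B_h^c X_h^c$, where $B_h^c$ is the integrated Hessian corresponding to the matrix $\inthess[c]{\step, h}$ introduced before the statement. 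The local update then translates into the affine recursion
\begin{align*}
X^c_{h+1} = (\Id - \step B_h^c)\, X^c_h - \step \gnf[c]{\paramlim}.
\end{align*}
Unrolling over $h = 0, \ldots, \nlupdates-1$ writes $X^c_\nlupdates$ as an explicit linear combination of $X^c_0$ and $\gnf[c]{\paramlim}$, whose coefficients are exactly the matrix products $\detcontract[c]{\star,\cdot}$ from~\eqref{eq:def-contract-det}. Averaging over $c$ and invoking the fixed-point identity yields
\begin{align*}
(\Id - \detfullcontract{\star})\, X^c_0 = -\step\, \frac{1}{\nagent}\sum_{c=1}^{\nagent}\sum_{h=1}^{\nlupdates} \detcontract[c]{\star, h+1:\nlupdates}\, \gnf[c]{\paramlim}.
\end{align*}
Under \Cref{assum:local_functions} and $\step \le 1/(2\lip)$, strong convexity gives $\|\Id - \step \inthess[c]{\step, \ell}\| \le 1 - \step\strcvx$, so $\|\detfullcontract{\star}\| \le (1-\step\strcvx)^{\nlupdates-1} < 1$; hence $\Id - \detfullcontract{\star}$ is invertible and multiplying by its inverse produces the claimed expression.

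For the quantitative bound, the crucial observation is that $\gf{\paramlim} = 0$ implies $\nagent^{-1}\sum_c \gnf[c]{\paramlim} = 0$, so each $\detcontract[c]{\star, h+1:\nlupdates}$ in the sum may be replaced by $\detcontract[c]{\star, h+1:\nlupdates} - \Id$ without altering the value. By $\lip$-smoothness and an expansion of the matrix product, $\|\detcontract[c]{\star, h+1:\nlupdates} - \Id\| \le (\nlupdates - 1 - h)\,\step\lip\,(1 + \step\lip)^{\nlupdates-2-h}$; under $\step\strcvx\nlupdates \le 1$, the last factor is controlled by a constant depending only on $\lip/\strcvx$. Combining this with $\|(\Id - \detfullcontract{\star})^{-1}\| \le 2/(\step\strcvx\nlupdates)$ (consequence of $(1-\step\strcvx)^{\nlupdates} \le 1 - \step\strcvx\nlupdates/2$ for $\step\strcvx\nlupdates \le 1$), Cauchy–Schwarz, and the heterogeneity bound $\nagent^{-1}\sum_c\|\gnf[c]{\paramlim}\|^2 \le \heterboundgrad^2$, the telescoping sum $\sum_h(\nlupdates-1-h) \sim \nlupdates^2/2$ combines with the $1/\nlupdates$ factor from the inverse to yield the announced $\step(\nlupdates-1)\lip\heterboundgrad/\strcvx$.

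The main obstacle will be recovering the $\step(\nlupdates-1)$ scaling. A direct triangle inequality using $\|\detcontract[c]{\star, h+1:\nlupdates}\| \le 1$ would only give $\heterboundgrad/\strcvx$, a bound that does not vanish as $\step \to 0$ and therefore misses the essence of the phenomenon. The centering $\nagent^{-1}\sum_c \gnf[c]{\paramlim} = 0$, together with the refined estimate of $\|\detcontract[c]{\star, h+1:\nlupdates} - \Id\|$ via matrix-product expansion, is what produces the extra $\step\nlupdates$ factor needed to match the claim and to reflect the fact that with a single local step ($\nlupdates=1$) there is no bias from heterogeneity at all.
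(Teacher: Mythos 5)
Your proposal is correct and follows essentially the same route as the paper: the fixed-point identity from \Cref{prop:convergence-fedavg-to-point} combined with the integrated-Hessian affine recursion, unrolled and averaged over clients, gives the exact expression, and the quantitative bound is obtained by centering the sum via $\frac{1}{\nagent}\sum_{c}\gnf[c]{\paramlim}=0$ and a first-order estimate of how far the matrix products deviate from the centering term. The only (immaterial) difference is the choice of centering matrix: you subtract $\Id$ and bound $\norm{\detcontract[c]{\star,h+1:\nlupdates}-\Id}\le(\nlupdates-1-h)\step\lip$, whereas the paper subtracts the averaged product $\detcontract[\text{avg}]{\star,h+1:\nlupdates}$ and bounds the difference by $2\step(\nlupdates-h)\lip$ before summing the Neumann series; both deliver the claimed $\step(\nlupdates-1)\lip\heterboundgrad/\strcvx$ bound.
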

We prove \Cref{prop:bias-det-fedavg} in \Cref{sec:proof-thm-expand-deterministic-prelim-prop}, using the fact that $\dfedavgop[ \gamma,\nlupdates](\detlim{\step, \nlupdates}) = \detlim{\step, \nlupdates}$ from \Cref{prop:convergence-fedavg-to-point}.
Importantly, when $\nlupdates=1$, the bias of \FedAvg completely vanishes, recovering the fact that gradient descent converges. Based on \Cref{prop:bias-det-fedavg}, we further propose a first-order expansion of the bias of \FedAvgwDG. This highlights that (i) the bias of \FedAvgwDG solely depends on heterogeneity, and (ii) the convergence bound derived in \Cref{prop:bias-det-fedavg} is sharp for small values of the product $\gamma H$. 
\begin{theorem}[First-Order Bias of \FedAvgwDG]
\label{cor:first-order-det}
Assume \Cref{assum:local_functions} and \Cref{assum:heterogeneity}. Then for all $\nlupdates > 0$ and $\step \le 1/\lip$ such that $\step \strcvx \nlupdates \le 1$, we have
\begin{align*}
& \detlim{\step, \nlupdates} - \paramlim
 \! = \!
\frac{\step(\nlupdates\!\!-\!1)}{2}   \biasHetero + O( \step^2 \nlupdates^2 )
\eqsp,
\end{align*}
where the heterogeneity bias $\biasHetero$ is given by
\begin{equation*}
\!\biasHetero
\!\eqdef\!
\frac{1}{\nagent} 
\!\sum_{c=1}^\nagent  
\hf{\paramlim}^{-1} 
(\hnf[c]{\paramlim} -  \hf{\paramlim}) \gnf[c]{\paramlim} ~, 
\end{equation*}
and
the explicit expression of the reminder term $O(\gamma^2 H^2)$ is given in \Cref{sec:proof-thm-expand-deterministic}.
\end{theorem}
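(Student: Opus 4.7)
My plan is to work directly from the fixed-point identity $\pseudogradw[\step, \nlupdates](\detlim{\step, \nlupdates}) = 0$ guaranteed by \Cref{prop:convergence-fedavg-to-point}, which unfolds into
\begin{equation*}
\frac{1}{\nagent}\sum_{c=1}^\nagent \sum_{h=0}^{\nlupdates-1}\gnf[c]{\theta^h_c}= 0,\qquad \theta^h_c := \dfedavgop[\step,h]_c(\detlim{\step,\nlupdates}).
\end{equation*}
Since \Cref{prop:bias-det-fedavg} already guarantees $\detlim{\step,\nlupdates} - \paramlim = O(\step\nlupdates)$, I would set $\detlim{\step,\nlupdates} = \paramlim + \step\nu + O(\step^2\nlupdates^2)$ and aim to identify the leading-order coefficient $\nu$.

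The first substantive step is to unroll the local iterates. An induction on $h$, using the recursion $\theta^{h+1}_c = \theta^h_c - \step \gnf[c]{\theta^h_c}$ together with the smoothness of $\nfw[c]$, yields
\begin{equation*}
\theta^h_c = \paramlim + \step\bigl(\nu - h\,\gnf[c]{\paramlim}\bigr) + O(\step^2\nlupdates^2).
\end{equation*}
Plugging this into a second-order Taylor expansion of $\gnf[c]{\cdot}$ around $\paramlim$ produces $\gnf[c]{\theta^h_c} = \gnf[c]{\paramlim} + \step \hnf[c]{\paramlim}(\nu - h\,\gnf[c]{\paramlim}) + O(\step^2\nlupdates^2)$. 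Summing over $c$ and $h$, using $\gf{\paramlim}=0$ and the elementary $\sum_{h=0}^{\nlupdates-1}h = \nlupdates(\nlupdates-1)/2$, the fixed-point identity then reduces to
\begin{equation*}
0 = \step\nlupdates\,\hf{\paramlim}\,\nu - \step\frac{\nlupdates(\nlupdates-1)}{2}\cdot\frac{1}{\nagent}\sum_c\hnf[c]{\paramlim}\gnf[c]{\paramlim} + O(\step^2\nlupdates^3).
\end{equation*}

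Because $\frac{1}{\nagent}\sum_c \gnf[c]{\paramlim} = \gf{\paramlim} = 0$, I may replace $\hnf[c]{\paramlim}$ in the middle sum by $\hnf[c]{\paramlim} - \hf{\paramlim}$ without altering the value. Dividing by $\step\nlupdates$ and inverting $\hf{\paramlim}$, which is invertible by strong convexity, yields $\nu = \frac{\nlupdates-1}{2}\biasHetero + O(\step\nlupdates)$; combining with $\detlim{\step,\nlupdates} = \paramlim + \step\nu + O(\step^2\nlupdates^2)$ produces the announced expansion.

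The main technical hurdle is the careful bookkeeping of the remainders. I would use the third-order differentiability of $\nfw[c]$ granted by \Cref{assum:local_functions} to control the quadratic Taylor residual $\gnf[c]{\theta^h_c} - \gnf[c]{\paramlim} - \hnf[c]{\paramlim}(\theta^h_c - \paramlim)$ uniformly in a $O(\step\nlupdates)$ neighborhood of $\paramlim$, and the $\explip$-smoothness to bound $\|\theta^h_c - \paramlim\|$ uniformly in $h \le \nlupdates$ and $c$. Summing these controlled errors over the $\nlupdates$ local steps, the condition $\step\nlupdates \le 1$ absorbs an extra power of $\nlupdates$ and delivers the stated $O(\step^2\nlupdates^2)$ remainder, whose explicit expression can be recorded in \Cref{sec:refined-fedavg-det}.
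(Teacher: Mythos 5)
Your proof is correct, but it follows a genuinely different route from the paper's. The paper starts from the exact matrix identity of \Cref{prop:bias-det-fedavg}, namely $\detlim{\step,\nlupdates}-\paramlim = \frac{\step}{\nagent}\sum_{c,h}(\Id-\detfullcontract{\star})^{-1}(\detcontract[\text{avg}]{\star,h+1:\nlupdates}-\detcontract[c]{\star,h+1:\nlupdates})\gnf[c]{\paramlim}$, and then expands the integrated-Hessian products $\detcontract[c]{\star,h+1:\nlupdates}$ and the inverse $(\Id-\detfullcontract{\star})^{-1}$ (via a Neumann series) to first order in $\step\nlupdates$, collecting remainders into explicit functions $\restew{1,h}$. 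You instead perturb the stationarity equation $\pseudogradw[\step,\nlupdates](\detlim{\step,\nlupdates})=0$ directly: unroll the local recursion to get $\theta^h_c-\paramlim = \Delta - \step h\,\gnf[c]{\paramlim}+O(\step^2\nlupdates^2)$, Taylor-expand each gradient, and solve the resulting linear equation for $\Delta$ — an implicit-function-theorem-style argument that avoids matrix products and Neumann series entirely. Both routes consume the same inputs (the a priori bounds $\norm{\detlim{\step,\nlupdates}-\paramlim}=O(\step\nlupdates)$ and $\norm{\theta^h_c-\paramlim}=O(\step\nlupdates)$ from \Cref{prop:convergence-fedavg-to-point,prop:bias-det-fedavg}, $\hf{\paramlim}$ invertible by strong convexity, and the substitution $\hnf[c]{\paramlim}\mapsto\hnf[c]{\paramlim}-\hf{\paramlim}$ licensed by $\sum_c\gnf[c]{\paramlim}=0$), and your error accounting is consistent: the sum over $h$ of $O(\step^2\nlupdates^2)$ residuals gives $O(\step^2\nlupdates^3)$, which after division by $\nlupdates\hf{\paramlim}$ lands at the claimed $O(\step^2\nlupdates^2)$ (your intermediate claim $\nu=\tfrac{\nlupdates-1}{2}\biasHetero+O(\step\nlupdates)$ should read $O(\step\nlupdates^2)$, but this does not affect the conclusion). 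What the paper's heavier machinery buys is a fully explicit closed-form remainder \eqref{eq:expression-remainder-fedavg-det}, which the theorem statement promises; your approach would deliver an explicit remainder too, but only after additional bookkeeping of the Taylor residuals along the local trajectory. What your approach buys is brevity and transparency about where the $\tfrac{\nlupdates(\nlupdates-1)}{2}$ factor comes from.
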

The proof of \Cref{cor:first-order-det} is given in \Cref{sec:proof-thm-expand-deterministic}.
This statement shows that the scale of $\detlim{\step, \nlupdates} - \paramlim$ depends on the scale of local gradients at $\paramlim$, but \emph{also on the difference of Hessians at the solution}.

Furthermore, as a byproduct of \Cref{prop:convergence-fedavg-to-point,prop:bias-det-fedavg}, we obtain the following corollary, establishing the convergence of \FedAvgwDG to a neighborhood of $\paramlim$.
\begin{restatable}[Convergence Rate of Deterministic \FedAvgwDG]{corollary}{corollaryconvergenceratefedavgwdg}
\label{cor:convergence-det-fedavg}
Assume \Cref{assum:local_functions} and  \Cref{assum:heterogeneity}.
Let $\nlupdates > 0$ and $\step \le 1/\lip$ such that $\step \strcvx \nlupdates \le 1$. Then the global iterates of \FedAvgwDG satisfy
\begin{align*}
\norm{ \param[t] - \paramlim }^2 
& \le 
2 (1 - \step \strcvx)^{\nlupdates t} \norm{ \param[0] - \detlim{\step, \nlupdates} }^2
\\
& \qquad \qquad \qquad \qquad
+ 2 \step^2 (\nlupdates-1)^2 \const[1]^2
\,.
\end{align*}
\end{restatable}
We prove this Corollary in \Cref{sec:proof-thm-expand-deterministic-prelim-prop}.
This result shows that the iterates of \FedAvgwDG converge linearly to a neighborhood of the solution $\paramlim$.
The radius of this neighborhood is determined by the level of heterogeneity among the clients, quantified by $\heterboundgrad$, and the number of local steps $\nlupdates$. %

\section{STOCHASTIC FEDAVG}
\label{sec:stochastic-fedavg}
\begin{table*}[t]
\setlength\tabcolsep{0pt}
\def\arraystretch{1.2}
    \centering
    \begin{tabular*}{\linewidth}{@{\extracolsep{\fill}}lcc}
    \toprule
       Assumption  &  Stochastic Bias & Heterogeneity Bias \\
       \midrule
       Deterministic (Thm. \ref{cor:first-order-det}) 
       & 
       N/A
       & 
       $\frac{\step (\nlupdates-1)}{2\nagent} \hf{\paramlim}^{-1} \! \sum_{c=1}^\nagent (\hnf[c]{\paramlim} \!-\! \hf{\paramlim}) \gnf[c]{\paramlim}$  \\
       \midrule
       Quadratic (Thm. \ref{thm:bias-quadratic-sto}) 
       &
       0
       &
       $\frac{\step (\nlupdates-1)}{2\nagent} \hf{\paramlim}^{-1} \! \! \sum_{c=1}^\nagent (\hnf[c]{\paramlim} \!-\! \hf{\paramlim}) \gnf[c]{\paramlim}$
       \\
       Homogeneous (Thm. \ref{thm:exp-homogeneous}) 
       &
       $- \frac{\step}{2 \nagent} \hf{\paramlim}^{-1} \hhf{\paramlim} \invopcov \covfunc(\paramlim)$
       & 0 \\
       Heterogeneous (Thm. \ref{thm:bias-var-heterogeneous}) &
       $- \frac{\step}{2 \nagent} \hf{\paramlim}^{-1} \hhf{\paramlim} \invopcov \covfunc(\paramlim)$
       &
       $\frac{\step (\nlupdates-1)}{2\nagent} \hf{\paramlim}^{-1} \! \! \sum_{c=1}^\nagent (\hnf[c]{\paramlim} \!-\! \hf{\paramlim}) \gnf[c]{\paramlim}$\\
    \bottomrule
    \end{tabular*}
    \caption{Summary of our main results. Each row indicates, for one of our four possible setups, which biases \FedAvg suffers from, and the leading term in the expansion of the bias value for small values of $\gamma \nlupdates$.}
    \label{table:results}
\end{table*}

In this section, we present our main findings, including the first-order expansion of the bias in \FedAvg when using stochastic gradients. We demonstrate that \FedAvg is affected by \emph{two types of bias}: one due to \emph{heterogeneity} and the other one due to \emph{stochasticity}. Our analysis is structured into three scenarios, with progressive complexity.
\begin{itemize}[leftmargin=*,noitemsep,topsep=0pt]
    \item First, when the functions $\nfw[c]$ are quadratic, we show that, similar to the single-client setting, there is no stochastic bias, but only a bias due to heterogeneity.
    \item  Second, assuming homogeneous functions, we show that the bias in \FedAvg still arises due to the use of stochastic gradients, demonstrating that \FedAvg is biased even when functions are homogeneous.
    \item Finally, in the general heterogeneous case, we show that both sources of bias are observed, and that the overall bias of \FedAvg is the sum of the biases observed in the two previous settings.
\end{itemize}
A summary of our results can be found in \Cref{table:results}. For our analysis, we introduce the following assumption, which provides an upper bound on the variance of the stochastic gradient. This bound is expressed as the variance at the solution $\paramlim$, along with an additional polynomial term.
For all $z \in \msZ$ and $\paramw \in \rset^d$, we denote the centered stochastic gradient by
\begin{align}
  \label{eq:def-epsilon}
      \updatefuncnoise[c]{z}(\paramw) \eqdef \gnfs[c]{\paramw}{z} - \gnf[c]{\paramw}
      \eqsp,
  \end{align}
and we assume that its moments satisfy a form of smoothness.
\begin{assum}[Gradient's Variance]
\label{assum:smooth-var}
There exist constants $\Msmoothcstvar, \smoothcstvar \ge 0$ such that for any $\paramw \in \rset^d$, $p \in \{1,2,3\}$, and $c\in \iint{1}{\nagent}$, it holds with a random variable $\randState[c][]$ with distribution  $\xic$ and $\updatefuncnoise[c]{z}(\paramw)$ as in \eqref{eq:def-epsilon}, that
\begin{align*}
    \PE^{1/p}\big[ \norm{ \updatefuncnoise[c]{\randState[c][]}(\paramw) }^{2p} \big] 
    & \le 
    \Msmoothcstvar^2 \left\{ 1 + \norm{ \paramw - \paramlim }^{\smoothcstvar} \right\} 
    \eqsp.
\end{align*}
In particular, we have $\norm{ \PE[ \updatefuncnoise[c]{\randState[c][]}(\paramlim)^{\otimes 2} ]  } \le \Msmoothcstvar^2$.
\end{assum}

\subsection{\FedAvgBold as a Markov Chain}
\paragraph{{\FedAvgBold} Generating Operators.} Now we extend the methodology described in the deterministic case to \FedAvg with stochastic gradients. 
For a vector $\randState[1:\nagent][1:\nlupdates]= \{ \randState[\tilde{c}][\tilde{h}] : \tilde{c} \in \iint{1}{\nagent}, \tilde{h} \in \iint{1}{\nlupdates}\}$,
 and any $c\in \iint{1}{\nagent}$, we recursively define  $\sdfedavgop[c]{\step,h}(\paramw; \randState[c][1:h])$ as an operator generating the local updates of \FedAvg starting form $\theta$. That is, we set $\sdfedavgop[c]{\step,0} = \Id$, and for $h \ge 0 $, we define
\begin{align*}
\sdfedavgop[c]{\step,h\!+\!1}(\paramw; \randState[c][1:h\!+\!1])
\!\eqdef\! \Big(\Id \!-\! \step \gnfws[c]{\randState[c][h\!+\!1]}\Big)\!\Big(\sdfedavgop[c]{\step,h}(\paramw; \randState[c][1:h])\Big) 
\!\!\eqsp.
\end{align*}
We then define $\sdfedavgop{\step,\nlupdates}(\paramw; \randState[1:\nagent][1:\nlupdates])$, an operator generating \FedAvg's global updates. That is, for $\paramw \in \rset^d$, we let
\begin{align}
  \label{eq:def-dfedavgop-sto}
  \sdfedavgop{\step,H}\left(\paramw; \randState[1:\nagent][1:\nlupdates]\right) 
  \eqdef  \frac{1}{\nagent} \sum_{c=1}^{\nagent} \sdfedavgop[c]{\step,\nlupdates}(\paramw; \randState[c][1:\nlupdates])
  \eqsp.
\end{align}
Note that \eqref{eq:def-dfedavgop-sto} can also be written as 
$\sdfedavgop{\step,H}\left(\paramw; \randState[1:\nagent][1:\nlupdates]\right) = \paramw - \step \pseudograds[\step,\nlupdates]{\paramw}{\randState[1:\nagent][1:\nlupdates]}$,
where
\begin{align*}
  \pseudograds[\step,\nlupdates]{\paramw}{\randState[1:\nagent][1:\nlupdates]}
   \!\eqdef\!
    \frac{1}{\nagent} \sum_{c=1}^\nagent\! \sum_{h=0}^{\nlupdates-1}  \gnfs[c]{\sdfedavgop[c]{\step,h}(\paramw;\randState[c][1:h])}{\randState[c][h+1]}
    ~.
\end{align*}
With the notations above, we have that the iterates defined in  \eqref{eq:def_global_estim} can be written, for any $t \geq 0$, as
\begin{align}
\label{eq:globalupdate}
    \param[t+1] =  \sdfedavgop{\step,H}\left(\param[t]; \randState[1:\nagent,t][1:\nlupdates]\right) \eqsp, 
\end{align}
with $\randState[1:\nagent,t][1:\nlupdates]$ the random states at global iteration $t$.
We now study the properties of the sequence $\{\param[t]\}_{t \in \nset}$.

\paragraph{Properties of $\{\param[t]\}_{t \in \nset}$ as a Markov chain.}
Equation~\eqref{eq:globalupdate} shows that \FedAvg's global iterates define a time-homogeneous Markov chain with the corresponding Markov kernel $\markovkernel$ on $(\rset^d, \brset{d})$ defined as 
\begin{align*}
\markovkernel(\paramw, \msB) \eqdef 
\PE[\indi{\msB}(\sdfedavgop{\step,H}(\paramw,\randState[1:\nagent][1:\nlupdates]))]
\eqsp,
\end{align*}
for all $\msB \in \brset{d}$ and $\theta \in \rset^{d}$.
Next we define, for $t \geq 1$, the iterates of $\markovkernel$ as $\markovkernel^1 = \markovkernel$, and, with $\msB \in \brset{d}, \theta \in \rset^{d}$, 
\begin{align*}
  \markovkernel^{t+1}(\paramw, \msB)
  \eqdef
  \int \markovkernel^t(\paramw, \rmd \varparam) 
  \markovkernel(\varparam, \msB) \eqsp.
\end{align*}
For any probability  measure $\rho$ on $\brset{d}$ and $t \in \nsets$, $\rho \kappa^t$ is the distribution of the iterates $\param[t]$ of \FedAvg when started from $\param[0] \sim \rho$.
We now show that the iterates of \FedAvg converge to a unique stationary distribution, giving the counterpart of \Cref{prop:convergence-fedavg-to-point} to the stochastic regime.
\begin{restatable}[Convergence of \FedAvg]{proposition}{convstatdistfedavg}
\label{prop:conv-stat-dist}
Assume \Cref{assum:local_functions} and let $\step \le 1/\explip$. Then the iterates of \FedAvg converge to a unique stationary distribution $\statdist{\step,\nlupdates}$, admitting a finite second moment. Furthermore, for any initial distribution $\rho$ and $t \in \nsets$,
  \begin{align*}
    \wasserstein^2(\rho \markovkernel^{t}, 
    \statdist{\step, \nlupdates})
    & \le
      (1 - \step \strcvx)^{\nlupdates t}
      \wasserstein^2(\rho, \statdist{\step, \nlupdates})
      \eqsp.
  \end{align*}
\end{restatable}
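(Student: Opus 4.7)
The plan is to prove that the Markov kernel $\markovkernel$ is a strict contraction in Wasserstein-2 distance, with exactly rate $(1-\step\strcvx)^{\nlupdates}$ per global round, and then invoke the Banach fixed-point theorem on the Polish metric space $(\mathcal{P}_2(\rset^d), \wasserstein)$. Both the existence/uniqueness of $\statdist{\step,\nlupdates}$ and the claimed geometric convergence then follow immediately.

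First, I would establish a one-step \emph{synchronous coupling} contraction at the level of a single local gradient update. For any client $c$, any $z \in \msZ$ and any $\param, \varparam \in \rset^d$, writing the square norm of the coupled difference and expanding, \Cref{assum:local_functions}\ref{assum:smoothness} (in its co-coercive form, valid because each $\nFww[c]{z}$ is convex and $L$-smooth) gives
\begin{align*}
& \norm{(\param - \step\gnfs[c]{\param}{z}) - (\varparam - \step\gnfs[c]{\varparam}{z})}^2 \\
& \qquad \le \norm{\param-\varparam}^2 - (2\step - 2\step^2 \lip)\pscal{\param-\varparam}{\gnfs[c]{\param}{z}-\gnfs[c]{\varparam}{z}}.
\end{align*}
Taking expectation over $z \sim \xic$ and using $\step \le 1/(2\lip)$ so that $2\step-2\step^2 \lip \ge \step$, then applying $\strcvx$-strong convexity of $\nfw[c]$ from \Cref{assum:local_functions}\ref{assum:item_strong_convex}, I obtain the one-step contraction $\PE[\norm{\cdot}^2] \le (1-\step\strcvx)\norm{\param-\varparam}^2$. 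Iterating along the $\nlupdates$ local steps through the tower property of conditional expectations yields, for each client,
\begin{equation*}
\PE\!\left[\bnorm{\sdfedavgop[c]{\step,\nlupdates}(\param; \randState[c][1:\nlupdates]) - \sdfedavgop[c]{\step,\nlupdates}(\varparam; \randState[c][1:\nlupdates])}^2\right] \!\le\! (1-\step\strcvx)^{\nlupdates}\norm{\param-\varparam}^2,
\end{equation*}
and averaging over $c$ via Jensen's inequality on $\norm{\cdot}^2$ propagates the same contraction factor to $\sdfedavgop{\step,\nlupdates}$.

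Next, I would lift this pointwise contraction to the kernel $\markovkernel$. Given two probability measures $\rho_1, \rho_2 \in \mathcal{P}_2(\rset^d)$ and any coupling $\xi \in \Pi(\rho_1,\rho_2)$, I couple the two chains synchronously by feeding them the same family $\randState[1:\nagent][1:\nlupdates]$. Integrating the previous inequality against $\xi$ and taking the infimum over $\xi$ delivers $\wasserstein^2(\rho_1 \markovkernel, \rho_2 \markovkernel) \le (1-\step\strcvx)^{\nlupdates}\wasserstein^2(\rho_1,\rho_2)$, and iterating gives the analogous bound for $\markovkernel^t$. To close the fixed-point argument, I also need to check that $\markovkernel$ preserves $\mathcal{P}_2(\rset^d)$: applying the contraction between an arbitrary $\param$ and the deterministic fixed point $\paramlim$ gives $\PE\norm{\sdfedavgop{\step,\nlupdates}(\param;\randState[1:\nagent][1:\nlupdates])}^2 \lesssim 1 + \PE\norm{\sdfedavgop{\step,\nlupdates}(\paramlim;\randState[1:\nagent][1:\nlupdates])}^2 + \norm{\param-\paramlim}^2$, and the middle term is finite thanks to \Cref{assum:smooth-var} (which controls the variance of each stochastic gradient evaluated along the $\nlupdates$ local steps started from $\paramlim$, by inductively propagating the polynomial moment bound).

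Finally, since $(\mathcal{P}_2(\rset^d),\wasserstein)$ is a complete metric space and $\markovkernel$ is a $\wasserstein$-contraction that maps this space into itself, the Banach fixed-point theorem yields a unique invariant probability measure $\statdist{\step,\nlupdates} \in \mathcal{P}_2(\rset^d)$, and the advertised geometric bound $\wasserstein^2(\rho\markovkernel^t, \statdist{\step,\nlupdates}) \le (1-\step\strcvx)^{\nlupdates t}\wasserstein^2(\rho, \statdist{\step,\nlupdates})$ follows by instantiating the kernel-contraction inequality with $\rho_1 = \rho$ and $\rho_2 = \statdist{\step,\nlupdates}$. The only delicate step is the $\mathcal{P}_2$-invariance: the growing gradient variance permitted by \Cref{assum:smooth-var} (with exponent $\smoothcstvar$) requires a careful induction along the $\nlupdates$ local steps to ensure that second moments do not blow up; the remaining ingredients (co-coercivity, strong convexity, synchronous coupling, Banach) are classical and align with the analysis of constant-step \SGD in \citet{dieuleveut2020bridging}.
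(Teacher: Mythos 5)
Your proposal is correct and follows essentially the same route as the paper: the same synchronous-coupling contraction built from co-coercivity and strong convexity of each $\nfw[c]$, iterated over the $\nlupdates$ local steps and averaged over clients via Jensen, then a completeness/fixed-point argument on $(\mathcal{P}_2(\rset^d),\wasserstein)$ (the paper simply unrolls Banach explicitly as a Cauchy-sequence argument using the optimal coupling from Villani's Theorem 4.1). Your remark that $\mathcal{P}_2$-invariance of $\markovkernel$ requires a moment bound on the stochastic gradients (e.g.\ via \Cref{assum:smooth-var}) is a fair point that the paper's proof glosses over, since the proposition is stated under \Cref{assum:local_functions} alone.
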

The proof is postponed to~\Cref{sec:proof-stat-dist}. \Cref{prop:conv-stat-dist} shows that the Markov kernel $\kappa$ is geometrically ergodic in $2$-Wasserstein distance. Moreover, the distribution of $\param[t]$ converges to the limiting distribution $\statdist{\step, \nlupdates}$ at a linear rate $(1 - \step\mu)$, for a step size $\step$, with the exponent given by the number of \textit{effective} steps $\nlupdates \times t$. 
As with the deterministic algorithm, a larger number of local steps $\nlupdates$ speeds up the convergence, but leads to additional bias.

Under the conditions of \Cref{prop:conv-stat-dist} we define the mean and the covariance matrix of the parameters under the invariant distribution $\statdist{\step,\nlupdates}$, that is,
\begin{equation}
\label{eq:def-expec-var}
\begin{aligned}
\statdistlim{\step,\nlupdates} &\eqdef \int \varparam \statdist{\step, \nlupdates}(\rmd \varparam)\eqsp, \\
 \covstatdistlim{\step,\nlupdates} & \eqdef \int \{\varparam - \paramlim\}^{\otimes 2} \statdist{\step, \nlupdates}(\rmd \varparam)  \eqsp.
\end{aligned}
\end{equation}

\textbf{Convergence to a neighborhood of the limit.}
Under the following assumption that gradient's variance is uniformly bounded, we can characterize the convergence of \FedAvg to a neighborhood of $\statdistlim{\step,\nlupdates}$.
\begin{assum}[Bounded Variance]
\label{assum:unif-bound}
There exists $\Msmoothcstvarbis > 0$ such that, for any $\param \in \rset^d$, 
$\PE [ \norm{ \gnfs[c]{\param}{\randState} - \gnf[c]{\param} }^2 ]
\le 
\Msmoothcstvarbis^2$.
\end{assum}
We stress that we only require this assumption to study the convergence towards a reference point that is not the solution $\paramlim$. 
In such cases, it is be necessary to bound the variance around any reference point, like in \Cref{assum:unif-bound}.
The following theorem gives the convergence rate of \FedAvg towards a neighborhood of $\statdistlim{\step,\nlupdates}$. 
\begin{restatable}[Convergence to a neighborhood of $\statdistlim{\step,\nlupdates}$]{proposition}{convrateneighborhoodfedavg}
\label{prop:conv-neighborhood-stat-dist}
Assume \Cref{assum:local_functions}, \Cref{assum:smooth-var}, and \Cref{assum:unif-bound}. Let $\step \le 1/(8\explip)$ and $\step \strcvx \nlupdates \le 1$. Then for any $t \in \nsets$, the iterates $\param[t]$ of \FedAvg satisfy
  \begin{align*}
    & \PE[ \norm{ \param[t] - \statdistlim{\step,\nlupdates} }^2 ]
    \le
    (1 - \step \strcvx)^{\nlupdates t} \psi_0
    + \frac{4 \step}{\strcvx} \Msmoothcstvarbis^2
      \eqsp,
  \end{align*}
  where $\psi_0 = 4 \norm{ \param[0] - \paramlim }^2
+ \frac{24 \nlupdates^2 \step^2 \explip^2 \heterboundgrad^2}{\strcvx^2} 
+ \frac{32 \step}{\strcvx} \Msmoothcstvar^{2}$.
\end{restatable}
The proof is postponed to \Cref{sec:conv-neighborhood-proof}.
In this rate, heterogeneity does not appear. However, the reference point $\statdistlim{\step,\nlupdates}$ may differ from the global solution $\paramlim$.

\subsection{Bias of \FedAvgBold}
In the remainder of this section, we derive expansions in $\gamma$ and $\gamma H$ for the bias $\statdistlim{\step, \nlupdates} - \paramlim$ and $ \covstatdistlim{\step,\nlupdates}$.
To this end, we define for $c \in \iint{1}{\nagent}$ the matrices
\begin{align}
\label{eq:def-contract-sto-mat}
\contract[c]{\star} \eqdef \left(\Id - \step \hnf[c]{\paramlim}\right)^\nlupdates
~,
~
\fullcontract{\star} \eqdef
\frac{1}{\nagent} \sum_{c=1}^\nagent \contract[c]{\star}\eqsp.
\end{align}
Note that $\contract[c]{\star}$ and $\fullcontract{\star}$ are analogous to the matrices introduced in \eqref{eq:def-contract-det}, but, contrarily to \eqref{eq:def-contract-det}, we use the Hessian of $\nfw[c]$ at~$\paramlim$. We also define the following operator $\invopcov$ and matrix $\covfunc(\paramlim)$, that will appear in our analysis of bias and variance of the parameters in the stationary distribution $\statdist{\step, \nlupdates}$,
\begin{equation}
\label{eq:def-a-C}
\begin{aligned}
\invopcov 
&\eqdef (\Id \otimes \hf{\paramlim} + \hf{\paramlim} \otimes \Id)^{-1} \eqsp,
\\
\covfunc(\paramlim) 
&\eqdef \PE\Big[ \frac{1}{\nagent} \sum_{c=1}^{\nagent} \updatefuncnoise[1]{1}(\paramlim)^{\otimes 2} \Big]
\eqsp.
\end{aligned}
\end{equation}

\paragraph{Quadratic Functions.}
When the functions $\nfw[c]$ are quadratic, we show that \FedAvg's bias only comes from heterogeneity.
\begin{assum}
\label{ass:quadratic}
Assume that for $c \in \{1, \dots, \nagent\}$ it holds
\begin{align*}
\!
\nf[c]{\paramw} 
= \tfrac{1}{2} \norm{ (\nbarA[c])^{1/2} (\paramw - \paramlim_{c}) }^2
\eqsp,
\end{align*}
where $\nbarA[c] \in \rset^{d \times d}$ is a positive semi-definite matrix, and $\paramlim_{c} \in \rset^d$. 
\end{assum}
Note that $\paramlim$ generally differ from $\frac{1}{\nagent} \sum_{c=1}^{N}\paramlim_{c}$ when not all the $\paramlim_{c}$'s or the $\nbarA[c]$'s are equal.
\begin{restatable}[Bias of \FedAvg, Quadratic Functions]{theorem}{biasfedavgquadratic}
\label{thm:bias-quadratic-sto}
Assume \Cref{assum:local_functions}, \Cref{assum:heterogeneity}, \Cref{assum:smooth-var}, \Cref{ass:quadratic}, and $\step \le 1/L$. Then, using notations from \eqref{eq:def-contract-sto-mat}, the bias of \FedAvg is given by
\begin{equation*}
    \statdistlim{\step, \nlupdates}
    - \paramlim = \frac{1}{\nagent} \sum_{c=1}^\nagent
      (\Id - \fullcontract{\star})^{-1} (\Id - \contract[c]{\star}) ( \paramlim - \paramlim_{c} ) \eqsp.
\end{equation*}
Furthermore, when $\step \strcvx \nlupdates \le 1$, it holds that
\begin{align*}
      \norm{ \statdistlim{\step, \nlupdates} - \paramlim }
    \le {\step( \nlupdates -1) \heterbound \heterboundgrad}/{\strcvx}
    \eqsp,
\end{align*}
and the following expansion holds, using notations from \eqref{eq:def-expec-var},
\begin{align*}
\statdistlim{\step, \nlupdates} - \paramlim
& = 
\frac{\step(\nlupdates\!\!-\!1)}{2} \biasHetero
+ O( \step^2 \nlupdates^2 )
\eqsp,
\\
 \covstatdistlim{\step,\nlupdates}
&=
\frac{\step}{\nagent}
\invopcov \covfunc(\paramlim) 
+ O(\step^{2} \nlupdates^2 + \step^{2} \nlupdates )
\eqsp,
\end{align*}
where $\invopcov$ and $\covfunc(\paramlim)$ are defined in \eqref{eq:def-a-C} and the heterogeneity bias $\biasHetero$ is given in \Cref{cor:first-order-det}.
\end{restatable}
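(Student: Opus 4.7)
The plan is to exploit the affine structure of the quadratic setting to reduce \FedAvg's global update to an exact linear Markov recursion, extract the stationary mean and second moment from it, and then Taylor-expand in $\step$ to identify the leading-order behaviour.

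\emph{Closed-form recursion and stationary mean.} Under \Cref{ass:quadratic}, $\gnfs[c]{\paramw}{z} = \nbarA[c](\paramw - \paramlim_c) + \updatefuncnoise[c]{z}(\paramw)$, so the local update reads $\param[c,t]^{h+1} - \paramlim_c = (\Id - \step \nbarA[c])(\param[c,t]^{h} - \paramlim_c) - \step \updatefuncnoise[c]{\randState[c,t][h+1]}(\param[c,t]^{h})$. Iterating $\nlupdates$ times and averaging over clients yields $\param[t+1] = \fullcontract{\star} \param[t] + \frac{1}{\nagent}\sum_c (\Id - \contract[c]{\star}) \paramlim_c + \zeta_t$, where the residual $\zeta_t$ is a linear combination of the noises $\updatefuncnoise[c]{\randState[c,t][h+1]}(\param[c,t]^h)$ and satisfies $\PE[\zeta_t \mid \param[t]] = 0$. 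Taking expectations under the invariant distribution $\statdist{\step,\nlupdates}$ (which exists by \Cref{prop:conv-stat-dist}) gives $(\Id - \fullcontract{\star})\statdistlim{\step,\nlupdates} = \frac{1}{\nagent}\sum_c (\Id - \contract[c]{\star})\paramlim_c$, and subtracting the identity $(\Id - \fullcontract{\star})\paramlim = \frac{1}{\nagent}\sum_c(\Id - \contract[c]{\star})\paramlim$ produces the closed-form bias after applying $(\Id - \fullcontract{\star})^{-1}$.

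\emph{Norm bound.} Since $\Id - \contract[c]{\star}$ commutes with $\nbarA[c]$ and factors as $\step \nbarA[c]\sum_{k=0}^{\nlupdates-1}(\Id - \step\nbarA[c])^k$, the identity $\nbarA[c](\paramlim_c - \paramlim) = -\gnf[c]{\paramlim}$ gives $(\Id - \contract[c]{\star})(\paramlim_c - \paramlim) = -\step \sum_{k=0}^{\nlupdates-1}(\Id - \step\nbarA[c])^k \gnf[c]{\paramlim}$. The key step is to re-center: letting $P_k = \frac{1}{\nagent}\sum_c (\Id - \step \nbarA[c])^k$, the zero-sum property $\sum_c \gnf[c]{\paramlim} = 0$ implies $\frac{1}{\nagent}\sum_c P_k \gnf[c]{\paramlim} = 0$, so only the differences $(\Id - \step \nbarA[c])^k - P_k$ contribute. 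A Taylor expansion of this difference produces a factor of $\step k (\nbarA[c] - \hf{\paramlim})$, and Cauchy--Schwarz combined with \Cref{assum:heterogeneity} yields the product $\heterbound\heterboundgrad$. Together with the spectral bound $\|(\Id - \fullcontract{\star})^{-1}\| \le 2/(\step\strcvx \nlupdates)$, valid under $\step\strcvx\nlupdates \le 1$ thanks to $1 - (1-\step\strcvx)^\nlupdates \ge \step\strcvx\nlupdates/2$, this yields the announced $\step(\nlupdates - 1)\heterbound\heterboundgrad/\strcvx$ bound.

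\emph{First-order expansions.} For the bias, Taylor-expand $\Id - \contract[c]{\star} = \step\nlupdates \nbarA[c] - \tfrac{\nlupdates(\nlupdates-1)}{2}\step^2 \nbarA[c]^2 + O(\step^3 \nlupdates^3)$ and $(\Id - \fullcontract{\star})^{-1} = \tfrac{1}{\step\nlupdates}\hf{\paramlim}^{-1}(\Id + O(\step\nlupdates))$, then substitute into the closed-form bias. The $O(\step\nlupdates)$ contribution $\hf{\paramlim}^{-1}\frac{1}{\nagent}\sum_c \nbarA[c](\paramlim_c - \paramlim) = -\hf{\paramlim}^{-1}\gf{\paramlim} = 0$ vanishes by optimality, and the next order collapses to $\tfrac{\step(\nlupdates-1)}{2}\hf{\paramlim}^{-1}\frac{1}{\nagent}\sum_c(\hnf[c]{\paramlim} - \hf{\paramlim})\gnf[c]{\paramlim} = \tfrac{\step(\nlupdates - 1)}{2}\biasHetero$, matching \Cref{cor:first-order-det}. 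For the covariance, the recursion at stationarity gives a discrete Lyapunov equation $\covstatdistlim{\step,\nlupdates} = \fullcontract{\star}\covstatdistlim{\step,\nlupdates}\fullcontract{\star}^{\top} + \PE[\zeta_t^{\otimes 2}] + R$, where $R$ collects the rank-one corrections induced by $\statdistlim{\step,\nlupdates} - \paramlim = O(\step\nlupdates)$. Vectorising and Taylor-expanding, $(\Id - \fullcontract{\star} \otimes \fullcontract{\star})^{-1} = \tfrac{1}{\step\nlupdates}\invopcov + O(1)$, and to leading order $\PE[\zeta_t^{\otimes 2}] = \tfrac{\step^2 \nlupdates}{\nagent}\covfunc(\paramlim) + O(\step^3 \nlupdates^2)$ after replacing each $\param[c,t]^h$ inside the noise by $\paramlim$, combining to give $\covstatdistlim{\step,\nlupdates} = \tfrac{\step}{\nagent}\invopcov\covfunc(\paramlim) + O(\step^2 \nlupdates^2 + \step^2 \nlupdates)$.

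\emph{Main obstacle.} The most delicate step is the covariance expansion. Two sources of technical difficulty arise: first, the noise $\updatefuncnoise[c]{z}(\param[c,t]^h)$ depends on the local iterate, so one must quantitatively replace each $\param[c,t]^h$ by $\paramlim$ inside a $2$-tensor, which requires combining the polynomial growth granted by \Cref{assum:smooth-var} with uniform $\wasserstein^2$-moment bounds for $\statdist{\step,\nlupdates}$ derived from \Cref{prop:conv-stat-dist}; second, since $\fullcontract{\star}$ is generally not simultaneously diagonalisable with $\hf{\paramlim}$ unless all the $\nbarA[c]$'s coincide, the inversion of $\Id - \fullcontract{\star}\otimes\fullcontract{\star}$ must be handled by a careful Neumann expansion, with both the $O(\step^2\nlupdates^2)$ and the $O(\step^2\nlupdates)$ remainder terms explicitly tracked to match the stated bound.
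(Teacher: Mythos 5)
Your proposal is correct and follows essentially the same route as the paper's proof: derive the exact affine recursion for the quadratic case and solve it at stationarity for the closed form, re-center against the zero-sum property $\sum_{c}\gnf[c]{\paramlim}=0$ and use a telescoping product bound with Cauchy--Schwarz to extract the $\heterbound\heterboundgrad$ product, then Taylor-expand $\contract[c]{\star}$ and $(\Id-\fullcontract{\star})^{-1}$ for the first-order bias and solve the stationary Lyapunov equation (with the crude moment bounds controlling the replacement of local iterates by $\paramlim$ inside the noise covariance) for the variance. The only cosmetic differences are that you re-center the powers against their client-average $P_k$ rather than against powers of the averaged Hessian, and you cancel the leading bias term via $\gf{\paramlim}=0$ after expanding rather than before.
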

The proof is given in \Cref{sec:proof-quadratic}.
This result shows that in quadratic problems the bias of \FedAvg is \emph{solely driven by heterogeneity}. Moreover, it is bounded above by the product of gradient heterogeneity and Hessian heterogeneity: there is no bias if either of these terms is zero.
This refines previous bounds in the quadratic setting \citep{wang2024Unreasonable, mangold2024scafflsa}.
Moreover, we confirm that there is no bias when $\nlupdates=1$, i.e., when only a single local step is performed. It is also shown that the variance of the stationary distribution of \FedAvg scales with $\frac{1}{\nagent}$, up to higher order terms, which ensures a linear speedup with the number of clients — a crucial feature for federated learning.

\paragraph{Homogeneous Functions.}
When the functions $\nfw[c]$ are homogeneous, we demonstrate that \FedAvg remains biased, with the bias arising solely from the stochasticity of the gradients. Namely, we consider the following assumption.
\begin{assum}[Homogeneity]
\label{assum:homogeneity}
The problem \eqref{pb:smooth-fl} is homogeneous, that is, the functions are equal $\nfw[c] = \fw$ and $\nFww[c]{z} = \nFww[]{z}$, and the distributions $\xic$ are identical for all $c \in \{1, \dots, \nagent\}$ and $z \in \msZ$.
\end{assum}
Under this assumption, the following theorem holds.
\begin{restatable}[Bias of \FedAvg, Homogeneous]{theorem}{biasfedavghomogeneous}
\label{thm:exp-homogeneous}
Assume \Cref{assum:local_functions}, \Cref{assum:smooth-var} and \Cref{assum:homogeneity}.
Let $\step \le 1/(9\lip)$ such that $\step\strcvx\nlupdates \le 1$, then the bias and variance of \FedAvg, as per \eqref{eq:def-expec-var}, under the stationary distribution $\statdist{\step,\nlupdates}$ are
\begin{align*}
 ~ \statdistlim{\step, \nlupdates} - \paramlim
&=
\frac{\step}{2 \nagent} \biasSto
+ O(\step^2 \nlupdates + \step^{3/2})
\eqsp,
\\
  \covstatdistlim{\step,\nlupdates}
&=
\frac{\step}{\nagent}
\invopcov \covfunc(\paramlim)
+ O(\step^{2} \nlupdates + \step^{3/2} )
\eqsp,
\end{align*}
where $\invopcov$ and $\covfunc(\paramlim)$ are defined in \eqref{eq:def-a-C}, and 
the stochasticity bias $\biasSto$ is given by
\begin{equation*}
\biasSto \eqdef    - \hf{\paramlim}^{-1}
\hhf{\paramlim} 
\invopcov 
\covfunc(\paramlim)\eqsp .
\end{equation*}
\end{restatable}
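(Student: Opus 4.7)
The plan is to combine the Markov chain ergodicity from \Cref{prop:conv-stat-dist} with a Taylor expansion of the stochastic update operator around $\paramlim$. If $\vartheta$ has the stationary distribution $\statdist{\step,\nlupdates}$, then stationarity gives $\vartheta \stackrel{d}{=} \sdfedavgop{\step,\nlupdates}(\vartheta;\randState[1:\nagent][1:\nlupdates]) = \vartheta - \step \pseudograds[\step,\nlupdates]{\vartheta}{\randState[1:\nagent][1:\nlupdates]}$, which yields a first-moment identity $\PE[\pseudograds[\step,\nlupdates]{\vartheta}{\randState[1:\nagent][1:\nlupdates]}] = 0$ and a Lyapunov-type second-moment identity for $\covstatdistlim{\step,\nlupdates}$. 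These two identities will be expanded asymptotically in $\step$ and matched to first order.

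Before expanding, I would establish a priori moment bounds $\int \norm{\varparam - \paramlim}^{p} \statdist{\step,\nlupdates}(\rmd\varparam) = O(\step^{p/2})$ for small $p$, combining the geometric $\wasserstein^2$-contraction of \Cref{prop:conv-stat-dist}, strong convexity, and the polynomial noise bound from \Cref{assum:smooth-var}. These bounds justify the scale $\delta \eqdef \vartheta - \paramlim = O_{L^p}(\sqrt{\step})$ and anchor all subsequent expansions. Under \Cref{assum:homogeneity}, each local trajectory started at $\paramlim + \delta$ admits the recursive expansion
\begin{align*}
& \sdfedavgop[c]{\step,h}(\paramlim + \delta;\randState[c][1:h]) - \paramlim \\
& \quad = M_{c,1:h}\, \delta - \step \!\!\sum_{h'=1}^{h}\!\! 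M_{c,h'+1:h}\, \updatefuncnoise[c]{\randState[c][h']}(\paramlim) + \step^{2} R_{c,h},
\end{align*}
where $M_{c,a:b} = \prod_{h''=a}^{b}(\Id - \step \hnfs{\paramlim}{\randState[c][h'']})$ and $R_{c,h}$ collects contributions involving $\hhf{\paramlim}$ applied to products of $\delta$ and the centered noise.

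Substituting into the pseudo-gradient and exploiting the mutual independence of the $\randState[c][h]$'s across $(c,h)$, the expected products $\PE[M_{c,a:b}]$ collapse to powers $(\Id - \step \hf{\paramlim})^{b-a+1}$. The first-moment identity then reduces, to leading order, to
\begin{align*}
\step \nlupdates \hf{\paramlim}\, \PE[\delta] + \frac{\step^{2}}{2\nagent} \hhf{\paramlim}\, T_{\nlupdates} + \text{h.o.t.} = 0,
\end{align*}
where $T_\nlupdates \eqdef \sum_{h=0}^{\nlupdates-1}\sum_{h'=1}^{h}(\Id - \step \hf{\paramlim})^{h-h'}\covfunc(\paramlim)(\Id - \step \hf{\paramlim})^{h-h'}$, and the $1/\nagent$ factor arises because cross-client noise-noise expectations vanish after the global averaging. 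Dividing by $\step \nlupdates \hf{\paramlim}$ and using that the discrete Lyapunov sum $\step \sum_{k \ge 0}(\Id - \step \hf{\paramlim})^{k} \covfunc(\paramlim) (\Id - \step \hf{\paramlim})^{k}$ converges to $\invopcov \covfunc(\paramlim)$ as $\step \to 0$ yields the announced leading term $\frac{\step}{2\nagent}\hf{\paramlim}^{-1} \hhf{\paramlim} \invopcov \covfunc(\paramlim)$. The covariance expansion proceeds analogously: the second-moment identity yields a discrete Lyapunov equation for $\covstatdistlim{\step,\nlupdates}$ whose leading solution is $\frac{\step}{\nagent} \invopcov \covfunc(\paramlim)$.

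The main obstacle is the combinatorial bookkeeping across the $\nagent \cdot \nlupdates$ independent noise variables, in particular isolating the diagonal noise-noise contributions that form $\biasSto$ from the many cross-terms that vanish in expectation, while simultaneously tracking Hessian randomness and third-derivative corrections. A second subtlety is obtaining the sharp $O(\step^{2}\nlupdates + \step^{3/2})$ remainder, which requires bounding triple-noise expectations through \Cref{assum:smooth-var} and relies on the a priori moment bounds being available at orders strictly greater than two.
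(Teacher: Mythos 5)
Your overall architecture (stationarity of the Markov chain, a priori moment bounds of order $O(\step^{p/2})$, a Taylor expansion of the update around $\paramlim$, a first-moment identity for the bias and a discrete Lyapunov identity for the covariance) is the same as the paper's. But there is a genuine gap in where you locate the leading-order bias. In your first-moment identity the second-derivative contribution is $\frac{\step^{2}}{2\nagent}\hhf{\paramlim}T_{\nlupdates}$ with $T_{\nlupdates}$ built only from the noise accumulated \emph{within a single round}. Under the standing condition $\step\nlupdates\le 1$, the inner sum defining $T_{\nlupdates}$ has at most $\nlupdates-1$ terms with $(\Id-\step\hf{\paramlim})^{j}\approx\Id$, so $T_{\nlupdates}\approx\tfrac{\nlupdates(\nlupdates-1)}{2}\covfunc(\paramlim)$ and, after dividing by $\step\nlupdates\hf{\paramlim}$, this contributes only $O(\step^{2}\nlupdates)$ --- a remainder term, not the announced leading term. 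Your appeal to ``the discrete Lyapunov sum $\step\sum_{k\ge0}(\Id-\step\hf{\paramlim})^{k}\covfunc(\paramlim)(\Id-\step\hf{\paramlim})^{k}$ converges to $\invopcov\covfunc(\paramlim)$'' is correct for the \emph{infinite} sum, but $T_{\nlupdates}$ truncates it after $\nlupdates\ll 1/(\step\strcvx)$ terms, so the identification fails.

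The actual source of $\biasSto$ is the term $\tfrac12\hhf{\paramlim}\,\PE\bigl[(\text{local iterate}-\paramlim)^{\otimes2}\bigr]$, whose dominant part is the stationary covariance $\covstatdistlim{\step,\nlupdates}$ itself (noise accumulated over the $\sim 1/(\step\strcvx\nlupdates)$ \emph{rounds} needed to reach stationarity), the within-round accumulation being only an $O(\step^{2}h)$ correction; this is a term your expansion relegates to the remainder $\step^{2}R_{c,h}$. Consequently the bias and covariance cannot be treated as two parallel computations: one must first solve the across-rounds Lyapunov equation $\covstatdistlim{\step,\nlupdates}=\fullcontract{\star}\covstatdistlim{\step,\nlupdates}\fullcontract{\star}+\tfrac{\step^{2}\nlupdates}{\nagent}\covfunc(\paramlim)+\text{h.o.t.}$ to obtain $\covstatdistlim{\step,\nlupdates}=\tfrac{\step}{\nagent}\invopcov\covfunc(\paramlim)+\ldots$, and only then feed this into the first-moment identity, where the prefactor $\step\sum_{h}(\Id-\step\hf{\paramlim})^{\nlupdates-h-1}=(\Id-\fullcontract{\star})\hf{\paramlim}^{-1}$ cancels against $(\Id-\fullcontract{\star})^{-1}$ and leaves $\tfrac{\step}{2\nagent}\hf{\paramlim}^{-1}\hhf{\paramlim}\invopcov\covfunc(\paramlim)$. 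A secondary issue: your propagation matrices use second derivatives of the stochastic functions $\nFww[c]{z}$, which the assumptions do not grant (only $\nfw[c]$ is assumed three times differentiable); the paper avoids this by propagating with the deterministic $(\Id-\step\hf{\paramlim})$ and keeping all stochasticity in the centered noise evaluated along the trajectory.
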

The proof of \Cref{thm:exp-homogeneous} is given in \Cref{sec:proof-homogeneous}.
\Cref{thm:exp-homogeneous} shows that \FedAvg is biased whenever the function $f$ is not quadratic.
This bias is proportional to the third-order derivative of $\fw$ and the variance of the gradients at the solution.
Crucially, this bias exists even if the clients are homogeneous.
It is very similar to the bias of \textsc{SGD} given in \citet{dieuleveut2020bridging} for $\nagent=1$ and results from the fact that the third derivative of $\nfw$ is non-zero.
Remarkably, \Cref{thm:exp-homogeneous} guarantees that as long as $\gamma \nlupdates$ is small enough, both the bias and the variance of \FedAvg\ decrease inversely proportional to the number of clients $\nagent$, leading to the desired linear speed-up property.

It is worth noting that the bias of \FedAvg\ in homogeneous settings was previously identified as \emph{iterate bias}. \citet{khaled2020tighter,wang2024Unreasonable} showed that this iterate bias scales with a uniform bound on the gradient variance, and \citet{glasgow2022sharp} provided a refined upper bound using constraints on the third-order derivative of $\fw$. Our paper goes beyond these results and provides a precise first-order expansion of the bias. Importantly, our estimate scales with the variance at $\paramlim$ and does not require a uniform bound on the gradient variance.

\paragraph{Heterogeneous Functions.}
Finally, we present the bias of \FedAvg in the general case, encompassing non-quadratic and heterogeneous functions.
\begin{restatable}[Bias of \FedAvg, Heterogeneous]{theorem}{biasfedavgheterogeneous}
\label{thm:bias-var-heterogeneous}
Assume \Cref{assum:local_functions},  \Cref{assum:heterogeneity} and \Cref{assum:smooth-var}. 
Let $\step \le 1/(45\lip)$ such that $\step \strcvx \nlupdates \le 1$, then the bias and variance of \FedAvg, as defined in \eqref{eq:def-expec-var}, are
\begin{align*}
\statdistlim{\step, \nlupdates} \!-\! \paramlim
& \! = \!\frac{\step}{2 \nagent}
\biasSto \!+\! \frac{\step(\nlupdates\!\!-\!1)}{2}  
\biasHetero
\!+\! O(\step^{2} \nlupdates^2 \!+\! \step^{{3}/{2}} \nlupdates)
\eqsp,
\\%[0.5em]
 \covstatdistlim{\step,\nlupdates}
& \! =
\frac{\step}{\nagent}
\invopcov \covfunc(\paramlim)
+ O(\step^{2} \nlupdates^2 \!+\! \step^{{3}/{2}} \nlupdates)
\eqsp,
\end{align*}
where $\invopcov$ and $\covfunc(\paramlim)$ are defined in \eqref{eq:def-a-C}, and $\biasHetero $ and $ \biasSto$ are defined in \Cref{thm:bias-quadratic-sto,thm:exp-homogeneous} respectively.
\end{restatable}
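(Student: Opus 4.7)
The plan is to combine the deterministic analysis underlying \Cref{cor:first-order-det} with the stochastic curvature analysis of \Cref{thm:exp-homogeneous}, leveraging the Markov framework of \Cref{prop:conv-stat-dist}. The key structural observation is that, to leading order in $\step\nlupdates$, the two sources of bias decouple additively: the heterogeneity bias arises from the deterministic drift induced by multiple local gradient steps on differing functions, while the stochasticity bias arises from the coupling between the third-order curvature of $\fw$ and the gradient noise covariance; cross-terms between noise and heterogeneity contribute only at order $\step^2\nlupdates^2$ or $\step^{3/2}\nlupdates$.

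First, I would strengthen \Cref{prop:conv-stat-dist} by establishing uniform polynomial moment bounds $\int \norm{\varparam - \paramlim}^{2p} \statdist{\step,\nlupdates}(\rmd\varparam) < \infty$ for $p \in \{1,2,3\}$ under \Cref{assum:smooth-var}, via a drift argument on $\PE[\norm{\param[t] - \paramlim}^{2p}]$. The stringent condition $\step \le 1/(25\lip)$ is needed here to close the Lyapunov inequality with the polynomially growing noise allowed by \Cref{assum:smooth-var}. From invariance of $\statdist{\step,\nlupdates}$ and the definition \eqref{eq:def-dfedavgop-sto}, the fixed-point identity $\PE[\pseudograds[\step,\nlupdates]{\varparam}{\randState[1:\nagent][1:\nlupdates]}] = 0$ follows, where $\varparam \sim \statdist{\step,\nlupdates}$ is independent of $\randState[1:\nagent][1:\nlupdates]$.

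The core step is a joint Taylor expansion of each local iterate $\sdfedavgop[c]{\step,h}(\varparam;\randState[c][1:h])$ around $\paramlim$ up to third order, both in the deviation $\varparam - \paramlim$ and in the centered noise increments $\updatefuncnoise[c]{\randState[c][\ell]}(\paramlim)$. Substituting into the stationary identity and taking expectations, the noise-free terms reproduce the deterministic pseudo-gradient and, after invoking \Cref{prop:bias-det-fedavg,cor:first-order-det} applied to the mean $\statdistlim{\step,\nlupdates}$, yield the heterogeneity contribution $\frac{\step(\nlupdates-1)}{2}\biasHetero$ at leading order. Terms linear in noise vanish in expectation, while terms quadratic in noise produce, after averaging over $\randState[1:\nagent][1:\nlupdates]$, a source involving $\hhf{\paramlim}$ contracted with $\covfunc(\paramlim)$: solving the resulting Lyapunov-type equation using $\invopcov$ exactly as in \Cref{thm:exp-homogeneous} gives the $\frac{\step}{2\nagent}\biasSto$ contribution. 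Cross terms mixing the heterogeneity-perturbed trajectory with noise are estimated via $\norm{\varparam - \detlim{\step,\nlupdates}}$ (which is $O(\sqrt{\step/\nagent})$ plus $O(\step)$ under the stationary law) times noise moments, producing the stated $O(\step^{3/2}\nlupdates + \step^2\nlupdates^2)$ remainders.

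The covariance expansion is obtained from the analogous stationary identity $\PE[(\sdfedavgop{\step,\nlupdates}(\varparam;\cdot) - \paramlim)^{\otimes 2}] = \covstatdistlim{\step,\nlupdates}$: expanding to second order yields a linear equation in $\covstatdistlim{\step,\nlupdates}$ whose leading-order solution is $\frac{\step}{\nagent}\invopcov\covfunc(\paramlim)$, matching \Cref{thm:bias-quadratic-sto,thm:exp-homogeneous}, with remainders bounded by the same mechanism. The main obstacle will be the careful bookkeeping of the Taylor remainders: since noises on a given client are multiplied by random operators depending on preceding iterates, the relevant cross-moments must be propagated inductively through the $\nlupdates$ local steps, with the contraction factors $\contract[c]{\star}$ of \eqref{eq:def-contract-sto-mat} playing the role analogous to \eqref{eq:def-contract-det} in the deterministic analysis. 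I expect the most delicate estimates to concern the $\step^{3/2}\nlupdates$ terms, which arise from third-order noise moments and require the full polynomial moment bounds established in the first step together with the smoothness of $\hhnfw[c]$ implicit in \Cref{assum:local_functions}.
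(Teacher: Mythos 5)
Your proposal follows essentially the same route as the paper's proof: polynomial moment bounds on the stationary distribution under the $\step \le 1/(25\lip)$ condition, the stationarity fixed-point identity, a second/third-order Taylor expansion of the local updates around $\paramlim$ that decouples a heterogeneity term (linear in $\gnf[c]{\paramlim}$, expanded as in the quadratic case) from a stochasticity term driven by $\hhf{\paramlim}$ contracted with the stationary covariance, and a Lyapunov-type solve via $\invopcov$ for that covariance, with cross-terms absorbed into the $O(\step^2\nlupdates^2 + \step^{3/2}\nlupdates)$ remainder. The only cosmetic difference is that you phrase the heterogeneity part as an invocation of the deterministic results at the stationary mean, whereas the paper re-derives it inside the stochastic expansion with Hessians at $\paramlim$; the leading terms coincide, so this does not change the argument.
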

The proof of \Cref{thm:bias-var-heterogeneous} is given in \Cref{sec:proof-heterogeneous}.
This result shows that the bias of \FedAvg with heterogeneous clients consists of two terms: one due to heterogeneity, which exactly matches the bias of \FedAvg in quadratic settings, and one due to stochasticity, which exactly matches the bias of \FedAvg for homogeneous functions.
Again, in this result, we show that when $\nlupdates$ is of order $O(1/\nagent)$, \FedAvg achieves the linear speed-up with respect to the number of clients $\nagent$.

\section{RICHARDSON-ROMBERG FOR FEDERATED AVERAGING}
\label{sec:richardson-romberg}
In this section, we apply the Richardson-Romberg extrapolation method to \FedAvg in the context of stochastic gradients and heterogeneous clients. This approach builds upon the bias expression derived from \Cref{thm:bias-quadratic-sto,thm:exp-homogeneous,thm:bias-var-heterogeneous} to define new estimators, that are built by running \FedAvg twice, using different step sizes, and combining the resulting iterates.
In the following, for $t \in \iint{0}{\nrounds}$, we denote $\paramw^{(\step, \nlupdates)}_{t}$ the iterates of \FedAvg with parameters $\step$ and $\nlupdates$, and $\param[t]^{(2\step, \nlupdates)}$ the iterates with parameters $2\step$ and $\nlupdates$. 

\textbf{Richardson-Romberg Extrapolation.}
Using the sequences of iterates $\paramw^{(\step, \nlupdates)}_{t}$ and $\param[t]^{(2\step, \nlupdates)}$, we define the federated Richardson-Romberg iterates as
\begin{align*}
\vartheta_t^{(\step,H)}
& \eqdef
2 \param[t]^{(\step,\nlupdates)} - \param[t]^{(2\step,\nlupdates)}
\eqsp.
\end{align*}
We stress that computing these iterates does not induce additional memory overhead for the clients. 
However, it requires running \FedAvg twice, multiplying the number of communications by two.
We now show that this procedure reduces \FedAvg's bias, leading to a diminished communication complexity.
This method is thus very well suited for use cases where devices have limited computational resources.
\begin{restatable}[Richardson-Romberg]{theorem}{richardsonrombergconvergenceiterates}
\label{thm:RR-non-avg}
Assume \Cref{assum:local_functions},  \Cref{assum:heterogeneity}, \Cref{assum:smooth-var}, and \Cref{assum:unif-bound}.
Let $\step \le 1/(45\lip)$ and $\step \strcvx \nlupdates \le 1$, then the bias of the Richardson-Romberg estimates is
\begin{align*}
\statdistlimv{\step,H} - \paramlim
=
O(\step^2 \nlupdates^2 \!+\! \step^{3/2} \nlupdates)
\eqsp,
\end{align*}
where $\smash{\statdistlimv{\step,H}} \eqdef 2 \statdistlim{\step, \nlupdates} - \statdistlim{2\step, \nlupdates}$. 
Additionally, for any $\epsilon > 0$, it holds that $\PE[\norm{ \vartheta_t^{(\step,H)} - \paramlim}^2 ] = O (\epsilon^2)$ when $\step = O(\epsilon^2)$, $\nlupdates = O(1/\epsilon^{4/3})$, with a number of communications at least
\begin{align*}
\nrounds = O\Big( \frac{1}{\epsilon^{2/3}} \log\Big( \frac{1}{\epsilon} \Big) \Big)
\eqsp.
\end{align*}
\end{restatable}
We prove this Theorem in \Cref{sec:proof-cv-rr}.
This theorem shows that federated Richardson-Romberg extrapolation effectively reduces the bias of \FedAvg. 
As a consequence, to reach a given precision, its communication complexity is reduced, in its leading factor, by a power $2/3$ compared to \FedAvg.
Note that in \Cref{thm:RR-non-avg}, we only aim to show that the communication complexity has reduced dependency on the desired precision~$\epsilon$.
Thus, we do not study its dependency on the problem's constants $\strcvx$ and $\lip$.
To derive more precise results, one needs to give a precise upper bound on the remainder in \Cref{thm:bias-var-heterogeneous}.
Deriving such bounds is an interesting direction for future work.

\begin{figure*}[t]
    \centering
    
     \begin{subfigure}[b]{0.24\linewidth}
         \centering
         \includegraphics[width=\textwidth]{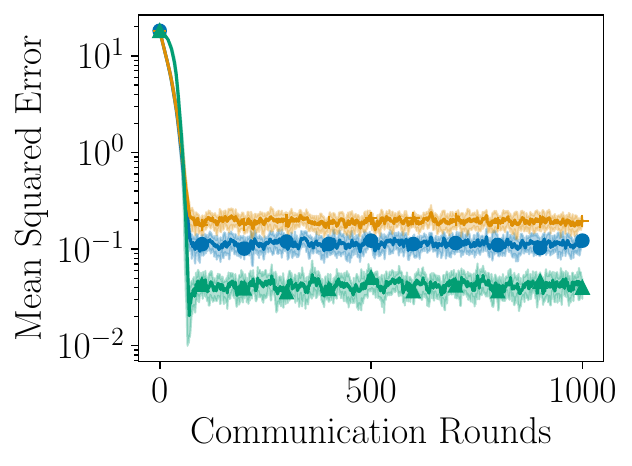}
         \caption{Iterates -- $H=10$}
     \label{10in}
     \end{subfigure}
     \begin{subfigure}[b]{0.235\linewidth}
         \centering
         \includegraphics[width=\textwidth]{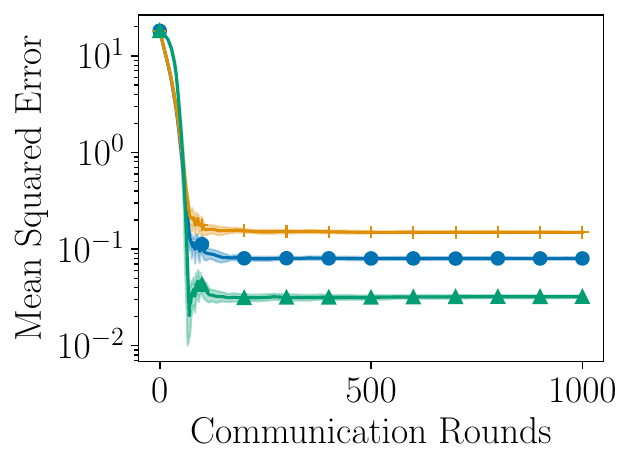}
         \caption{Averaged -- $H=10$}
     \label{10an}
     \end{subfigure}
     \begin{subfigure}[b]{0.24\linewidth}
         \centering
         \includegraphics[width=\textwidth]{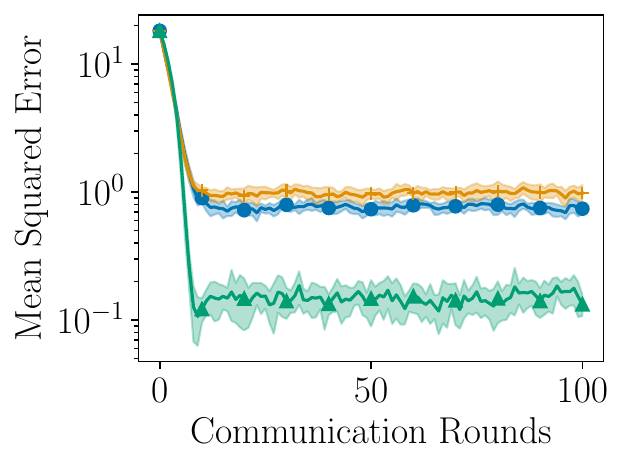}
         \caption{Iterates -- $H=100$}
     \label{100in}
     \end{subfigure}
     \begin{subfigure}[b]{0.235\linewidth}
         \centering
         \includegraphics[width=\textwidth]{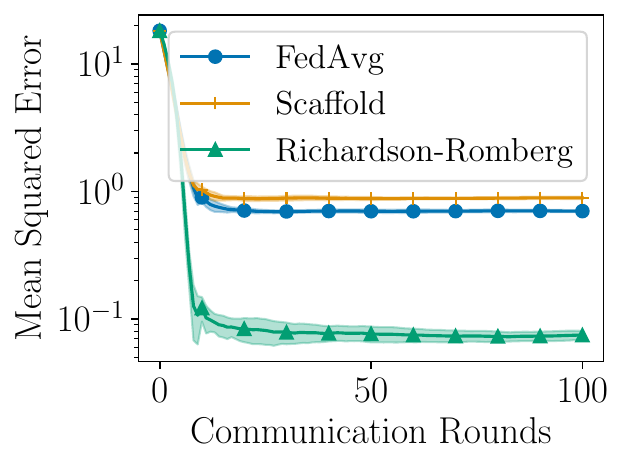}
         \caption{Averaged -- $H=100$}
     \label{100an}
     \end{subfigure}
     
     \begin{subfigure}[b]{0.24\linewidth}
         \centering
         \includegraphics[width=\textwidth]{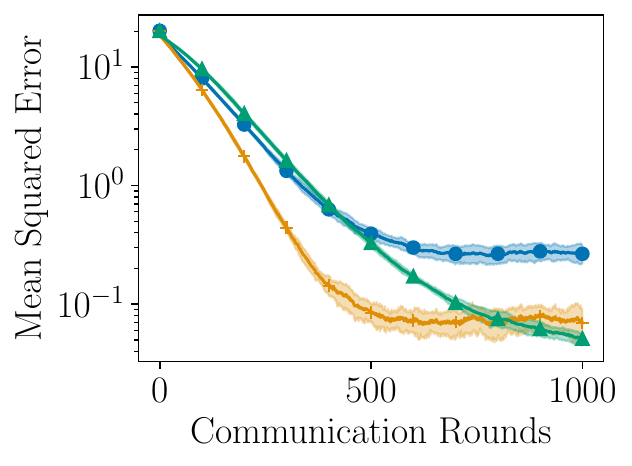}
         \caption{Iterates -- $H=10$}
     \label{10ih}
     \end{subfigure}
     \begin{subfigure}[b]{0.235\linewidth}
         \centering
         \includegraphics[width=\textwidth]{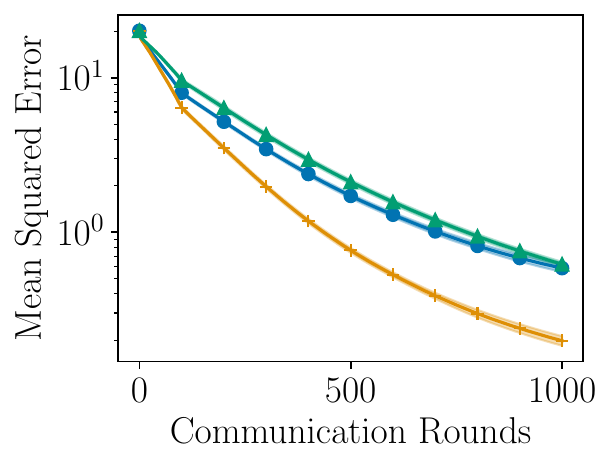}
         \caption{Averaged -- $H=10$}
     \label{10ah}
     \end{subfigure}
     \begin{subfigure}[b]{0.24\linewidth}
         \includegraphics[width=\textwidth]{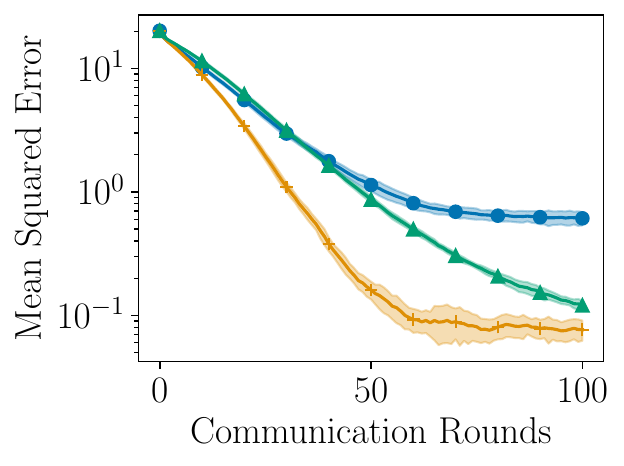}
         \centering
         \caption{Iterates -- $H=100$}
     \label{100ih}
     \end{subfigure}
     \begin{subfigure}[b]{0.235\linewidth}
         \centering
         \includegraphics[width=\textwidth]{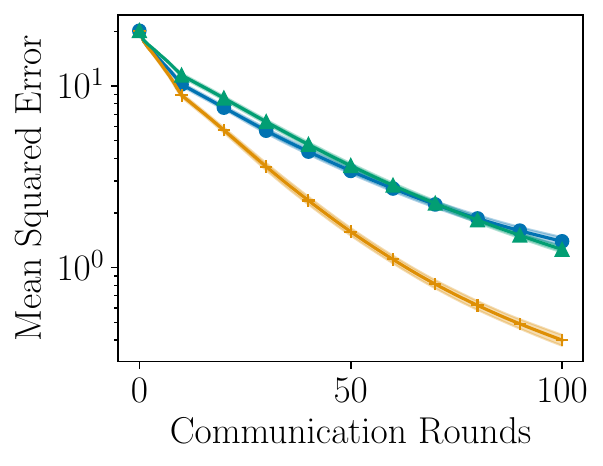}
         \caption{Average -- $H=100$}
     \label{100ah}
     \end{subfigure}
    \caption{Mean squared error on the \texttt{synthetic noisy} (first line) and on the \texttt{synthetic heterogeneous} dataset (second line), as a function of the number of communications, for $\nlupdates \in \{10, 100\}$. In \Cref{10in,100in,10ih,100ih} (labelled \emph{Iterates}), we plot the MSE for global iterates of the three methods, while in \Cref{10an,100an,10ah,100ah} (labelled \emph{Averaged}), we plot the MSE for first $10$\% of iterates, and then plot the MSE of the averaged iterates for the last $90$\% of the iterates. We plot the average over $10$ runs, with standard deviation.}
    \label{fig:results}
\end{figure*}

\textbf{Averaged Estimator.}
Although the previous estimator reduces both heterogeneity and stochasticity bias, its error is still dominated by the variance of single iterates, requiring to take small step sizes to handle variance.
To overcome this issue, we propose the following averaged Richardson-Romberg estimator
\begin{align*}
\bar{\vartheta}_T^{(\step,\nlupdates)} \eqdef \frac{1}{T} \sum_{t=0}^{T-1} \vartheta_t^{(\step,H)} \eqsp.
\end{align*}
In the following theorem, we show that this estimator converges to a point of reduced bias.
To our knowledge, this is the first procedure that uses raw \FedAvg iterates to obtain a result with reduced stochastic bias.
\begin{restatable}[Richardson-Romberg]{theorem}{richardsonrombergconvergence}
\label{theo:RR-bias}
Assume \Cref{assum:local_functions},  \Cref{assum:heterogeneity} and \Cref{assum:smooth-var}.
Let $\step \le 1/(45\lip)$ such that $\step\strcvx\nlupdates \le 1$, then %
\begin{equation*}
    \lim_{T \to \infty} \expe{\normLigne[2]{\bar{\vartheta}_T^{(\step,\nlupdates)} - \statdistlimv{\step,H}}} = 0 \eqsp,
\end{equation*}
where we recall that $\statdistlimv{\step,H} - \paramlim
=
O(\step^2 \nlupdates^2 \!+\! \step^{3/2} \nlupdates)$.
\end{restatable}
We prove this Theorem in \Cref{sec:proof-cv-averaged-rr}.
This implies that, when $\step \nlupdates$ is small, the averaged iterates of \FedAvg with Richardson-Romberg extrapolation have a smaller bias than vanilla \FedAvg. 

Note that, in contrast to \cite{dieuleveut2020bridging}, we do not deal with the variance of \FedAvg and its averaged federated Richardson-Romberg approximation counterpart, \ie, we do not quantify the rate of convergence to $0$ of  $\PE[\Vert \bar{\vartheta}_T^{(\step,\nlupdates)} - \statdistlim{\step,H} \Vert^2] $. Solving this question is an interesting direction for future work.

\begin{remark}
\label{rem:richardson-romberg-local-steps}
When $\nlupdates > 1$, one could define a Richardson-Romberg estimator by varying the number of local steps, defining 
$\omega_t^{(\gamma,H)}
 \eqdef
(2 \nlupdates \!-\! 1)/(\nlupdates \!-\! 1) \param[t]^{(\step, \nlupdates)} - \param[t]^{(2 \step, \nlupdates)}$ and $\bar{\omega}_T^{(\step,\nlupdates)} \eqdef \frac{1}{\nrounds}\sum_{t=0}^{T-1} {\omega}_t^{(\step,H)} $.
The sequence $\{\bar{{\omega}}_T^{(\step,\nlupdates)}\}_{T\geq 1}$ converges to $ (2 \nlupdates - 1)/({\nlupdates - 1}) \statdistlim{\step, \nlupdates} - \statdistlim{\step, 2 \nlupdates}
=
\step \biasSto/(2N)
+ O(\step^2 \nlupdates^2 \!+\! \step^{3/2} \nlupdates^{1/2})$, 
removing heterogeneity bias but not stochasticity bias.
The iterates obtained through this procedure therefore have a bias close to the one of the homogeneous setting.
\end{remark}

\section{NUMERICAL EXPERIMENTS}
\label{sec:expe}

This section illustrates our theoretical findings using regularized logistic regression problems.
This problem can be formulated as~\eqref{pb:smooth-fl}, using $z = (x, y)$ where $x$ and $y$ are respectively the data features and label, and $\lambda > 0$ is a regularization parameter, and
$\smash{\nf[c]{\paramw} \eqdef \PE\left[ \log(1 + \exp(1 - y_c x_c^\top \paramw)) + \lambda/2 \norm{ \paramw }^2 \right]}$, %
and for each $c \in \iint{1}{\nagent}$, the sample $z_c = (x_c, y_c)$ is drawn from client $c$'s local distribution.

We evaluate our approach on two synthetic datasets with  $\nagent=10$ clients.
The first dataset, coined \texttt{synthetic noisy}, is made of two blobs with large variance, split uniformly among clients. It is thus homogeneous, but contains very noisy data. On the opposite, the second dataset, coined \texttt{synthetic heterogeneous}, is made of $2$ blobs with small variance. Half of the clients receive part of the observations directly, while the other half receive perturbed records with shuffled labels. In this second dataset, data is very heterogeneous but has little noise.

We evaluate three algorithms on these datasets: (i) vanilla \FedAvg, (ii) \FedAvg with Richardson-Romberg extrapolation, as described in  \Cref{sec:richardson-romberg}, and (iii) \Scaffold \citep{karimireddy2020scaffold}. For all experiments, we use $\nagent = 10$ and run the algorithm for a total of $\nrounds \nlupdates = 10,000$ estimation of the full gradient, using batch size one and step size $\step = 0.01$.

We plot the results in \Cref{fig:results}, showing that on the two problems that we consider, \FedAvg with Richardson-Romberg extrapolation consistently outperforms vanilla \FedAvg.
However, in non-noisy, stochastic settings (second line of \Cref{fig:results}), it only partly removes heterogeneity bias.
On the opposite, \Scaffold, which uses control variates to handle heterogeneity, successfully suppresses this bias.
More remarkably, when clients are homogeneous, but have noisy data (first line of \Cref{fig:results}), \FedAvg with Richardson-Romberg can reduce the bias, while \Scaffold fails.
This further confirms our theory, highlighting that \FedAvg with Richardson-Romberg extrapolation effectively reduces stochasticity bias.

\section{CONCLUSION}
\label{sec:conclu}
In this paper, we introduced a novel perspective on \FedAvg, centered on the idea that the global iterates of the algorithm converge to a stationary distribution. We conducted a detailed analysis of this distribution, deriving an exact first-order expression for both the bias and variance of \FedAvg's iterates. Notably, our results demonstrate that, as long as the number of local steps is not excessively large, the bias of \FedAvg decreases at a rate of $1/\nagent$. Moreover, we established that \FedAvg's bias consists of two distinct components: one arising purely from data heterogeneity and the other from the stochastic nature of the gradients. Crucially, this proves that \FedAvg remains biased even in perfectly homogeneous settings. Building on this key insight, we applied the Richardson-Romberg extrapolation technique to introduce a new method for mitigating \FedAvg's bias. Unlike existing approaches, our method can reduce \emph{both sources of bias}—heterogeneity bias and gradient stochasticity bias—offering a more comprehensive solution.
This opens novel perspectives for the design of federated learning methods with local training.

\section*{ACKNOWLEDGEMENTS}
The work of P. Mangold has been supported by Technology Innovation Institute (TII), project Fed2Learn. The work of Aymeric Dieuleveut is supported by Hi!Paris FLAG chair, and this work has benefited from French State aid managed by the Agence Nationale de la Recherche (ANR) under France 2030 program with the reference ANR-23-PEIA-005 (REDEEM project).
This work is supported by Hi! PARIS and ANR/France 2030 program (ANR-23-IACL-0005)
The work of E. Moulines has been partly funded by the European Union (ERC-2022-SYG-OCEAN-101071601). Views and opinions expressed are however those of the author(s) only and do not necessarily reflect those of the European Union or the European Research Council Executive Agency. Neither the European Union nor the granting authority can be held responsible for them.
The work of S. Samsonov was prepared within the framework of the HSE University Basic Research Program.

\renewcommand{\refname}{REFERENCES}
\bibliographystyle{plainnat}
\bibliography{references.bib}

\clearpage 

\appendix
\onecolumn

\aistatstitle{Supplementary Materials}

\section{Refined Analysis of \FedAvg}
\label{sec:refined-fedavg-det}

\subsection{Convergence and Bias -- Proof of \Cref{prop:convergence-fedavg-to-point,prop:bias-det-fedavg} and \Cref{cor:convergence-det-fedavg}}
\label{sec:proof-thm-expand-deterministic-prelim-prop}

To study the convergence of \FedAvgwDG, we first recall the notations introduced in \Cref{sec:deterministic-fedavg}. Namely, we recall that the local updates of \FedAvgwDG for $\paramw \in \rset^d$ and $0 \le h \le \nlupdates - 1$ are denoted as
\begin{align*}
\dfedavgop[\gamma, 0]_c(\paramw) 
& \eqdef \paramw
\eqsp,
\\
\dfedavgop[\gamma, h+1]_c(\paramw)
& \eqdef   
\dfedavgop[\gamma, h]_c(\paramw) - \step \gnf[c]{\dfedavgop[\gamma, h]_c(\paramw)}
\eqsp.
\end{align*}
Additionally, we recall that $\dfedavgop[\gamma,\nlupdates] = \frac{1}{\nagent} \sum_{c=1}^\nagent \dfedavgop[\gamma, \nlupdates]_c$.
First, we show that the local operators are contractions.
\begin{lemma}[Contraction of \FedAvgwDG's Local Iterates]
\label{lem:contract-op-fedavg-loc}
Assume \Cref{assum:local_functions}. Then, for any $\step \le 1/\lip$, $\paramw, \varparam \in \rset^d$, and $c \in \iint{1}{\nagent}$, it holds that 
\begin{align*}
\norm{ (\paramw -  \step \gnf[c]{\paramw}) - ( \varparam - \step \gnf[c]{\varparam} ) }^2 \le
(1 - \step \mu) \norm{ \param - \varparam }^2
\eqsp.
\end{align*}
\end{lemma}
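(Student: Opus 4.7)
The plan is to expand the squared norm and then combine the two ingredients supplied by \Cref{assum:local_functions}: co-coercivity of the stochastic gradients (item~(b)) and strong convexity of $f_c$ (item~(a)). Writing
\begin{equation*}
\norm{(\paramw - \step\gnf[c]{\paramw}) - (\varparam - \step\gnf[c]{\varparam})}^2 = \norm{\paramw - \varparam}^2 - 2\step\pscal{\paramw - \varparam}{\gnf[c]{\paramw} - \gnf[c]{\varparam}} + \step^2\norm{\gnf[c]{\paramw} - \gnf[c]{\varparam}}^2,
\end{equation*}
the whole task reduces to controlling the last two terms jointly.

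For the squared-gradient term, I would first transfer the pointwise co-coercivity of \Cref{assum:local_functions}(b) from the stochastic gradients $\gnfs[c]{\cdot}{z}$ to the population gradient $\gnfw[c]$. Taking expectation over $\randState[c][]\sim\xic$ in the bound stated there yields
\begin{equation*}
\PE\bigl[\norm{\gnfs[c]{\paramw}{\randState[c][]} - \gnfs[c]{\varparam}{\randState[c][]}}^2\bigr] \le 2\lip\pscal{\paramw - \varparam}{\gnf[c]{\paramw} - \gnf[c]{\varparam}},
\end{equation*}
and Jensen's inequality applied to the left-hand side then produces $\norm{\gnf[c]{\paramw} - \gnf[c]{\varparam}}^2 \le 2\lip\pscal{\paramw - \varparam}{\gnf[c]{\paramw} - \gnf[c]{\varparam}}$. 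Substituting this into the previous expansion, the right-hand side becomes $\norm{\paramw - \varparam}^2 - 2\step(1 - \step\lip)\pscal{\paramw - \varparam}{\gnf[c]{\paramw} - \gnf[c]{\varparam}}$.

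The hypothesis $\step \le 1/(2\lip)$ ensures $1 - \step\lip \ge 1/2$, so that the coefficient in front of the inner product is at least $\step$, and the inner product itself is nonnegative by convexity. Strong convexity (\Cref{assum:local_functions}(a)) then provides the lower bound $\pscal{\paramw - \varparam}{\gnf[c]{\paramw} - \gnf[c]{\varparam}} \ge \strcvx\norm{\paramw - \varparam}^2$, which yields the claimed $(1 - \step\strcvx)$ contraction.

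I do not anticipate any real obstacle: this is the textbook contraction of a single gradient step on a strongly convex smooth function, with the only minor twist being that co-coercivity is stated at the level of $\nFww[c]{z}$ and must be pushed through an expectation and a Jensen step before it can be applied to $\gnfw[c]$.
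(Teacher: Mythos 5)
Your proof is correct and follows essentially the same route as the paper's: expand the squared norm, use co-coercivity to absorb the $\step^2\norm{\gnf[c]{\paramw}-\gnf[c]{\varparam}}^2$ term into the inner product, then apply strong convexity and the condition $\step\le 1/(2\lip)$. The only cosmetic difference is your explicit Jensen/expectation step to transfer co-coercivity from $\nFww[c]{z}$ to $\gnfw[c]$, which the paper skips since in the deterministic setting $\nFww[c]{z}=\nfw[c]$; both justifications are valid.
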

\begin{proof}
Using strong convexity and co-coercivity, we have for any $c \in \iint{1}{\nagent}$, that
\begin{align*}
\norm{ (\paramw -  \step \gnf[c]{\paramw}) - ( \varparam - \step \gnf[c]{\varparam} ) }^2
& = 
\norm{ \paramw - \varparam }^2
+ \step^2 \norm{ \gnf[c]{\param}
- \gnf[c]{\varparam} }^2 
- 2 \step \pscal{\param
- \varparam }{\gnf[c]{\param}
- \gnf[c]{\varparam} }
\\
& \leq
\norm{ \paramw - \varparam }^2
- 2 \step (1 - \step \lip / 2) \pscal{\param
- \varparam }{\gnf[c]{\param}
- \gnf[c]{\varparam } } \\
& \leq
\norm{ \paramw - \varparam }^2
- 2 \step \mu (1 - \step \lip/2 ) \norm{ \paramw - \varparam }^2 \eqsp.
\end{align*}
To conclude, it remains to note that $\step \le 1/\lip$.
\end{proof}

\begin{lemma}[Contraction of \FedAvgwDG's Global Iterates]
\label{lem:contract-op-fedavg}
Assume \Cref{assum:local_functions}. Then for any $\nlupdates > 0$, $\step \le 1/\lip$, and $\param, \varparam \in \rset^d$, the operator $\dfedavgop[\gamma, \nlupdates]$ satisfies 
\begin{align*}
\norm{ \dfedavgop[\gamma,\nlupdates] (\param) - \dfedavgop[\gamma,\nlupdates] (\varparam) }^2
 \le
(1 - \step \mu)^\nlupdates \norm{ \param - \varparam }^2
\eqsp.
\end{align*}
\end{lemma}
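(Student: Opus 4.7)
The plan is to bootstrap \Cref{lem:contract-op-fedavg-loc} from a single local step to the full composition of $\nlupdates$ local steps, and then average over clients using Jensen's inequality (convexity of the squared norm).

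First, I would show by induction on $h$ that, for every client $c \in \iint{1}{\nagent}$,
\begin{equation*}
\norm{ \dfedavgop[\gamma, h]_c(\paramw) - \dfedavgop[\gamma, h]_c(\varparam) }^2
\le (1 - \step \mu)^{h} \norm{ \paramw - \varparam }^2 \eqsp.
\end{equation*}
The base case $h=0$ holds because $\dfedavgop[\gamma, 0]_c = \Id$. For the inductive step, I apply \Cref{lem:contract-op-fedavg-loc} to the pair $(\dfedavgop[\gamma, h]_c(\paramw), \dfedavgop[\gamma, h]_c(\varparam))$, which is valid since the assumption $\step \le 1/(2\lip)$ is exactly the hypothesis of that lemma, and then invoke the induction hypothesis. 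This yields the contraction of each client's full local map $\dfedavgop[\gamma, \nlupdates]_c$ by a factor $(1-\step\mu)^{\nlupdates}$.

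Next, I would average over clients. Since $\dfedavgop[\gamma, \nlupdates](\paramw) = \nagent^{-1} \sum_{c=1}^\nagent \dfedavgop[\gamma, \nlupdates]_c(\paramw)$, the squared-norm convexity (Jensen's inequality) gives
\begin{align*}
\norm{ \dfedavgop[\gamma,\nlupdates](\paramw) - \dfedavgop[\gamma,\nlupdates](\varparam) }^2
& = \norm{ \frac{1}{\nagent} \sum_{c=1}^\nagent \bigl( \dfedavgop[\gamma, \nlupdates]_c(\paramw) - \dfedavgop[\gamma, \nlupdates]_c(\varparam) \bigr) }^2 \\
& \le \frac{1}{\nagent} \sum_{c=1}^\nagent \norm{ \dfedavgop[\gamma, \nlupdates]_c(\paramw) - \dfedavgop[\gamma, \nlupdates]_c(\varparam) }^2 \\
& \le (1 - \step \mu)^{\nlupdates} \norm{ \paramw - \varparam }^2 \eqsp,
\end{align*}
where the last inequality follows from the per-client bound above, which is uniform in $c$. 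This completes the proof.

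There is no real obstacle: the proof is two short layers (iterated application of the one-step contraction, then Jensen across clients). The only subtlety worth flagging is that Jensen is applied to the \emph{difference} of the two averaged quantities, exploiting the fact that both are averages with the same uniform weights $1/\nagent$; this is why averaging does not degrade the contraction factor, and in particular the bound remains $(1-\step\mu)^{\nlupdates}$ rather than something weaker.
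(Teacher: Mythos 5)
Your proposal is correct and follows essentially the same route as the paper's proof: iterate the one-step contraction of \Cref{lem:contract-op-fedavg-loc} to contract each client's local map $\dfedavgop[\gamma,\nlupdates]_c$ by $(1-\step\mu)^{\nlupdates}$, then apply Jensen's inequality to the uniform average over clients. No gaps.
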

\begin{proof}
First, we show that $\dfedavgop[\gamma,h]_c$ is a strict contraction for any $h \in \iint{1}{\nlupdates}$. Note that for any $\param, \varparam \in \rset^d$,
\begin{align*}
\dfedavgop[\gamma,h+1]_c (\param)
- \dfedavgop[\gamma,h+1]_c (\varparam)
= 
(\dfedavgop[\gamma,h]_c (\param)
- \step \gnf[c]{\dfedavgop[\gamma,h]_c (\param)})
- (\dfedavgop[\gamma,h]_c (\varparam)
- \step \gnf[c]{\dfedavgop[\gamma,h]_c (\varparam)}) 
\eqsp.
\end{align*}
Thus, it follows from \Cref{lem:contract-op-fedavg-loc} that
\begin{align}
\label{eq:one-step-contraction}
\norm{ \dfedavgop[\gamma,h+1]_c (\param)
- \dfedavgop[\gamma,h+1]_c (\varparam) }^2
& \le
(1 - \step \strcvx) \norm{ \dfedavgop[\gamma,h]_c (\param)
- \dfedavgop[\gamma,h]_c (\varparam) }^2
\eqsp.
\end{align}
Using Jensen's inequality and applying \eqref{eq:one-step-contraction} recursively, we obtain
\begin{align*}
\norm{ \dfedavgop[\gamma,\nlupdates] (\param) - \dfedavgop[\gamma,\nlupdates] (\varparam) }^2
& \le
\frac{1}{\nagent} \sum_{c=1}^\nagent \norm{ \dfedavgop[\gamma,\nlupdates]_c (\param) - \dfedavgop[\gamma,\nlupdates]_c (\varparam) }^2
 \le
(1 - \step \mu)^\nlupdates \norm{ \param - \varparam }^2
\eqsp,
\end{align*}
which concludes the proof.
\end{proof}

We now have all the tools required to prove \Cref{prop:convergence-fedavg-to-point}, that we restate here for readability.
\statpointfedavgdet*
\begin{proof}
By \Cref{lem:contract-op-fedavg}, $\dfedavgop[\gamma,\nlupdates]$ is a contraction mapping. Thus, by Banach fixed point theorem, there exists a unique stationary point $\detlim{\step, \nlupdates}$ to which \FedAvgwDG converges, and this point satisfies the fixed-point equation $\dfedavgop[ \gamma,\nlupdates](\detlim{\step, \nlupdates}) = \detlim{\step, \nlupdates}$, or, equivalently, $\pseudogradw[\step, \nlupdates](\detlim{\step, \nlupdates}) = 0$.

Then, we study the convergence rate of the algorithm.
Let $t > 0$, and $\paramw_{t+1}$ be the $(t+1)$-th global iterate of \FedAvg.
Since $\dfedavgop[\gamma,\nlupdates] (\detlim{\step, \nlupdates}) = \detlim{\step, \nlupdates}$, we write
\begin{align*}
\param[t+1] - \detlim{\step, \nlupdates}
& =
\dfedavgop[\gamma,\nlupdates] (\param[t])
- \dfedavgop[\gamma,\nlupdates] (\detlim{\step, \nlupdates})
\eqsp.
\end{align*}
Thus, by \Cref{lem:contract-op-fedavg}, we have
\begin{align*}
\norm{\param[t+1] - \detlim{\step, \nlupdates} }^2
& =
\norm{ \dfedavgop[\gamma,\nlupdates] (\param[t])
- \dfedavgop[\gamma,\nlupdates] (\detlim{\step, \nlupdates}) }^2
\le
(1 - \step \strcvx)^\nlupdates  \norm{ \param[t] - \detlim{\step, \nlupdates} }^2
\eqsp,
\end{align*}
and the result follows by induction.
\end{proof}

\biasdetfedavgwdg*
\begin{proof}
Starting from $\detlim{\step,\nlupdates}$, we write
\begin{align*}
\dfedavgop[\gamma,h+1]_c(\detlim{\step,\nlupdates}) 
& = 
\dfedavgop[\gamma,h]_c(\detlim{\step,\nlupdates}) 
- \step \gnf[c]{\dfedavgop[\gamma,h]_c(\detlim{\step,\nlupdates}) }
\\
& =
\dfedavgop[\gamma,h]_c(\detlim{\step,\nlupdates}) 
- \step (\gnf[c]{\dfedavgop[\gamma,h]_c(\detlim{\step,\nlupdates}) } - \gnf[c]{\paramlim})
- \step \gnf[c]{\paramlim}
\eqsp.
\end{align*}
Using the hessian matrix of $\nfw[c]$, we write the previous identity as
\begin{align}
\label{eq:proof-det-rec-one-step-matrix-inthess}
\dfedavgop[\gamma,h+1]_c(\detlim{\step,\nlupdates}) 
& =
\dfedavgop[\gamma,h]_c(\detlim{\step,\nlupdates})
- \step \inthess[c]{\step,h} (\dfedavgop[\gamma,h]_c(\detlim{\step,\nlupdates}) - \paramlim)
- \step \gnf[c]{\paramlim}
\eqsp,
\end{align}
where $\inthess[c]{\step,h} = \int_{0}^1 \hnf[c]{t \dfedavgop[\gamma,h]_c(\detlim{\step,\nlupdates}) + (1 - t) \paramlim} \rmd t$.
Applying \eqref{eq:proof-det-rec-one-step-matrix-inthess} recursively, we have
\begin{align*}
\dfedavgop[\gamma,\nlupdates]_c(\detlim{\step,\nlupdates}) - \paramlim
& =
\detcontract[c]{\star, 1:\nlupdates} (\detlim{\step,\nlupdates} - \paramlim)
- \step \sum_{h=1}^\nlupdates \detcontract[c]{\star, h+1:\nlupdates} \gnf[c]{\paramlim}
\eqsp,
\end{align*}
where we set, for $h \in \{1,\ldots,\nlupdates\}$, the quantity
\begin{align*}
\detcontract[c]{\star, h:\nlupdates}
= 
\prod_{\ell=h}^{\nlupdates-1} \left(\Id - \step \inthess[c]{\paramlim_{c,\ell}, \paramlim}\right)
\eqsp.
\end{align*}

Averaging over all clients, we obtain
\begin{align*}
\dfedavgop[\gamma,\nlupdates](\detlim{\step,\nlupdates}) - \paramlim 
& = 
\detfullcontract{\star} (\detlim{\step,\nlupdates} - \paramlim)
- \frac{\step}{\nagent} \sum_{c=1}^\nagent
 \sum_{h=1}^\nlupdates \detcontract[c]{\star,h+1:\nlupdates} \gnf[c]{\paramlim}
\eqsp.
\end{align*}
We now use the fact that $\detlim{\step,\nlupdates}$ is the fixed point of $\dfedavgop[\gamma,\nlupdates]$, \ie, $\dfedavgop[\gamma,\nlupdates](\detlim{\step,\nlupdates}) = \detlim{\step,\nlupdates}$, and subtract $\detfullcontract{\star} (\detlim{\step, \nlupdates} - \paramlim)$ on both sides to obtain
\begin{align*}
(\Id - \detfullcontract{\star}) (\detlim{\step, \nlupdates} - \paramlim) 
& = 
- \frac{\step}{\nagent} \sum_{c=1}^\nagent
 \sum_{h=1}^\nlupdates \detcontract[c]{\star,h+1:\nlupdates} \gnf[c]{\paramlim}
 \eqsp,
\end{align*}
which gives the first part of the result after multiplying by $(\Id - \detfullcontract{\star})^{-1}$ and introducing $\Upsilon^{(\step, h)}_c = (\Id - \detfullcontract{\star})^{-1} \detcontract[c]{\star,h+1:\nlupdates}$.
Now we introduce an additional notation for 
\begin{align}
\label{eq:F_avg_star_def}
\detcontract[\text{avg}]{\star, h:\nlupdates} = \prod_{\ell=h}^{\nlupdates-1} \left(\Id - \frac{\step}{\nagent} \sum_{c=1}^\nagent \inthess[c]{\paramlim_{c,\ell}, \paramlim}\right)
\eqsp.
\end{align}
With $\detcontract[\text{avg}]{\star, h:\nlupdates}$ defined in \eqref{eq:F_avg_star_def}, we get the following identity:
\begin{align}
\detlim{\step, \nlupdates} - \paramlim \nonumber
& = 
- \frac{\step}{\nagent} (\Id - \detfullcontract{\star})^{-1} \sum_{c=1}^\nagent
 \sum_{h=1}^\nlupdates \detcontract[c]{\star,h+1:\nlupdates} \gnf[c]{\paramlim}
\\ 
& \overset{(a)}{=} 
\frac{\step}{\nagent} \sum_{c=1}^\nagent
 \sum_{h=1}^\nlupdates
 (\Id - \detfullcontract{\star})^{-1}
 (\detcontract[\text{avg}]{\star,h+1:\nlupdates} - \detcontract[c]{\star,h+1:\nlupdates} )
 \gnf[c]{\paramlim}
 \label{eq:app-exp-bias-interm}
\\
& \overset{(b)}{=} 
\frac{\step}{\nagent} \sum_{c=1}^\nagent
 \sum_{h=1}^\nlupdates
 \sum_{k=0}^\infty (\detfullcontract{\star})^{k}
 ( \detcontract[\text{avg}]{\star,h+1:\nlupdates} - \detcontract[c]{\star,h+1:\nlupdates} )
 \gnf[c]{\paramlim}
 \eqsp, \nonumber
\end{align}
where (a) comes from $\sum_{c=1}^\nagent \gnf[c]{\paramlim} = 0$, and (b) is the Neumann series. Note that 
\begin{align*}
\bnorm{ \detcontract[\text{avg}]{\star,h+1:\nlupdates} - \detcontract[c]{\star,h+1:\nlupdates} }
& =
\bnorm{ \sum_{\ell = h+1}^\nlupdates  \detcontract[\text{avg}]{\star,h+1:\ell-1} (\step \inthess[c]{\paramlim_{c,\ell}, \paramlim} - \tfrac{\step}{\nagent} \textstyle{\sum_{c'=1}^\nagent} \inthess[c']{\paramlim_{c',\ell}, \paramlim}) \detcontract[\text{avg}]{\star,\ell+1:\nlupdates}  }
\\
& \le \step 
\sum_{\ell = h+1}^\nlupdates  \bnorm{ \inthess[c]{\paramlim_{c,\ell}, \paramlim} - \tfrac{1}{\nagent} \textstyle{\sum_{c'=1}^\nagent} \inthess[c']{\paramlim_{c',\ell}, \paramlim} }
\eqsp.
\end{align*}
Thus, we have $\bnorm{ \detcontract[\text{avg}]{\star,h+1:\nlupdates} - \detcontract[c]{\star,h+1:\nlupdates} } \leq 2 \step (\nlupdates-h) \lip$.
This gives
\begin{align*}
\norm{ \detlim{\step, \nlupdates} - \paramlim }
& \le
\frac{\step}{\nagent}
 \sum_{k=0}^\infty \sum_{c=1}^\nagent
 \sum_{h=1}^\nlupdates \norm{ (\detfullcontract{\star})^{k} }
 \bnorm{ \detcontract[\text{avg}]{\star,h+1:\nlupdates} - \detcontract[c]{\star,h+1:\nlupdates} }
 \norm{ \gnf[c]{\paramlim} }
\\
& \le
\frac{\step}{\nagent}
 \sum_{k=0}^\infty
 \sum_{c=1}^\nagent
 \sum_{h=1}^\nlupdates 
 2 (1 - \step \strcvx)^{\nlupdates k}
 \step (\nlupdates-h) \lip
 \norm{ \gnf[c]{\paramlim} }
 \eqsp,
\end{align*}
where we also used that $ \norm{ \detfullcontract{\star} } \le (1 - \step \strcvx)^\nlupdates$. Consequently, when $\step \strcvx \nlupdates \le 1$, we obtain
\begin{align}
\label{eq:upper-bound-det-proof}
\norm{ \detlim{\step, \nlupdates} - \paramlim }
& \le
\frac{\step^2 \lip \nlupdates(\nlupdates-1)}{1 - (1 - \step \mu)^\nlupdates } \frac{1}{\nagent} \sum_{c=1}^\nagent
\norm{ \gnf[c]{\paramlim} }
  \le 
\frac{\step \lip (\nlupdates-1)}{\mu}
\frac{1}{\nagent} \sum_{c=1}^\nagent
\norm{ \gnf[c]{\paramlim} }
  \le 
\frac{\step \lip (\nlupdates-1)}{\mu}
\heterboundgrad
 \eqsp,
\end{align}
which is the first part of the result.
From \eqref{eq:upper-bound-det-proof}, it holds that $\norm{ \detlim{\step,\nlupdates} - \paramlim } = O(\step \nlupdates)$.
We now prove that the same result holds for the local iterates $\dfedavgop[\gamma,h] (\detlim{\step,\nlupdates})$.
Let $h \in \iint{0}{\nlupdates-1}$.
Then, using the triangle inequality and the fact that $\gf{\paramlim} = 0$, we obtain
\begin{align}
&\norm{ \dfedavgop[\gamma,h+1]_c (\detlim{\step,\nlupdates}) - \paramlim } \nonumber \\
& \qquad \quad = 
\norm{ \dfedavgop[\gamma,h]_c (\detlim{\step,\nlupdates}) - \step \gnf[c]{\dfedavgop[\gamma,h]_c (\detlim{\step,\nlupdates})} - (\paramlim - \step \gnf[c]{\paramlim}) + \step(\gnf[c]{\paramlim} - \gf{\paramlim}) } 
\nonumber \\
& \qquad \quad \le
\norm{ \dfedavgop[\gamma,h]_c (\detlim{\step,\nlupdates}) - \step \gnf[c]{\dfedavgop[\gamma,h]_c (\detlim{\step,\nlupdates})} - (\paramlim - \step \gnf[c]{\paramlim}) } + \step \norm{ \gnf[c]{\paramlim} - \gf{\paramlim} } 
\eqsp.
\label{eq:proof-bias-det-fedavg-crude}
\end{align}
Applying \Cref{lem:contract-op-fedavg-loc} and \eqref{eq:proof-bias-det-fedavg-crude} recursively, then \Cref{assum:heterogeneity}, we obtain
\begin{align*}
\norm{ \dfedavgop[\gamma,h+1]_c (\detlim{\step,\nlupdates}) - \paramlim } 
& \le
\norm{ \dfedavgop[\gamma,h]_c (\detlim{\step,\nlupdates}) - \paramlim } 
+ \step  \norm{ \gnf[c]{\paramlim} - \gf{\paramlim} }
\le
\norm{ \detlim{\step,\nlupdates} - \paramlim } 
+ \step \nlupdates \heterboundgrad
=
O(\step \nlupdates)
\eqsp,
\end{align*}
which proves the second part of the result.
\end{proof}

\corollaryconvergenceratefedavgwdg*
\begin{proof}
We start with the upper bound
\begin{align*}
\norm{ \paramw_t - \paramlim }^2 
& \le
2 \norm{ \paramw_t - \detlim{\step,\nlupdates} }^2
+ 2 \norm{\detlim{\step,\nlupdates} - \paramlim}^2
\eqsp.
\end{align*}
Then, we apply \Cref{prop:convergence-fedavg-to-point} to bound the first term, and \Cref{prop:bias-det-fedavg} to bound the second term.
\end{proof}

\subsection{Expansion of the Bias -- Proof of \Cref{cor:first-order-det}}
\label{sec:proof-thm-expand-deterministic}

\begin{theorem}[Expansion of \FedAvgwDG's Bias, Restated from \Cref{cor:first-order-det}]
\label{prop:app-exp-bias-wdg}
Assume 
\Cref{assum:local_functions}, \Cref{assum:heterogeneity}.
Let $\nlupdates > 0$, $\step \le 1/\lip$ such that $\step \strcvx \nlupdates \le 1$, then the bias of \FedAvgwDG can be expanded as
  \begin{align*}
    \detlim{\step, \nlupdates}
    - \paramlim
    & =
      \frac{\step(\nlupdates-1)}{2\nagent} 
      \hf{\paramlim}^{-1} 
      \sum_{c=1}^\nagent 
      (\hnf[c]{\paramlim} - \hf{\paramlim})
      \gnf[c]{\paramlim}
      + \step \nlupdates \reste[]{}{\detlim{\step, \nlupdates}}
      \eqsp,
  \end{align*}
  where the expression of $\reste[]{}{\detlim{\step, \nlupdates}} = O(\step \nlupdates)$ is given in \eqref{eq:expression-remainder-fedavg-det}.
\end{theorem}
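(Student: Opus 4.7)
The plan is to start from equation \eqref{eq:app-exp-bias-interm} established inside the proof of \Cref{prop:bias-det-fedavg}, namely
\begin{equation*}
\detlim{\step,\nlupdates} - \paramlim = \frac{\step}{\nagent} (\Id - \detfullcontract{\star})^{-1} \sum_{c=1}^\nagent \sum_{h=1}^\nlupdates \bigl(\detcontract[\text{avg}]{\star,h+1:\nlupdates} - \detcontract[c]{\star,h+1:\nlupdates}\bigr) \gnf[c]{\paramlim}\eqsp,
\end{equation*}
which already exploits $\sum_c \gnf[c]{\paramlim} = 0$, and to Taylor-expand each of the three matrix factors -- the inverse, the telescoping matrix products, and the integrated Hessians inside them -- to first non-trivial order in the small parameter $\step\nlupdates \le 1$, keeping only the leading term and carefully bounding what is left.

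Three expansions will drive the argument. First, \Cref{prop:bias-det-fedavg} already gives $\norm{\detlim{\step,\nlupdates} - \paramlim} = O(\step \nlupdates)$, and a straightforward induction along the same lines shows $\norm{\dfedavgop[\step,\ell]_c(\detlim{\step,\nlupdates}) - \paramlim} = O(\step \nlupdates)$ for every $c$ and $\ell \le \nlupdates$; combined with the local Lipschitz continuity of $\hnf[c]{\cdot}$ on the bounded region visited by the iterates (a consequence of three-times differentiability in \Cref{assum:local_functions}), this yields $\inthess[c]{\step,\ell} = \hnf[c]{\paramlim} + O(\step \nlupdates)$. Substituting into the finite matrix product and expanding term by term then gives
\begin{equation*}
\detcontract[c]{\star,h+1:\nlupdates} = \Id - \step(\nlupdates-h) \hnf[c]{\paramlim} + O(\step^2 \nlupdates^2)\eqsp,
\end{equation*}
and averaging over $c$ (replacing $\hnf[c]{\paramlim}$ by $\hf{\paramlim}$) and subtracting produces
\begin{equation*}
\detcontract[\text{avg}]{\star,h+1:\nlupdates} - \detcontract[c]{\star,h+1:\nlupdates} = \step(\nlupdates-h)(\hnf[c]{\paramlim} - \hf{\paramlim}) + O(\step^2 \nlupdates^2)\eqsp.
\end{equation*}
Finally, the same expansion applied with $h=0$ gives $\Id - \detfullcontract{\star} = \step\nlupdates\, \hf{\paramlim}\bigl(\Id + O(\step\nlupdates)\bigr)$, so a Neumann series inversion yields $(\Id - \detfullcontract{\star})^{-1} = (\step\nlupdates)^{-1} \hf{\paramlim}^{-1} + O(1)$.

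Substituting these three expansions into the starting identity and invoking the arithmetic sum $\sum_{h=1}^\nlupdates (\nlupdates-h) = \nlupdates(\nlupdates-1)/2$, one factor of $\nlupdates$ cancels between the leading part of the inverse and this sum, leaving precisely the claimed leading term $\tfrac{\step(\nlupdates-1)}{2\nagent} \hf{\paramlim}^{-1} \sum_{c=1}^\nagent (\hnf[c]{\paramlim} - \hf{\paramlim}) \gnf[c]{\paramlim}$. The main obstacle will be the simultaneous bookkeeping of the three remainders through the triple product: the $O(\step^2 \nlupdates^2)$ correction to the matrix difference is multiplied by the $(\step \nlupdates)^{-1}$-sized leading part of the inverse (and by the outer $\step/\nagent$), while the $O(1)$ correction to the inverse meets a sum of $\nlupdates$ terms each of size $\step \nlupdates$; verifying that every cross-term stays at the target size $O(\step^2 \nlupdates^2)$ is exactly where the hypothesis $\step \nlupdates \le 1$, together with $\norm{\gnf[c]{\paramlim}} \le \heterboundgrad$ from \Cref{assum:heterogeneity}, is used. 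Collecting all these cross-terms produces the explicit closed form of the correction $\reste[]{}{\detlim{\step,\nlupdates}} = O(\step \nlupdates)$ referred to in the statement.
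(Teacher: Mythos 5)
Your proposal is correct and follows essentially the same route as the paper: it starts from the identity \eqref{eq:app-exp-bias-interm}, expands the integrated Hessians $\inthess[c]{\step,h}$ around $\paramlim$ using the $O(\step\nlupdates)$ bounds on $\detlim{\step,\nlupdates}-\paramlim$ and on the local iterates from \Cref{prop:bias-det-fedavg}, expands the matrix products and the Neumann inverse to first order, and uses $\sum_{h}(\nlupdates-h)=\nlupdates(\nlupdates-1)/2$ before collecting the cross-terms into the $O(\step^2\nlupdates^2)$ remainder. The only difference is presentational: you track the corrections with $O(\cdot)$ bounds (e.g.\ the $O(1)$ correction to the inverse) where the paper carries explicit remainder functions $\restew[c]{1,h}$, $\restew[]{1}$ that it then assembles into \eqref{eq:expression-remainder-fedavg-det}.
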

\begin{proof}
Starting from \eqref{eq:app-exp-bias-interm}, we have
  \begin{align}
  \label{eq:app-expansion-diff-bias-diff}
    \detlim{\step, \nlupdates}
    - \paramlim
    & =
    \frac{\step}{\nagent} \sum_{c=1}^\nagent
 \sum_{h=1}^\nlupdates
 (\Id - \detfullcontract{\star})^{-1}
 (\detcontract[\text{avg}]{\star,h+1:\nlupdates} - \detcontract[c]{\star,h+1:\nlupdates} )
 \gnf[c]{\paramlim}
      \eqsp.
  \end{align}
  We start by writing the expansion of $\inthess[c]{\step,h}$. Note that, for $t \in (0,1)$, we can write
  \[
  t \dfedavgop[\gamma,h]_c(\detlim{\step,\nlupdates}) + (1 - t) \paramlim = \paramlim + t (\dfedavgop[\gamma,h]_c(\detlim{\step,\nlupdates}) - \paramlim)\eqsp.
  \]
  Thus, we can expand the Hessian
  \[
  \hnf[c]{t \dfedavgop[\gamma,h]_c(\detlim{\step,\nlupdates}) + (1 - t) \paramlim} = \hnf[c]{\paramlim} + \resteint[c]{1,h,t}{\dfedavgop[\gamma,h]_c(\detlim{\step,\nlupdates}) }\eqsp,
  \]
  where $\resteintw[c]{1,h,t} : \rset^d \rightarrow \rset^d$ is such that $\sup_{\varparam \in \rset^d} \norm{ \resteint[c]{1,h,t}{\varparam} } / \norm{ \varparam - \paramlim } < + \infty$. Hence, combining this bound and the definition of $\inthess[c]{\step,h}$, we obtain
  \begin{align*}
    \inthess[c]{\step,h}
    & = \int_{0}^1 \left\{ \hnf[c]{\paramlim} + \resteint[c]{1,h,t}{\dfedavgop[\gamma,h]_c(\detlim{\step,\nlupdates}) }  \right\} \rmd t
    = \hnf[c]{\paramlim} + \resteint[c]{1,h}{\dfedavgop[\gamma,h]_c(\detlim{\step,\nlupdates})}
    \eqsp,
  \end{align*}
  where $\displaystyle \resteintw[c]{1,h}: \varparam \mapsto \int_{0}^1 \left\{  \resteint[c]{1,h,t}{\varparam - \paramlim}  \right\} \rmd t$ is such that 
  \begin{align}
  \label{eq:proof-exp-det-bound-reste1h}
    \sup_{\varparam \in \rset^d} {\norm{ \resteint[c]{1,h}{\varparam} }} / {\norm{ \varparam - \paramlim }} < + \infty
    \eqsp.
  \end{align}
  Using \eqref{eq:proof-exp-det-bound-reste1h} and \Cref{prop:bias-det-fedavg}, we can expand
  $\detcontract[c]{\star,h+1:\nlupdates} =
    \prod_{\ell=h}^{\nlupdates-1} \left(\Id - \step \inthess[c]{\paramlim_{c,\ell}, \paramlim}\right)$ and $(\Id - \fullcontract{\star})^{-1}$ as
  \begin{align*}
    \detcontract[c]{\star,h+1:\nlupdates} 
    & =
    \Id - \step (\nlupdates - h - 1) \hnf[c]{\paramlim} + \step \nlupdates \reste[c]{1,h}{\detlim{\step,\nlupdates}}
    \eqsp,
    \\
    \detcontract[\text{avg}]{\star,h+1:\nlupdates} 
    & =
    \Id - \step (\nlupdates - h - 1) \hf{\paramlim} + \step \nlupdates \reste[]{1,h}{\detlim{\step,\nlupdates}}
    \eqsp,
    \\
      (\Id - \fullcontract{\star})^{-1}
      & =
        (\step \nlupdates \hf{\paramlim})^{-1} + \reste[]{1}{\dfedavgop[\gamma,h]_c(\detlim{\step,\nlupdates})}
        \eqsp,
  \end{align*}
  where $\restew[c]{1,h}: \rset^d \rightarrow \rset^{d \times d}$, $\restew[]{1,h} = \frac{1}{\nagent} \sum_{c=1}^\nagent \restew[c]{1,h}$, and $\restew[]{1}: \rset^d \rightarrow \rset^{d \times d}$ are such that 
  \begin{align}
  \label{eq:proof-det-last-remainder}
  \sup_{\varparam \in \rset^d} \norm{ \reste[c]{1,h}{\varparam} } / \norm{ \varparam - \paramlim } < + \infty~,~~ \text{and } 
  \sup_{\varparam \in \rset^d} \norm{ \reste[]{1}{\varparam} } / \norm{ \varparam - \paramlim } < + \infty
  \eqsp.
  \end{align}
  Plugging the three above identities in \eqref{eq:app-expansion-diff-bias-diff}, we obtain
  \begin{align*}
    \detlim{\step, \nlupdates}
    - \paramlim
    & =
    \frac{\step}{\nagent} \sum_{c=1}^\nagent
    \sum_{h=1}^\nlupdates
     \left\{ (\step \nlupdates \hf{\paramlim})^{-1} + \reste[]{1}{\detlim{\step,\nlupdates}} \right\}
     \\ \nonumber
    & \qquad \quad \quad 
    \times \left\{ \step (\nlupdates - h - 1) ( \hnf[c]{\paramlim} - \hf{\paramlim}) + \step \nlupdates (\reste[]{1,h}{\detlim{\step,\nlupdates}} - \reste[c]{1,h}{\detlim{\step,\nlupdates}}) ) \right\}
    \gnf[c]{\paramlim}
    \\
    & =
      \frac{\step}{\nagent \nlupdates} 
      \sum_{c=1}^\nagent 
      \sum_{h=1}^\nlupdates (\nlupdates - h - 1)
      \hf{\paramlim}^{-1} 
        (\hnf[c]{\paramlim} - \hf{\paramlim})
      \gnf[c]{\paramlim}
      + \step \nlupdates \reste[]{}{\detlim{\step, \nlupdates}}
      \eqsp,
  \end{align*}
  where
  \begin{equation}
  \label{eq:expression-remainder-fedavg-det}
  \begin{aligned}
      \reste[]{}{\detlim{\step,\nlupdates}}
      & =
      \frac{1}{\nagent \nlupdates}
      \sum_{c=1}^\nagent
      \sum_{h=1}^\nlupdates
      \hf{\paramlim}^{-1}  (\reste[]{1,h}{\detlim{\step,\nlupdates}} - \reste[c]{1,h}{\detlim{\step,\nlupdates}})\gnf[c]{\paramlim}
      \\ 
      & \quad + 
      \frac{1}{\nagent\nlupdates}
      \sum_{c=1}^\nagent
      \sum_{h=1}^\nlupdates
      \step(\nlupdates-h-1) \reste[]{1}{\detlim{\step,\nlupdates}} (\hnf[c]{\paramlim} - \hf{\paramlim})\gnf[c]{\paramlim}
      \\ 
      & \quad
      +
      \frac{1}{\nagent\nlupdates}
      \sum_{c=1}^\nagent
      \sum_{h=1}^\nlupdates
      \step\nlupdates \reste[]{1}{\detlim{\step,\nlupdates}} (\reste[]{1,h}{\detlim{\step,\nlupdates}} - \reste[c]{1,h}{\detlim{\step,\nlupdates}})\gnf[c]{\paramlim}
      \eqsp.
  \end{aligned}
  \end{equation}  
  Since $\sum_{h=1}^\nlupdates h = \frac{\nlupdates(\nlupdates-1)}{2}$, we obtain from above identities that 
  \begin{align*}
    \detlim{\step, \nlupdates}
    - \paramlim
    & =
      \frac{\step(\nlupdates-1)}{2\nagent} 
      \sum_{c=1}^\nagent 
      \hf{\paramlim}^{-1} 
      (\hnf[c]{\paramlim} - \hf{\paramlim})
      \gnf[c]{\paramlim}
      + \step \nlupdates \reste[]{}{\detlim{\step, \nlupdates}}
      \eqsp.
  \end{align*}
  The result follows from \eqref{eq:proof-det-last-remainder}, which ensures that $\sup_{\varparam \in \rset^d} \norm{ \reste[]{}{\varparam} } / \norm{\varparam - \paramlim } < +\infty$, and \Cref{prop:bias-det-fedavg}, which gives $\norm{ \detlim{\step, \nlupdates} - \paramlim } = O(\step \nlupdates)$ and thus the upper bound on the remainder $\step \nlupdates \reste[]{}{\detlim{\step, \nlupdates}} = O (\step^2 \nlupdates^2)$.
\end{proof}

\section{Analysis of Stochastic \FedAvg}
\subsection{Convergence to a Stationary Distribution -- Proof of \Cref{prop:conv-stat-dist}}
\label{sec:proof-stat-dist}
In the stochastic setting, we recall the following operators that generate the iterates of \FedAvg. That is, for $\paramw \in \rset^d$, we let
\begin{align*}
\sdfedavgop[c]{\step,0}(\paramw) 
& \eqdef \paramw 
\eqsp,
\\
\sdfedavgop[c]{\step,h+1}(\paramw; \randState[c][1:h+1])
& \eqdef
\sdfedavgop[c]{\step,h}(\paramw; \randState[c][1:h])
- \step \gnfs[c]{\sdfedavgop[c]{\step,h}(\paramw; \randState[c][1:h])}{\randState[c][h+1]}
\eqsp,
\end{align*}
and define the global update 
\[
\sdfedavgop{\step,H}\left(\paramw; \randState[1:\nagent][1:\nlupdates]\right) 
  \eqdef  \frac{1}{\nagent} \sum_{c=1}^{\nagent} \sdfedavgop[c]{\step,\nlupdates}(\paramw; \randState[1:\nlupdates][c])\eqsp.
\]
Here $\randState[1:\nagent][1:\nlupdates]= \{ \randState[\tilde{c}][\tilde{h}] : \tilde{c} \in \iint{1}{\nagent}, \tilde{h} \in \iint{1}{\nlupdates}\}$ is a sequence of independent random variable, such that $\randState[\tilde{c}][\tilde{h}]$ has distribution $\xic[\tilde{c}]$.
Additionally, \FedAvg's global updates are of the form $\param[t+1] = \param[t] - \step \pseudograds[\step,\nlupdates]{\param[t]}{\randState[1:\nagent][1:\nlupdates]}$, where
\begin{align*}
  \pseudograds[\step,\nlupdates]{\param}{\randState[1:\nagent][1:\nlupdates]}
  & =
    \frac{1}{\nagent} \sum_{c=1}^\nagent \sum_{h=0}^{\nlupdates-1}  \gnfs[c]{\sdfedavgop[c]{\step,h}(\paramw; \randState[c][1:h])}{\randState[h+1]}
    \eqsp,
\end{align*}
where $\paramw_{c,0}(\globRandState), \paramw_{c,1}(\globRandState), \ldots, \paramw_{c,\nlupdates}(\globRandState)$ is the sequence obtained using the stochastic local update rule, and $\globRandState = (\randState[1], \dots, \randState[\nlupdates])$ is a sequence of i.i.d. random variables.

Contrarily to \FedAvgwDG, the stochastic variant of \FedAvg does not converge to a single point. Thus, we rather study the convergence of its global iterates to a stationary distribution.
To this end, we start with the following two lemma, that are analogous to \Cref{lem:contract-op-fedavg-loc} and \Cref{lem:contract-op-fedavg} in the stochastic setting.
\begin{lemma}[Contraction of \FedAvg's Local Iterates]
\label{lem:contract-op-fedavg-loc-sto}
Assume \Cref{assum:local_functions}. Let $\paramw, \varparam$ be random vectors, $\mcF$ be a $\sigma$-algebra, such that $\paramw, \varparam$ are $\mcF$-measurable. Moreover, let $c \in \iint{1}{\nagent}$ and $\randState[c] \sim \xic$ be independent of $\mcF$. Then for any $\step \le 1/\lip$, it holds that 
\begin{align*}
\PE\left[ \norm{ (\paramw -  \step \gnfs[c]{\paramw}{\randState[c]})
- (\varparam -  \step \gnfs[c]{\varparam}{\randState[c]})}^2 \right]
\le
(1 - \step \mu) \PE\left[ \norm{ \param - \varparam }^2 \right]
\eqsp.
\end{align*}
\begin{proof}
We start by expanding the norm as
  \begin{align*}
  & \norm{ (\paramw -  \step \gnfs[c]{\paramw}{\randState[c]})
- (\varparam -  \step \gnfs[c]{\varparam}{\randState[c]})}^2
  \\
  & \quad
  = 
     \norm{\paramw - \varparam}^2
      + \step^2 \norm{ \gnfs[c]{\paramw}{\globRandState[c]}
      - \gnfs[c]{\varparam}{\globRandState[c]} }^2 
      - 2 \step \pscal{\paramw - \varparam }{\gnfs[c]{\paramw}{\globRandState[c]}
      - \gnfs[c]{\varparam}{\globRandState[c]} }
      \eqsp.
  \end{align*}
  By co-coercivity \Cref{assum:local_functions}-\ref{assum:smoothness}, we have
  \begin{align*}
     \CPE{\step^2 \norm{ \gnfs[c]{\param}{\globRandState[c]}
      - \gnfs[c]{\varparam}{\globRandState[c]} }^2}{\mcF}
     \leq
      \explip \step^2 \pscal{\paramw - \varparam }{\gnf[c]{\paramw} - \gnf[c]{\varparam} } 
      \eqsp.
  \end{align*}
  Then, strong convexity \Cref{assum:local_functions}-\ref{assum:item_strong_convex} gives
  \begin{align*}
    & \CPE{- \step \pscal{\param
      - \varparam }{\gnfs[c]{\param}{\globRandState[c]}
      - \gnfs[c]{\varparam}{\globRandState[c]} }}{\mcF}
     =
      - \step \pscal{\param
      - \varparam }{\gnf[c]{\param}
      - \gnf[c]{\varparam} }
      \le
      - \step \strcvx \norm{ \param
      - \varparam }^2
      \eqsp.
  \end{align*}
  Combining the above inequalities, we obtain
  \begin{align*}
    & \CPE{ \norm{ (\paramw -  \step \gnfs[c]{\paramw}{\randState[c]})
- (\varparam -  \step \gnfs[c]{\varparam}{\randState[c]}) }^2 }{\mcF}
   \le
      (1 - \step \strcvx) \norm{ \param
      - \varparam }^2
      - 2\step (1 - \explip \step/2) \pscal{\param
      - \varparam }{\gnf[c]{\param}
      - \gnf[c]{\varparam} }
      \eqsp,
      \nonumber
  \end{align*}
  and the result follows from $\step \le 1/\explip$ and the tower property of conditional expectations.
\end{proof}
\end{lemma}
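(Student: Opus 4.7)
The plan is to mirror the proof of the deterministic \Cref{lem:contract-op-fedavg-loc}, replacing pointwise co-coercivity with the expected-smoothness bound provided by \Cref{assum:local_functions}, which combines cleanly with strong convexity once one conditions on $\mcF$. First I would expand the squared norm of $(\paramw - \step \gnfs[c]{\paramw}{\randState[c]}) - (\varparam - \step \gnfs[c]{\varparam}{\randState[c]})$ into three pieces: $\norm{\paramw - \varparam}^2$, a cross term $-2 \step \pscal{\paramw - \varparam}{\gnfs[c]{\paramw}{\randState[c]} - \gnfs[c]{\varparam}{\randState[c]}}$, and the squared stochastic gradient difference $\step^2 \norm{\gnfs[c]{\paramw}{\randState[c]} - \gnfs[c]{\varparam}{\randState[c]}}^2$.

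Next I would take the conditional expectation given $\mcF$. Since $\paramw, \varparam$ are $\mcF$-measurable while $\randState[c]$ is independent of $\mcF$, the cross term reduces to $-2 \step \pscal{\paramw - \varparam}{\gnf[c]{\paramw} - \gnf[c]{\varparam}}$ by pulling the expectation inside the inner product. The crucial step is then to apply the expected smoothness inequality from \Cref{assum:local_functions} to control the conditional expectation of $\norm{\gnfs[c]{\paramw}{\randState[c]} - \gnfs[c]{\varparam}{\randState[c]}}^2$ by $2 \lip \pscal{\paramw - \varparam}{\gnf[c]{\paramw} - \gnf[c]{\varparam}}$. Combining the three contributions yields an upper bound of the form $\norm{\paramw - \varparam}^2 - 2 \step (1 - \step \lip) \pscal{\paramw - \varparam}{\gnf[c]{\paramw} - \gnf[c]{\varparam}}$. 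The stepsize restriction $\step \le 1/(2\lip)$ makes the coefficient $1 - \step \lip$ nonnegative, so $\strcvx$-strong convexity of $\nfw[c]$ lower-bounds the inner product by $\strcvx \norm{\paramw - \varparam}^2$, giving a contraction factor of $1 - 2 \step \strcvx (1 - \step \lip) \le 1 - \step \strcvx$.

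The final step is simply the tower property, which removes the conditioning and yields the unconditional inequality stated in the lemma. I do not anticipate a genuine obstacle; the only delicate point is using expected smoothness in the precise form provided by \Cref{assum:local_functions}, namely an inequality directly linking the squared stochastic gradient difference to the deterministic inner product $\pscal{\paramw - \varparam}{\gnf[c]{\paramw} - \gnf[c]{\varparam}}$. This formulation is exactly what lets the proof carry over from the deterministic \Cref{lem:contract-op-fedavg-loc} without invoking any uniform bound on the gradient variance or higher-moment control.
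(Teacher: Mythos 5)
Your proposal is correct and follows essentially the same route as the paper's proof: expand the squared norm, condition on $\mcF$, apply the expected smoothness inequality of \Cref{assum:local_functions} to absorb the squared stochastic gradient difference into the cross term, invoke strong convexity, and conclude with $\step \le 1/(2\explip)$ and the tower property. No gaps.
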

\begin{lemma}[Contraction of \FedAvg's Global Updates]
  \label{lem:contract-op-fedavg-sto}
  Assume \Cref{assum:local_functions}. Let $\nlupdates > 0$ and $\randState[1:\nagent][1:\nlupdates] = \{ \randState[\tilde{c}][\tilde{h}] : \tilde{c} \in \iint{1}{\nagent}, \tilde{h} \in \iint{1}{\nlupdates}\}$ be a sequence of independent random variable, such that $\randState[\tilde{c}][\tilde{h}]$ has distribution $\xic[\tilde{c}]$.
  Let $\mcF$ be a sub-$\sigma$-algebra and $\param, \varparam \in \rset^d$ be two $\mcF$-measurable random variables.
  Then for the operator $\sdfedavgop[c]{\step,\nlupdates}(\cdot; \randState[1:\nagent][1:\nlupdates])$ it holds, for $\step \leq 1/\lip$, that 
  \begin{align*}
    \PE\left[ \norm{ \sdfedavgop[c]{\step,\nlupdates}(\paramw; \randState[1:\nagent][1:\nlupdates]) - \sdfedavgop[c]{\step,\nlupdates}(\varparam; \randState[1:\nagent][1:\nlupdates]) }^2 \right]
    \le
    (1 - \step \mu)^\nlupdates \PE\left[ \norm{ \param - \varparam }^2 \right]
    \eqsp.
  \end{align*}
\end{lemma}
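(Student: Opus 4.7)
The plan is to proceed by induction on $h \in \{0, 1, \ldots, \nlupdates\}$, establishing the stronger per-level statement
\[
\PE\bigl[\norm{\sdfedavgop[c]{\step,h}(\paramw; \randState[c][1:h]) - \sdfedavgop[c]{\step,h}(\varparam; \randState[c][1:h])}^2\bigr] \le (1-\step\mu)^h \, \PE\bigl[\norm{\paramw - \varparam}^2\bigr].
\]
Setting $h = \nlupdates$ then yields the claim, after noting that by construction $\sdfedavgop[c]{\step,\nlupdates}(\cdot;\randState[1:\nagent][1:\nlupdates])$ depends only on the client-$c$ slice $\randState[c][1:\nlupdates]$ of the noise, so the other clients' random variables can be discarded at the outset.

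The base case $h=0$ is immediate since $\sdfedavgop[c]{\step,0} = \Id$. For the inductive step, assume the bound at level $h$ and enlarge the filtration by setting $\mcF_h = \mcF \vee \sigma(\randState[c][1:h])$. The iterates $\sdfedavgop[c]{\step,h}(\paramw; \randState[c][1:h])$ and $\sdfedavgop[c]{\step,h}(\varparam; \randState[c][1:h])$ are $\mcF_h$-measurable, while $\randState[c][h+1]$ is independent of $\mcF_h$ by the mutual independence assumption on $\randState[1:\nagent][1:\nlupdates]$, and $\randState[c][h+1] \sim \xic$. The recursive definition gives
\[
\sdfedavgop[c]{\step,h+1}(\paramw; \randState[c][1:h+1]) = \sdfedavgop[c]{\step,h}(\paramw; \randState[c][1:h]) - \step\, \gnfs[c]{\sdfedavgop[c]{\step,h}(\paramw; \randState[c][1:h])}{\randState[c][h+1]},
\]
and analogously for $\varparam$. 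This places us exactly in the setting of the local contraction result \Cref{lem:contract-op-fedavg-loc-sto}, applied with the $\sigma$-algebra $\mcF_h$ and the two $\mcF_h$-measurable random vectors above. That lemma therefore yields
\[
\PE\bigl[\norm{\sdfedavgop[c]{\step,h+1}(\paramw; \randState[c][1:h+1]) - \sdfedavgop[c]{\step,h+1}(\varparam; \randState[c][1:h+1])}^2\bigr] \le (1-\step\mu)\, \PE\bigl[\norm{\sdfedavgop[c]{\step,h}(\paramw; \randState[c][1:h]) - \sdfedavgop[c]{\step,h}(\varparam; \randState[c][1:h])}^2\bigr],
\]
and combining with the induction hypothesis closes the induction.

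The argument is essentially bookkeeping, and the only nontrivial point is verifying that the measurability structure lines up correctly, namely, that the two iterates at step $h$ are $\mcF_h$-measurable while the fresh noise $\randState[c][h+1]$ is independent of $\mcF_h$, so that the conditional form of \Cref{lem:contract-op-fedavg-loc-sto} can be invoked at each step. Once this is in place, the geometric reduction by a factor of $(1-\step\mu)$ per local step is immediate, which is consistent with the observation made after \Cref{prop:conv-stat-dist} that the effective number of steps is $\nlupdates \times t$.
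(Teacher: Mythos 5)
Your proof is correct and follows essentially the same route as the paper: an induction over the local steps, applying the conditional contraction of \Cref{lem:contract-op-fedavg-loc-sto} at each level with the enlarged filtration $\mcF \vee \sigma(\randState[c][1:h])$, which is exactly the recursion the paper unrolls (your filtration bookkeeping is in fact slightly more explicit than the paper's). The only difference is that the paper's proof ends with one extra application of Jensen's inequality to pass from the per-client bound to the client-averaged operator $\sdfedavgop{\step,\nlupdates} = \nagent^{-1}\sum_{c}\sdfedavgop[c]{\step,\nlupdates}$, which is the form in which the lemma is actually invoked later (e.g.\ in the proof of \Cref{prop:conv-stat-dist}); since the statement as written carries the subscript $c$, your stopping at the per-client bound is consistent with it, but you should add that one-line Jensen step if the averaged version is intended.
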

\begin{proof}
  First, remark that
  \begin{align*}
    & \sdfedavgop[c]{\step,h+1}(\param ; \randState[c][1:h+1])     - \sdfedavgop[c]{\step,h+1}(\varparam ; \randState[c][1:h+1])
    \\
    & \quad = 
    (\sdfedavgop[c]{\step,h}(\param ; \randState[c][1:h])
    - \step (\gnfs[c]{\sdfedavgop[c]{\step,h}(\param ; \randState[c][1:h])}{\globRandState[h]})
    - (\sdfedavgop[c]{\step,h}(\varparam ; \randState[c][1:h])
    - \step\gnfs[c]{\sdfedavgop[c]{\step,h}(\varparam ; \randState[c][1:h])}{\globRandState[h]})
    \eqsp.
  \end{align*}
  Therefore, by \Cref{lem:contract-op-fedavg-loc-sto}, we have
  \begin{align*}
    \PE\left[ \norm{ \sdfedavgop[c]{\step,h+1}(\param ; \randState[c][1:h+1])     - \sdfedavgop[c]{\step,h+1}(\varparam ; \randState[c][1:h+1]) }^2 \right]
    \le
    (1 - \step \strcvx)
    \PE\left[ \norm{ \sdfedavgop[c]{\step,h}(\param ; \randState[c][1:h])     - \sdfedavgop[c]{\step,h}(\varparam ; \randState[c][1:h]) }^2 \right]\eqsp.
  \end{align*}
  Thus, using this inequality $\nlupdates$ times recursively, together with Jensen's inequality, we obtain
  \begin{align*}
    \PE \left[ \norm{ \sdfedavgop{\step,\nlupdates}(\param ; \randState[1:\nagent][1:\nlupdates])  - \sdfedavgop{\step,\nlupdates}(\varparam ; \randState[1:\nagent][1:\nlupdates])  }^2\right]
    & \le
      \frac{1}{\nagent} \sum_{c=1}^\nagent \PE \left[ \norm{ \sdfedavgop[c]{\step,\nlupdates}(\param ; \randState[c][1:\nlupdates])  - \sdfedavgop[c]{\step,\nlupdates}(\varparam ; \randState[c][1:\nlupdates])  }^2 \right]
      \\
      &
    \le
      (1 - \step \mu)^\nlupdates \PE\left[ \norm{ \param - \varparam }^2 \right]
      \eqsp,
  \end{align*}
  which implies the statement.%
\end{proof}

We now use the above lemma to show that the iterates of \FedAvg converge to a stationary distribution.
\convstatdistfedavg*
\begin{proof}
  The proof is similar to \citet[Proposition 2]{dieuleveut2020bridging}, but we give it for completeness.
  Let $\lambda_1, \lambda_2$ be two probability measures on $\rset^d$. 
  By \citet{villani2009optimal}, Theorem 4.1, there exists two random variables $\paramw_{0}$ and $\varparam_0$ such that
  \begin{align*}
    \wasserstein^2(\lambda_1, \lambda_2)
    & =
      \PE\left[
      \norm{ \paramw_{0} - \varparam_0 }^2
      \right]
      \eqsp.
  \end{align*}
  For $t \ge 0$, let $\randState[1:\nagent,t][1:\nlupdates]= \{ \randState[\tilde{c},t][\tilde{h}] : \tilde{c} \in \iint{1}{\nagent}, \tilde{h} \in \iint{1}{\nlupdates}, \}$ is a sequence of independent random variables, such that $\randState[\tilde{c}, t][\tilde{h}]$ has distribution $\xic[\tilde{c}]$, and define recursively the two sequences for $t \ge 0$, 
  \begin{align*}
    \param[t+1]
    = 
     \sdfedavgop{\step,\nlupdates}(\param[t]; \randState[1:\nagent,t][1:\nlupdates])
    \eqsp,
    \qquad
    \varparam_{t+1} 
    = 
     \sdfedavgop{\step,\nlupdates}(\varparam_{t}; \randState[1:\nagent,t][1:\nlupdates])
    \eqsp,
  \end{align*}
  corresponding to two trajectories of \FedAvg, sampled with the same noise but with different initializations.
  In the following, we use the filtration $\mcF_{t} = \sigma \{ \randState[1:\nagent,s][1:\nlupdates] : s \le t \}$.
  By the definition of the Wasserstein distance, and using \Cref{lem:contract-op-fedavg-sto}, we obtain, for any $k \ge 0$,
  \begin{align*}
    \wasserstein^2(\lambda_1 \markovkernel^{t}, \lambda_2 \markovkernel^{t})
    & \le
      \PE\left[ 
      \norm{ \param[t] - \varparam_{t} }^2
      \right]
    \\
    & =
    \PE\left[
      \CPE{
      \norm{  \sdfedavgop{\step,\nlupdates}(\param[t-1]; \randState[1:\nagent,t][1:\nlupdates]) -  \sdfedavgop{\step,\nlupdates}(\varparam_{t-1}; \randState[1:\nagent,t-1][1:\nlupdates])}^2
      }{\mcF_{t-1}}
      \right]
    \\
    & \le
      (1 - \step \strcvx)^\nlupdates
      \PE\left[ \norm{ \param[t-1] - \varparam_{t-1}}^2 \right]
      \eqsp.
  \end{align*}
  Applying \Cref{lem:contract-op-fedavg-sto} resursively, we obtain 
  \begin{align*}
    \wasserstein^2(\lambda_1 \markovkernel^{t}, \lambda_2 \markovkernel^{t})
    & \le
      (1 - \step \strcvx)^{\nlupdates t}
      \norm{ \param[0] - \varparam_0}^2
      =
      (1 - \step \strcvx)^{\nlupdates t}
      \wasserstein^2(\lambda_1, \lambda_2)
      \eqsp.
  \end{align*}
  Taking $\lambda_2 = \lambda_1 \markovkernel$, this implies that 
  \begin{align*}
    \wasserstein^2(\lambda_1 \markovkernel^{t}, \lambda_1 \markovkernel^{t+1})
    & \le
      (1 - \step \strcvx)^{\nlupdates t}
      \wasserstein^2(\lambda_1, \markovkernel \lambda_1)
      \eqsp,
  \end{align*}
  which guarantees that $(\lambda_1 \markovkernel^t)_{t \ge 0}$ is a Cauchy sequence with values in the space probability distributions on $\rset^d$ that have a second moment. 
  Consequently, this series has a limit $\statdist{\step,\nlupdates}_{\lambda_1}$ that may depend on $\lambda_1$.

  We now show that this distribution is independent from the initial distribution. Indeed, take $\lambda_1$ and $\lambda_2$ with associated limit distributions $\statdist{\step,\nlupdates}_{\lambda_1}$ and $\statdist{\step,\nlupdates}_{\lambda_2}$. Then, by triangle inequality, we have, for any $t \ge 0$,
  \begin{align*}
      \wasserstein^2(\statdist{\step,\nlupdates}_{\lambda_1}, \statdist{\step,\nlupdates}_{\lambda_2})
      & \le
      \wasserstein^2(\statdist{\step,\nlupdates}_{\lambda_1}, \lambda_1 \markovkernel^{t+1})    
      +
      \wasserstein^2(\lambda_1 \markovkernel^{t}, \lambda_2 \markovkernel^{t+1})
      +
      \wasserstein^2(\lambda_2 \markovkernel^{t},\statdist{\step,\nlupdates}_{\lambda_2})
      \eqsp,
  \end{align*}
  which gives $\wasserstein^2(\statdist{\step,\nlupdates}_{\lambda_1}, \statdist{\step,\nlupdates}_{\lambda_2}) = 0$ by taking the limit as $t \rightarrow +\infty$.
  Thus, $\statdist{\step,\nlupdates}_{\lambda_1} = \statdist{\step,\nlupdates}_{\lambda_2}$ and the limit distribution is unique, and we denote it $\statdist{\step, \nlupdates}$.
  Similarly, we remark that for any probability distribution $\lambda$ on $\rset^d$, and for all $t \ge 0$, it holds that
    \begin{align*}
      \wasserstein^2(\statdist{\step,\nlupdates} \markovkernel, \statdist{\step,\nlupdates}
      & \le
      \wasserstein^2(\statdist{\step,\nlupdates} \markovkernel, \statdist{\step,\nlupdates} \markovkernel^t)    
      +
      \wasserstein^2(\statdist{\step,\nlupdates} \markovkernel^t, \statdist{\step,\nlupdates} \markovkernel)
      \eqsp,
  \end{align*}
  and taking the limit as $t \rightarrow + \infty$, we obtain that $\statdist{\step,\nlupdates} \markovkernel = \statdist{\step,\nlupdates}$, which guarantees that it is a stationary distribution.
\end{proof}

\subsection{Crude Bounds on \FedAvg's Convergence}
\label{sec:crude-bounds-sto}
In this section, we give crude bounds on the moments of \FedAvg's stationary distribution, that will be used to bound higher-order terms in the expansions below.

\subsubsection{Homogeneous Functions}
For homogeneous functions, we can prove that the errors of \FedAvg's global and local iterates at stationarity are of order $O(\step)$.
This is stated in the next lemma, whose proof follows the lines of classical analysis of \SGD, but only uses the fact that gradients $\gnfw[c]$'s at solution have the same expectation.
\begin{lemma}[Crude Bound, Homogeneous Functions]
\label{lem:crude-bound-second-moment-homogeneous}
Assume \Cref{assum:local_functions}, \Cref{assum:smooth-var}, and let \Cref{assum:heterogeneity} holds with $\heterboundgrad=0$.
Let $\step \le 1/(2\explip)$, and $\step \strcvx \nlupdates \le 1$, then
\begin{align*}
\PE[ \norm{ \param[t] - \paramlim}^2 ]
\le
(1 - 2 \step\strcvx(1 - \step \explip))^{\nlupdates t} \PE[ \norm{ \param[0] - \paramlim}^2 ]
+ \frac{\step}{\strcvx (1 - \step \explip)} \Msmoothcstvar^{2}
\eqsp.
\end{align*}
This implies that, for $\paramw \sim \statdist{\step, \nlupdates}$, where $\statdist{\step, \nlupdates}$ is the stationary distribution of \FedAvg with step size $\step$ and $\nlupdates$ local updates, it holds that
\begin{align*}
\int \norm{ \param - \paramlim }^2 \statdist{\step, \nlupdates}(\rmd \paramw) = O(\step)
~,
~~
\text{ and }
\int \norm{ \sdfedavgop[c]{\step,h}(\paramw; \randState[c][1:h]) - \paramlim}^2  \statdist{\step, \nlupdates}(\rmd \paramw)
= O(\step)
\eqsp,
\end{align*}
where $\randState[c][1:\nlupdates] = \{ \randState[{c}][\tilde{h}] : \tilde{h} \in \iint{1}{\nlupdates}\}$ is a sequence of independent random variable, with $\randState[{c}][\tilde{h}] \sim \xic[{c}]$.
\end{lemma}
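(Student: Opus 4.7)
The plan is to derive a one-step descent inequality for a single local update, then unroll it over the $\nlupdates$ local steps of a round, then over the $t$ rounds, and finally read off the stationary-distribution moments from the resulting geometric sums.

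First, \Cref{assum:heterogeneity} with $\heterboundgrad = 0$ forces $\gnf[c]{\paramlim} = 0$ for every client $c$, so $\PE[\norm{\gnfs[c]{\paramlim}{\randState[c][]}}^2] \le \Msmoothcstvar^2$ by \Cref{assum:smooth-var} evaluated at $\paramw = \paramlim$. Starting from $\paramw_{c,t}^{h+1} = \paramw_{c,t}^h - \step \gnfs[c]{\paramw_{c,t}^h}{\randState[c,t][h+1]}$, I would expand the squared error and take conditional expectation; the cross term with the centered noise vanishes, and I then use (i) co-coercivity of $\gnfw[c]$ to merge $\step^2 \norm{\gnf[c]{\paramw_{c,t}^h}}^2$ into the deterministic drift, (ii) expected smoothness to bound the centered-noise variance, and (iii) strong convexity to rewrite $\langle \paramw_{c,t}^h - \paramlim, \gnf[c]{\paramw_{c,t}^h}\rangle \ge \strcvx \norm{\paramw_{c,t}^h - \paramlim}^2$. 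Under $\step \le 1/(2\explip)$, these steps combine to yield
\[
\CPE{\norm{\paramw_{c,t}^{h+1} - \paramlim}^2}{\mcF_{c,t}^h} \le (1 - 2\step\strcvx(1 - \step\explip))\norm{\paramw_{c,t}^h - \paramlim}^2 + \step^2 \Msmoothcstvar^2 \eqsp.
\]

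Second, I iterate this one-step bound $\nlupdates$ times for fixed $c,t$ and apply Jensen to $\param[t+1] = \tfrac{1}{\nagent}\sum_{c=1}^\nagent \paramw_{c,t}^\nlupdates$ to pass from the per-client moment to the global one. Setting $\alpha \eqdef 1 - 2\step\strcvx(1-\step\explip)$, this yields a round-to-round recursion $\PE[\norm{\param[t+1]-\paramlim}^2] \le \alpha^\nlupdates \PE[\norm{\param[t]-\paramlim}^2] + \step^2\Msmoothcstvar^2 (1-\alpha^\nlupdates)/(1-\alpha)$; unrolling over $t$ and bounding the resulting geometric sum by $1/(1-\alpha) = 1/(2\step\strcvx(1-\step\explip))$ gives the first display of the lemma up to the factor reported there. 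For the stationary-distribution bounds, I take $\param[0] \sim \statdist{\step,\nlupdates}$ so that $\param[t] \sim \statdist{\step,\nlupdates}$ for all $t$ by invariance (\Cref{prop:conv-stat-dist}); then the left-hand side of the displayed inequality equals $\int \norm{\paramw-\paramlim}^2 \statdist{\step,\nlupdates}(\rmd\paramw)$ for every $t$, so letting $t \to \infty$ kills the geometric term and leaves the $O(\step)$ bound. The local-iterate bound follows from unrolling the per-step inequality $h \le \nlupdates$ times starting from an $O(\step)$ second moment; since $\step\strcvx\nlupdates \le 1$ the accumulated additive noise stays $O(\step)$ as well.

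The main obstacle is that \Cref{assum:smooth-var} only bounds the noise by $\Msmoothcstvar^2 \{1 + \norm{\paramw - \paramlim}^{\smoothcstvar}\}$, not by the constant $\Msmoothcstvar^2$; the polynomial term in $\norm{\paramw - \paramlim}^{\smoothcstvar}$ threatens to destroy the clean recursion when $\smoothcstvar \ge 1$. I plan to circumvent this by decomposing $\updatefuncnoise[c]{Z}(\paramw) = (\updatefuncnoise[c]{Z}(\paramw) - \updatefuncnoise[c]{Z}(\paramlim)) + \updatefuncnoise[c]{Z}(\paramlim)$; applying expected smoothness to the first summand produces a co-coercivity-controlled term that can be absorbed into the strong-convexity contraction at the cost of a slightly smaller effective rate, leaving only $\PE[\norm{\updatefuncnoise[c]{\randState[c][]}(\paramlim)}^2] \le \Msmoothcstvar^2$ as the genuine additive noise. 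The step-size condition $\step \le 1/(2\explip)$ is precisely what keeps the merged coefficient non-positive after this absorption.
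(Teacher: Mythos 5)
Your proposal is correct and follows essentially the same route as the paper's proof: recenter the stochastic gradient at $\paramlim$ (using that $\heterboundgrad=0$ forces $\gnf[c]{\paramlim}=0$), absorb the expected-smoothness term into the strong-convexity contraction under $\step \le 1/(2\explip)$ so that only the variance at the optimum survives as additive noise, then unroll over $h$, average via Jensen, unroll over $t$, and invoke invariance of $\statdist{\step,\nlupdates}$ to read off the $O(\step)$ stationary moments. The only difference is cosmetic — the paper decomposes the raw stochastic gradient as $(\gnfs[c]{\paramw}{z}-\gnfs[c]{\paramlim}{z})+(\gnfs[c]{\paramlim}{z}-\gnf[c]{\paramlim})$ rather than splitting off the centered noise first — and this affects at most the numerical constants, not the argument.
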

\begin{remark}
\Cref{lem:crude-bound-second-moment-homogeneous} only assumes that $\gnf[c]{\paramlim} = 0$ for all $c \in \iint{1}{\nagent}$.
This notably holds under \Cref{assum:homogeneity}, but is in fact a stronger result.
\end{remark}
\begin{proof}
First, we rewrite the local updates of \FedAvg, for $c \in \iint{1}{\nagent}$ and $h \in \iint{0}{\nlupdates-1}$,
\begin{align*}
\sdfedavgop[c]{\step,h+1}(\paramw; \randState[c][1:h+1])
& =
\sdfedavgop[c]{\step,h}(\paramw; \randState[c][1:h])
- \step \gnfs[c]{\sdfedavgop[c]{\step,h}(\paramw; \randState[c][1:h])}{\randState[c][h+1]}
\eqsp.
\end{align*}
Thus, we have
\begin{align*}
& \norm{ \sdfedavgop[c]{\step,h+1}(\paramw; \randState[c][1:h+1]) - \paramlim}^2
\\
& \quad = 
\norm{ \sdfedavgop[c]{\step,h}(\paramw; \randState[c][1:h]) - \paramlim}^2
- 2 \step \pscal{ \gnfs[c]{\sdfedavgop[c]{\step,h}(\paramw; \randState[c][1:h])}{\randState[c][h+1]} }{ \sdfedavgop[c]{\step,h}(\paramw; \randState[c][1:h]) - \paramlim }
+ \norm{ \gnfs[c]{\sdfedavgop[c]{\step,h}(\paramw; \randState[c][1:h])}{\randState[c][h+1]} }^2
\eqsp.
\end{align*}
Decomposing the gradient of $\gnfs[c]{\sdfedavgop[c]{\step,h}(\paramw; \randState[c][1:h])}{\randState[c][h+1]}$ using the fact that, since $\heterboundgrad = 0$, the functions $\nfw[c]$'s satisfy $\gnf[c]{\paramlim} = 0$, we obtain
\begin{align*}
& \gnfs[c]{\sdfedavgop[c]{\step,h}(\paramw; \randState[c][1:h])}{\randState[c][h+1]}
=
\gnfs[c]{\sdfedavgop[c]{\step,h}(\paramw; \randState[c][1:h])}{\randState[c][h+1]} - \gnfs[c]{\paramlim}{\randState[c][h+1]}
+ \gnfs[c]{\paramlim}{\randState[c][h+1]} - \gnf[c]{\paramlim}
\eqsp,
\end{align*}
and using Young's inequality, we obtain
\begin{align*}
\norm{ \sdfedavgop[c]{\step,h+1}(\paramw; \randState[c][1:h+1]) - \paramlim}^2
& \le
\norm{ \sdfedavgop[c]{\step,h}(\paramw; \randState[c][1:h]) - \paramlim}^2
- 2 \step \pscal{ \gnfs[c]{\sdfedavgop[c]{\step,h}(\paramw; \randState[c][1:h])}{\randState[c][h+1]} }{ \sdfedavgop[c]{\step,h}(\paramw; \randState[c][1:h]) - \paramlim }
\\
&  
+ 2 \step^2 \norm{ \gnfs[c]{\sdfedavgop[c]{\step,h}(\paramw; \randState[c][1:h])}{\randState[c][h+1]} - \gnfs[c]{\paramlim}{\randState[c][h+1]} }^2
+ 2 \step^2 \norm{ \gnfs[c]{\paramlim}{\randState[c][h+1]} - \gnf[c]{\paramlim} }^2
\eqsp.
\end{align*}
Now, we define the filtration $\mcF_c^h = \sigma( \randState[c][\ell] : \ell \le h )$, and take the conditional expectation to obtain
\begin{align*}
\CPE{ \norm{ \sdfedavgop[c]{\step,h+1}(\paramw; \randState[c][1:h+1]) - \paramlim}^2 }{\mcF_c^h}
& \le
\norm{ \sdfedavgop[c]{\step,h}(\paramw; \randState[c][1:h]) - \paramlim}^2
- 2 \step \pscal{ \gnf[c]{\sdfedavgop[c]{\step,h}(\paramw; \randState[c][1:h])} }{ \sdfedavgop[c]{\step,h}(\paramw; \randState[c][1:h]) - \paramlim }
\\
& \quad 
+ 2  \step^2\CPE{ \norm{ \gnfs[c]{\sdfedavgop[c]{\step,h}(\paramw; \randState[c][1:h])}{\randState[c][h+1]} - \gnfs[c]{\paramlim}{\randState[c][h+1]} }^2 }{\mcF_c^h}
\\
& \quad
+ 2 \step^2 \CPE{ \norm{ \gnfs[c]{\paramlim}{\randState[c][h+1]} - \gnf[c]{\paramlim} }^2 }{\mcF_c^h}\eqsp.
\end{align*}
By \Cref{assum:local_functions}-\ref{assum:item_strong_convex}, \Cref{assum:local_functions}-\ref{assum:smoothness}, and using that $\gnf[c]{\paramlim} = 0$, we have
\begin{align} \nonumber
& \CPE{ \norm{ \sdfedavgop[c]{\step,h+1}(\paramw; \randState[c][1:h+1]) - \paramlim}^2 }{\mcF_c^h}
\\ \nonumber
& \quad \le
\norm{ \sdfedavgop[c]{\step,h}(\paramw; \randState[c][1:h]) - \paramlim}^2
- 2\step(1 - \step \explip) \pscal{ \gnf[c]{\sdfedavgop[c]{\step,h}(\paramw; \randState[c][1:h])} }{ \sdfedavgop[c]{\step,h}(\paramw; \randState[c][1:h]) - \paramlim }
\\ \nonumber
& \qquad
+ 2 \step^2 \CPE{ \norm{ \gnfs[c]{\paramlim}{\randState[c][h+1]} - \gnf[c]{\paramlim} }^2 }{\mcF_c^h}
\\ 
& \quad \le
(1 - 2 \step\strcvx(1 - \step \explip)) \norm{ \sdfedavgop[c]{\step,h}(\paramw; \randState[c][1:h]) - \paramlim}^2
+ 2 \step^2 \CPE{ \norm{ \gnfs[c]{\paramlim}{\randState[c][h+1]} - \gnf[c]{\paramlim} }^2 }{\mcF_c^h}
\label{eq:proof-app-crude-bound-rec-local-homogeneous}
\eqsp.
\end{align}
Using \eqref{eq:def-epsilon} together with the fact that the $\randState[c][h]$'s are i.i.d., taking the expectation and unrolling \eqref{eq:proof-app-crude-bound-rec-local-homogeneous}, we obtain
\begin{align*}
& \PE[ \norm{ \sdfedavgop[c]{\step,\nlupdates}(\paramw; \randState[c][1:\nlupdates]) - \paramlim}^2 ]
 \le
(1 - 2 \step\strcvx(1 - \step \explip))^\nlupdates \PE[ \norm{ \paramw - \paramlim}^2 ]
+ 2 \step^2 \nlupdates \PE[ \norm{ \updatefuncnoise[c]{\randState[c][1]}(\paramlim) }^2 
\eqsp.
\end{align*}
Therefore, using Jensen's inequality, \Cref{assum:heterogeneity} and \Cref{assum:smooth-var}, we obtain the following bound:
\begin{align}
\label{eq:proof-app-crude-bound-rec}
\PE[ \norm{ \sdfedavgop{\step,\nlupdates}(\paramw; \randState[1:\nagent][1:\nlupdates]) - \paramlim}^2 ]
\le
(1 - 2 \step\strcvx(1 - \step \explip))^\nlupdates \PE[ \norm{ \paramw - \paramlim}^2 ]
+ 2 \step^2 \nlupdates \Msmoothcstvar^{2}
\eqsp.
\end{align}
Denoting $\param[t]$ the global iterates of \FedAvg, and using \eqref{eq:proof-app-crude-bound-rec} recursively, we obtain
\begin{align*}
\PE[ \norm{ \param[t] - \paramlim}^2 ]
\le
(1 - 2 \step\strcvx(1 - \step \explip))^{\nlupdates t} \PE[ \norm{ \paramw - \paramlim}^2 ]
+ \frac{2 \step}{\strcvx (1 - \step \explip)} \Msmoothcstvar^{2} 
\eqsp,
\end{align*}
which is the first part of the result. 
Taking $\paramw \sim \statdist{\step, \nlupdates}$ and using the fact that $\statdist{\step,\nlupdates}$ is the stationary distribution of \FedAvg's global iterates, $\param[t]$ and $\paramw$ are identically distributed, then taking the limit as $t \rightarrow +\infty$ gives the second part of the result.
Finally, using \eqref{eq:proof-app-crude-bound-rec-local-homogeneous} we obtain 
\begin{align*}
\PE[ \norm{ \sdfedavgop[c]{\step,h}(\paramw; \randState[c][1:h]) - \paramlim}^2 ]
\le
\PE[ \norm{ \paramw - \paramlim}^2 ]
+ 2 \step^2 h \Msmoothcstvar^{2}
= O(\step + \step^2 h)
= O(\step)
\eqsp,
\end{align*}
since $\step h = O(1)$, which gives the last part of the result.
\end{proof}

\begin{lemma}
\label{lem:bound-moments-homogeneous}
Assume \Cref{assum:local_functions}, \Cref{assum:smooth-var}, and let \Cref{assum:heterogeneity} holds with $\heterboundgrad=0$.
Let $\step \le 1/(9\explip)$, and $\step \strcvx \nlupdates \le 1$ then there exist a universal constant $\beta > 0$ such that
\begin{align*}
\PE^{1/3} \left[ \norm{ \param[t] - \paramlim}^{6} \right]
\le
(1 - \step\strcvx/3)^{\nlupdates t} \PE^{1/3}[ \norm{ \param[0] - \paramlim}^{6} ]
+ \frac{3 \beta \step}{\strcvx} \Msmoothcstvar^{2}
\eqsp.
\end{align*}
Moreover, for $\paramw \sim \statdist{\step, \nlupdates}$, where $\statdist{\step, \nlupdates}$ is the stationary distribution of \FedAvg with step size $\step$ and $\nlupdates$ local updates, it holds that, for $p \in \{2, 3\}$, and $c \in \iint{1}{\nagent}$,
\begin{align*}
\int \norm{ \param - \paramlim }^{2p} \statdist{\step, \nlupdates}(\rmd \paramw) = O(\step^p)
~,
~~
\text{ and }
\int \norm{ \sdfedavgop[c]{\step,h}(\paramw; \randState[c][1:h]) - \paramlim}^{2p}  \statdist{\step, \nlupdates}(\rmd \paramw)
= O(\step^p)
\eqsp,
\end{align*}
where $\randState[c][1:\nlupdates] = \{ \randState[{c}][\tilde{h}] : \tilde{h} \in \iint{1}{\nlupdates}\}$ is a sequence of independent random variable, with $\randState[{c}][\tilde{h}] \sim \xic[{c}]$.
\end{lemma}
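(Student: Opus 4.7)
The plan is to mirror the approach of \Cref{lem:crude-bound-second-moment-homogeneous} but for the sixth moment. Writing $A_h^{(c)} = \sdfedavgop[c]{\step,h}(\paramw; \randState[c][1:h])$ and $\mcF_{h,c} = \sigma(\randState[c][\ell] : \ell \le h)$, I would use that $\heterboundgrad = 0$ forces $\gnf[c]{\paramlim} = 0$ to decompose
\begin{equation*}
A_{h+1}^{(c)} - \paramlim = \tilde D_h + N_h, \qquad \tilde D_h = A_h^{(c)} - \paramlim - \step \gnf[c]{A_h^{(c)}}, \quad N_h = - \step \updatefuncnoise[c]{\randState[c][h+1]}(A_h^{(c)}),
\end{equation*}
where $\tilde D_h$ is $\mcF_{h,c}$-measurable and contracting ($\norm{\tilde D_h}^2 \le (1 - 2\step\strcvx(1 - \step\explip)) \norm{A_h^{(c)} - \paramlim}^2$ by co-coercivity, as in \Cref{lem:contract-op-fedavg-loc-sto}), while $N_h$ is conditionally centered with controlled higher moments.

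The key step is to derive a one-step recursion of the form
\begin{equation*}
\CPE{\norm{A_{h+1}^{(c)} - \paramlim}^6}{\mcF_{h,c}} \le (1 - \step\strcvx) \norm{A_h^{(c)} - \paramlim}^6 + \beta' \step^4 \Msmoothcstvar^6 / \strcvx^2
\end{equation*}
for some universal $\beta' > 0$. This is obtained by cubing the identity $\norm{A_{h+1}^{(c)} - \paramlim}^2 = \norm{\tilde D_h}^2 + 2\pscal{\tilde D_h}{N_h} + \norm{N_h}^2$ and taking conditional expectation: odd-in-$N_h$ cross terms vanish, the leading term yields $\norm{\tilde D_h}^6 \le (1 - 2\step\strcvx(1 - \step\explip))^3 \norm{A_h^{(c)} - \paramlim}^6$, and every remaining cross term of the form $\norm{\tilde D_h}^{j} \, \PE[\norm{N_h}^{k} | \mcF_{h,c}]$ with $j + k = 6$, $k \ge 2$, is split via Cauchy--Schwarz and Young's inequality $\alpha^{j} \beta^{k} \le \epsilon \alpha^{6} + C_\epsilon \beta^{6}$ with $\epsilon$ chosen of order $\step \strcvx$ (so that $C_\epsilon$ is of order $(\step \strcvx)^{-(6-k)}$). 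The step-size condition $\step \le 1/(8\explip)$ provides the margin needed to keep the overall contraction coefficient below $1 - \step\strcvx$.

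The main obstacle is bounding $\PE[\norm{N_h}^{k} | \mcF_{h,c}]$ for $k \in \{2, 3, 4, 6\}$ without propagating the $\norm{A_h^{(c)} - \paramlim}^{\smoothcstvar}$ growth allowed by \Cref{assum:smooth-var}. To resolve this, I would split $\updatefuncnoise[c]{Z}(A_h^{(c)}) = (\updatefuncnoise[c]{Z}(A_h^{(c)}) - \updatefuncnoise[c]{Z}(\paramlim)) + \updatefuncnoise[c]{Z}(\paramlim)$, bound the first summand by $2 \explip \norm{A_h^{(c)} - \paramlim}$ using \Cref{assum:local_functions}-\ref{assum:smoothness}, and control the second via the explicit hypothesis $\PE^{1/3}[\norm{\updatefuncnoise[c]{Z}(\paramlim)}^6] \le \Msmoothcstvar^2$. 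The resulting growth-in-$\norm{A_h^{(c)} - \paramlim}$ contributions carry a factor $\step \explip$ that is absorbed into the contraction budget under $\step \le 1/(8\explip)$.

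Once the one-step recursion is in place, the conclusion follows as in \Cref{lem:crude-bound-second-moment-homogeneous}: unroll over $h \in \iint{0}{\nlupdates - 1}$, average over clients by Jensen's inequality, iterate across global rounds $t$, and take cube roots using $(1 - \step\strcvx)^{1/3} \le 1 - \step\strcvx/3$ together with subadditivity of $x \mapsto x^{1/3}$; the accumulated noise $\beta'' \step^3 \Msmoothcstvar^6 / \strcvx^3$ yields the stated additive term $3 \beta \step \Msmoothcstvar^2 / \strcvx$. For the stationary-distribution consequences, take $\paramw \sim \statdist{\step, \nlupdates}$ and let $t \to \infty$ to obtain $\int \norm{\paramw - \paramlim}^6 \statdist{\step, \nlupdates}(\rmd \paramw) = O(\step^3)$; running the local-step recursion once more starting from $\statdist{\step,\nlupdates}$ extends this to the intra-round iterates $\sdfedavgop[c]{\step,h}(\paramw; \randState[c][1:h])$, and the $p = 2$ case follows from Jensen's inequality $\PE[\norm{\cdot}^4] \le \PE^{2/3}[\norm{\cdot}^6] = O(\step^2)$.
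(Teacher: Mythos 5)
Your overall architecture --- a one-step recursion for the sixth moment of the local iterates, a contraction coming from co-coercivity and strong convexity, unrolling over local steps and rounds, cube roots, and Jensen for the $p=2$ case --- is the same as the paper's, and two of your simplifications are legitimate: deferring the cube root to the very end (instead of re-factorizing the bound into the form $((1-\step\strcvx/3)\PE^{1/3}[\cdot]+\beta\step^2\Msmoothcstvar^2)^3$ at every step as the paper does via \citet{dieuleveut2020bridging}'s Lemma 13), and using Jensen rather than H\"older to get the $p=2$ moment from the $p=1$ and $p=3$ ones.

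There is, however, a genuine gap in how you control the state-dependent part of the noise moments. Writing $A_h=\sdfedavgop[c]{\step,h}(\paramw;\randState[c][1:h])$ and $a=\norm{A_h-\paramlim}$, you propose to bound $\norm{\updatefuncnoise[c]{\randState[c][h+1]}(A_h)-\updatefuncnoise[c]{\randState[c][h+1]}(\paramlim)}$ \emph{pointwise} by $2\explip\, a$ and then absorb the resulting contributions, which ``carry a factor $\step\explip$'', into the contraction budget. But your contraction budget comes from strong convexity and is of order $\step\strcvx\, a^6$, while the dominant state-dependent contribution (the $j=4$, $k=2$ term, arising from $\norm{\tilde D_h}^4\,\CPE{\norm{N_h}^2}{\mcF_{h,c}}$) is of order $\step^2\explip^2 a^6$. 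Under $\step\le 1/(8\explip)$ this is only $O(\step\explip\, a^6)$, which exceeds the budget by the condition number $\explip/\strcvx$; making it absorbable as written would require $\step\lesssim\strcvx/\explip^2$, a much stronger restriction than assumed. The paper avoids this precisely by never passing to the pointwise Lipschitz bound: it keeps the conditional second moment in co-coercivity form,
\begin{equation*}
\CPE{\norm{\gnfs[c]{A_h}{\randState[c][h+1]}-\gnfs[c]{\paramlim}{\randState[c][h+1]}}^2}{\mcF_{h,c}}\le 2\explip\,\pscal{A_h-\paramlim}{\gnf[c]{A_h}}\eqsp,
\end{equation*}
so that every state-dependent term is a multiple of $a^{4}\pscal{A_h-\paramlim}{\gnf[c]{A_h}}$ with coefficient $O(\step\cdot\step\explip)$, and is absorbed into the drift $-6\step\, a^4\pscal{A_h-\paramlim}{\gnf[c]{A_h}}$ with relative factor $\step\explip\le 1/8$, independent of the condition number. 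This inner-product bookkeeping is exactly why \Cref{assum:local_functions}-\ref{assum:smoothness} is stated in co-coercivity form, and your argument needs it. A secondary, minor point: among the cross terms in the cubed expansion, only those \emph{linear} in $N_h$ have vanishing conditional expectation; the cubic and quintic ones do not, although your subsequent Cauchy--Schwarz treatment of all terms with $k\ge 2$ does cover them, so this is loose phrasing rather than a flaw.
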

\begin{proof}
We now extend the results of \Cref{lem:crude-bound-second-moment-homogeneous} to higher moments of $\norm{ \param - \paramlim }^2$, with $\param \sim \statdist{\step, \nlupdates}$.
First, we prove a bound on the moment of order $6$. To this end, we start by deriving an upper bound for local updates, decomposing the update between a contraction and an additive term due to stochasticity.
Starting from a point $\paramw \in \rset^d$, we first expand the squared norm, as in the proof of \Cref{lem:crude-bound-second-moment-homogeneous}, as
\begin{align*}
& \norm{ \sdfedavgop[c]{\step,h+1}(\paramw; \randState[c][1:h+1]) - \paramlim}^2
\\
& \quad = 
\norm{ \sdfedavgop[c]{\step,h}(\paramw; \randState[c][1:h]) - \paramlim}^2
- 2 \step \pscal{ \gnfs[c]{\sdfedavgop[c]{\step,h}(\paramw; \randState[c][1:h])}{\randState[c][h+1]} }{ \sdfedavgop[c]{\step,h}(\paramw; \randState[c][1:h]) - \paramlim }
+ \norm{ \gnfs[c]{\sdfedavgop[c]{\step,h}(\paramw; \randState[c][1:h])}{\randState[c][h+1]} }^2
\eqsp.
\end{align*}
To reach the sixth power, we take this equation at the power three. We use the fact that, for $u, v, w \in \rset$, it holds that $(u+v+w)^3 = u^3+3u^2v+3uv^2+v^3+3u^2w+6uvw+3v^2w+3uw^2+3vw^2+w^3$. Thus, for $a, b, c \in \rset$,
\begin{align*}
& (a^2-2 \gamma b+ \gamma^2 c^2)^3
\\
& \quad = a^6 
- 6\gamma a^4 b 
+ 3 \step^2 a^4 c^2 
+ 12 \step^2 a^2 b^2
- 12 \step^3 a^2 b c^2 
+ 3 \step^4 a^2 c^4 
- 8 \step^3 b^3 
+ 12 \step^4 b^2 c^2 
- 6 \step^5 b c^4 
+ \step^6 c^6
\eqsp.
\end{align*}
If $a,b,c$ satisfy $ |b| \leq a c$, we have
\begin{align}
& (a^2-2 \gamma b+ \gamma^2 c^2)^3 \nonumber
\\
& \le a^6 
- 6\gamma a^4 b 
+ 3 \step^2 a^4 c^2 
+ 12 \step^2 a^4 c^2
+ 12 \step^3 a^3 c^3 
+ 3 \step^4 a^2 c^4 
+ 8 \step^3 a^3 c^3 
+ 12 \step^4 a^2 c^4 
+ 6 \step^5 a c^5 
+ \step^6 c^6 \nonumber
\\
& = a^6 
- 6\gamma a^4 b 
+ 15 \step^2 a^4 c^2
+ 20 \step^3 a^3 c^3 
+ 15 \step^4 a^2 c^4 
+ 6 \step^5 a c^5 
+ \step^6 c^6
\label{eq:proof-moment-homogeneous-expand-abc}
\eqsp.
\end{align}
Now, we take $a = \norm{ \sdfedavgop[c]{\step,h}(\paramw; \randState[c][1:h]) - \paramlim }$, $b = \pscal{ \gnfs[c]{\sdfedavgop[c]{\step,h}(\paramw; \randState[c][1:h])}{\randState[c][h+1]} }{ \sdfedavgop[c]{\step,h}(\paramw; \randState[c][1:h]) - \paramlim }$, and $c = \norm{ \gnfs[c]{\sdfedavgop[c]{\step,h}(\paramw; \randState[c][1:h])}{\randState[c][h+1]} } $.
Note that we indeed have $b \le a c$ using the Cauchy-Schwarz inequality.

At this point, we have the following bound, for $2 \le k \le 6$,
\begin{align*}
& \CPE{ c^k }{\mcF_c^h} 
=
\CPE{ \norm{ \gnfs[c]{\sdfedavgop[c]{\step,h}(\paramw; \randState[c][1:h])}{\randState[c][h+1]} }^k }{\mcF_c^h} 
\\
& \qquad \le
2^{k-1} \left\{ \CPE{ \norm{ \gnfs[c]{\sdfedavgop[c]{\step,h}(\paramw; \randState[c][1:h])}{\randState[c][h+1]} - \gnfs[c]{\paramlim}{\randState[c][h+1]} }^k}{\mcF_c^h} 
+ \CPE{ \norm{ \gnfs[c]{\sdfedavgop[c]{\step,h}(\paramw; \randState[c][1:h])}{\randState[c][h+1]} }^k}{\mcF_c^h}  \right\}
\\
& \qquad \le
2^{k-1} \left\{ \CPE{ \norm{ \gnfs[c]{\sdfedavgop[c]{\step,h}(\paramw; \randState[c][1:h])}{\randState[c][h+1]} - \gnfs[c]{\paramlim}{\randState[c][h+1]} }^k}{\mcF_c^h} 
+ \Msmoothcstvar^k  \right\}
\eqsp.
\end{align*}
Then, by \Cref{assum:local_functions}, and since $\gnf[c]{\paramlim} = 0$, we have
\begin{align}
& \CPE{ \norm{ \gnfs[c]{\sdfedavgop[c]{\step,h}(\paramw; \randState[c][1:h])}{\randState[c][h+1]} - \gnfs[c]{\paramlim}{\randState[c][h+1]} }^k}{\mcF_c^h} 
\nonumber
\\
\label{eq:proof-crude-high-homogeneous-all-in-condexp}
& \quad \le
\explip^{k-2} \norm{ \sdfedavgop[c]{\step,h}(\paramw; \randState[c][1:h]) - \paramlim }^{k-2}
\CPE{ \norm{ \gnfs[c]{\sdfedavgop[c]{\step,h}(\paramw; \randState[c][1:h])}{\randState[c][h+1]} - \gnfs[c]{\paramlim}{\randState[c][h+1]} }^k}{\mcF_c^h} 
\\
& \quad \le
\explip^{k-1} \norm{ \sdfedavgop[c]{\step,h}(\paramw; \randState[c][1:h]) - \paramlim }^{k-2}
\pscal{ \gnf[c]{\sdfedavgop[c]{\step,h}(\paramw; \randState[c][1:h])} - \gnf[c]{\paramlim} }{ \sdfedavgop[c]{\step,h}(\paramw; \randState[c][1:h]) - \paramlim }
\eqsp. \nonumber
\end{align}
This guarantees that
\begin{align*}
& \CPE{ c^k }{\mcF_c^h} 
\\
& \quad \le
2^{k-1} \explip^{k-1} \norm{ \sdfedavgop[c]{\step,h}(\paramw; \randState[c][1:h]) - \paramlim }^{k-2}
\pscal{ \gnf[c]{\sdfedavgop[c]{\step,h}(\paramw; \randState[c][1:h])} - \gnf[c]{\paramlim} }{ \sdfedavgop[c]{\step,h}(\paramw; \randState[c][1:h]) - \paramlim }
+ 2^{k-1} \Msmoothcstvar^k  
\eqsp.
\end{align*}
Which in turn proves that
\begin{align*}
& \CPE{ \step^{k} a^{6-k} c^k }{\mcF_c^h} 
\\
& \le
2^{k-1} \step^{k} \explip^{k-1} \norm{ \sdfedavgop[c]{\step,h}(\paramw; \randState[c][1:h]) - \paramlim }^{6 - k + k-2}
\pscal{ \gnf[c]{\sdfedavgop[c]{\step,h}(\paramw; \randState[c][1:h])} - \gnf[c]{\paramlim} }{ \sdfedavgop[c]{\step,h}(\paramw; \randState[c][1:h]) - \paramlim }
\\
& \quad 
+ 2^{k-1} \step^{k} \norm{ \sdfedavgop[c]{\step,h}(\paramw; \randState[c][1:h]) - \paramlim }^{6 - k} \Msmoothcstvar^k 
\\
& =
2^{k-1} \step^{k} \explip^{k-1} \norm{ \sdfedavgop[c]{\step,h}(\paramw; \randState[c][1:h]) - \paramlim }^{4}
\pscal{ \gnf[c]{\sdfedavgop[c]{\step,h}(\paramw; \randState[c][1:h])} - \gnf[c]{\paramlim} }{ \sdfedavgop[c]{\step,h}(\paramw; \randState[c][1:h]) - \paramlim }
\\
& \quad 
+ 2^{k-1} \step^{k} \norm{ \sdfedavgop[c]{\step,h}(\paramw; \randState[c][1:h]) - \paramlim }^{6 - k} \Msmoothcstvar^k  
\eqsp.
\end{align*}
Then, we remark that 
\begin{align}
\label{eq:bound-homogeneous-cpe-6}
\CPE{ -6 \step a^4 b }{\mcF_c^h}
& \le - 6 \step  \norm{ \sdfedavgop[c]{\step,h}(\paramw; \randState[c][1:h]) - \paramlim }^{4} \pscal{ \gnf[c]{\sdfedavgop[c]{\step,h}(\paramw; \randState[c][1:h])} - \gnf[c]{\paramlim} }{ \sdfedavgop[c]{\step,h}(\paramw; \randState[c][1:h]) - \paramlim }
\eqsp.
\end{align}
Plugging \eqref{eq:bound-homogeneous-cpe-6} in the conditional expectation of  \eqref{eq:proof-moment-homogeneous-expand-abc}, we obtain
\begin{align*}
\nonumber
(a^2-2 \gamma b+ \gamma^2 c^2)^3
& \le 
a^6 
+ 
\Big(
-6 \step + 2 \cdot 15 \step^2 \explip + 4 \cdot 20 \step^3 \explip^2 + 8 \cdot 15 \step^4 \explip^3 + 16 \cdot 6 \step^5 \explip^4 + 32 \step^6 \explip^5   
\Big)
\\
& \quad \qquad \qquad \qquad \times
\norm{ \sdfedavgop[c]{\step,h}(\paramw; \randState[c][1:h]) - \paramlim }^{4} \pscal{ \gnf[c]{\sdfedavgop[c]{\step,h}(\paramw; \randState[c][1:h])} }{ \sdfedavgop[c]{\step,h}(\paramw; \randState[c][1:h]) - \paramlim }
\\
& \quad
+ 20 \sum_{k=2}^6 2^{k-1}  \step^{k} \norm{ \sdfedavgop[c]{\step,h}(\paramw; \randState[c][1:h]) - \paramlim }^{6 - k} \Msmoothcstvar^k  
\eqsp. \nonumber
\end{align*}
Taking $\step \explip \le 1/9$, we have $2 \cdot 15 \step^2 \explip + 4 \cdot 20 \step^3 \explip^2 + 8 \cdot 15 \step^4 \explip^3 + 16 \cdot 6 \step^5 \explip^4 + 32 \step^6 \explip^5 \le 5 \step$.
Since, by \Cref{assum:local_functions}, we have
\begin{align*}
- \step \pscal{ \gnf[c]{\sdfedavgop[c]{\step,h}(\paramw; \randState[c][1:h])} }{ \sdfedavgop[c]{\step,h}(\paramw; \randState[c][1:h]) - \paramlim }
\le - \step \strcvx \norm{ \sdfedavgop[c]{\step,h}(\paramw; \randState[c][1:h]) - \paramlim }^2
\eqsp,
\end{align*}
we obtain the following bound
\begin{align}
\nonumber
& \CPE{ \norm{ \sdfedavgop[c]{\step,h+1}(\paramw; \randState[c][1:h+1]) - \paramlim }^{6} }{\mcF_c^h}
\\
\label{eq:bound-homogeneous-with-all-terms-sum}
& \quad \le 
(1 - \step \strcvx)
\norm{ \sdfedavgop[c]{\step,h}(\paramw; \randState[c][1:h]) - \paramlim }^{6} 
+ 20 \sum_{k=2}^6 2^{k-1} \norm{ \sdfedavgop[c]{\step,h}(\paramw; \randState[c][1:h]) - \paramlim }^{6 - k} (\step \Msmoothcstvar)^k
\eqsp.
\end{align}
We now express this sum as a third-power of a sum of two terms: one contraction, and one additive term due to stochasticity.
Let $k = 2\ell + 1 \in \iint{2}{6}$ be an odd number, which implies $\ell = 1$ or $\ell=2$.
Since $k \ge 2$, then $\ell \ge 1$, and $k \ge 3$.
Using the fact that for odd values of $k=2\ell+1$, then $k-1=2\ell \ge 2$ is even, we have
\begin{align}
\nonumber
\norm{ \sdfedavgop[c]{\step,h}(\paramw; \randState[c][1:h]) - \paramlim }^{6 - k} (\step \Msmoothcstvar)^k
& =
\norm{ \sdfedavgop[c]{\step,h}(\paramw; \randState[c][1:h]) - \paramlim }^{5 - 2\ell} (\step \Msmoothcstvar)^{2\ell + 1}
\\
\nonumber
& =
\norm{ \sdfedavgop[c]{\step,h}(\paramw; \randState[c][1:h]) - \paramlim }^{4 - 2\ell} (\step \Msmoothcstvar)^{2\ell} \left( \norm{ \sdfedavgop[c]{\step,h}(\paramw; \randState[c][1:h]) - \paramlim } \step \Msmoothcstvar \right)
\\
\label{eq:bound-homogeneous-split-odd-terms}
& \le
\norm{ \sdfedavgop[c]{\step,h}(\paramw; \randState[c][1:h]) - \paramlim }^{4 - 2\ell} (\step \Msmoothcstvar)^{2\ell} \left( 2 \norm{ \sdfedavgop[c]{\step,h}(\paramw; \randState[c][1:h]) - \paramlim }^2 + 2 \step^2 \Msmoothcstvar^2 \right)
\eqsp.
\end{align}
Using \eqref{eq:bound-homogeneous-split-odd-terms} to remove the odd terms from the sum in \eqref{eq:bound-homogeneous-with-all-terms-sum}, as well as Hölder's inequality, and following the lines of proof of \citet{dieuleveut2020bridging}'s Lemma 13, there exists a constant $\beta > 0$ such that
\begin{align}
\PE \left[ \norm{ \sdfedavgop[c]{\step,h+1}(\paramw; \randState[c][1:h+1]) - \paramlim }^{6} \right]
& \le
\left( 
(1 - \step \strcvx/3) \PE\left[\norm{ \sdfedavgop[c]{\step,h}(\paramw; \randState[c][1:h]) - \paramlim }^{6} \right]^{1/3}
+ 
\beta \step^2 \Msmoothcstvar^2 
\right)^3
\eqsp.
\label{eq:proof-crude-high-homogeneous-factorized-bound}
\end{align}
Consequently, we have
\begin{align*}
\PE \left[ \norm{ \sdfedavgop[c]{\step,h+1}(\paramw; \randState[c][1:h+1]) - \paramlim }^{6} \right]^{1/3}
& \le
(1 - \step \strcvx/3) \PE\left[\norm{ \sdfedavgop[c]{\step,h}(\paramw; \randState[c][1:h]) - \paramlim }^{6} \right]^{1/3}
+ 
\beta \step^2 \Msmoothcstvar^2 
\eqsp.
\end{align*}
Iterating this for $\nlupdates$ iterations, we obtain that
\begin{align}
\label{eq:proof-bound-homogeneous-bound-loc-H-updates}
\PE \left[ \norm{ \sdfedavgop[c]{\step,\nlupdates}(\paramw; \randState[c][1:\nlupdates]) - \paramlim }^{6} \right]^{1/3}
& \le
(1 - \step \strcvx/3)^\nlupdates \PE\left[\norm{ \paramw - \paramlim }^{6} \right]^{1/3}
+ 
\beta \nlupdates \step^2 \Msmoothcstvar^2 
\eqsp.
\end{align}
Using Jensen's inequality and \eqref{eq:proof-bound-homogeneous-bound-loc-H-updates}, we obtain, for any $\paramw \in \rset^d$,
\begin{align*}
\PE \left[ \norm{ \sdfedavgop{\step,\nlupdates}(\paramw; \randState[1:\nagent,t][1:\nlupdates]) - \paramlim }^{6} \right]^{1/3}
& \le
\frac{1}{\nagent}
\sum_{c=1}^\nagent
\PE \left[ \norm{ \sdfedavgop[c]{\step,\nlupdates}(\paramw; \randState[1:\nagent,t][1:\nlupdates]) - \paramlim }^{6} \right]^{1/3}
\\
& \le
(1 - \step \strcvx/3)^\nlupdates \PE\left[\norm{ \paramw- \paramlim }^{6} \right]^{1/3}
+ 
\beta \nlupdates \step^2 \Msmoothcstvar^2 
\eqsp,
\end{align*}
and the first part of the result follows from iterating this inequality $T$ times, starting from $\param[T]$.

The second part of the result for $p = 3$ directly follows from the previous inequality. 
To obtain the result for $p = 2$, we use Hölder inequality and remark that 
\begin{align*}
\int \norm{ \param - \paramlim }^{4} \statdist{\step, \nlupdates}(\rmd \paramw) 
& \le
\left( \int \norm{ \param - \paramlim }^6  \statdist{\step, \nlupdates}(\rmd \paramw)  \right)^{2/3}
= O\left( \step^2 \right)
\eqsp,
\end{align*}
where the last equality comes from the first part of this Lemma.
\end{proof}

\subsubsection{Heterogeneous Functions}
\begin{lemma}
\label{lem:crude-bound-second-moment}
Assume \Cref{assum:local_functions}, \Cref{assum:heterogeneity}, \Cref{assum:smooth-var}, let $\step \le 1/(2\explip)$, and $\step \strcvx \nlupdates \le 1$. Then we have
\begin{align*}
& \PE\left[ \norm{ \param[t] - \paramlim }^2 \right]
\le
\left(1 - \frac{\step \strcvx}{2} \right)^{\nlupdates t} \norm{ \param[0] - \paramlim }^2
+ \frac{\nlupdates (\nlupdates-1)}{\strcvx} \left( 4 \step^3 \explip^2 + \frac{2 \step^2 \explip^2}{\strcvx} \right) \heterboundgrad^2
+ \frac{8 \step}{\strcvx} \Msmoothcstvar^{2}
\eqsp.
\end{align*}
This implies that, for $\paramw \sim \statdist{\step, \nlupdates}$, where $\statdist{\step, \nlupdates}$ is the stationary distribution of \FedAvg with step size $\step$ and $\nlupdates$ local updates, it holds that
\begin{align*}
\int \norm{ \param - \paramlim }^{2} \statdist{\step, \nlupdates}(\rmd \paramw) = O(\step + \step^2 \nlupdates^2)
~,
~~
\text{ and }
\int \norm{ \sdfedavgop[c]{\step,h}(\paramw; \randState[c][1:h]) - \paramlim}^{2}  \statdist{\step, \nlupdates}(\rmd \paramw)
= O(\step + \step^2 \nlupdates^2)
\eqsp,
\end{align*}
where $\randState[c][1:\nlupdates] = \{ \randState[{c}][\tilde{h}] : \tilde{h} \in \iint{1}{\nlupdates}\}$ is a sequence of independent random variable, with $\randState[{c}][\tilde{h}] \sim \xic[{c}]$.
\end{lemma}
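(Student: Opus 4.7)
The plan is to adapt the proof of \Cref{lem:crude-bound-second-moment-homogeneous} to the heterogeneous setting, where $\gnf[c]{\paramlim}$ no longer vanishes per client. The point to exploit is that $\paramlim$ remains the global minimum, so $\nagent^{-1}\sum_c \gnf[c]{\paramlim} = \gf{\paramlim} = 0$; this cancellation, which only survives \emph{after} averaging across clients, is what allows the heterogeneity contribution to scale as $\step^2\nlupdates^2 \heterboundgrad^2/\strcvx^2$ rather than producing a constant-order term as a naive Young's inequality would.

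I first mimic the local one-step decomposition from the homogeneous proof: expand $\CPE{\| \sdfedavgop[c]{\step,h+1}(\paramw; \randState[c][1:h+1]) - \paramlim \|^2}{\mcF_{h,c}}$ and decompose the squared stochastic gradient via the three-way splitting $\gnfs[c]{\cdot}{\randState[c][h+1]} = (\gnfs[c]{\cdot}{\randState[c][h+1]} - \gnfs[c]{\paramlim}{\randState[c][h+1]}) + \updatefuncnoise[c]{\randState[c][h+1]}(\paramlim) + \gnf[c]{\paramlim}$. The expected smoothness clause \Cref{assum:local_functions}-\ref{assum:smoothness} controls the first piece by a co-coercivity inner product, the variance-at-$\paramlim$ clause of \Cref{assum:smooth-var} controls the second piece by $\Msmoothcstvar^{2}$, and the third piece contributes a new linear cross term $-2\step\langle \gnf[c]{\paramlim}, \sdfedavgop[c]{\step,h}(\paramw;\randState[c][1:h]) - \paramlim\rangle$ together with an additive $\step^{2}\|\gnf[c]{\paramlim}\|^2$, neither of which was present in the homogeneous case.

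The key step is to carry this linear cross term \emph{unaveraged} through the $\nlupdates$-step local recursion before averaging across clients, so that the $\paramlim$ inside the inner product can be replaced by $\paramw$ (since $\sum_c \gnf[c]{\paramlim} = 0$ makes the $\paramw$-part vanish for free). This converts the problematic $\langle\gnf[c]{\paramlim}, \sdfedavgop[c]{\step,h}(\paramw;\randState[c][1:h]) - \paramlim\rangle$ into a drift inner product $\langle\gnf[c]{\paramlim}, \sdfedavgop[c]{\step,h}(\paramw;\randState[c][1:h]) - \paramw\rangle$. The cumulative drift $\sdfedavgop[c]{\step,h}(\paramw; \randState[c][1:h]) - \paramw$ after $h$ local updates is of order $\step h$ in $L^2$, with prefactor controlled by $\|\gnf[c]{\paramlim}\|$ and by $\explip$ (from the Lipschitz gradient), so Cauchy-Schwarz combined with \Cref{assum:heterogeneity} produces a per-step contribution of order $\step^{2} h\, \explip\, \heterboundgrad^2$. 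Summing over $h \in \iint{0}{\nlupdates-1}$ then brings in the arithmetic sum $\sum_h h = \nlupdates(\nlupdates-1)/2$, and a Young's inequality tuned to leave a contraction factor $(1-\step\strcvx/2)$ per local step absorbs the remaining cross term into the negative contribution coming from strong convexity. This yields the one-round recursion $\PE[\|\param[t+1] - \paramlim\|^2] \le (1-\step\strcvx/2)^{\nlupdates} \PE[\|\param[t] - \paramlim\|^2] + \frac{\nlupdates(\nlupdates-1)}{\strcvx}(4\step^3\explip^2 + \frac{2\step^2\explip^2}{\strcvx}) \heterboundgrad^2 + \frac{8\step}{\strcvx}\Msmoothcstvar^{2}$.

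Iterating this one-round recursion over $t$ global rounds and summing the resulting geometric series in the contraction factor gives the stated main inequality. The two stationary-moment estimates then follow immediately by taking $\paramw \sim \statdist{\step,\nlupdates}$, using stationarity (so that $\param[t]$ and $\paramw$ are identically distributed) and sending $t\to\infty$ to kill the initial-condition term; the bound on $\int \|\sdfedavgop[c]{\step,h}(\paramw; \randState[c][1:h]) - \paramlim\|^2 \statdist{\step,\nlupdates}(\rmd\paramw)$ then follows by combining the global-iterate bound with the same $L^2$ drift estimate used above. The main obstacle I anticipate is the careful bookkeeping required to exploit $\sum_c \gnf[c]{\paramlim} = 0$ at exactly the right moment—after the $\nlupdates$-step unrolling, but \emph{before} client averaging would produce a Jensen loss—so as to recover the exact factor $\nlupdates(\nlupdates-1)\explip^2/\strcvx^2$ rather than a cruder $\nlupdates^2/\step\strcvx^2$ scaling that would not be small in $\step$. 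A secondary technical point is reconciling the stated step-size restriction $\step \le 1/\explip$ with the tighter $1/(2\explip)$ threshold that a naive co-coercivity would want; this likely requires a refined use of the variance decomposition of $\CPE{\|\gnfs[c]{\cdot}{\randState[c][h+1]}\|^2}{\mcF_{h,c}}$ that trades constants between the co-coercive and bias pieces.
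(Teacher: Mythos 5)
Your strategy is close in spirit to the paper's, and your central observation---that the cancellation $\nagent^{-1}\sum_{c}\gnf[c]{\paramlim}=0$ must be exploited only after the $\nlupdates$-step local unrolling, while the contraction coefficients are still client-independent---is exactly right for the \emph{linear} heterogeneity term. The gap is in the \emph{quadratic} one. Your three-way splitting of the stochastic gradient leaves an additive $\step^{2}\norm{\gnf[c]{\paramlim}}^{2}$ (times a Young constant) at every local step, and this is a nonnegative square: client averaging cannot cancel it, since $\nagent^{-1}\sum_{c}\norm{\gnf[c]{\paramlim}}^{2}$ is precisely $\heterboundgrad^{2}$. Your deferred-averaging trick only kills the linear term $-2\step\pscal{\gnf[c]{\paramlim}}{\paramw-\paramlim}$. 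Accumulated over $\nlupdates$ local steps and divided by the per-round contraction gap $1-(1-\step\strcvx/2)^{\nlupdates}\geq \step\strcvx\nlupdates/4$, the leftover quadratic term contributes $O(\step\heterboundgrad^{2}/\strcvx)$ to the stationary second moment. This is still $O(\step)$, so your route would recover the asymptotic conclusion $\int\norm{\paramw-\paramlim}^{2}\statdist{\step,\nlupdates}(\rmd\paramw)=O(\step+\step^{2}\nlupdates^{2})$ (which is what is used downstream), but it cannot yield the first display, in which every heterogeneity contribution carries the prefactor $\nlupdates(\nlupdates-1)$ and hence vanishes at $\nlupdates=1$. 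Your claimed final one-round recursion silently drops this term.

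The paper's fix is to recenter the \emph{trajectory} rather than split the gradient: it introduces $\hsdfedavgop[c]{\step,h}(\paramw;\randState[c][1:h])=\sdfedavgop[c]{\step,h}(\paramw;\randState[c][1:h])-\step h\,\gnf[c]{\paramlim}$, whose recursion is driven by the recentered stochastic gradient (the stochastic gradient minus $\gnf[c]{\paramlim}$), so no zeroth-order heterogeneity term ever appears; since $\sum_{c}\gnf[c]{\paramlim}=0$, the client average of the $\hsdfedavgop[c]{\step,\nlupdates}$ equals that of the $\sdfedavgop[c]{\step,\nlupdates}$ and Jensen still applies to bound $\norm{\param[t+1]-\paramlim}^{2}$. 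Heterogeneity then enters only through the mismatch $\sdfedavgop[c]{\step,h}-\hsdfedavgop[c]{\step,h}=\step h\,\gnf[c]{\paramlim}$, converted into gradient space by the Lipschitz constant, which is what produces the $\step^{2}h^{2}\explip^{2}\norm{\gnf[c]{\paramlim}}^{2}$ per-step scaling and, after summing $h^{2}$ over the round and the geometric series over rounds, the stated $\nlupdates(\nlupdates-1)\explip^{2}/\strcvx$ prefactors. With that recentering in place, the rest of your outline (Young's inequality tuned to retain a $(1-\step\strcvx/2)$ contraction, geometric summation over global rounds, and the stationarity limit) goes through essentially as you describe; your secondary worry about $\step\le 1/\explip$ versus $1/(2\explip)$ is a matter of constants, not the real difficulty. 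A minor additional slip: your drift bound should produce $\sum_{h}h^{2}$, not $\sum_{h}h$, once the Cauchy--Schwarz term is absorbed by Young's inequality with the $\strcvx$ trade-off.
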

\begin{proof}
We start from $\param[t+1] =
\param[t] - \step \pseudograds[\step,\nlupdates]{\paramw}{\randState[1:\nagent][1:\nlupdates]}$, with $\pseudograds[\step,\nlupdates]{\paramw}{\randState[1:\nagent][1:\nlupdates]}$ as defined in \Cref{sec:stochastic-fedavg}, and use $\frac{1}{\nagent} \sum_{c=1}^\nagent \gnf[c]{\paramlim} = 0$, to obtain
\begin{align*}
\param[t+1]
& =
\param[t]
- \frac{\step}{\nagent}
\sum_{c=1}^\nagent 
\sum_{h=0}^{\nlupdates-1} 
\left\{ 
\gnfs[c]{\sdfedavgop[c]{\step,h}(\paramw;\randState[c][1:h])}{\randState[c][h+1]}
- \gnf[c]{\paramlim}
\right\}
\eqsp.
\end{align*}
Using Jensen's inequality, we have
\begin{align}
\label{eq:proof-crude-conv-heter-pytha}
\norm{ \param[t+1] - \paramlim }^2
& \le
\frac{1}{\nagent} 
\sum_{c=1}^\nagent 
\bnorm{
\param[t]
- \step
\sum_{h=0}^{\nlupdates-1} 
\left\{
\gnfs[c]{\sdfedavgop[c]{\step,h}(\paramw;\randState[c][1:h])}{\randState[c][h+1]}
- \gnf[c]{\paramlim}
\right\}
}^2
\eqsp.
\end{align}
To derive an upper bound on this value, we study the following sequence of iterates, that correspond to the local parameters with recentered gradients, defined for $h \in \iint{0}{\nlupdates-1}$,
\begin{align}
\label{eq:def-operator-V-local-heterogeneous}
\hsdfedavgop[c]{\step,h}(\paramw; \randState[c][1:h])
& \eqdef
\sdfedavgop[c]{\step,h}(\paramw; \randState[c][1:h]) - \step h \gnf[c]{\paramlim}
\eqsp,
\end{align}
which allows to rewrite \eqref{eq:proof-crude-conv-heter-pytha} as
\begin{align}
\label{eq:proof-crude-conv-heter-pytha-rewritten}
  \norm{ \param[t+1] - \paramlim }^2 \le \frac{1}{\nagent} \sum_{c=1}^\nagent \norm{ \hsdfedavgop[c]{\step,\nlupdates}(\paramw; \randState[c][1:\nlupdates]) - \paramlim }^2
  \eqsp.
\end{align}
Next, we bound each term of this sum independently.
We do so by induction, setting $h \in \iint{0}{\nlupdates-1}$, we may expand 
\begin{align*}
\norm{ \hsdfedavgop[c]{\step,h+1}(\paramw; \randState[c][1:h+1]) - \paramlim }^2
& =
\norm{ 
\hsdfedavgop[c]{\step,h}(\paramw; \randState[c][1:h]) - \paramlim
- \step (\gnfs[c]{\sdfedavgop[c]{\step,h}(\paramw; \randState[c][1:h])}{\randState[c][h+1]} - \gnf[c]{\paramlim})
}^2
\\
& =
\norm{ \hsdfedavgop[c]{\step,h}(\paramw; \randState[c][1:h]) - \paramlim }^2
+
\step^2 \norm{ \gnfs[c]{\sdfedavgop[c]{\step,h}(\paramw; \randState[c][1:h])}{\randState[c][h+1]} - \gnf[c]{\paramlim} }^2
\\
& \qquad  \qquad 
- 2 \step 
\pscal{ 
\hsdfedavgop[c]{\step,h}(\paramw; \randState[c][1:h]) - \paramlim }{ \gnfs[c]{\sdfedavgop[c]{\step,h}(\paramw; \randState[c][1:h])}{\randState[c][h+1]} - \gnf[c]{\paramlim}
}
\eqsp.
\end{align*}
We now take the expectation using the filtration $\mcF_{c}^h = \sigma( \randState[c][\ell] : \ell \le h )$, for $h \in \iint{0}{\nlupdates-1}$,
\begin{align}
\nonumber
\CPE{
\norm{ \hsdfedavgop[c]{\step,h+1}(\paramw; \randState[c][1:h+1]) - \paramlim }^2
}{\mcF_{c}^{h}}
& =
\norm{ \hsdfedavgop[c]{\step,h}(\paramw; \randState[c][1:h]) - \paramlim }^2
\\
\nonumber
& \quad
+
\step^2 \CPE{ \norm{ \gnfs[c]{\sdfedavgop[c]{\step,h}(\paramw; \randState[c][1:h])}{\randState[c][h+1]} - \gnf[c]{\paramlim} }^2 
}{\mcF_{c}^{h}}
\\
\label{eq:proof-crude-bound-heter-expansion-first}
& \quad 
- 2 \step 
\pscal{ 
\hsdfedavgop[c]{\step,h}(\paramw; \randState[c][1:h]) - \paramlim }{ \gnf[c]{\sdfedavgop[c]{\step,h}(\paramw; \randState[c][1:h])} - \gnf[c]{\paramlim}
}
\eqsp.
\end{align}
Now, we remark that
\begin{align*}
\gnfs[c]{\sdfedavgop[c]{\step,h}(\paramw; \randState[c][1:h])}{\randState[c][h+1]} - \gnf[c]{\paramlim}
& =
\gnfs[c]{\hsdfedavgop[c]{\step,h}(\paramw; \randState[c][1:h])}{\randState[c][h+1]} 
- \gnfs[c]{\paramlim}{\randState[c][h+1]} 
\\
& \quad 
+ \gnfs[c]{\sdfedavgop[c]{\step,h}(\paramw; \randState[c][1:h])}{\randState[c][h+1]}
- \gnfs[c]{\hsdfedavgop[c]{\step,h}(\paramw; \randState[c][1:h])}{\randState[c][h+1]}
\\
& \quad 
+ \gnfs[c]{\paramlim}{\randState[c][h+1]} 
- \gnf[c]{\paramlim}
\eqsp,
\end{align*}
which allows to decompose the term $\CPE{ \norm{ \gnfs[c]{\sdfedavgop[c]{\step,h}(\paramw; \randState[c][1:h])}{\randState[c][h+1]} - \gnf[c]{\paramlim} }^2 
}{\mcF_{c}^{h}}$ using Young's inequality twice,
followed by \Cref{assum:local_functions} and $\Cref{assum:smooth-var}$,
\begin{align}
\nonumber
& \CPE{ \norm{ \gnfs[c]{\sdfedavgop[c]{\step,h}(\paramw; \randState[c][1:h])}{\randState[c][h+1]} - \gnf[c]{\paramlim} }^2 
}{\mcF_{c}^{h}}
\\
\nonumber
& \quad \le
2 \CPE{ \norm{ \gnfs[c]{\hsdfedavgop[c]{\step,h}(\paramw; \randState[c][1:h])}{\randState[c][h+1]} 
- \gnfs[c]{\paramlim}{\randState[c][h+1]}  }^2 }{\mcF_{c}^{h}}
\\
\nonumber
& \qquad 
+ 4 \CPE{ \norm{ \gnfs[c]{\sdfedavgop[c]{\step,h}(\paramw; \randState[c][1:h])}{\randState[c][h+1]} 
- \gnfs[c]{\hsdfedavgop[c]{\step,h}(\paramw; \randState[c][1:h])}{\randState[c][h+1]} }^2 }{\mcF_{c}^{h}}
+ 4 \CPE{ \norm{ \gnfs[c]{\paramlim}{\randState[c][h+1]} 
- \gnf[c]{\paramlim} }^2 }{\mcF_{c}^{h}}
\\
\nonumber
& \quad \le
2 \CPE{ \norm{ \gnfs[c]{\hsdfedavgop[c]{\step,h}(\paramw; \randState[c][1:h])}{\randState[c][h+1]} 
- \gnfs[c]{\paramlim}{\randState[c][h+1]}  }^2 }{\mcF_{c}^{h}}
+ 4 \explip^2 \norm{ \sdfedavgop[c]{\step,h}(\paramw; \randState[c][1:h])
- \hsdfedavgop[c]{\step,h}(\paramw; \randState[c][1:h]) }^2
+ 4 \Msmoothcstvar^{2}
\\
\label{eq:proof-crude-bound-heter-expansion-bound-t1}
& \quad =
2 \CPE{ \norm{ \gnfs[c]{\hsdfedavgop[c]{\step,h}(\paramw; \randState[c][1:h])}{\randState[c][h+1]} 
- \gnfs[c]{\paramlim}{\randState[c][h+1]}  }^2 }{\mcF_{c}^{h}}
+ 4 \explip^2 \step^2 h^2 \norm{ \gnf[c]{\paramlim} }^2
+ 4 \Msmoothcstvar^{2}
\eqsp,
\end{align}
where the last equality comes from the definition of $\hsdfedavgop[c]{\step,h}(\paramw; \randState[c][1:h])$.
Furthermore, we have
\begin{align*}
& - 2 \step 
\pscal{ 
\hsdfedavgop[c]{\step,h}(\paramw; \randState[c][1:h]) - \paramlim }{ \gnf[c]{\sdfedavgop[c]{\step,h}(\paramw; \randState[c][1:h])} - \gnf[c]{\paramlim}
}
\\ & \quad
=
- 2 \step 
\pscal{ 
\hsdfedavgop[c]{\step,h}(\paramw; \randState[c][1:h]) - \paramlim }{ \gnf[c]{\hsdfedavgop[c]{\step,h}(\paramw; \randState[c][1:h])} - \gnf[c]{\paramlim}
}
\\
&  \qquad
- 2 \step 
\pscal{ 
\hsdfedavgop[c]{\step,h}(\paramw; \randState[c][1:h]) - \paramlim }{ \gnf[c]{\sdfedavgop[c]{\step,h}(\paramw; \randState[c][1:h])} - \gnf[c]{\hsdfedavgop[c]{\step,h}(\paramw; \randState[c][1:h])} }
\end{align*}
We may bound the second term of this identity using Young's inequality, \Cref{assum:local_functions}, and the definition of ${\hsdfedavgop[c]{\step,h}(\paramw; \randState[c][1:h])}$,
\begin{align}
\nonumber
& - 2 \step 
\pscal{ 
\hsdfedavgop[c]{\step,h}(\paramw; \randState[c][1:h]) - \paramlim }{ \gnf[c]{\sdfedavgop[c]{\step,h}(\paramw; \randState[c][1:h])} - \gnf[c]{\hsdfedavgop[c]{\step,h}(\paramw; \randState[c][1:h])} }
\\
\nonumber
& \quad \le
\frac{\step \strcvx}{2} 
\norm{ 
\hsdfedavgop[c]{\step,h}(\paramw; \randState[c][1:h]) - \paramlim }^2
+ 
\frac{2 \step}{\strcvx} \norm{ \gnf[c]{\sdfedavgop[c]{\step,h}(\paramw; \randState[c][1:h])} - \gnf[c]{\hsdfedavgop[c]{\step,h}(\paramw; \randState[c][1:h])} }^2
\\
\label{eq:proof-crude-bound-heter-expansion-bound-t2}
& \quad \le
\frac{\step \strcvx}{2} 
\norm{ 
\hsdfedavgop[c]{\step,h}(\paramw; \randState[c][1:h]) - \paramlim }^2
+ 
\frac{2 \step^3 h^2 \explip^2}{\strcvx} \norm{ \gnf[c]{\paramlim} }^2
\eqsp.
\end{align}
Finally, notice that whenever $\step \le 1/(2\explip)$, \Cref{assum:local_functions} implies that
\begin{align}
\nonumber
& \norm{ \hsdfedavgop[c]{\step,h}(\paramw; \randState[c][1:h]) - \paramlim }^2
+
2 \step^2 \CPE{ \norm{ \gnfs[c]{\hsdfedavgop[c]{\step,h}(\paramw; \randState[c][1:h])}{\randState[c][h+1]} 
- \gnfs[c]{\paramlim}{\randState[c][h+1]}  }^2 }{\mcF_{c}^{h}}
\\
\nonumber
& \qquad 
- 2 \step 
\pscal{ 
\hsdfedavgop[c]{\step,h}(\paramw; \randState[c][1:h]) - \paramlim }{ \gnf[c]{\hsdfedavgop[c]{\step,h}(\paramw; \randState[c][1:h])} - \gnf[c]{\paramlim}
}
\\
\label{eq:proof-crude-bound-heter-expansion-bound-t3}
& \qquad
\le 
(1 - \step \strcvx) \norm{ \hsdfedavgop[c]{\step,h}(\paramw; \randState[c][1:h]) - \paramlim }^2
\eqsp.
\end{align}
Plugging \eqref{eq:proof-crude-bound-heter-expansion-bound-t1}, \eqref{eq:proof-crude-bound-heter-expansion-bound-t2} and \eqref{eq:proof-crude-bound-heter-expansion-bound-t3} in \eqref{eq:proof-crude-bound-heter-expansion-first}, we obtain
\begin{align}
\nonumber
& \CPE{
\norm{ \hsdfedavgop[c]{\step,h+1}(\paramw; \randState[c][1:h+1]) - \paramlim }^2
}{\mcF_{c}^{h}}
\\
\label{eq:proof-crude-bound-heter-recursion-one-step}
& \quad \le
\left(1 - \frac{\step \strcvx}{2} \right) \norm{ \hsdfedavgop[c]{\step,h}(\paramw; \randState[c][1:h]) - \paramlim }^2
+ \left( 4 \step^4 h^2 \explip^2 + \frac{2 \step^3 h^2 \explip^2}{\strcvx} \right) \norm{ \gnf[c]{\paramlim }}^2
+ 4 \step^2 \Msmoothcstvar^{2}
\eqsp.
\end{align}
Taking the expectation and unrolling the inequality, we obtain
\begin{align*}
& \PE\left[
\norm{ \hsdfedavgop[c]{\step,\nlupdates}(\paramw; \randState[c][1:h+1]) - \paramlim }^2
\right]
\\
& \quad \le
\left(1 - \frac{\step \strcvx}{2} \right)^\nlupdates \norm{ \paramw - \paramlim }^2
+ \frac{\nlupdates^2 (\nlupdates-1)}{2} \left( 4 \step^4 \explip^2 + \frac{2 \step^3 \explip^2}{\strcvx} \right) \norm{ \gnf[c]{\paramlim }}^2
+ 4 \step^2 \nlupdates \Msmoothcstvar^{2}
\eqsp.
\end{align*}
Using this inequality to bound each term of \eqref{eq:proof-crude-conv-heter-pytha-rewritten}, we obtain the following inequality, that links two consecutive global parameters of \FedAvg,
\begin{align*}
& \PE\left[
\norm{ \sdfedavgop{\step,h}(\paramw; \randState[1:\nagent][1:\nlupdates]) - \paramlim }^2
\right]
\le
\left(1 - \frac{\step \strcvx}{2} \right)^\nlupdates \norm{ \paramw - \paramlim }^2
+ \frac{\nlupdates^2 (\nlupdates-1)}{2} \left( 4 \step^4 \explip^2 + \frac{2 \step^3 \explip^2}{\strcvx} \right) \heterboundgrad^2
+ 4 \step^2 \nlupdates \Msmoothcstvar^{2}
\eqsp.
\end{align*}
Unrolling this inequality starting from a point $\param[0] \in \rset^d$, we obtain
\begin{align*}
& \PE\left[ \norm{ \param[t] - \paramlim }^2 \right]
\le
\left(1 - \frac{\step \strcvx}{2} \right)^{\nlupdates t} \norm{ \param[0] - \paramlim }^2
+ \frac{\nlupdates (\nlupdates-1)}{\strcvx} \left( 4 \step^3 \explip^2 + \frac{2 \step^2 \explip^2}{\strcvx} \right) \heterboundgrad^2
+ \frac{8 \step}{\strcvx} \Msmoothcstvar^{2}
\eqsp,
\end{align*}
which gives the first part of the Lemma. The second part follows the same lines as the second part of \Cref{lem:crude-bound-second-moment-homogeneous}.
\end{proof}
\begin{lemma}
\label{lem:bound-moments-heterogeneous}
Assume \Cref{assum:local_functions}, \Cref{assum:heterogeneity} and \Cref{assum:smooth-var}.
Let $\step \le 1/(45\explip)$, and $\step \strcvx \nlupdates \le 1$ then there exist a universal constant $\beta > 0$ such that
\begin{align*}
\PE^{1/3} \left[ \norm{ \param[t] - \paramlim}^{6} \right]
\le
(1 - \step \strcvx/18)^\nlupdates \PE\left[\norm{ \param[0] - \paramlim }^{6} \right]^{1/3}
+ 6 \beta \frac{\step^{2} (\nlupdates-1)\nlupdates \heterboundgrad}{\strcvx^2} 
+ \frac{ 12 \beta \step }{ \strcvx } \Msmoothcstvar^2
\eqsp.
\end{align*}
This implies that, for $\paramw \sim \statdist{\step, \nlupdates}$, where $\statdist{\step, \nlupdates}$ is the stationary distribution of \FedAvg with step size $\step$ and $\nlupdates$ local updates, it holds that, for $p \in \{2, 3\}$, and $c \in \iint{1}{\nagent}$
\begin{align*}
\int \norm{ \param - \paramlim }^{2p} \statdist{\step, \nlupdates}(\rmd \paramw) 
= O\left( \step^p + \step^{2p} \nlupdates^{2p} \right)
~,
~~
\text{ and }
\int \norm{ \sdfedavgop[c]{\step,h}(\paramw; \randState[c][1:h]) - \paramlim}^{2p}  \statdist{\step, \nlupdates}(\rmd \paramw)
= O\left( \step^p + \step^{2p} \nlupdates^{2p} \right)
\eqsp,
\end{align*}
where $\randState[c][1:\nlupdates] = \{ \randState[{c}][\tilde{h}] : \tilde{h} \in \iint{1}{\nlupdates}\}$ is a sequence of independent random variable, with $\randState[{c}][\tilde{h}] \sim \xic[{c}]$.
\end{lemma}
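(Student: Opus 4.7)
The plan is to combine the cube-root recursion introduced in the proof of \Cref{lem:bound-moments-homogeneous} with the recentering trick developed for \Cref{lem:crude-bound-second-moment}. Specifically, let $\hsdfedavgop[c]{\step,h}$ denote the recentered local operator defined in \eqref{eq:def-operator-V-local-heterogeneous}. The identity $\param[t+1] - \paramlim = \nagent^{-1} \sum_{c=1}^{\nagent} (\hsdfedavgop[c]{\step,\nlupdates}(\paramw; \randState[c][1:\nlupdates]) - \paramlim)$ used in the proof of \Cref{lem:crude-bound-second-moment}, combined with Jensen's inequality applied to $x \mapsto x^{3}$ on $\rset_{\ge 0}$, reduces the problem to controlling the sixth moment of each $\hsdfedavgop[c]{\step,\nlupdates}(\paramw;\randState[c][1:\nlupdates]) - \paramlim$ separately, before summing over $c$ via Minkowski.

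For fixed $c$, the one-step expansion reads
\begin{align*}
\norm{ \hsdfedavgop[c]{\step,h+1}(\paramw;\randState[c][1:h+1]) - \paramlim }^{2} = a_{h}^{2} - 2 \step b_{h} + \step^{2} c_{h}^{2} \eqsp,
\end{align*}
with $a_{h} = \norm{ \hsdfedavgop[c]{\step,h} - \paramlim }$, $c_{h} = \norm{ g_{h} }$, $b_{h} = \pscal{\hsdfedavgop[c]{\step,h} - \paramlim}{g_{h}}$, and $g_{h} = \gnfs[c]{\sdfedavgop[c]{\step,h}}{\randState[c][h+1]} - \gnf[c]{\paramlim}$, so that the Cauchy--Schwarz inequality $|b_{h}| \le a_{h} c_{h}$ exploited in the proof of \Cref{lem:bound-moments-homogeneous} still applies. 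Cubing this expansion via \eqref{eq:proof-moment-homogeneous-expand-abc} and taking conditional expectations with respect to $\mcF_{c}^{h}$, the new feature relative to the homogeneous case is that $g_{h}$ is evaluated at $\sdfedavgop[c]{\step,h} = \hsdfedavgop[c]{\step,h} + \step h \gnf[c]{\paramlim}$; using the Lipschitz properties of \Cref{assum:local_functions} together with Young's inequality exactly as in the proof of \Cref{lem:crude-bound-second-moment}, the conditional moments of $c_{h}^{k}$ pick up additional contributions of order $\step^{k} h^{k} \heterboundgrad^{k}$ beyond those already handled in the homogeneous sixth-moment analysis, while \Cref{assum:smooth-var} bounds the purely stochastic part.

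Choosing $\step \le 1/(25\explip)$ so as to absorb all cross-terms of order $\step^{k}\explip^{k-1}$ produced by the cubing into a single contraction factor $(1 - \step \strcvx/6)$, one arrives at a scalar recursion of the form
\begin{align*}
\PE^{1/3}\bigl[\norm{ \hsdfedavgop[c]{\step,h+1} - \paramlim }^{6}\bigr] \le (1 - \step \strcvx/6)\, \PE^{1/3}\bigl[\norm{ \hsdfedavgop[c]{\step,h} - \paramlim }^{6}\bigr] + \beta_{1} \step^{2} h \heterboundgrad + \beta_{2} \step^{2} \Msmoothcstvar^{2}
\end{align*}
for universal constants $\beta_{1}, \beta_{2}$, mirroring the factorization \eqref{eq:proof-crude-high-homogeneous-factorized-bound}. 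Iterating this bound over the $\nlupdates$ local steps, then Minkowski across clients (using Jensen on the cube root of the average), and finally iterating over the $t$ outer rounds under $\step\strcvx\nlupdates \le 1$ yields the stated contraction. Passing to the limit $t\to\infty$ with $\paramw \sim \statdist{\step,\nlupdates}$ gives the $p=3$ stationary bound, and the $p=2$ case follows from Hölder's inequality exactly as at the end of the proof of \Cref{lem:bound-moments-homogeneous}. The main technical obstacle is guaranteeing that the drift $\step h \gnf[c]{\paramlim}$ enters only as an \emph{additive} term after cube-rooting rather than coupling multiplicatively with the contraction coefficient; this requires careful tracking of Young's inequality constants and is what forces the tighter step-size condition $\step \le 1/(25\explip)$.
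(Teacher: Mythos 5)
Your strategy is exactly the paper's: recenter the local iterates via $\hsdfedavgop[c]{\step,h}$ from \eqref{eq:def-operator-V-local-heterogeneous}, cube the one-step expansion through \eqref{eq:proof-moment-homogeneous-expand-abc} using $|b_h|\le a_h c_h$, trade the drift-induced cross term against the contraction margin with Young's inequality as in \Cref{lem:crude-bound-second-moment}, obtain a cube-root recursion, then Minkowski across clients, iterate over rounds, and finish $p=2$ by Hölder. The one place where your sketch would fail to deliver the stated bound is the heterogeneity term in your displayed recursion. The recursion is on $\PE^{1/3}[\norm{\cdot}^6]$, i.e.\ on the \emph{square} of the $L^6$ norm, so additive perturbations must enter quadratically in their magnitude: your stochastic term $\beta_2\step^2\Msmoothcstvar^2$ is correctly the square of the per-step noise scale $\step\Msmoothcstvar$, but your heterogeneity term $\beta_1\step^2 h\heterboundgrad$ is the \emph{first} power of the drift scale $\step\cdot\step h\heterboundgrad$, not its square. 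After the Young trade the correct additive contribution is of order $\step^3 h^2\norm{\gnf[c]{\paramlim}}^2/\strcvx$ (this is the $\Xi^2$ term in the paper, with $\Xi=\min\{\step^{3/2}h\norm{\gnf[c]{\paramlim}}/\strcvx^{1/2},12\step\Msmoothcstvar\}$).

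This is not cosmetic. With your term as written, summing over $h$ and dividing by $1-(1-\step\strcvx/6)^{\nlupdates}\gtrsim\step\strcvx\nlupdates$ gives a fixed point of order $\step\nlupdates\heterboundgrad/\strcvx+\step\Msmoothcstvar^2/\strcvx$, hence a stationary sixth moment of order $\step^3\nlupdates^3+\step^3$ rather than the claimed $\step^3+\step^6\nlupdates^6$. These are genuinely different: for $\nlupdates\sim\step^{-1/2}$ one gets $\step^{3/2}$ versus $\step^3$, so your bound does not establish the lemma and would also degrade the $O(\step^{3/2}\nlupdates+\step^2\nlupdates^2)$ remainders in \Cref{thm:bias-var-heterogeneous} that rely on it. With the heterogeneity contribution correctly squared (yielding a per-step term $\lesssim\step^3h^2\heterboundgrad^2/\strcvx$, a per-round sum $\lesssim\step^3\nlupdates^3\heterboundgrad^2/\strcvx$, and a fixed point $\lesssim\step^2\nlupdates^2\heterboundgrad^2/\strcvx^2$), the rest of your argument coincides with the paper's proof and goes through.
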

\begin{proof}
The proof follows the same lines as the proof of \Cref{lem:bound-moments-homogeneous}, with an additional heterogeneity term that is $O(\step^2 \nlupdates^2)$ that plays a role similar to the one of $\Msmoothcstvar$.
We start with the expansion of the local updates, recentered by $\step h \gnf[c]{\paramlim}$, as defined in \eqref{eq:def-operator-V-local-heterogeneous}, in the proof of \Cref{lem:crude-bound-second-moment},
\begin{align}
\nonumber
& \norm{ \hsdfedavgop[c]{\step,h+1}(\paramw; \randState[c][1:h+1]) - \paramlim }^2
\\
\nonumber
& \quad =
\norm{ \hsdfedavgop[c]{\step,h}(\paramw; \randState[c][1:h]) - \paramlim }^2
+
\step^2 \norm{ \gnfs[c]{\sdfedavgop[c]{\step,h}(\paramw; \randState[c][1:h])}{\randState[c][h+1]} - \gnf[c]{\paramlim} }^2
\\
\nonumber
& \qquad 
- 2 \step 
\pscal{ 
\hsdfedavgop[c]{\step,h}(\paramw; \randState[c][1:h]) - \paramlim }{ \gnfs[c]{\sdfedavgop[c]{\step,h}(\paramw; \randState[c][1:h])}{\randState[c][h+1]} - \gnf[c]{\paramlim}
}
\\
\nonumber
& \quad =
\norm{ \hsdfedavgop[c]{\step,h}(\paramw; \randState[c][1:h]) - \paramlim }^2
+
\step^2 \norm{ \gnfs[c]{\sdfedavgop[c]{\step,h}(\paramw; \randState[c][1:h])}{\randState[c][h+1]} - \gnf[c]{\paramlim} }^2
\\
\nonumber
& \qquad  
- 2 \step 
\pscal{ 
\hsdfedavgop[c]{\step,h}(\paramw; \randState[c][1:h]) - \paramlim }{ \gnfs[c]{\hsdfedavgop[c]{\step,h}(\paramw; \randState[c][1:h])}{\randState[c][h+1]} - \gnf[c]{\paramlim}
}
\\
\label{eq:proof-bound-heter-expand-first-interm}
& \qquad 
- 2 \step 
\pscal{ 
\hsdfedavgop[c]{\step,h}(\paramw; \randState[c][1:h]) - \paramlim }{ \gnfs[c]{\sdfedavgop[c]{\step,h}(\paramw; \randState[c][1:h])}{\randState[c][h+1]} 
- \gnfs[c]{\hsdfedavgop[c]{\step,h}(\paramw; \randState[c][1:h])}{\randState[c][h+1]} 
}
\eqsp.
\end{align}
We first bound the following squared norm using Young's inequality,
\begin{align}
\nonumber
& \norm{ \gnfs[c]{\sdfedavgop[c]{\step,h}(\paramw; \randState[c][1:h])}{\randState[c][h+1]} - \gnf[c]{\paramlim} }^2
\\
\nonumber
& \quad \le
2 \norm{ \gnfs[c]{\hsdfedavgop[c]{\step,h}(\paramw; \randState[c][1:h])}{\randState[c][h+1]} 
- \gnfs[c]{\paramlim}{\randState[c][h+1]}  }^2
\\
\label{eq:proof-bound-heter-expand-norm-sq-interm}
& \qquad 
+ 4 \norm{ \gnfs[c]{\sdfedavgop[c]{\step,h}(\paramw; \randState[c][1:h])}{\randState[c][h+1]} 
- \gnfs[c]{\hsdfedavgop[c]{\step,h}(\paramw; \randState[c][1:h])}{\randState[c][h+1]} }^2
+ 4 \norm{ \gnfs[c]{\paramlim}{\randState[c][h+1]} 
- \gnf[c]{\paramlim} }^2
\eqsp.
\end{align}
Then, we bound the last term from \eqref{eq:proof-bound-heter-expand-first-interm} using Young's inequality, %
\begin{align}
\nonumber
& - 2 \step 
\pscal{ 
\hsdfedavgop[c]{\step,h}(\paramw; \randState[c][1:h]) - \paramlim }{ \gnfs[c]{\sdfedavgop[c]{\step,h}(\paramw; \randState[c][1:h])}{\randState[c][h+1]} 
- \gnfs[c]{\hsdfedavgop[c]{\step,h}(\paramw; \randState[c][1:h])}{\randState[c][h+1]} 
}
\\
\label{eq:proof-bound-heter-expand-pscal-interm}
& \quad \le
\frac{\step \strcvx}{6} 
\norm{ 
\hsdfedavgop[c]{\step,h}(\paramw; \randState[c][1:h]) - \paramlim }^2
+ 
\frac{6 \step}{\strcvx} \norm{ \gnfs[c]{\sdfedavgop[c]{\step,h}(\paramw; \randState[c][1:h])}{\randState[c][h+1]} 
- \gnfs[c]{\hsdfedavgop[c]{\step,h}(\paramw; \randState[c][1:h])}{\randState[c][h+1]} 
}^2
\eqsp.
\end{align}
Plugging \eqref{eq:proof-bound-heter-expand-norm-sq-interm} and \eqref{eq:proof-bound-heter-expand-pscal-interm} in \eqref{eq:proof-bound-heter-expand-first-interm}, and using derivations similar to \eqref{eq:proof-crude-bound-heter-recursion-one-step} from \Cref{lem:crude-bound-second-moment}'s proof, we obtain
\begin{align*}
& \norm{ \hsdfedavgop[c]{\step,h+1}(\paramw; \randState[c][1:h+1]) - \paramlim }^2
\\
& \le
(1 + \step \strcvx/6) \norm{ \hsdfedavgop[c]{\step,h}(\paramw; \randState[c][1:h]) - \paramlim }^2
- 2 \step 
\pscal{ 
\hsdfedavgop[c]{\step,h}(\paramw; \randState[c][1:h]) - \paramlim }{ \gnfs[c]{\hsdfedavgop[c]{\step,h}(\paramw; \randState[c][1:h])}{\randState[c][h+1]} - \gnf[c]{\paramlim}
}
\\
& \quad
+
2 \step^2 \norm{ \gnfs[c]{\hsdfedavgop[c]{\step,h}(\paramw; \randState[c][1:h])}{\randState[c][h+1]} - \gnfs[c]{\paramlim}{\randState[c][h+1]} }^2
\\
& \quad
+ \frac{10 \step}{\strcvx} \norm{ \gnfs[c]{\sdfedavgop[c]{\step,h}(\paramw; \randState[c][1:h])}{\randState[c][h+1]} 
- \gnfs[c]{\hsdfedavgop[c]{\step,h}(\paramw; \randState[c][1:h])}{\randState[c][h+1]} }^2
+ 4 \step^2 \norm{ \gnfs[c]{\paramlim}{\randState[c][h+1]} 
- \gnf[c]{\paramlim} }^2
\eqsp,
\end{align*}
where we also used $4 \step^2 \le \frac{4 \step}{\explip}  \le \frac{4 \step}{\strcvx} $.
Then, we expand the third moment of this equation, similarly to the proof of \Cref{lem:bound-moments-homogeneous}-\eqref{eq:proof-moment-homogeneous-expand-abc}, with
\begin{align*}
a^2
& = (1 + \step \strcvx/6) \norm{ \hsdfedavgop[c]{\step,h}(\paramw; \randState[c][1:h]) - \paramlim }^2
\eqsp,
\\
-2 \step b 
& = - 2 \step 
\pscal{ 
\hsdfedavgop[c]{\step,h}(\paramw; \randState[c][1:h]) - \paramlim }{ \gnfs[c]{\hsdfedavgop[c]{\step,h}(\paramw; \randState[c][1:h])}{\randState[c][h+1]} - \gnf[c]{\paramlim}
}
\\
\step^2 c^2
& =
2 \step^2 \norm{ \gnfs[c]{\hsdfedavgop[c]{\step,h}(\paramw; \randState[c][1:h])}{\randState[c][h+1]} - \gnfs[c]{\paramlim}{\randState[c][h+1]} }^2
\\
& \quad
+ \frac{10 \step}{\strcvx} \norm{ \gnfs[c]{\sdfedavgop[c]{\step,h}(\paramw; \randState[c][1:h])}{\randState[c][h+1]} 
- \gnfs[c]{\hsdfedavgop[c]{\step,h}(\paramw; \randState[c][1:h])}{\randState[c][h+1]} }^2
+ 4 \step^2 \norm{ \gnfs[c]{\paramlim}{\randState[c][h+1]} 
- \gnf[c]{\paramlim} }^2
\eqsp.
\end{align*}
First, we notice that by \Cref{assum:local_functions} and since $\step \strcvx \le 1$, we have $-\step b \le -\frac{\step \strcvx}{(1+\step \strcvx/6)} a^2 \le -\frac{\step \strcvx}{2} a^2$.
Additionally, we have, as in \Cref{lem:bound-moments-homogeneous}'s proof, that $b \le a c$.

Now, we remark, since the function $x \mapsto x^{1/2}$ is sub-additive, and $(x+y+z)^k \le 3^{k-1}(x^k+y^k+z^k)$ for all $x,y,z \ge 0$, we have that, for $k \ge 2$,
\begin{align*}
c^k 
& \le
3^{k-1} 2^k \norm{ \gnfs[c]{\hsdfedavgop[c]{\step,h}(\paramw; \randState[c][1:h])}{\randState[c][h+1]} - \gnfs[c]{\paramlim}{\randState[c][h+1]} }^k
\\
& \quad
+ \frac{3^{k-1} 10^k}{\step^{k/2}\strcvx^{k/2}} \norm{ \gnfs[c]{\sdfedavgop[c]{\step,h}(\paramw; \randState[c][1:h])}{\randState[c][h+1]}
- \gnfs[c]{\hsdfedavgop[c]{\step,h}(\paramw; \randState[c][1:h])}{\randState[c][h+1]} }^k
+ 3^{k-1} 4^k \norm{ \gnfs[c]{\paramlim}{\randState[c][h+1]} 
- \gnf[c]{\paramlim} }^k
\\
& =
2 \cdot 6^{k-1} \norm{ \gnfs[c]{\hsdfedavgop[c]{\step,h}(\paramw; \randState[c][1:h])}{\randState[c][h+1]} - \gnfs[c]{\paramlim}{\randState[c][h+1]} }^k
\\
& \quad
+ \frac{10 \cdot 30^{k-1}}{\step^{k/2}\strcvx^{k/2}} \norm{ \gnfs[c]{\sdfedavgop[c]{\step,h}(\paramw; \randState[c][1:h])}{\randState[c][h+1]}
- \gnfs[c]{\hsdfedavgop[c]{\step,h}(\paramw; \randState[c][1:h])}{\randState[c][h+1]} }^k
+ 4 \cdot 12^{k-1} \norm{ \gnfs[c]{\paramlim}{\randState[c][h+1]} 
- \gnf[c]{\paramlim} }^k
\eqsp.
\end{align*}
Similarly to the homogeneous case, we use \Cref{assum:local_functions}, \Cref{assum:smooth-var}, as well as the definition of $\hsdfedavgop[c]{\step,h}(\paramw; \randState[c][1:h])$ in \eqref{eq:def-operator-V-local-heterogeneous} to obtain
\begin{align*}
\CPE{ c^k }{\mcF_c^h} 
& \le
2 \cdot 6^{k-1} \explip^{k-2}\norm{ \hsdfedavgop[c]{\step,h}(\paramw; \randState[c][1:h]) - \paramlim }^{k-2}
\CPE{ \norm{ \gnfs[c]{\hsdfedavgop[c]{\step,h}(\paramw; \randState[c][1:h])}{\randState[c][h+1]} - \gnf[c]{\paramlim} }^2 }{\mcF_c^h} 
\\
& \quad
+ \frac{10 \cdot 30^{k-1} \step^{3k/2} \explip^k h^{2k}}{\strcvx^{k/2}} \norm{ \gnf[c]{\paramlim} }^k
+ 4 \cdot 12^{k-1} \Msmoothcstvar^k
\\
& \le
2 \cdot 6^{k-1} \explip^{k-1}\norm{ \hsdfedavgop[c]{\step,h}(\paramw; \randState[c][1:h]) - \paramlim }^{k-2}
\pscal{ \gnf[c]{\hsdfedavgop[c]{\step,h}(\paramw; \randState[c][1:h])} - \gnf[c]{\paramlim} }{\hsdfedavgop[c]{\step,h}(\paramw; \randState[c][1:h]) - \paramlim } 
\\
& \quad
+ \frac{10 \cdot 30^{k-1} \step^{3k/2} \explip^k h^{2k}}{\strcvx^{k/2}} \norm{ \gnf[c]{\paramlim} }^k
+ 4 \cdot 12^{k-1} \Msmoothcstvar^k
\eqsp.
\end{align*}
Which in turn proves that
\begin{align*}
& \CPE{ \step^{k} a^{6-k} c^k }{\mcF_c^h} 
\\
& \le
2 \cdot 6^{k-1} \step^{k} \explip^{k-2}\norm{ \hsdfedavgop[c]{\step,h}(\paramw; \randState[c][1:h]) - \paramlim }^{6-k+k-2}
\pscal{ \gnf[c]{\hsdfedavgop[c]{\step,h}(\paramw; \randState[c][1:h])} - \gnf[c]{\paramlim} }{\hsdfedavgop[c]{\step,h}(\paramw; \randState[c][1:h]) - \paramlim } 
\\
& \quad
+ \frac{10 \cdot 30^{k-1} \step^{5k/2} \explip^k h^{2k}}{\strcvx^{k/2}} \norm{ \hsdfedavgop[c]{\step,h}(\paramw; \randState[c][1:h]) - \paramlim }^{6-k} \norm{ \gnf[c]{\paramlim} }^k
+ 4 \cdot 12^{k-1} \step^k \norm{ \hsdfedavgop[c]{\step,h}(\paramw; \randState[c][1:h]) - \paramlim }^{6-k} \Msmoothcstvar^k
\\
& =
2 \cdot 6^{k-1} \step^{k} \explip^{k-1} \norm{ \sdfedavgop[c]{\step,h}(\paramw; \randState[c][1:h]) - \paramlim }^{4}
\pscal{ \gnf[c]{\sdfedavgop[c]{\step,h}(\paramw; \randState[c][1:h])} - \gnf[c]{\paramlim} }{ \sdfedavgop[c]{\step,h}(\paramw; \randState[c][1:h]) - \paramlim }
\\
& \quad 
+ \frac{10 \cdot 30^{k-1} \step^{5k/2} \explip^k h^{2k}}{\strcvx^{k/2}} \norm{ \hsdfedavgop[c]{\step,h}(\paramw; \randState[c][1:h]) - \paramlim }^{6-k} \norm{ \gnf[c]{\paramlim} }^k
+ 4 \cdot 12^{k-1} \step^k \norm{ \hsdfedavgop[c]{\step,h}(\paramw; \randState[c][1:h]) - \paramlim }^{6-k} \Msmoothcstvar^k
\eqsp.
\end{align*}
Proceeding as in \eqref{eq:proof-crude-high-homogeneous-all-in-condexp}, we plug this bound in the conditional expectation of \eqref{eq:proof-moment-homogeneous-expand-abc}, and take $\step \explip \le 1/45$, which gives
\begin{align*}
(a^2-2 \gamma b+ \gamma^2 c^2)^3
& \le 
a^6 
+ 
\Big(
-6 \step + 2 \cdot 6 \cdot 15 \step^2 \explip + 2 \cdot 6^2 \cdot 20 \step^3 \explip^2 + 2 \cdot 6^3 \cdot 15 \step^4 \explip^3 + 2 \cdot 6^4 \cdot 6 \step^5 \explip^4 + 2 \cdot 6^5 \step^6 \explip^5   
\Big)
\\
& \quad \qquad \qquad \times
\norm{ \sdfedavgop[c]{\step,h}(\paramw; \randState[c][1:h]) - \paramlim }^{4} \pscal{ \gnf[c]{\sdfedavgop[c]{\step,h}(\paramw; \randState[c][1:h])} }{ \sdfedavgop[c]{\step,h}(\paramw; \randState[c][1:h]) - \paramlim }
\\
& 
+ 20 \sum_{k=2}^6 2^{k-1}  \step^{k} \norm{ \sdfedavgop[c]{\step,h}(\paramw; \randState[c][1:h]) - \paramlim }^{6 - k}
\left\{ 
\frac{10 \cdot 30^{k-1} \step^{5k/2} \explip^k h^{k}}{\strcvx^{k/2}} \norm{ \gnf[c]{\paramlim} }^k
+ 4 \cdot 12^{k-1} \step^k \Msmoothcstvar^k
\right\}
\\
& \le 
a^6 
- \step \norm{ \sdfedavgop[c]{\step,h}(\paramw; \randState[c][1:h]) - \paramlim }^{4} \pscal{ \gnf[c]{\sdfedavgop[c]{\step,h}(\paramw; \randState[c][1:h])} }{ \sdfedavgop[c]{\step,h}(\paramw; \randState[c][1:h]) - \paramlim }
\\
& 
+ 2 \cdot 20 \cdot 30 \sum_{k=2}^6 2^{k-1}  \step^{k} \norm{ \sdfedavgop[c]{\step,h}(\paramw; \randState[c][1:h]) - \paramlim }^{6 - k}
\min \left\{ 
\frac{\step^{3/2} h}{\strcvx} \norm{ \gnf[c]{\paramlim} }, \cdot 12 \step \Msmoothcstvar
\right\}^k
\eqsp. 
\end{align*}
We now upper bound this sum by the third-power of a sum of two terms: one contraction, and one additive term due to stochasticity.
Let $k = 2\ell + 1 \in \iint{2}{6}$ be an odd number, which implies $\ell = 1$ or $\ell=2$.
Since $k \ge 2$, then $\ell \ge 1$, and $k \ge 3$.
Using the fact that for odd values of $k=2\ell+1$, then $k-1=2\ell \ge 2$ is even, we have, denoting $\Xi = \min \left\{ 
\frac{\step^{3/2} h}{\strcvx^{1/2}} \norm{ \gnf[c]{\paramlim} }, \cdot 12 \step \Msmoothcstvar
\right\}$,
\begin{align*}
\norm{ \sdfedavgop[c]{\step,h}(\paramw; \randState[c][1:h]) - \paramlim }^{6 - k}
\Xi^k
& =
\norm{ \sdfedavgop[c]{\step,h}(\paramw; \randState[c][1:h]) - \paramlim }^{5 - 2\ell} \Xi^{2 \ell + 1}
\\
& =
\norm{ \sdfedavgop[c]{\step,h}(\paramw; \randState[c][1:h]) - \paramlim }^{4 - 2\ell} \Xi^{2\ell} \left( \norm{ \sdfedavgop[c]{\step,h}(\paramw; \randState[c][1:h]) - \paramlim } \Xi \right)
\\
& \le
\norm{ \sdfedavgop[c]{\step,h}(\paramw; \randState[c][1:h]) - \paramlim }^{4 - 2\ell} \Xi^{2\ell} \left( 2 \norm{ \sdfedavgop[c]{\step,h}(\paramw; \randState[c][1:h]) - \paramlim }^2 + 2 \Xi^2 \right)
\eqsp.
\end{align*}
Following the lines of \eqref{eq:proof-crude-high-homogeneous-factorized-bound}, using the above inequalities, Hölder's inequality, and following \citet{dieuleveut2020bridging}'s Lemma 13, there exists a constant $\beta > 0$ such that
\begin{align*}
\PE \left[ \norm{ \sdfedavgop[c]{\step,h+1}(\paramw; \randState[c][1:h+1]) - \paramlim }^{6} \right]
& \le
\left( 
(1 + \step \strcvx/6)(1 - \step \strcvx/3) \PE\left[\norm{ \sdfedavgop[c]{\step,h}(\paramw; \randState[c][1:h]) - \paramlim }^{6} \right]^{1/3}
+ 
\beta \Xi^2/2
\right)^3
\\
& \le
\left( 
(1 - \step \strcvx/6) \PE\left[\norm{ \sdfedavgop[c]{\step,h}(\paramw; \randState[c][1:h]) - \paramlim }^{6} \right]^{1/3}
+ 
\beta \Xi^2/2
\right)^3
\eqsp.
\end{align*}
Taking the third root, we have
\begin{align*}
\PE \left[ \norm{ \sdfedavgop[c]{\step,h+1}(\paramw; \randState[c][1:h+1]) - \paramlim }^{6} \right]^{1/3}
& \le
(1 - \step \strcvx/18) \PE\left[\norm{ \sdfedavgop[c]{\step,h}(\paramw; \randState[c][1:h]) - \paramlim }^{6} \right]^{1/3}
+ \beta \frac{\step^{3} h^2}{\strcvx} \norm{ \gnf[c]{\paramlim} }^2
+ 12 \beta \step^2 \Msmoothcstvar^2
\eqsp.
\end{align*}
After $\nlupdates$ iterations, we thus have, using Minkowski's inequality, and \Cref{assum:heterogeneity} to bound $\frac{1}{\nagent} \sum_{c=1}^\nagent \norm{\gnf[c]{\paramlim}}^2$,
\begin{align*}
\PE \left[ \norm{ \sdfedavgop[c]{\step,h+1}(\paramw; \randState[c][1:h+1]) - \paramlim }^{6} \right]^{1/3}
& \le
(1 - \step \strcvx/18)^\nlupdates \PE\left[\norm{ \paramw - \paramlim }^{6} \right]^{1/3}
+ \beta \frac{\step^{3} (\nlupdates-1)\nlupdates^2}{\strcvx} \heterboundgrad
+ 12 \beta \step^2 \Msmoothcstvar^2
\eqsp,
\end{align*}
and the first part of the result follows from iterating this inequality $T$ times, starting from $\param[T]$.

The second part of the result for $p=2$ follows from the previous inequality. To obtain the result for $p=2$ we use Hölder inequality and \Cref{lem:crude-bound-second-moment}, and proceed as in \Cref{lem:bound-moments-homogeneous}.
\end{proof}

\subsection{Convergence to a neighboorhood of $\statdistlim{\step,\nlupdates}$ -- Proof of \Cref{prop:conv-neighborhood-stat-dist}}
\label{sec:conv-neighborhood-proof}

\convrateneighborhoodfedavg*
\begin{proof}

\textbf{Decomposition of the error.}
Let $\param[t] \in \rset^d$ be the global iterates of \FedAvg with step size $\step$ and number of local updates $\nlupdates$, obtained by starting at a point $\param[0] \in \rset^d$, with noise sequence $\randState[1:\nagent,1:\nrounds][1:\nlupdates]$.
We define another sequence $\varparam_{t}$, analogous to the $\param[t]$'s, but where the first point $\varparam[0] \sim \statdist{\step, \nlupdates}$ is directly sampled from the stationary distribution, and where the next iterates are generated by \FedAvg with the same noise sequence $\randState[1:\nagent,1:\nrounds][1:\nlupdates]$ as the original sequence of iterates $\param[t]$'s.

Using the identity $\norm{ u + v }^2 \le 2 \norm{u}^2 + 2 \norm{v}^2$ for any vectors $u, v \in \rset^d$, we can split the quadratic error as
\begin{align}
\label{eq:ineq-proof-cv-to-sol-bound-decomp}
  \norm{ \param[t] - \statdistlim{\step, \nlupdates} }^2
  & \le
    2 \norm{ \param[t] - \varparam_{t} }^2    
    + 2 \norm{ \varparam_{t} - \statdistlim{\step, \nlupdates} }^2       
    \eqsp.
\end{align}

\textbf{Bound on forgetting of initial conditions.}
The first term controls forgetting of the initial conditions.
From Lemma 4, it is upper bounded by
\begin{align*}
  \PE[\norm{ \param[t] - \varparam_{t} }^2]
  & \le
    (1 - \step \strcvx)^{\nlupdates t} \norm{ \param[0] - \varparam_{0} }^2
    \eqsp.
\end{align*}
Using Young's inequality to bound $\norm{ \param[0] - \varparam_{0} }^2$, and \Cref{lem:crude-bound-second-moment} to bound the error's second moment in the stationary distribution, we can further decompose
\begin{align*}
\norm{ \param[0] - \varparam_{0} }^2
& \le 
2 \norm{ \param[0] - \paramlim }^2
+ 2 \norm{ \varparam_{0} - \paramlim }^2
\le 
2 \norm{ \param[0] - \paramlim }^2
+ \frac{12 \nlupdates^2 \step^2 \explip^2 \heterboundgrad^2}{\strcvx^2} 
+ \frac{16 \step}{\strcvx} \Msmoothcstvar^{2}
\eqsp.
\end{align*}
This gives the bound
\begin{align}
\label{eq:ineq-proof-cv-to-sol-bound-cv}
  \PE[\norm{ \param[t] - \varparam_{t} }^2]
  & \le
    (1 - \step \strcvx)^{\nlupdates t} \Bigg\{ 
    2 \norm{ \param[0] - \paramlim }^2
+ \frac{12 \nlupdates^2 \step^2 \explip^2 \heterboundgrad^2}{\strcvx^2} 
+ \frac{16 \step}{\strcvx} \Msmoothcstvar^{2}
\Bigg\} \eqsp.
\end{align}

\textbf{Bound on the variance.}
The second term $\PE[\norm{ \varparam_{t} - \statdistlim{\step, \nlupdates} }^2]$ is a variance term.
Since $\varparam_0$ is sampled from the stationary distribution $\statdist{\step, \nlupdates}$, it also holds that $\varparam_t \sim \statdist{\step, \nlupdates}$ for all $t \ge 0$.
Moreover, by definition of $\statdistlim{\step, \nlupdates}$, we have $\statdistlim{\step, \nlupdates} = \PE[ \sdfedavgop{\step,\nlupdates}(\varparam_0; \randState[1:\nagent][1:\nlupdates]) ]$.
Then, by Jensen's inequality, we have
\begin{align}
\label{eq:ineq-proof-cv-to-sol-after-jensen}
\PE\left[ \norm{ \varparam_{t} - \statdistlim{\step, \nlupdates} }^2 \right]
= \PE\left[ \norm{ \varparam_{1} - \statdistlim{\step, \nlupdates} }^2 \right]
& \le
\frac{1}{\nagent} 
\sum_{c=1}^\nagent 
\PE\left[ \norm{ \sdfedavgop[c]{\step,\nlupdates}(\varparam_{0}; \randState[c,t][1:\nlupdates]) - \PE[ \sdfedavgop[c]{\step,\nlupdates}(\varparam_0; \randStatebis[c][1:\nlupdates])] }^2 \right]
\eqsp.
\end{align}
We bound each term of this sum by induction. Let $h \in \iint{1}{\nlupdates}$, and $\mcF_c^h = \sigma( \randState[c][1:h], \randStatebis[c][1:h])$, then we have
\begin{align}
\nonumber
& \CPE{ \bnorm{ \sdfedavgop[c]{\step,h+1}(\varparam_{0}; \randState[c,t][1:h+1]) - \PE[ \sdfedavgop[c]{\step,h+1}(\varparam_0; \randState[c][1:h+1])] }^2 }{\mcF_c^h}
\\
\nonumber
& =
\bnorm{ \sdfedavgop[c]{\step,h}(\varparam_{0}; \randState[c,t][1:h]) - \PE[ \sdfedavgop[c]{\step,h}(\varparam_0; \randState[c][1:h])] }^2
\\
\nonumber
& \quad - 2 \step \bpscal{\sdfedavgop[c]{\step,h}(\varparam_{0}; \randState[c,t][1:h]) - \PE[ \sdfedavgop[c]{\step,h}(\varparam_0; \randState[c][1:h])]} 
{\gnf[c]{\sdfedavgop[c]{\step,h}(\varparam_{0}; \randState[c,t][1:h])} - \PE[ \gnf[c]{\sdfedavgop[c]{\step,h}(\varparam_0; \randState[c][1:h])} ]}
\\
\label{eq:proof-bound-var-cv-sol-expanded}
& \quad 
+ \step^2 \CPE{ \bnorm{\gnfs[c]{\sdfedavgop[c]{\step,h}(\varparam_{0}; \randState[c,t][1:h])}{\randState[c][h+1]} - \PE[ \gnf[c]{ \sdfedavgop[c]{\step,h}(\varparam_0; \randState[c][1:h])} ]}^2 }{\mcF_c^h}
\eqsp.
\end{align}
By \Cref{assum:unif-bound} and using twice the inequality $\norm{ u + v }^2 \le 2 \norm{ u }^2 + 2 \norm{ v }^2$ for any $u, v \in \rset^d$, then using Jensen's inequality, we can bound
\begin{align*}
& \CPE{ \bnorm{\gnfs[c]{\sdfedavgop[c]{\step,h}(\varparam_{0}; \randState[c,t][1:h])}{\randState[c][h+1]} - \PE[ \gnf[c]{ \sdfedavgop[c]{\step,h}(\varparam_0; \randState[c][1:h])} ]}^2 }{\mcF_c^h}
\\ & \quad \le
2 \bnorm{\gnf[c]{\sdfedavgop[c]{\step,h}(\varparam_{0}; \randState[c,t][1:h])} - \PE[ \gnf[c]{ \sdfedavgop[c]{\step,h}(\varparam_0; \randState[c][1:h])} ]}^2 
+
2 \Msmoothcstvarbis
\\ & \quad \le
4 \bnorm{\gnf[c]{\sdfedavgop[c]{\step,h}(\varparam_{0}; \randState[c,t][1:h])} - \gnf[c]{ \PE[ \sdfedavgop[c]{\step,h}(\varparam_0; \randState[c][1:h]) ]} }^2 
\\ & \qquad +
4 \bnorm{\gnf[c]{ \PE[ \sdfedavgop[c]{\step,h}(\varparam_0; \randState[c][1:h]) ]} - \PE[ \gnf[c]{ \sdfedavgop[c]{\step,h}(\varparam_0; \randState[c][1:h])} ]}^2 
+
2 \Msmoothcstvarbis
\\ & \quad \le
4 \bnorm{\gnf[c]{\sdfedavgop[c]{\step,h}(\varparam_{0}; \randState[c,t][1:h])} - \gnf[c]{ \PE[ \sdfedavgop[c]{\step,h}(\varparam_0; \randState[c][1:h]) ]} }^2 
\\
& \qquad +
4 \PE\left[ \bnorm{\gnf[c]{ \PE[ \sdfedavgop[c]{\step,h}(\varparam_0; \randState[c][1:h]) ]} -  \gnf[c]{ \sdfedavgop[c]{\step,h}(\varparam_0; \randState[c][1:h])} }^2 \right]
+
2 \Msmoothcstvarbis
\eqsp.
\end{align*}
Taking the expectation and using \Cref{assum:local_functions}-\ref{assum:smoothness}, this gives
\begin{align}
\nonumber
& \PE\left[  \bnorm{\gnfs[c]{\sdfedavgop[c]{\step,h}(\varparam_{0}; \randState[c,t][1:h])}{\randState[c][h+1]} - \PE[ \gnf[c]{ \sdfedavgop[c]{\step,h}(\varparam_0; \randState[c][1:h])} ]}^2 \right]
\\
\nonumber
& \le
8 \PE\left[ \bnorm{\gnf[c]{ \PE[ \sdfedavgop[c]{\step,h}(\varparam_0; \randState[c][1:h]) ]} -  \gnf[c]{ \sdfedavgop[c]{\step,h}(\varparam_0; \randState[c][1:h])} }^2 \right]
+
2 \Msmoothcstvarbis
\\
\label{eq:proof-bound-var-cv-sol-norm}
& \le
\PE\left[ 8 \explip \bpscal{\sdfedavgop[c]{\step,h}(\varparam_{0}; \randState[c,t][1:h]) - \PE[ \sdfedavgop[c]{\step,h}(\varparam_0; \randState[c][1:h])]} 
{\gnf[c]{\sdfedavgop[c]{\step,h}(\varparam_{0}; \randState[c,t][1:h])} -  \gnf[c]{\PE[\sdfedavgop[c]{\step,h}(\varparam_0; \randState[c][1:h])]} }
\right]
+ 2 \Msmoothcstvarbis
\eqsp.
\end{align}
Since $\PE[ \pscal{ \sdfedavgop[c]{\step,h}(\varparam_{0}; \randState[c,t][1:h]) - \PE[ \sdfedavgop[c]{\step,h}(\varparam_0; \randState[c][1:h])] ] }{ \PE[ \gnf[c]{\sdfedavgop[c]{\step,h}(\varparam_0; \randState[c][1:h])} ]  - \gnf[c]{\PE[\sdfedavgop[c]{\step,h}(\varparam_0; \randState[c][1:h])]} } = 0$, it holds that 
\begin{align}
\nonumber
& \PE\left[ - 2 \step \bpscal{\sdfedavgop[c]{\step,h}(\varparam_{0}; \randState[c,t][1:h]) - \PE[ \sdfedavgop[c]{\step,h}(\varparam_0; \randState[c][1:h])]} 
{\gnf[c]{\sdfedavgop[c]{\step,h}(\varparam_{0}; \randState[c,t][1:h])} - \PE[ \gnf[c]{\sdfedavgop[c]{\step,h}(\varparam_0; \randState[c][1:h])} ]}
\right]
\\
\label{eq:proof-bound-var-cv-sol-pscal}
& =
\PE\left[ - 2 \step \bpscal{\sdfedavgop[c]{\step,h}(\varparam_{0}; \randState[c,t][1:h]) - \PE[ \sdfedavgop[c]{\step,h}(\varparam_0; \randState[c][1:h])]} 
{\gnf[c]{\sdfedavgop[c]{\step,h}(\varparam_{0}; \randState[c,t][1:h])} -  \gnf[c]{\PE[\sdfedavgop[c]{\step,h}(\varparam_0; \randState[c][1:h])]} }
\right]
\eqsp.
\end{align}
Taking the expectation of \eqref{eq:proof-bound-var-cv-sol-expanded} and plugging \eqref{eq:proof-bound-var-cv-sol-norm} and \eqref{eq:proof-bound-var-cv-sol-pscal} in, then using \Cref{assum:local_functions}-\ref{assum:item_strong_convex} and the fact that $\step \le 1 / (8\explip)$, we obtain
\begin{align*}
& \CPE{ \bnorm{ \sdfedavgop[c]{\step,h+1}(\varparam_{t}; \randState[c,t][1:h+1]) \!-\! \PE[ \sdfedavgop[c]{\step,h+1}(\param; \randStatebis[c][1:h+1])] }^2\! }{\mcF_c^h}
\\
& \le
\PE\left[ \bnorm{ \sdfedavgop[c]{\step,h}(\varparam_{0}; \randState[c,t][1:h]) - \PE[ \sdfedavgop[c]{\step,h}(\varparam_0; \randState[c][1:h])] }^2 \right]
+ 2 \Msmoothcstvarbis
\\
& \quad +
\left( 8 \step^2 \explip - 2 \step \right) \PE\left[ \step \bpscal{\sdfedavgop[c]{\step,h}(\varparam_{0}; \randState[c,t][1:h]) - \PE[ \sdfedavgop[c]{\step,h}(\varparam_0; \randState[c][1:h])]} 
{\gnf[c]{\sdfedavgop[c]{\step,h}(\varparam_{0}; \randState[c,t][1:h])} -  \gnf[c]{\PE[\sdfedavgop[c]{\step,h}(\varparam_0; \randState[c][1:h])]} }
\right]
\\
& \le
\left( 1 - \step \strcvx \right)
\bnorm{ \sdfedavgop[c]{\step,h}(\varparam_{t}; \randState[c,t][1:h]) \!-\! \PE[ \sdfedavgop[c]{\step,h}(\param; \randStatebis[c][1:h])] }^2
\!\!+ 2 \Msmoothcstvarbis^2
\eqsp.
\end{align*}
Unrolling the recursion and plugging the result in \eqref{eq:ineq-proof-cv-to-sol-after-jensen}, we obtain
\begin{align}
\label{eq:ineq-proof-cv-to-sol-bound-before-integration}
&
\PE\left[ \norm{ \varparam_{1} - \statdistlim{\step, \nlupdates} }^2 \right]
\le
\left( 1 - \step \strcvx \right)^\nlupdates
\PE\left[ \bnorm{ \varparam_{0} - \statdistlim{\step, \nlupdates} }^2 \right]
+ 2 \step^2 \nlupdates \Msmoothcstvarbis^2
\eqsp.
\end{align}
And \eqref{eq:ineq-proof-cv-to-sol-bound-before-integration} can be rewritten
\begin{align*}
&
\int 
\norm{ \varparam - \statdistlim{\step, \nlupdates} }^2
\statdist{\step, \nlupdates}(\rmd \varparam)
\le
\left( 1 - \step \strcvx \right)^\nlupdates
\int 
\norm{ \varparam - \statdistlim{\step, \nlupdates} }^2
\statdist{\step, \nlupdates}(\rmd \varparam)
+ 2 \step^2 \nlupdates \Msmoothcstvarbis^2
\eqsp.
\end{align*}
Thus, we obtain that
\begin{align}
\label{eq:ineq-proof-cv-to-sol-bound-variance}
&
\int 
\norm{ \varparam - \statdistlim{\step, \nlupdates} }^2
\statdist{\step, \nlupdates}(\rmd \varparam)
\le
\frac{2 \step \Msmoothcstvarbis^2}{\strcvx}
\eqsp.
\end{align}

\textbf{Final result.}
The result of the lemma follows from plugging \eqref{eq:ineq-proof-cv-to-sol-bound-cv} and \eqref{eq:ineq-proof-cv-to-sol-bound-variance} in \eqref{eq:ineq-proof-cv-to-sol-bound-decomp} and integrating the result over the stationary distribution $\statdist{\step, \nlupdates}(\rmd \varparam)$.
\end{proof}

\subsection{Quadratic Setting -- Proof of \Cref{thm:bias-quadratic-sto}}
\label{sec:proof-quadratic}
\subsubsection{Study of the Bias}

In this section, we study the particular case where the functions $\nfw[c]$'s are quadratic.
Specifically, we assume that there exist symmetric matrices $\nbarA[c]$'s and vectors $\paramlim_c$'s such that
\begin{align*}
  \nf[c]{\param} = \frac{1}{2}\bnorm{ (\nbarA[c]{})^{1/2} (\param - \paramlim_c ) }^2
  \eqsp.
\end{align*}
This implies that $\nfw[c]$'s gradients are linear, and satisfy $\gnf[c]{\param} = \nbarA[c] (\param - \paramlim_c) $.
Consequently, for all $h \le \nlupdates$, 
$\PE[ \sdfedavgop[c]{\step,H}\left(\paramw; \randState[c][1:\nlupdates]\right) ]
  - \paramlim_c =  (\Id - \step \nbarA[c])^h ( \param - \paramlim_c )$.
For further analysis, we recall the matrices introduced in \eqref{eq:def-contract-sto-mat} and introduce the intermediate matrices $\contract[c]{\star,h+1:\nlupdates}$,
\begin{align}
\label{eq:matrices-quadratic-expansions}
  \contract[c]{\star,h+1:\nlupdates}
  = 
  (\Id - \step \nbarA[c])^{\nlupdates-h}
  \eqsp,
  \quad
  \contract[c]{\star}
  = 
  (\Id - \step \nbarA[c])^\nlupdates
  \eqsp,
  \quad
  \fullcontract{\star}
  =
  \frac{1}{\nagent}
  \sum_{c=1}^\nagent \contract[c]{\star}
  \eqsp.
\end{align}
Refined
Now, we give a proof of \Cref{thm:bias-quadratic-sto}, that we restate here for readability. 
\biasfedavgquadratic*
We prove the explicit expression of the bias and the upper bound from \Cref{thm:bias-quadratic-sto} in \Cref{prop:bias-quadratic-sto}, and give the first-order expansion of the bias in \Cref{cor:exp-quadratic-sto}.
\begin{proposition}[Bias of \FedAvg for Quadratics]
\label{prop:bias-quadratic-sto}
  Assume \Cref{assum:local_functions}, \Cref{assum:heterogeneity}, \Cref{assum:smooth-var}, \Cref{ass:quadratic}, and $\step \le 1/L$, then the bias of \FedAvg with quadratic functions is
  \begin{align*}
    \statdistlim{\step, \nlupdates}
    & = 
      \paramlim
      +
      (\Id - \fullcontract{\star})^{-1}
      \cdot
      \frac{1}{\nagent} \sum_{c=1}^\nagent 
      (\Id - \contract[c]{\star}) ( \paramlim - \paramlim_c )
      \eqsp.
  \end{align*}
  Furthermore, when $\step \strcvx \nlupdates \le 1$, it holds that
  \begin{align*}
    \bnorm{ \statdistlim{\step, \nlupdates} - \paramlim }
    \le
    \frac{\step (\nlupdates-1) \heterbound \heterboundgrad}{2 \strcvx}
    \eqsp.
  \end{align*}
\end{proposition}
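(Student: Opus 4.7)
The plan is to exploit the fact that stationarity turns the recursion into a closed-form fixed-point equation, and that in the quadratic case, the noise drops out of any expectation that is linear in $\theta$.

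First, I would derive the explicit formula. Under \Cref{ass:quadratic}, the gradient $\gnf[c]{\paramw}=\nbarA[c](\paramw-\paramlim_c)$ is linear, so conditioning on $\param[c,t]^h$ and taking expectation over the stochastic noise $\updatefuncnoise[c]{z}$ (which has zero mean), one step of the local dynamics contracts in expectation around $\paramlim_c$: $\PE[\param[c,t]^{h+1}\mid\param[c,t]^h]=(\Id-\step\nbarA[c])\param[c,t]^h+\step\nbarA[c]\paramlim_c$. Iterating $\nlupdates$ times and averaging across clients yields $\PE[\param[t+1]\mid\param[t]] = \fullcontract{\star}\,\param[t] + \nagent^{-1}\sum_c(\Id-\contract[c]{\star})\paramlim_c$ with $\contract[c]{\star}$ and $\fullcontract{\star}$ as in \eqref{eq:def-contract-sto-mat}. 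Since \Cref{prop:conv-stat-dist} ensures that $\statdist{\step,\nlupdates}$ is the unique stationary law with finite second moment, taking expectation with respect to $\statdist{\step,\nlupdates}$ on both sides and using stationarity gives $(\Id-\fullcontract{\star})\,\statdistlim{\step,\nlupdates}=\nagent^{-1}\sum_c(\Id-\contract[c]{\star})\paramlim_c$. Subtracting the identity $(\Id-\fullcontract{\star})\paramlim=\nagent^{-1}\sum_c(\Id-\contract[c]{\star})\paramlim$ and inverting $\Id-\fullcontract{\star}$ (which is well-defined: each $\Id-\step\nbarA[c]$ is symmetric with spectrum in $[0,1-\step\strcvx]$ under $\step\le 1/\lip$, so $\fullcontract{\star}$ is symmetric with $\|\fullcontract{\star}\|\le (1-\step\strcvx)^{\nlupdates}<1$) yields the claimed formula.

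Next, to get the upper bound, I would use the factorization $\Id-\contract[c]{\star}=\step\sum_{h=0}^{\nlupdates-1}(\Id-\step\nbarA[c])^h\nbarA[c]$, which combined with $\nbarA[c](\paramlim-\paramlim_c)=\gnf[c]{\paramlim}$ gives
\begin{align*}
\statdistlim{\step,\nlupdates}-\paramlim
=\step(\Id-\fullcontract{\star})^{-1}\frac{1}{\nagent}\sum_{c=1}^{\nagent}\sum_{h=0}^{\nlupdates-1}(\Id-\step\nbarA[c])^h\gnf[c]{\paramlim}.
\end{align*}
Since $\sum_c\gnf[c]{\paramlim}=0$, I can subtract any $c$-independent matrix inside the sum. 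Choosing $(\Id-\step\bar A)^h$ with $\bar A=\nagent^{-1}\sum_c\nbarA[c]=\hf{\paramlim}$, a telescoping identity gives $\|(\Id-\step\nbarA[c])^h-(\Id-\step\bar A)^h\|\le\step h\|\nbarA[c]-\bar A\|(1-\step\strcvx)^{h-1}$. Applying Cauchy--Schwarz on the sum over $c$ then produces the product $\heterbound\heterboundgrad$ via \Cref{assum:heterogeneity}. Summing $\sum_h h(1-\step\strcvx)^{h-1}$ up to $\nlupdates(\nlupdates-1)/2$, together with the bound $\|(\Id-\fullcontract{\star})^{-1}\|\le 1/(1-(1-\step\strcvx)^{\nlupdates})$ which under $\step\strcvx\nlupdates\le 1$ satisfies $\ge\step\strcvx\nlupdates/2$ by a standard Taylor argument, gives the stated bound of order $\step(\nlupdates-1)\heterbound\heterboundgrad/\strcvx$.

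The main obstacle is keeping the dependence on $\nlupdates$ sharp: a naive bound $\|\Id-\contract[c]{\star}\|\le\mathcal{O}(\step\nlupdates\lip)$ together with $\|(\Id-\fullcontract{\star})^{-1}\|\le\mathcal{O}(1/(\step\strcvx\nlupdates))$ would give a constant-order error rather than $\mathcal{O}(\step\nlupdates)$. The crucial cancellation comes from $\sum_c\gnf[c]{\paramlim}=0$ and from re-centering by $\bar A$, which extracts an extra factor of $\step\|\nbarA[c]-\bar A\|$ — this is precisely where Hessian dissimilarity $\heterbound$ enters and why the bound scales multiplicatively with both $\heterbound$ and $\heterboundgrad$ rather than with $\heterboundgrad^{2}$ alone.
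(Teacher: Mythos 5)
Your argument is essentially the paper's own: exploit that the noise is conditionally centered and the gradient is affine to get an exact affine recursion for the conditional mean, pass to the stationary mean to obtain the fixed-point identity, and for the bound re-center by the averaged Hessian using $\sum_{c}\gnf[c]{\paramlim}=0$ so that a telescoped product difference plus Cauchy--Schwarz yields the multiplicative $\heterbound\heterboundgrad$ factor (the paper does the same re-centering via $\detcontractw$-type averaged products and a Neumann series where you bound $\norm{(\Id-\fullcontract{\star})^{-1}}$ directly). Two discrepancies with the displayed statement are worth flagging, though neither is a gap on your side: your derivation produces $(\Id-\contract[c]{\star})(\paramlim_{c}-\paramlim)$ rather than $(\paramlim-\paramlim_{c})$, which agrees with the final line of the paper's own proof (the proposition as displayed has a sign typo), and your constants give $\step(\nlupdates-1)\heterbound\heterboundgrad/\strcvx$ rather than the stated $\step(\nlupdates-1)\heterbound\heterboundgrad/(2\strcvx)$ — this matches \Cref{thm:bias-quadratic-sto} in the main text, and the appendix's extra factor $1/2$ relies on treating $\sum_{t\ge 0}(1-\step\strcvx)^{\nlupdates t}$ as exactly $1/(\step\strcvx\nlupdates)$, whereas the rigorous bound $1-(1-\step\strcvx)^{\nlupdates}\ge\step\strcvx\nlupdates/2$ that you invoke only yields the constant without the $1/2$.
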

\begin{proof}
  Using derivations similar to the proof of \Cref{prop:bias-det-fedavg}, or following the decomposition derived in the Section 3 of \citet{mangold2024scafflsa}, we have, for any point $\param \in \rset^d$, it holds, for $c \in \iint{1}{\nagent}$, that
  \begin{align}
  \nonumber
    \sdfedavgop[c]{\step,H}\left(\paramw; \randState[c][1:\nlupdates]\right) - \paramlim
    & =
    \sdfedavgop[c]{\step,H}\left(\paramw; \randState[c][1:\nlupdates]\right) - \paramlim_c + \paramlim_c - \paramlim
    \\ \nonumber
    & =
    \contract[c]{\star} (\paramw - \paramlim_c) 
    + \step \sum_{h=1}^\nlupdates \contract[c]{\star,h+1:\nlupdates}  \updatefuncnoise[c]{\randState[c][1:h]}\sdfedavgop{\step,h}\left(\paramw; \randState[c][1:h]\right)
     + \paramlim_c - \paramlim
    \\
  \label{eq:expansion-quadratic-local-updates-H}
    & =
    \contract[c]{\star} (\paramw - \paramlim)
    + (\contract[c]{\star} - \Id) (\paramlim - \paramlim_c) 
    + \step \sum_{h=1}^\nlupdates \contract[c]{\star,h+1:\nlupdates}  \updatefuncnoise[c]{\randState[c][1:h]}\sdfedavgop{\step,h}\left(\paramw; \randState[c][1:h]\right)
    \eqsp,
  \end{align}
  where $\updatefuncnoise[c]{z}$ is defined in \eqref{eq:def-epsilon}.
  Taking the average of \eqref{eq:expansion-quadratic-local-updates-H} for $c = 1 \cdots \nagent$ and taking the expectation, we obtain
  \begin{align*}
    \PE[\sdfedavgop{\step,H}\left(\paramw; \randState[1:\nagent][1:\nlupdates]\right) - \paramlim ]
    & =
      \frac{1}{\nagent} \sum_{c=1}^\nagent \contract[c]{\star} (\param - \paramlim) + (\contract[c]{\star} - \Id) (\paramlim - \paramlim_c)
      \eqsp.
  \end{align*}
  When $\param \sim \statdist{\step}$ is sampled from the stationary distribution of \FedAvg's iterates, we have $\statdistlim{\step,\nlupdates} = \PE[ \param ] = \PE[\sdfedavgop[\nlupdates]{\randState} \param ]$.
  This gives the equation
  \begin{align*}
    \statdistlim{\step,\nlupdates} - \paramlim
    & =
      \fullcontract{\star} (\statdistlim{\step, \nlupdates} - \paramlim) + \frac{1}{\nagent} \sum_{c=1}^\nagent (\contract[c]{\star} - \Id) (\paramlim - \paramlim_c)
      \eqsp.
  \end{align*}
  Subtracting $\fullcontract{\star} (\statdistlim{\step, \nlupdates} - \paramlim_c)$ on both side, and multiplying by $(\Id - \fullcontract{\star})^{-1}$, we obtain the following expression for $\statdistlim{\step, \nlupdates}$ as a function of $\paramlim$,
  \begin{align*}
    \statdistlim{\step, \nlupdates}
    & = 
      \paramlim
      +
      (\Id - \fullcontract{\star})^{-1}
      \cdot
      \frac{1}{\nagent} \sum_{c=1}^\nagent 
      (\Id - \contract[c]{\star}) ( \paramlim_c - \paramlim )
      \eqsp,
  \end{align*}
  which gives the first part of the result. Then, using the Neumann series together with \Cref{lem:product_coupling_lemma}, we obtain
  \begin{align*}
    \statdistlim{\step, \nlupdates}
    & = 
      \paramlim
      +
      \sum_{t=0}^\infty
      (\fullcontract{\star})^t
      \cdot
      \frac{1}{\nagent} 
      \sum_{c=1}^\nagent 
      \sum_{h=0}^\nlupdates 
      \step 
      \contract[c]{\star,h+1:\nlupdates} \nbarA[c] ( \paramlim - \paramlim_c )
    \\
    & = 
      \paramlim
      +
      \sum_{t=0}^\infty
      (\fullcontract{\star})^t
      \cdot
      \frac{1}{\nagent} 
      \sum_{c=1}^\nagent 
      \sum_{h=0}^\nlupdates 
      \step 
      \left( \contract[c]{\star,h+1:\nlupdates} - \fullcontract{\star,h+1:\nlupdates}_{\text{avg}} \right) \nbarA[c] ( \paramlim - \paramlim_c )
      \eqsp,
  \end{align*}
  where we defined the notation $\fullcontract{\star,h+1:\nlupdates}_{\text{avg}} = \prod_{h+1}^\nlupdates (\Id - \step \nbarA)$, and the second inequality comes from the fact that $\fullcontract{\star,h+1:\nlupdates}_{\text{avg}} \sum_{c=1}^\nagent \nbarA[c] (\paramlim - \paramlim_c) = 0$.
  Now, we note that 
  \begin{align*}
    \contract[c]{\star,h+1:\nlupdates} - \fullcontract{\star,h+1:\nlupdates}_{\text{avg}} 
    & =
     \sum_{\ell=h+1}^\nlupdates \contract[c]{\star,h+1:\ell-1} (\step \nbarA[c] - \step \nbarA) \fullcontract{\star,\ell+1:\nlupdates}_{\text{avg}} 
      \eqsp.
  \end{align*}
  Therefore, we have
  \begin{align}
      \frac{1}{\nagent} \sum_{c=1}^\nagent 
      (\Id - \contract[c]{\star}) ( \paramlim_c - \paramlim )
      & =
      \frac{1}{\nagent} 
      \sum_{c=1}^\nagent 
      \sum_{h=0}^\nlupdates 
      \step 
      \left( \contract[c]{\star,h+1:\nlupdates} - \fullcontract{\star,h+1:\nlupdates}_{\text{avg}} \right) \nbarA[c] ( \paramlim - \paramlim_c )
      \nonumber
      \\
      & =     
      \frac{\step^2}{\nagent} 
      \sum_{c=1}^\nagent 
      \sum_{h=0}^\nlupdates 
      \sum_{\ell=h+1}^\nlupdates \contract[c]{\star,h+1:\ell-1} ( \nbarA[c] - \nbarA) \fullcontract{\star,\ell+1:\nlupdates}_{\text{avg}} 
      \label{eq:expansion-bias-one-step-sum-het}
      \eqsp.
  \end{align}
  This yields, using the triangle inequality,
  \begin{align*}
    \bnorm{ \statdistlim{\step, \nlupdates} - \paramlim }
    & \le
     \sum_{t=0}^\infty
      (1- \step\strcvx)^{\nlupdates t}
      \cdot
      \sum_{h=0}^\nlupdates 
      \bnorm{
      \frac{1}{\nagent} 
      \sum_{c=1}^\nagent 
      \step 
      \left( \contract[c]{\star,h+1:\nlupdates} - \fullcontract{\star,h+1:\nlupdates}_{\text{avg}}\right) \nbarA[c] ( \paramlim - \paramlim_c )
      }
    \\
    & =
     \sum_{t=0}^\infty
      (1- \step\strcvx)^{\nlupdates t}
      \cdot
      \sum_{h=0}^\nlupdates 
      \bnorm{
      \frac{1}{\nagent} 
      \sum_{c=1}^\nagent 
      \step 
      \sum_{\ell=h+1}^\nlupdates \contract[c]{\star,h+1:\ell-1} (\step \nbarA[c] - \step \nbarA) \fullcontract{\star,\ell+1:\nlupdates}_{\text{avg}} 
      \nbarA[c] ( \paramlim - \paramlim_c )
      }
    \\
    & \le
     \sum_{t=0}^\infty
      (1- \step\strcvx)^{\nlupdates t}
      \cdot
      \step ^2
      \sum_{h=0}^\nlupdates 
      \sum_{\ell=h+1}^\nlupdates
      \bnorm{
      \frac{1}{\nagent} 
      \sum_{c=1}^\nagent 
       \contract[c]{\star,h+1:\ell-1} ( \nbarA[c] - \nbarA) \fullcontract{\star,\ell+1:\nlupdates}_{\text{avg}} 
      \nbarA[c] ( \paramlim - \paramlim_c )
      }
    \eqsp.
    \end{align*}
    And we obtain
    \begin{align*}
    & \bnorm{ \statdistlim{\step, \nlupdates} - \paramlim } 
    \\
    & \le
     \sum_{t=0}^\infty
      (1- \step\strcvx)^{\nlupdates t}
      \cdot
      \step ^2
      \sum_{h=0}^\nlupdates 
      \sum_{\ell=h+1}^\nlupdates
      \left(
      \frac{1}{\nagent} 
      \sum_{c=1}^\nagent 
      \bnorm{
       \contract[c]{\star,h+1:\ell-1} ( \nbarA[c] - \nbarA) \fullcontract{\star,\ell+1:\nlupdates}_{\text{avg}} 
       }^2\right)^{1/2}
       \left( \frac{1}{\nagent} \sum_{c=1}^\nagent  \norm{ \nbarA[c](\paramlim - \paramlim_c) }  \right)^{1/2}
      \\
    & \le
      \sum_{t=0}^\infty (1 - \step \strcvx)^{\nlupdates t} \step^2 \frac{\nlupdates(\nlupdates-1)}{2}
      \heterbound
      \heterboundgrad
      =
      \frac{\step (\nlupdates-1) \heterbound \heterboundgrad}{2\strcvx}
      \eqsp,
  \end{align*}
  which is the second part of the result.
\end{proof}

\begin{proposition}[Expansion of \FedAvg's Bias and Variance for Quadratics]
\label{cor:exp-quadratic-sto}  
  Assume \Cref{assum:local_functions}, \Cref{assum:heterogeneity}, \Cref{assum:smooth-var}, \Cref{ass:quadratic}, $\step \le 1/L$ and $\step \nlupdates \le 1$, then we can express $\statdistlim{\step, \nlupdates}$ as
  \begin{align*}
    \statdistlim{\step, \nlupdates}
    - \paramlim
    & =
      \frac{\step(\nlupdates-1)}{2\nagent} 
      \hf{\paramlim}^{-1} 
      \sum_{c=1}^\nagent 
      (\hnf[c]{\paramlim} - \hf{\paramlim})
      \gnf[c]{\paramlim}
      + O(\step^2 \nlupdates^2)
      \eqsp,
    \\ 
     \int \left( \param - \paramlim \right)^{\otimes 2} \statdist{\step,\nlupdates}(\rmd \param)
    &  =
    \frac{\step}{\nagent}   
    \invopcov \covfunc\left(\paramlim\right)  
    + O(\step^2 \nlupdates^2 + \step^2 \nlupdates)
    \eqsp.
  \end{align*}
\end{proposition}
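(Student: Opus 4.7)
The plan is to Taylor-expand the exact bias expression from \Cref{prop:bias-quadratic-sto} in powers of $\step\nlupdates$, and to derive a Lyapunov-type equation for the stationary covariance from the invariance of $\statdist{\step,\nlupdates}$ under the operator $\sdfedavgop{\step,\nlupdates}$.

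For the bias, I would start from
$\statdistlim{\step,\nlupdates} - \paramlim = (\Id - \fullcontract{\star})^{-1} \cdot \frac{1}{\nagent}\sum_{c=1}^\nagent (\Id - \contract[c]{\star})(\paramlim - \paramlim_c)$
and use the quadratic identity $\nbarA[c](\paramlim - \paramlim_c) = \gnf[c]{\paramlim}$ together with the factorization $\Id - (\Id - \step\nbarA[c])^\nlupdates = \step\nbarA[c]\sum_{\ell=0}^{\nlupdates-1}(\Id - \step\nbarA[c])^\ell$ to rewrite the inner average as $\frac{\step}{\nagent}\sum_c \sum_{\ell=0}^{\nlupdates-1}(\Id - \step\nbarA[c])^\ell \gnf[c]{\paramlim}$. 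Taylor expanding each $(\Id - \step\nbarA[c])^\ell = \Id - \step\ell\nbarA[c] + O(\step^2\ell^2)$ and summing over $\ell$ yields, after averaging over $c$ and invoking $\frac{1}{\nagent}\sum_c \gnf[c]{\paramlim} = \gf{\paramlim} = 0$ to cancel the leading $O(\step\nlupdates)$ contribution, the identity $\frac{1}{\nagent}\sum_c (\Id - \contract[c]{\star})(\paramlim - \paramlim_c) = -\frac{\step^2\nlupdates(\nlupdates-1)}{2\nagent}\sum_c (\hnf[c]{\paramlim} - \hf{\paramlim})\gnf[c]{\paramlim} + O(\step^3\nlupdates^3)$, where subtracting $\hf{\paramlim}$ in the sum is costless since $\sum_c \gnf[c]{\paramlim} = 0$. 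Expanding $\Id - \fullcontract{\star} = \step\nlupdates\hf{\paramlim} + O(\step^2\nlupdates^2)$ and inverting by the Neumann series gives $(\Id - \fullcontract{\star})^{-1} = (\step\nlupdates)^{-1}\hf{\paramlim}^{-1} + O(1)$; multiplying the two expansions produces the stated first-order bias.

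For the variance, I would use invariance of $\statdist{\step,\nlupdates}$ combined with the per-client decomposition~\eqref{eq:expansion-quadratic-local-updates-H} to write the one-step increment as $\sdfedavgop{\step,\nlupdates}(\paramw; \randState[1:\nagent][1:\nlupdates]) - \paramlim = \fullcontract{\star}(\paramw - \paramlim) + B + E$, with $B = \frac{1}{\nagent}\sum_c (\contract[c]{\star} - \Id)(\paramlim - \paramlim_c) = O(\step\nlupdates)$ deterministic and $E$ a conditionally zero-mean sum of the accumulated stochastic-gradient noises, independent across clients. Squaring, taking expectation and using client independence eliminates the cross terms with $E$, so that at stationarity $\covstatdistlim{\step,\nlupdates} = \fullcontract{\star}\covstatdistlim{\step,\nlupdates}(\fullcontract{\star})^\top + \PE[E^{\otimes 2}] + O(\step^2\nlupdates^2)$. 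Replacing $\contract[c]{\star,h+1:\nlupdates}$ by $\Id$ and $\sdfedavgop[c]{\step,h-1}(\paramw;\cdot)$ by $\paramlim$ in the definition of $E$ gives $\PE[E^{\otimes 2}] = \frac{\step^2\nlupdates}{\nagent}\covfunc(\paramlim) + O(\step^2\nlupdates^2 + \step^{5/2}\nlupdates)$, and expanding $\fullcontract{\star}\otimes\fullcontract{\star} = \Id - \step\nlupdates(\hf{\paramlim}\otimes\Id + \Id\otimes\hf{\paramlim}) + O(\step^2\nlupdates^2)$ turns the stationarity relation into the Lyapunov-type identity $\invopcov^{-1}(\covstatdistlim{\step,\nlupdates}) = \frac{\step}{\nagent}\covfunc(\paramlim) + O(\step^2\nlupdates + \step^2\nlupdates^2)$, whose inversion yields the announced expansion.

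The main obstacle is quantifying the remainders rigorously, especially for the covariance. Replacing the intermediate local iterate $\sdfedavgop[c]{\step,h-1}(\paramw;\cdot)$ by $\paramlim$ inside $\PE[\updatefuncnoise[c]{\randState[c][h]}(\cdot)^{\otimes 2}]$ requires \Cref{assum:smooth-var} together with the stationary-moment bounds of \Cref{lem:crude-bound-second-moment,lem:bound-moments-heterogeneous}, which yield $\int \norm{\varparam - \paramlim}^{2p}\statdist{\step,\nlupdates}(\rmd\varparam) = O(\step^p + \step^{2p}\nlupdates^{2p})$ for $p \in \{1,2,3\}$; the extra $\step^2\nlupdates$ term in the variance remainder (beyond the $\step^2\nlupdates^2$ from $B^{\otimes 2}$ and from the second-order contribution of $\fullcontract{\star}\otimes\fullcontract{\star}$) stems precisely from this substitution.
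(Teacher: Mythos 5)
Your treatment of the bias is correct and is essentially the paper's own argument: both start from the exact expression of \Cref{prop:bias-quadratic-sto}, use the cancellation $\frac{1}{\nagent}\sum_{c}\gnf[c]{\paramlim}=0$ to remove the $O(\step\nlupdates)$ contribution (which also makes the substitution of $\hnf[c]{\paramlim}-\hf{\paramlim}$ for $\hnf[c]{\paramlim}$ free), and expand $(\Id-\fullcontract{\star})^{-1}=(\step\nlupdates\hf{\paramlim})^{-1}+O(1)$. Your geometric-series factorization of $\Id-\contract[c]{\star}$ is only a cosmetic variant of the paper's use of \Cref{lem:product_coupling_lemma}.

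The variance half has a genuine quantitative gap. You bound the deterministic drift $B=\frac{1}{\nagent}\sum_{c}(\contract[c]{\star}-\Id)(\paramlim-\paramlim_c)$ only by $O(\step\nlupdates)$, which makes the remainder in your stationarity relation $O(\step^2\nlupdates^2)$ (from $B^{\otimes 2}$ and from the cross term with $\PE[\paramw-\paramlim]=O(\step\nlupdates)$). But passing to the Lyapunov identity requires dividing the whole relation by $\step\nlupdates$, since $\Sigma-\fullcontract{\star}\Sigma\fullcontract{\star}=\step\nlupdates\left(\hf{\paramlim}\Sigma+\Sigma\hf{\paramlim}\right)+O(\step^2\nlupdates^2)\Sigma$; an $O(\step^2\nlupdates^2)$ pre-division remainder therefore becomes $O(\step\nlupdates)$ after division, which swamps the leading term $\step\covfunc(\paramlim)/\nagent$ and does not give the claimed $O(\step^2\nlupdates^2+\step^2\nlupdates)$. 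The repair is the very cancellation you already exploited for the bias: the leading term of $\contract[c]{\star}-\Id$ is $-\step\nlupdates\nbarA[c]$ and $\sum_c\nbarA[c](\paramlim-\paramlim_c)=\sum_c\gnf[c]{\paramlim}=0$, so in fact $B=O(\step^2\nlupdates^2)$ (this is \eqref{eq:expansion-bias-one-step-sum-het} in the paper), the pre-division remainder is $O(\step^3\nlupdates^3)$, and the division is harmless. The same one-extra-power-of-$\step$ care is needed in your expansion of $\PE[E^{\otimes 2}]$: replacing $\contract[c]{\star,h+1:\nlupdates}$ by $\Id$ and the local iterate by $\paramlim$ costs $O(\step^3\nlupdates^2+\step^3\nlupdates^3)$ there, not $O(\step^2\nlupdates^2)$, and only with that sharper accounting does the stated variance expansion survive the final division by $\step\nlupdates$.
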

\begin{proof}
\textbf{Expansion of the Bias (Quadratic Case).}
We start from the expression in \Cref{prop:bias-quadratic-sto}.
As in \Cref{prop:bias-quadratic-sto}, we use \Cref{lem:product_coupling_lemma} and the fact that $\fullcontract{\star,h+1:\nlupdates}_{\text{avg}} \sum_{c=1}^\nagent \nbarA[c] (\paramlim - \paramlim_c) = 0$ to obtain
\begin{align*}
\statdistlim{\step, \nlupdates}
& = 
  \paramlim
  +
  (\Id - \fullcontract{\star})^{-1}
  \cdot
  \frac{1}{\nagent} 
  \sum_{c=1}^\nagent 
  \sum_{h=0}^\nlupdates 
  \step 
  \left( \contract[c]{\star,h+1:\nlupdates} - \fullcontract{\star,h+1:\nlupdates}_{\text{avg}} \right) \nbarA[c] ( \paramlim - \paramlim_c )
  \eqsp.
\end{align*}
Then, following the proof of \Cref{prop:app-exp-bias-wdg}, we expand
\begin{align*}
\contract[c]{\star,h+1:\nlupdates} - \fullcontract{\star,h+1:\nlupdates}_{\text{avg}}
& =
(\Id - \step (\nlupdates-h-1) \nbarA[c] + O(\step^2 \nlupdates^2) - (\Id - \step \nbarA + O(\step^2 \nlupdates^2)
\\
&
=
\step (\nlupdates-h-1) (\nbarA - \nbarA[c]) + O(\step^2 \nlupdates^2)
\eqsp,
\\[0.5em]
(\Id - \fullcontract{\star})^{-1}
& =
(\Id - (\Id - \step \nlupdates \nbarA + O(\step^2 \nlupdates^2) ) )^{-1}
=
(\step \nlupdates \nbarA)^{-1} + O(\step \nlupdates)
\eqsp.
\end{align*}
Therefore, we obtain
\begin{align*}
\statdistlim{\step, \nlupdates}
& = 
  \paramlim
  +
  \left( (\step \nlupdates \nbarA)^{-1} + O(\step \nlupdates) \right)
  \cdot
  \frac{1}{\nagent} 
  \sum_{c=1}^\nagent 
  \sum_{h=0}^{\nlupdates-1} 
  \step 
  \left( \step (\nlupdates-h-1) (\nbarA - \nbarA[c]) + O(\step^2 \nlupdates^2) \right) \nbarA[c] ( \paramlim - \paramlim_c )
\\
& = 
  \paramlim
  +
  (\step \nlupdates \nbarA)^{-1}
  \frac{1}{\nagent} 
  \sum_{c=1}^\nagent  
  \left\{ 
  \step^2 \frac{\nlupdates (\nlupdates-1)}{2} (\nbarA - \nbarA[c]) \nbarA[c] ( \paramlim - \paramlim_c )
  \right\}
  + O(\step^2 \nlupdates^2) 
\\
& = 
  \paramlim
  -
  \frac{\step (\nlupdates-1)}{2 \nagent}
  \nbarA^{-1}
  \sum_{c=1}^\nagent  
  \left\{ (\nbarA[c] - \nbarA) \nbarA[c] ( \paramlim - \paramlim_c )
  \right\}
  + O(\step^2 \nlupdates^2) 
  \eqsp.
\end{align*}
Then, the result follows from $\hnf[c]{\paramlim} = \nbarA[c]$, $\hf{\paramlim} = \nbarA$ and $\gnf[c]{\paramlim} = \nbarA[c](\paramlim - \paramlim_c)$.

\textbf{Expansion of the Variance (Quadratic Case).}
Starting from \eqref{eq:expansion-quadratic-local-updates-H}, and summing for $c = 1$ to $\nagent$, we have
  \begin{align*}
  \nonumber
    \sdfedavgop[c]{\step,H}\left(\paramw; \randState[1:\nagent][1:\nlupdates]\right) - \paramlim
    & =
    \fullcontract{\star} (\paramw - \paramlim)
    + \frac{1}{\nagent} \sum_{c=1}^\nagent (\contract[c]{\star} - \Id) (\paramlim - \paramlim_c) 
    + \frac{\step}{\nagent} \sum_{h=1}^\nlupdates \contract[c]{\star,h+1:\nlupdates}  \updatefuncnoise[c]{\randState[c][1:h]}\sdfedavgop{\step,h}\left(\paramw; \randState[c][1:h]\right)
    \eqsp.
  \end{align*}
Taking the square and expectation of this equation, and using the fact that agents' local random variables $\randState[c][1:\nlupdates]$ are independent from one agent to another, we have
\begin{align*}
\int \left( \param - \paramlim \right)^{\otimes 2} \statdist{\step,\nlupdates}(\rmd \param)
    & =
    \int
    \left( \fullcontract{\star} (\paramw - \paramlim)
    + \frac{1}{\nagent} \sum_{c=1}^\nagent (\contract[c]{\star} - \Id) (\paramlim - \paramlim_c) 
    \right)^{\otimes 2}  \statdist{\step,\nlupdates}(\rmd \param)
    \\
    & \quad
    + \frac{\step^2}{\nagent} \sum_{c=1}^\nagent \sum_{h=1}^\nlupdates 
    \contract[c]{\star,h+1:\nlupdates}  
    \covfunc\left(\sdfedavgop{\step,h}(\paramw; \randState[c][1:h])\right)
    \contract[c]{\star,h+1:\nlupdates}  
    \eqsp,
  \end{align*}
  where $\covfunc(\param) = \PE\Big[ \frac{1}{\nagent} \sum_{c=1}^{\nagent} \updatefuncnoise[1]{1}(\param)^{\otimes 2} \Big]$.
  Then, since $(\contract[c]{\star} - \Id) (\paramlim - \paramlim_c) $ does not depend on $\param$, and by \eqref{eq:expansion-bias-one-step-sum-het} we have
  \begin{align*}
    \frac{1}{\nagent} \sum_{c=1}^\nagent (\contract[c]{\star} - \Id) (\paramlim - \paramlim_c) 
    & = O(\step^2 \nlupdates^2)
    \eqsp,
  \end{align*}
  and using the bound from \Cref{prop:bias-quadratic-sto} which guarantees that $\int (\param - \paramlim) \statdist{\step,\nlupdates}(\rmd \param) = O(\step \nlupdates)$, we obtain
  \begin{align*}
    \int \left( \fullcontract{\star} (\paramw - \paramlim)
    + (\fullcontract{\star} - \Id) (\paramlim - \paramlim_c) 
    \right)^{\otimes 2}  \statdist{\step,\nlupdates}(\rmd \param)
    & =
    \fullcontract{\star} \int \left( \paramw - \paramlim
    \right)^{\otimes 2}  \statdist{\step,\nlupdates}(\rmd \param) \fullcontract{\star} 
    + O(\step^3 \nlupdates^3)
    \eqsp.
  \end{align*}
    Expanding $\fullcontract{\star} = \Id - \step \nlupdates \nbarA$ and using \Cref{assum:smooth-var} together with \Cref{lem:crude-bound-second-moment}, we have
  \begin{align*}
\int \left( \param - \paramlim \right)^{\otimes 2} \statdist{\step,\nlupdates}(\rmd \param)
    & =
    (\Id - \step \nlupdates \nbarA)
    \int
     (\paramw - \paramlim)^{\otimes 2}  \statdist{\step,\nlupdates}(\rmd \param)
    (\Id - \step \nlupdates \nbarA)
    + \frac{\step^2 \nlupdates}{\nagent}   
    \covfunc\left(\paramlim\right)  
    + O(\step^3 \nlupdates^3 + \step^3 \nlupdates^2)
    \eqsp.
  \end{align*}
  Simplifying this equation, and using \Cref{lem:crude-bound-second-moment} again, we obtain
  \begin{align*}
    (\Id \otimes \nbarA + \nbarA \otimes \Id) 
    \int \left( \param - \paramlim \right)^{\otimes 2} \statdist{\step,\nlupdates}(\rmd \param)
    & =
    \frac{\step}{\nagent}   
    \covfunc\left(\paramlim\right)  
    + O(\step^2 \nlupdates^2 + \step^2 \nlupdates)
    \eqsp,
  \end{align*}
  and the result follows from $\invopcov = (\Id \otimes \hf{\paramlim} + \hf{\paramlim} \otimes \Id)^{-1}$ with $\hf{\paramlim} = \nbarA$, as defined in \eqref{eq:def-a-C}.
\end{proof}

\subsection{General Functions, with Homogeneous Agents -- Proof of \Cref{thm:exp-homogeneous}}
\label{sec:proof-homogeneous}
When functions are not quadratic and gradients are stochastic, local iterates are inherently biased.
We start in the simpler case where agents are homogeneous, which will serve as a skeleton for the general heterogeneous case.
In this setting, the functions $\nfw[c]$ are all identical, therefore we simply denote them $\fw$.

To study this case, we define the following matrices, for $h = 0$ to $\nlupdates$, that are the counterparts of the matrices defined in \eqref{eq:matrices-quadratic-expansions} in the quadratic setting, using the Hessian at the solution $\paramlim$,
\begin{align*}
\fullcontract{\star,h} = \left( \Id - \step \hf{\paramlim} \right)^h
\eqsp,
\quad
\fullcontract{\star}
= 
(\Id - \step \nbarA[c])^\nlupdates
\eqsp.
\end{align*}
Crucially, in the homogeneous setting, all agents have the same local matrices. Note that this will not be the case anymore in the next section, where agents will be heterogeneous.
We now prove \Cref{thm:exp-homogeneous}, that we restate here for readability.
\biasfedavghomogeneous*
\begin{proof}
\textbf{Expansion of Local Updates (Homogeneous Case).}
We start by studying the local iterates of the algorithm, when starting from a point $\param$ drawn from the local distribution of \FedAvg.
Using a second-order Taylor expansion of the gradient of $\gfww$ at $\paramlim$, we have
\begin{align*}
& \gf{\sdfedavgop[c]{\step,h}(\paramw; \randState[c][1:h])} 
\\ & \quad =
\gf{\paramlim}
+ \hf{\paramlim} (\sdfedavgop[c]{\step,h}(\paramw; \randState[c][1:h]) - \paramlim)
+ \frac{1}{2} \hhf{\paramlim} (\sdfedavgop[c]{\step,h}(\paramw; \randState[c][1:h]) - \paramlim)^{\otimes 2}
+ \reste[c]{3,h}{\sdfedavgop[c]{\step,h}(\paramw; \randState[c][1:h])}
\\
& \quad =
\hf{\paramlim} (\sdfedavgop[c]{\step,h}(\paramw; \randState[c][1:h]) - \paramlim)
+ \frac{1}{2} \hhf{\paramlim} (\sdfedavgop[c]{\step,h}(\paramw; \randState[c][1:h]) - \paramlim)^{\otimes 2}
+ \reste[c]{3,h}{\sdfedavgop[c]{\step,h}(\paramw; \randState[c][1:h])}
\eqsp,
\end{align*}
where we used $\gnf[c]{\paramlim} = 0$ due to homogeneity, and $\restew[c]{3,h}$ is a function that satisfies
\begin{align*}
\sup_{\param \in \rset^d} 
{\norm{ \reste[c]{3,h}{\param}}} / {\norm{ \param - \paramlim }^3}
< + \infty
\eqsp.
\end{align*}
We stress here that, although the local functions are all the same, the noise variables drawn by each agent are different from each other. 
Consequently, local iterates are different from each other.

We can use the above expression to expand \FedAvg's recursion as
\begin{align*}
& \sdfedavgop[c]{\step,h+1}(\paramw; \randState[c][1:h+1]) - \paramlim
\\
& \quad =
\sdfedavgop[c]{\step,h}(\paramw; \randState[c][1:h]) - \paramlim - \step \gf{\sdfedavgop[c]{\step,h}(\paramw; \randState[c][1:h])} - \step \updatefuncnoise[c]{\randState[c][h+1]}(\sdfedavgop[c]{\step,h}(\paramw; \randState[c][1:h]))
\\
& \quad =
\sdfedavgop[c]{\step,h}(\paramw; \randState[c][1:h]) - \paramlim
\\
& \qquad - \step \left(
\hf{\paramlim} (\sdfedavgop[c]{\step,h}(\paramw; \randState[c][1:h]) - \paramlim)
+ \frac{1}{2} \hhf{\paramlim} (\sdfedavgop[c]{\step,h}(\paramw; \randState[c][1:h]) - \paramlim)^{\otimes 2}
+ \reste[c]{3,h}{\sdfedavgop[c]{\step,h}(\paramw; \randState[c][1:h])}
\right)
\\
& \qquad - \step \updatefuncnoise[c]{\randState[c][h+1]}(\sdfedavgop[c]{\step,h}(\paramw; \randState[c][1:h]))
\\
& \quad =
\left( \Id - \step \hf{\paramlim} \right) (\sdfedavgop[c]{\step,h}(\paramw; \randState[c][1:h]) - \paramlim)
\\
& \qquad 
- \frac{\step}{2} \hhf{\paramlim} (\sdfedavgop[c]{\step,h}(\paramw; \randState[c][1:h]) - \paramlim)^{\otimes 2}
- \step \reste[c]{3,h}{\sdfedavgop[c]{\step,h}(\paramw; \randState[c][1:h])}
- \step \updatefuncnoise[c]{\randState[c][h+1]}(\sdfedavgop[c]{\step,h}(\paramw; \randState[c][1:h]))
\eqsp.
\end{align*}
Unrolling this recursion, we obtain
\begin{align*}
& \sdfedavgop[c]{\step,\nlupdates}(\paramw; \randState[c][1:\nlupdates]) - \paramlim
 =
\contract{\star,\nlupdates} (\param - \paramlim)
\\
& \quad
- \step \sum_{h=0}^{\nlupdates-1} \fullcontract{\star,\nlupdates-h-1}
\left( 
\frac{1}{2} \hhf{\paramlim} (\sdfedavgop[c]{\step,h}(\paramw; \randState[c][1:h]) - \paramlim)^{\otimes 2}
+ \reste[c]{3,h}{\sdfedavgop[c]{\step,h}(\paramw; \randState[c][1:h])}
+ \updatefuncnoise[c]{\randState[c][h+1]}(\sdfedavgop[c]{\step,h}(\paramw; \randState[c][1:h]))
\right)
\eqsp. \nonumber
\end{align*}

\textbf{Expansion of $\PE\left[ (\sdfedavgop[c]{\step,h}(\paramw; \randState[c][1:h]) - \paramlim)^{\otimes 2} \right]$ (Homogeneous Case).}
We start with the expression
\begin{align*}
\sdfedavgop[c]{\step,h}(\paramw; \randState[c][1:h]) - \paramlim
& =
\param - \paramlim
- \step \sum_{\ell=0}^{h-1} \gnf[c]{\sdfedavgop[c]{\step,\ell}(\paramw; \randState[c][1:\ell])} + \updatefuncnoise[c]{\randState[c][\ell+1]}(\sdfedavgop[c]{\step,\ell}(\paramw; \randState[c][1:\ell]))
\eqsp.
\end{align*}
We use second-order Taylor expansion of the gradient to obtain
\begin{align*}
\sdfedavgop[c]{\step,h}(\paramw; \randState[c][1:h]) \!-\! \paramlim
& =
\param \!-\! \paramlim
- \step \sum_{\ell=0}^{h-1} 
\hnf[c]{\paramlim} (\sdfedavgop[c]{\step,\ell}(\paramw; \randState[c][1:\ell]) \!-\! \paramlim)
+ \reste[c]{2,h}{\sdfedavgop[c]{\step,\ell}(\paramw; \randState[c][1:\ell])}
+ \updatefuncnoise[c]{\randState[c][\ell+1]}(\sdfedavgop[c]{\step,\ell}(\paramw; \randState[c][1:\ell]))
\eqsp,
\end{align*}
where $\restew[c]{2,h}$ is such that $\sup_{\varparam \in \rset^d} \norm{ \reste[c]{2,h}{\varparam} } / \norm{ \varparam - \paramlim }^2 < + \infty$.
Expanding the square of this equation, and taking the expectation, we get
\begin{align*}
& \int \PE \left( \sdfedavgop[c]{\step,h}(\paramw; \randState[c][1:h]) - \paramlim \right)^{\otimes 2} \statdist{\step,\nlupdates} (\rmd \paramw)
=
\int \left( \param - \paramlim \right)^{\otimes 2} \statdist{\step,\nlupdates} (\rmd \paramw)
\\ \nonumber
& \quad 
- \step
\int  
\left( \param - \paramlim \right) \otimes \left( \sum_{\ell=0}^{h-1} 
\hnf[c]{\paramlim} (\PE \sdfedavgop[c]{\step,\ell}(\paramw; \randState[c][1:\ell]) - \paramlim)
+ \PE \reste[c]{2,\ell}{\sdfedavgop[c]{\step,\ell}(\paramw; \randState[c][1:\ell])}
\right) \statdist{\step,\nlupdates} (\rmd \paramw)
\\ \nonumber
& \quad 
- \step \int
\left( \sum_{\ell=0}^{h-1} 
\hnf[c]{\paramlim} ( \PE \sdfedavgop[c]{\step,\ell}(\paramw; \randState[c][1:\ell]) - \paramlim)
+ \PE \reste[c]{2,\ell}{\sdfedavgop[c]{\step,\ell}(\paramw; \randState[c][1:\ell])}
\right) 
\otimes 
(\param - \paramlim)
\statdist{\step,\nlupdates} (\rmd \paramw)
\\ \nonumber
& \quad 
+ \step^2 \int \PE \left( \sum_{\ell=0}^{h-1} 
\hnf[c]{\paramlim} (\sdfedavgop[c]{\step,\ell}(\paramw; \randState[c][1:\ell]) - \paramlim)
+ \reste[c]{2,\ell}{\sdfedavgop[c]{\step,\ell}(\paramw; \randState[c][1:\ell])}
+ \updatefuncnoise[c]{\randState[c][\ell+1]}(\sdfedavgop[c]{\step,\ell}(\paramw; \randState[c][1:\ell]))
\right)^{\otimes 2}
\statdist{\step,\nlupdates} (\rmd \paramw)
\eqsp.
\end{align*}
From this expansion, Hölder inequality, the definition of $\restew[c]{2,\ell}$, the fact that $\step \nlupdates = O(1)$, \Cref{assum:smooth-var}, \Cref{lem:bound-moments-homogeneous}, and the fact that the $\randState[c][1:\nlupdates]$ are independent from an agent to another, we obtain
\begin{align}    
\label{eq:expansion-paramh}
\int \PE \left( \sdfedavgop[c]{\step,h}(\paramw; \randState[c][1:h]) - \paramlim \right)^{\otimes 2} 
\statdist{\step,\nlupdates} (\rmd \paramw)
& =
\int \left( \param - \paramlim \right)^{\otimes 2} 
\statdist{\step,\nlupdates} (\rmd \paramw)
+ O( \step^2 h )
\eqsp.
\end{align}

\textbf{Expression of the Global Update (Homogeneous Case).}
After averaging the expression obtained for the local updates, we get an expression of the global update,
\begin{align*}
& \sdfedavgop{\step,\nlupdates}(\paramw; \randState[1:\nagent][1:\nlupdates]) - \paramlim
=
\fullcontract{\star,\nlupdates} (\param - \paramlim)
\\
& \quad
- \frac{\step}{\nagent} 
\sum_{c=1}^\nagent 
\sum_{h=0}^{\nlupdates-1} 
\fullcontract{\star,\nlupdates-h-1}
\left( 
\frac{1}{2} \hhf{\paramlim} (\sdfedavgop[c]{\step,h}(\paramw; \randState[c][1:h] - \paramlim)^{\otimes 2}
+ \reste[c]{3,h}{\sdfedavgop[c]{\step,h}(\paramw; \randState[c][1:h]}
+ \updatefuncnoise[c]{\randState[c][h+1]}(\sdfedavgop[c]{\step,h}(\paramw; \randState[c][1:h])
\right)
\eqsp.
\end{align*}
Integrating over $\statdist{\step,\nlupdates}$ and taking the expectation, we obtain
\begin{align*}
& \statdistlim{\step, \nlupdates} - \paramlim
=
\fullcontract{\star,\nlupdates} (\statdistlim{\step, \nlupdates} - \paramlim)
\\
& \quad
- \frac{\step}{\nagent} 
\sum_{c=1}^\nagent 
\sum_{h=0}^{\nlupdates-1} 
\fullcontract{\star,\nlupdates-h-1}
\int \left\{
\frac{1}{2} \hhf{\paramlim} \PE(\sdfedavgop[c]{\step,h}(\paramw; \randState[c][1:h]) - \paramlim)^{\otimes 2}
+ \PE\reste[c]{3,h}{\sdfedavgop[c]{\step,h}(\paramw; \randState[c][1:h]}
\right\} \statdist{\step, \nlupdates}(\rmd \paramw)
\eqsp.
\end{align*}
Using the expression \eqref{eq:expansion-paramh}, Hölder inequality, \Cref{lem:bound-moments-homogeneous}, and the definition of $\restew[c]{3,h}$, we can simplify this expression as
\begin{align*}
(\Id - \fullcontract{\star,\nlupdates}) \left( \statdistlim{\step, \nlupdates} - \paramlim \right)
& =
- \frac{\step}{2}
\sum_{h=0}^{\nlupdates-1} 
\fullcontract{\star,\nlupdates-h-1}
\hhf{\paramlim} 
\int
(\paramw - \paramlim)^{\otimes 2}
\statdist{\step,\nlupdates}(\rmd \paramw)
+ O(\step^2 h)
+ O(\step^{3/2})
\eqsp,
\end{align*}
To give a simpler expression, we remark that  \Cref{lem:product_coupling_lemma} gives the following equality
\begin{align*}
- \frac{\step}{2} \sum_{h=0}^{\nlupdates-1} \fullcontract{\star,\nlupdates-h-1}
 & =
- \frac{1}{2} \left( \Id - \fullcontract{\star,\nlupdates} \right)
\hf{\paramlim}^{-1}
\eqsp.
\end{align*}
Therefore, starting from the previous equation, reorganizing the terms and using this equality, we obtain
\begin{align*}
(\Id \!-\! \fullcontract{\star, \nlupdates}) \left( \statdistlim{\step, \nlupdates} - \paramlim \right)
& =
- \frac{1}{2} (\Id \!-\! \fullcontract{\star, \nlupdates})  
\left\{
\hf{\paramlim}^{-1}
\hhf{\paramlim}  \int \!\!\left( \param - \paramlim\right)^{\otimes 2} \statdist{\step, \nlupdates} (\rmd \param)
+ O(\step^2 h)
+ O(\step^{3/2})
\right\}
\eqsp.
\end{align*}
Multiplying by $(\Id - \fullcontract{\star, \nlupdates})^{-1}$, we obtain
\begin{align}
\label{eq:expansion-with-statvar}
\statdistlim{\step, \nlupdates} - \paramlim
& =
- \frac{1}{2} \hf{\paramlim}^{-1}
\hhf{\paramlim} \int \left( \param - \paramlim\right)^{\otimes 2} \statdist{\step, \nlupdates} (\rmd \param)
+ O(\step^2 \nlupdates)
+ O(\step^{3/2})
\eqsp.
\end{align}

\textbf{Bound the Variance (Homogeneous Case).}
To bound $ \int \left( \param - \paramlim\right)^{\otimes 2} \statdist{\step, \nlupdates} (\rmd \param)$, we proceed as above but with one less term in the expansion, and study the square.
We get
\begin{align*}
& \sdfedavgop[c]{\step,h+1}(\paramw; \randState[c][1:h+1]) - \paramlim
\\
& \quad =
\sdfedavgop[c]{\step,h}(\paramw; \randState[c][1:h]) - \paramlim
- \step \left(
\hf{\paramlim} (\sdfedavgop[c]{\step,h}(\paramw; \randState[c][1:h]) - \paramlim)
+ \reste[c]{2}{\sdfedavgop[c]{\step,h}(\paramw; \randState[c][1:h])}
\right)
- \step \updatefuncnoise[c]{\randState[c][h+1]}(\sdfedavgop[c]{\step,h}(\paramw; \randState[c][1:h]))
\\
& \quad =
\left( \Id - \step \hf{\paramlim} \right) (\sdfedavgop[c]{\step,h}(\paramw; \randState[c][1:h]) - \paramlim)
- \step \reste[c]{2,h}{\sdfedavgop[c]{\step,h}(\paramw; \randState[c][1:h])}
- \step \updatefuncnoise[c]{\randState[c][h+1]}(\sdfedavgop[c]{\step,h}(\paramw; \randState[c][1:h]))
\eqsp.
\end{align*}
Unrolling this recursion and averaging over all agents, we get
\begin{align*}
\sdfedavgop{\step,\nlupdates}(\paramw; \randState[1:\nagent][1:\nlupdates]) - \paramlim
& =
\fullcontract{\star, \nlupdates} (\param - \paramlim)
- \frac{\step}{\nagent} \sum_{c=1}^\nagent \sum_{h=0}^{\nlupdates-1} \fullcontract{\star, \nlupdates    - h- 1} \left\{ \reste[c]{2,h}{\sdfedavgop[c]{\step,h}(\paramw; \randState[c][1:h])} + \updatefuncnoise[c]{\randState[c][h+1]}(\sdfedavgop[c]{\step,h}(\paramw; \randState[c][1:h])) \right\} 
\eqsp.
\end{align*}
Taking the second order moment of this equation, and using the fact that $\sdfedavgop[c]{\step,h+1}(\paramw; \randState[c][1:h+1])$ follows the same distribution as $\param$, we obtain
\begin{align*}
& \int \left( \param - \paramlim \right)^{\otimes 2}
\statdist{\step, \nlupdates} (\rmd \param)
\\
& \quad =
\int \left(
\fullcontract{\star, \nlupdates} (\param - \paramlim)
- \frac{\step}{\nagent} \sum_{c=1}^\nagent  \sum_{h=0}^{\nlupdates-1} \fullcontract{\star, \nlupdates    - h- 1} \left\{ \reste[c]{2,h}{\sdfedavgop[c]{\step,h}(\paramw; \randState[c][1:h])} + \updatefuncnoise[c]{\randState[c][h+1]}(\sdfedavgop[c]{\step,h}(\paramw; \randState[c][1:h])) \right\} 
\right)^{\otimes 2}
\statdist{\step, \nlupdates} (\rmd \param)
\\
& \quad =
\int \left(
\fullcontract{\star, \nlupdates} (\param - \paramlim)
\right)^{\otimes 2} 
\statdist{\step, \nlupdates} (\rmd \param)
\\
& \qquad -
\frac{\step}{\nagent} \sum_{c=1}^\nagent  
\int 
\left(
\fullcontract{\star, \nlupdates} (\param - \paramlim)
\right)
\otimes
\left(\sum_{h=0}^{\nlupdates-1} \fullcontract{\star, \nlupdates    - h- 1} \left\{ \reste[c]{2,h}{\sdfedavgop[c]{\step,h}(\paramw; \randState[c][1:h])} + \updatefuncnoise[c]{\randState[c][h+1]}(\sdfedavgop[c]{\step,h}(\paramw; \randState[c][1:h])) \right\} 
\right) 
\statdist{\step, \nlupdates} (\rmd \param)
\\
& \qquad -
\frac{\step}{\nagent} \sum_{c=1}^\nagent  
\int 
\left(\sum_{h=0}^{\nlupdates-1} \fullcontract{\star, \nlupdates    - h- 1} \left\{ \reste[c]{2,h}{\sdfedavgop[c]{\step,h}(\paramw; \randState[c][1:h])} + \updatefuncnoise[c]{\randState[c][h+1]}(\sdfedavgop[c]{\step,h}(\paramw; \randState[c][1:h])) \right\} 
\right)
\otimes 
\left(
\fullcontract{\star, \nlupdates} (\param - \paramlim)
\right)
\statdist{\step, \nlupdates} (\rmd \param)
\\ 
& \qquad +
\frac{\step^2}{\nagent^2}
\int \left(  \sum_{c=1}^\nagent \sum_{h=0}^{\nlupdates-1} \fullcontract{\star, \nlupdates   - h- 1} \left\{ \reste[c]{2,h}{\sdfedavgop[c]{\step,h}(\paramw; \randState[c][1:h])} + \updatefuncnoise[c]{\randState[c][h+1]}(\sdfedavgop[c]{\step,h}(\paramw; \randState[c][1:h])) \right\} 
\right)^{\otimes 2} 
\statdist{\step, \nlupdates} (\rmd \param)
\eqsp.
\end{align*}
Which gives, using Hölder inequality, \Cref{lem:bound-moments-homogeneous}, \Cref{assum:smooth-var}, the definition of $\restew[c]{2,h}$, the definition of $\covfunc$, and after taking the expectation,
\begin{align*}
\int  \left( \param - \paramlim \right)^{\otimes 2} 
\statdist{\step, \nlupdates} (\rmd \param)
& =
\fullcontract{\star, \nlupdates}
\int \left( \param - \paramlim \right)^{\otimes 2} 
\statdist{\step, \nlupdates} (\rmd \param)
\fullcontract{\star, \nlupdates} 
+
\frac{\step^2}{\nagent} \sum_{h=0}^{\nlupdates-1} 
\PE\covfunc(\sdfedavgop[c]{\step,h}(\paramw; \randState[c][1:h]))
+ O(\step^{5/2} \nlupdates)
\eqsp.
\end{align*}
Now, using \Cref{assum:smooth-var} and \Cref{lem:bound-moments-homogeneous}, we have $\PE\covfunc(\sdfedavgop[c]{\step,h}(\paramw; \randState[c][1:h])) = \covfunc(\paramlim) + O(\step)$, which results in the identity
\begin{align*}
\int \left( \param - \paramlim \right)^{\otimes 2} \statdist{\step,\nlupdates}(\rmd \param)
& =
\fullcontract{\star, \nlupdates}
\int \left( \param - \paramlim \right)^{\otimes 2} \statdist{\step,\nlupdates}(\rmd \param)
\fullcontract{\star, \nlupdates} 
+
\frac{\step^2 \nlupdates}{\nagent}
\covfunc(\paramlim)
+ O(\step^{5/2} \nlupdates)
\eqsp.
\end{align*}
We now use the fact that $\fullcontract{\star, \nlupdates} = \Id - \step \nlupdates \hnf{\paramlim} + O(\step^2 \nlupdates^2)$, which allows to rewrite
\begin{align*}
\int \left( \param - \paramlim \right)^{\otimes 2} \statdist{\step,\nlupdates}(\rmd \param)
& =
\left( \Id - \step \nlupdates \hnf{\paramlim} \right) 
\int \left( \param - \paramlim \right)^{\otimes 2} \statdist{\step,\nlupdates}(\rmd \param)
\left( \Id - \step \nlupdates \hnf{\paramlim} \right) 
\\
& \qquad 
+ \frac{\step^2 \nlupdates}{\nagent}
\covfunc(\paramlim)
+ O(\step^{5/2} \nlupdates)
+ O(\step^{3} \nlupdates^2)
\eqsp.
\end{align*}
Simplifying this expression, we obtain
\begin{align*}
\int \left( \param - \paramlim \right)^{\otimes 2} \statdist{\step,\nlupdates}(\rmd \param)
& =
\frac{\step}{\nagent}
\invopcov
\covfunc(\paramlim)
+ O(\step^{3/2})
+ O(\step^{2} \nlupdates)
\eqsp,
\end{align*}
where we recall that %
\begin{align*}
\invopcov = \left( 
\Id \otimes \hf{\paramlim}
+ \hf{\paramlim} \otimes \Id
\right)^{-1}
\eqsp,
\end{align*}
Plugging this expression in \eqref{eq:expansion-with-statvar}
\begin{align*}
\statdistlim{\step, \nlupdates} - \paramlim
& =
- \frac{\step}{2 \nagent} \hf{\paramlim}^{-1}
\hhf{\paramlim} 
\invopcov 
\covfunc(\paramlim)
+ O(\step^2 \nlupdates)
+ O(\step^{3/2})
\eqsp,
\end{align*}
which is the result
\end{proof}

\subsection{General Functions, with Heterogeneous Agents -- Proof of \Cref{thm:bias-var-heterogeneous}}
\label{sec:proof-heterogeneous}
When functions are not quadratic nor homogeneous, local iterates are inherently biased.
There are thus two sources of bias: heterogeneity, as in the quadratic case, and "iterate bias", that is due to stochasticity of gradients and the fact that derivatives of order greater than two are non zero.

To study this case, we define the following matrices, for $h = 0$ to $\nlupdates$, that will be central in the analysis
\begin{align*}
\contract[c]{\star,h} = \left( \Id - \step \hnf[c]{\paramlim} \right)^h
\eqsp.
\end{align*}
Note that, contrarily to the homogeneous setting, the $\contract[c]{\star, h}$'s differ from an agent to another. This will result in additional bias due to heterogeneity.
We now prove \Cref{thm:bias-var-heterogeneous}, that we restate here for readability. 
\biasfedavgheterogeneous*
\begin{proof}
\textbf{Expansion of Local Updates (Heterogeneous Case).}
We start by studying the local iterates of the algorithm.
Using a second-order Taylor expansion of the gradient of $\gnfw[c]$ at $\paramlim$, we have
\begin{align*}
\sdfedavgop[c]{\step,h+1}(\paramw; \randState[c][1:h+1])
& =
\gnf[c]{\paramlim}
+ \hnf[c]{\paramlim} (\sdfedavgop[c]{\step,h}(\paramw; \randState[c][1:h])) - \paramlim)
\\
& \quad
+ \frac{1}{2} \hhnf[c]{\paramlim} (\sdfedavgop[c]{\step,h}(\paramw; \randState[c][1:h])) - \paramlim)^{\otimes 2}
+ \reste[c]{3,h}{\sdfedavgop[c]{\step,h}(\paramw; \randState[c][1:h])}
\eqsp,
\end{align*}
where $\restew[c]{3}$ is a function that satisfies $\sup_{\param \in \rset^d} \left\{
\frac{\norm{ \reste[c]{3,h}{\param}}}{\norm{ \param - \paramlim }^3}
\right\}
< + \infty$.
We can use this expression to expand \FedAvg's recursion as
\begin{align*}
& \sdfedavgop[c]{\step,h+1}(\paramw; \randState[c][1:h+1]) - \paramlim
=
\sdfedavgop[c]{\step,h}(\paramw; \randState[c][1:h]) - \paramlim - \step \gnf[c]{\sdfedavgop[c]{\step,h}(\paramw; \randState[c][1:h])} - \step \updatefuncnoise[c]{\randState[c][h+1]}(\sdfedavgop[c]{\step,h}(\paramw; \randState[c][1:h]))
\\
& \quad =
\sdfedavgop[c]{\step,h}(\paramw; \randState[c][1:h]) - \paramlim
- \step \left(
\gnf[c]{\paramlim}
+ \hnf[c]{\paramlim} (\sdfedavgop[c]{\step,h}(\paramw; \randState[c][1:h])) - \paramlim) \right.
\\
& \qquad \qquad \qquad\qquad \quad
\left. + \frac{1}{2} \hhnf[c]{\paramlim} (\sdfedavgop[c]{\step,h}(\paramw; \randState[c][1:h])) - \paramlim)^{\otimes 2}
+ \reste[c]{3,h}{\sdfedavgop[c]{\step,h}(\paramw; \randState[c][1:h])}
\right)
- \step \updatefuncnoise[c]{\randState[c][h+1]}(\sdfedavgop[c]{\step,h}(\paramw; \randState[c][1:h])
\\
& \quad =
\left( \Id - \step \hnf[c]{\paramlim} \right) (\sdfedavgop[c]{\step,h}(\paramw; \randState[c][1:h])) - \paramlim)
- \step \gnf[c]{\paramlim}
\\
& \qquad \qquad \qquad \qquad \quad 
- \frac{\step}{2}\hhnf[c]{\paramlim} (\sdfedavgop[c]{\step,h}(\paramw; \randState[c][1:h])) - \paramlim)^{\otimes 2}
- \step \reste[c]{3,h}{\sdfedavgop[c]{\step,h}(\paramw; \randState[c][1:h])}
- \step \updatefuncnoise[c]{\randState[c][h+1]}(\sdfedavgop[c]{\step,h}(\paramw; \randState[c][1:h]))
\eqsp.
\end{align*}
Unrolling this recursion, we obtain
\begin{align}
\label{eq:het-local-unrol}
& \sdfedavgop[c]{\step,\nlupdates}(\paramw; \randState[c][1:\nlupdates]) - \paramlim
 =
\contract[c]{\star,\nlupdates} (\param - \paramlim)
- \step \sum_{h=0}^{\nlupdates-1} \contract[c]{\star,\nlupdates-h-1}
\left( 
\gnf[c]{\paramlim}
+ \frac{1}{2} \hhnf[c]{\paramlim} (\sdfedavgop[c]{\step,h}(\paramw; \randState[c][1:h])) - \paramlim)^{\otimes 2}
\right.
\\ \nonumber
& \qquad \qquad \qquad \qquad \qquad\qquad \qquad \qquad\qquad \qquad \qquad
\left.
+ \reste[c]{3,h}{\sdfedavgop[c]{\step,h}(\paramw; \randState[c][1:h])}
+ \updatefuncnoise[c]{\randState[c][h+1]}(\sdfedavgop[c]{\step,h}(\paramw; \randState[c][1:h])
\right)
\eqsp.
\end{align}

\textbf{Expansion of Global Updates (Heterogeneous Case).}
We start by summing \eqref{eq:het-local-unrol} over all agents
\begin{align*}
& \frac{1}{\nagent} \sum_{c=1}^\nagent
\paramw_{\nlupdates}^c - \paramlim
=
\fullcontract{\star,\nlupdates} (\param - \paramlim)
- \frac{\step}{\nagent} \sum_{c=1}^\nagent
\sum_{h=0}^{\nlupdates-1} \contract[c]{\star,\nlupdates-h-1}
\left( 
\gnf[c]{\paramlim}
+ \frac{1}{2} \hhnf[c]{\paramlim} \left( \sdfedavgop[c]{\step,h}
(\paramw; \randState[c][1:h]) - \paramlim \right)^{\otimes 2} 
\right.
\\ \nonumber
& \qquad \qquad \qquad \qquad \qquad\qquad \qquad \qquad\qquad \qquad\qquad \qquad\qquad \qquad\qquad \qquad \qquad
\left.
+ \reste[c]{3,h}{\sdfedavgop[c]{\step,h}(\paramw; \randState[c][1:h])}
\right)
\eqsp.
\end{align*}
Similarly to the homogeneous setting, we integrate over $\statdist{\step,\nlupdates}$, take the expectation and use the fact that $\frac{1}{\nagent} \sum_{c=1}^{\nagent} \paramw_\nlupdates^c$ follows the same distribution as $\param$, to obtain
\begin{align}
\label{eq:het-expect-before-simplif}
& (\Id - \fullcontract{\star,\nlupdates}) (\statdistlim{\step, \nlupdates} - \paramlim)
= 
- \frac{\step}{\nagent} 
 \sum_{c=1}^\nagent
\sum_{h=0}^{\nlupdates-1}
\contract[c]{\star,\nlupdates-h-1}
\gnf[c]{\paramlim}
\\ \nonumber
& \quad 
- \frac{\step}{2\nagent} 
 \sum_{c=1}^\nagent
\sum_{h=0}^{\nlupdates-1}
\contract[c]{\star,\nlupdates-h-1}
\hhnf[c]{\paramlim} \int  \left\{ \PE \left( \sdfedavgop[c]{\step,h}(\paramw; \randState[c][1:h]) - \paramlim \right)^{\otimes 2}
+ \PE \reste[c]{3,h}{\sdfedavgop[c]{\step,h}(\paramw; \randState[c][1:h])}
\right\} \statdist{\step, \nlupdates}(\rmd \paramw)
\eqsp.
\end{align}
Now we use \Cref{lem:product_coupling_lemma} to write
$- \step \sum_{h=0}^{\nlupdates-1} \contract[c]{\star,\nlupdates-h-1} = \left( \Id - \contract[c]{\star,\nlupdates} \right)
\hnf[c]{\paramlim}^{-1}$, and plug it in \eqref{eq:het-expect-before-simplif} to obtain
\begin{align}
\label{eq:het-decomp-bias}
& (\Id - \fullcontract{\star,\nlupdates}) \left( \statdistlim{\step, \nlupdates} - \paramlim \right)
=
\frac{1}{\nagent} 
\sum_{c=1}^\nagent
(\Id - \contract[c]{\star,\nlupdates})
\hnf[c]{\paramlim}^{-1}
\gnf[c]{\paramlim}
\\ \nonumber
& 
- \frac{\step}{2\nagent} 
\sum_{c=1}^\nagent
\sum_{h=0}^{\nlupdates-1}
\contract[c]{\star,\nlupdates-h-1}
\hhnf[c]{\paramlim} \int\!\!  \left(\! \PE \left( \sdfedavgop[c]{\step,h}(\paramw; \randState[c][1:h]) - \paramlim \right)^{\otimes 2}
+ \PE \reste[c]{3,h}{\sdfedavgop[c]{\step,h}(\paramw; \randState[c][1:h])}
\!\right) \statdist{\step, \nlupdates}(\rmd \paramw)
~.
\end{align}
Interestingly, \Cref{eq:het-decomp-bias} is composed of two terms.
The first term is due to heterogeneity, and is the same as in the quadratic setting. From \Cref{prop:bias-quadratic-sto}, we thus know that this term is of order $O(\step \nlupdates)$.
The second one reflects the bias of \FedAvg that is due to stochasticity of the gradients.

\textbf{Expansion of $\mathbf{\int  \left( \sdfedavgop[c]{\step,h}(\paramw; \randState[c][1:h]) - \paramlim \right)^{\otimes 2}  \statdist{\step, \nlupdates}(\rmd \paramw)}$ (Heterogeneous Case).}
We start with the following explicit expression of one round of the local updates
\begin{align*}
\sdfedavgop[c]{\step,h}(\paramw; \randState[c][1:h]) - \paramlim
& =
\param - \paramlim
- \step \sum_{\ell=0}^{h-1} \gnf[c]{\sdfedavgop[c]{\step,\ell}(\paramw; \randState[c][1:\ell])} + \updatefuncnoise[c]{\randState[c][\ell+1]}(\sdfedavgop[c]{\step,\ell}(\paramw; \randState[c][1:\ell]))
\eqsp.
\end{align*}
We use the first-order Taylor expansion of the gradient at $\paramlim$ to obtain
\begin{align*}
& \sdfedavgop[c]{\step,h}(\paramw; \randState[c][1:h]) - \paramlim
\\ 
& \quad =
\param - \paramlim
- \step \sum_{\ell=0}^{h-1}
\gnf[c]{\paramlim}
+ \hnf[c]{\paramlim} (\sdfedavgop[c]{\step,\ell}(\paramw; \randState[c][1:\ell]) - \paramlim)
+ \reste[c]{2,\ell}{\sdfedavgop[c]{\step,\ell}(\paramw; \randState[c][1:\ell])}
+ \updatefuncnoise[c]{\randState[c][\ell+1]}(\sdfedavgop[c]{\step,\ell}(\paramw; \randState[c][1:\ell]))
\eqsp,
\end{align*}
where $\restew[c]{2,\ell}: \rset^d \rightarrow \rset^d$ is a function such that $\sup_{\varparam \in \rset^d} \norm{ \reste[c]{2,\ell}(\varparam) } / \norm{ \varparam - \paramlim }^2 < + \infty$.
Expanding the square of this equation, integrating over $\statdist{\step,\nlupdates}$ and taking the expectation, we get
\begin{align*}
& \int \PE \left( \sdfedavgop[c]{\step,h}(\paramw; \randState[c][1:h]) - \paramlim \right)^{\otimes 2} \statdist{\step, \nlupdates}(\rmd \paramw)
=
\int \left( \param - \paramlim \right)^{\otimes 2}  \statdist{\step, \nlupdates}(\rmd \paramw)
\\ \nonumber
& 
- \step
\int
 \left( \param - \paramlim \right) \otimes \left( \sum_{\ell=0}^{h-1} 
\gnf[c]{\paramlim} 
+ \hnf[c]{\paramlim} (\PE\sdfedavgop[c]{\step,\ell}(\paramw; \randState[c][1:\ell]) - \paramlim)
+ \PE\reste[c]{2,\ell}{\sdfedavgop[c]{\step,\ell}(\paramw; \randState[c][1:\ell])}
\right)  \statdist{\step, \nlupdates}(\rmd \paramw)
\\ \nonumber
& 
- \step 
\int
\left( \sum_{\ell=0}^{h-1} 
\gnf[c]{\paramlim} 
+ \hnf[c]{\paramlim} (\PE\sdfedavgop[c]{\step,\ell}(\paramw; \randState[c][1:\ell]) - \paramlim)
+ \PE\reste[c]{2,\ell}{\sdfedavgop[c]{\step,\ell}(\paramw; \randState[c][1:\ell])}
\right) 
\otimes 
(\param - \paramlim)
 \statdist{\step, \nlupdates}(\rmd \paramw)
\\ \nonumber
& 
\!+\! \step^2\! \!\int \!\!\PE\left( \sum_{\ell=0}^{h-1} 
\gnf[c]{\paramlim} 
\!+\! \hnf[c]{\paramlim} (\sdfedavgop[c]{\step,\ell}(\paramw; \randState[c][1:\ell]) - \paramlim)
\!+\! \reste[c]{2,\ell}{\sdfedavgop[c]{\step,\ell}(\paramw; \randState[c][1:\ell])}
\!+\! \updatefuncnoise[c]{\randState[c][\ell+1]}\!\!(\sdfedavgop[c]{\step,\ell}\!(\paramw; \randState[c][1:\ell]))
\!\right)^{\otimes 2} \!\!\!\!\!\! \statdist{\step, \nlupdates}(\rmd \paramw)
\,.
\end{align*}
From this expansion, Hölder inequality, the definition of $\restew[c]{2,\ell}$, \Cref{assum:smooth-var} and \Cref{lem:bound-moments-heterogeneous}, we obtain
\begin{align}    
\label{eq:expansion-paramh-heter}
\int \PE\left( \sdfedavgop[c]{\step,h}(\paramw; \randState[c][1:h]) - \paramlim \right)^{\otimes 2} \statdist{\step, \nlupdates}(\rmd \paramw)
& =
\int \left( \param - \paramlim \right)^{\otimes 2} \statdist{\step, \nlupdates}(\rmd \paramw)
+ O( \step^{3/2} \nlupdates + \step^2 \nlupdates^2 )
\eqsp.
\end{align}

\textbf{Expression of the Global Update (Heterogeneous Case).}
Plugging \eqref{eq:expansion-paramh-heter} in \eqref{eq:het-decomp-bias}, using \Cref{lem:bound-moments-heterogeneous} to bound $\int \reste[c]{3,h}{\sdfedavgop[c]{\step,h}(\paramw; \randState[c][1:h])} \statdist{\step, \nlupdates} (\rmd \paramw) = O(\step^{3/2} h^{3/2})$, and expanding the first term of \eqref{eq:het-decomp-bias} as in the quadratic setting (see \Cref{cor:exp-quadratic-sto}), we now obtain
\begin{align*}
\statdistlim{\step, \nlupdates} - \paramlim
& =
\frac{\step(\nlupdates-1)}{2\nagent} 
\hf{\paramlim}^{-1} 
\sum_{c=1}^\nagent 
(\hnf[c]{\paramlim} - \hf{\paramlim})
\gnf[c]{\paramlim}
+ O(\step^2 \nlupdates^2)
\\ \nonumber
& 
- \frac{\step}{2 \nagent} 
(\Id - \fullcontract{\star,\nlupdates})^{-1} \sum_{c=1}^\nagent
\sum_{h=0}^{\nlupdates-1}
\contract[c]{\star,\nlupdates-h-1}
\hhnf[c]{\paramlim} \int \left( \param - \paramlim \right)^{\otimes 2} \statdist{\step, \nlupdates} (\rmd \paramw)
+ O( \step^{3/2} \nlupdates + \step^2 \nlupdates^2 ) 
\eqsp.
\end{align*}
Use \Cref{lem:product_coupling_lemma}, that is,
$- \step \sum_{h=0}^{\nlupdates-1} \contract[c]{\star,\nlupdates-h-1} = - \left( \Id - \contract[c]{\star,\nlupdates} \right)
\hnf[c]{\paramlim}^{-1}$, again, we obtain
\begin{align}
\label{eq:het-decomp-bias-same-square-fact}
\statdistlim{\step, \nlupdates} - \paramlim
& =
\frac{\step(\nlupdates-1)}{2\nagent} 
\hf{\paramlim}^{-1} 
\sum_{c=1}^\nagent 
(\hnf[c]{\paramlim} - \hf{\paramlim})
\gnf[c]{\paramlim}
\\ \nonumber
& \quad
- \frac{1}{2 \nagent} 
\hf{\paramlim}^{-1}
\hhf{\paramlim}  \int \left( \param - \paramlim \right)^{\otimes 2} \statdist{\step, \nlupdates} (\rmd \paramw)
+ O( \step^{3/2} \nlupdates + \step^2 \nlupdates^2 )
\eqsp.
\end{align}

\textbf{Expansion of the Variance (Heterogeneous Case).}
To bound $ \int \left( \param - \paramlim \right)^{\otimes 2} \statdist{\step, \nlupdates} (\rmd \paramw)$, we proceed as above but with one less term in the expansion, and study the square.
We get
\begin{align*}
& \sdfedavgop[c]{\step,h+1}(\paramw; \randState[c][1:h+1]) - \paramlim
\\
& =
\left( \Id - \step \hnf[c]{\paramlim} \right) (\sdfedavgop[c]{\step,h}(\paramw; \randState[c][1:h])) - \paramlim)
- \step \gnf[c]{\paramlim}
- \step \reste[c]{2,h}{\sdfedavgop[c]{\step,h}(\paramw; \randState[c][1:h])}
- \step \updatefuncnoise[c]{\randState[c][h+1]}(\sdfedavgop[c]{\step,h}(\paramw; \randState[c][1:h])
\eqsp.
\end{align*}
Unrolling this recursion and averaging over all agents, we get
\begin{align*}
\sdfedavgop[c]{\step,\nlupdates}(\paramw; \randState[1:\nagent][1:\nlupdates]) - \paramlim
& =
\fullcontract{\star, \nlupdates} (\param - \paramlim)
\\
& \qquad
- \frac{\step}{\nagent} \sum_{c=1}^\nagent \sum_{h=0}^{\nlupdates-1} \contract[c]{\star, \nlupdates    - h- 1} \left\{ \gnf[c]{\paramlim} + \reste[c]{2,h}{\sdfedavgop[c]{\step,h}(\paramw; \randState[c][1:h])} + \updatefuncnoise[c]{\randState[c][h+1]}(\sdfedavgop[c]{\step,h}(\paramw; \randState[c][1:h])) \right\} 
\eqsp.
\end{align*}
Taking the second order moment of this equation, using the fact that $\frac{1}{\nagent} \sum_{c=1}^\nagent 
\paramw_{\nlupdates}^c$ follows the same distribution as $\param$, and integrating over $\statdist{\step,\nlupdates}$, we obtain
\begin{align*}
& \int \left( \param - \paramlim \right)^{\otimes 2} \statdist{\step, \nlupdates} (\rmd \paramw)
\\
& =\!\!
\int\! \!\left(\!
\fullcontract{\star, \nlupdates} (\param \!-\! \paramlim)
\!-\! \frac{\step}{\nagent} \sum_{c=1}^\nagent  \sum_{h=0}^{\nlupdates-1} \contract[c]{\star, \nlupdates    - h- 1} \left\{
\gnf[c]{\paramlim} 
+ \reste[c]{2,h}{\sdfedavgop[c]{\step,h}(\paramw; \randState[c][1:h])} 
\!+\! \updatefuncnoise[c]{\randState[c][h+1]}(\sdfedavgop[c]{\step,h}(\paramw; \randState[c][1:h])) \right\} 
\right)^{\otimes 2} \!\!\!\!\statdist{\step, \nlupdates} (\rmd \paramw)
\\
& =
\fullcontract{\star, \nlupdates} \int \left( \param - \paramlim
\right)^{\otimes 2} \statdist{\step, \nlupdates} (\rmd \paramw)
\fullcontract{\star, \nlupdates}
\\
& 
- \!\step
\!\int\!\! \left(
\fullcontract{\star, \nlupdates} (\param \!-\! \paramlim)
\right)
\!\otimes\!
\left(
\!\frac{1}{\nagent} \sum_{c=1}^\nagent  \sum_{h=0}^{\nlupdates-1} \contract[c]{\star, \nlupdates    - h- 1} \!\left\{ \gnf[c]{\paramlim} \!+\! \reste[c]{2,h}{\sdfedavgop[c]{\step,h}(\paramw; \randState[c][1:h])}\! +\! \updatefuncnoise[c]{\randState[c][h+1]}\!\!(\sdfedavgop[c]{\step,h}(\paramw; \randState[c][1:h])) \!\right\} 
\!\!\right) \statdist{\step, \nlupdates} (\rmd \paramw)
\\
& 
-\! \step
\!\int\!\!
\left(
\frac{1}{\nagent}\! \sum_{c=1}^\nagent \! \sum_{h=0}^{\nlupdates-1} \contract[c]{\star, \nlupdates    - h- 1} \!\left\{ \gnf[c]{\paramlim} \!+\! \reste[c]{2,h}{\sdfedavgop[c]{\step,h}(\paramw; \randState[c][1:h])} \!+\! \updatefuncnoise[c]{\randState[c][h+1]}\!(\sdfedavgop[c]{\step,h}(\paramw; \randState[c][1:h])) \right\} \!\!
\right)
\!\otimes\! 
\left(
\fullcontract{\star, \nlupdates} (\param \!-\! \paramlim)
\right)
\statdist{\step, \nlupdates} (\rmd \paramw)
\\ 
&
+
\step^2
\int \left( 
\frac{1}{\nagent}  \sum_{c=1}^\nagent \sum_{h=0}^{\nlupdates-1} \contract[c]{\star, \nlupdates - h- 1} \left\{ \gnf[c]{\paramlim} + \reste[c]{2,h}{\sdfedavgop[c]{\step,h}(\paramw; \randState[c][1:h])} + \updatefuncnoise[c]{\randState[c][h+1]}(\sdfedavgop[c]{\step,h}(\paramw; \randState[c][1:h])) \right\} 
\right)^{\otimes 2} \statdist{\step, \nlupdates} (\rmd \paramw)
\eqsp.
\end{align*}
Now, we expand $\contract[c]{\star, \nlupdates - h- 1}$ and use the fact that $\frac{1}{\nagent} \sum_{c=1}^\nagent \gnf[c]{\paramlim} = 0$, which gives
\begin{align*}
\frac{1}{\nagent}  \sum_{c=1}^\nagent \sum_{h=0}^{\nlupdates-1} \contract[c]{\star, \nlupdates - h- 1} \gnf[c]{\paramlim}
& =
\frac{1}{\nagent}  \sum_{c=1}^\nagent \sum_{h=0}^{\nlupdates-1} \gnf[c]{\paramlim}
- \step \nlupdates \hnf[c]{\paramlim} \gnf[c]{\paramlim}
+ O(\step^2 \nlupdates^2)
\\
& =
\frac{1}{\nagent}  \sum_{c=1}^\nagent \sum_{h=0}^{\nlupdates-1} 
- \step \nlupdates \hnf[c]{\paramlim} \gnf[c]{\paramlim}
+ O(\step^2 \nlupdates^2)
\eqsp,
\end{align*}
which, since $\step \nlupdates = O(1)$, implies that 
\begin{align*}
& \frac{1}{\nagent}  \sum_{c=1}^\nagent \sum_{h=0}^{\nlupdates-1} \contract[c]{\star, \nlupdates - h- 1} \gnf[c]{\paramlim}
= O(\step \nlupdates^2)%
\eqsp,
\eqsp
\text{ and }
\eqsp
\left( \frac{1}{\nagent}  \sum_{c=1}^\nagent \sum_{h=0}^{\nlupdates-1} \contract[c]{\star, \nlupdates - h- 1} \gnf[c]{\paramlim} \right)^{\otimes 2} 
= O(\step^2 \nlupdates^4)
\eqsp.
\end{align*}
Combining the expansions above with Hölder inequality, the definition of $\restew[c]{2,\ell}$, \Cref{assum:smooth-var} and \Cref{lem:bound-moments-heterogeneous}, we obtain
\begin{align*}
& \int \left( \param - \paramlim \right)^{\otimes 2}  \statdist{\step, \nlupdates} (\rmd \paramw)
=
\fullcontract{\star, \nlupdates}
\int \left( \param - \paramlim \right)^{\otimes 2}  \statdist{\step, \nlupdates} (\rmd \paramw)
\fullcontract{\star, \nlupdates} 
\\
& \qquad\qquad\qquad
+
\frac{\step^2}{\nagent} \sum_{h=0}^{\nlupdates-1} 
\int \PE\left[ \frac{1}{\nagent} \sum_{c=1}^\nagent \updatefuncnoise[c]{\randState[c][h+1]}(\sdfedavgop[c]{\step,h}(\paramw; \randState[c][1:h]))^{\otimes 2} \right] \statdist{\step, \nlupdates} (\rmd \paramw)
+ O(\step^{3} \nlupdates^3)
+ O(\step^{5/2} \nlupdates^{2})
\\
& =
\fullcontract{\star, \nlupdates}
\int \left( \param - \paramlim \right)^{\otimes 2}  \statdist{\step, \nlupdates} (\rmd \paramw)
\fullcontract{\star, \nlupdates} 
+
\frac{\step^2}{\nagent}\sum_{h=0}^{\nlupdates-1}  \int \covfunc(\sdfedavgop[c]{\step,h}(\paramw; \randState[c][1:h])) \statdist{\step, \nlupdates} (\rmd \paramw)
+ O(\step^{3} \nlupdates^3)
+ O(\step^{5/2} \nlupdates^{2})
\eqsp.
\end{align*}
Now, using \Cref{assum:smooth-var} and \Cref{lem:bound-moments-heterogeneous} we have $ \int \covfunc(\sdfedavgop[c]{\step,h}(\paramw; \randState[c][1:h])) \statdist{\step, \nlupdates} (\rmd \paramw) = \covfunc(\paramlim) + O(\step\nlupdates)$, which results in the identity
\begin{align*}
\int \left( \param - \paramlim \right)^{\otimes 2} \statdist{\step, \nlupdates} (\rmd \paramw)
& =
\fullcontract{\star, \nlupdates}
\int \left( \param - \paramlim \right)^{\otimes 2} \statdist{\step, \nlupdates} (\rmd \paramw)
\fullcontract{\star, \nlupdates} 
+
\frac{\step^2 \nlupdates}{\nagent}
\covfunc(\paramlim)
+ O(\step^{3} \nlupdates^3)
+ O(\step^{5/2} \nlupdates^{2})
\eqsp.
\end{align*}
We now use the fact that $\fullcontract{\star, \nlupdates} = \Id - \step \nlupdates \hf{\paramlim} + O(\step^2 \nlupdates^2)$, which allows to rewrite
\begin{align*}
\int \left( \param - \paramlim \right)^{\otimes 2} \statdist{\step, \nlupdates} (\rmd \paramw)
& =
\left( \Id - \step \nlupdates \hf{\paramlim} \right)
\int \left( \param - \paramlim \right)^{\otimes 2} \statdist{\step, \nlupdates} (\rmd \paramw)
\left( \Id - \step \nlupdates \hf{\paramlim} \right) 
\\
& \qquad 
+ \frac{\step^2 \nlupdates}{\nagent}
\covfunc(\paramlim)
+ O(\step^{3} \nlupdates^3)
+ O(\step^{5/2} \nlupdates^{2})
\eqsp.
\end{align*}
Developing this expression and using \Cref{lem:bound-moments-heterogeneous}, we get
\begin{align*}
\int \left( \param - \paramlim \right)^{\otimes 2} \statdist{\step, \nlupdates} (\rmd \paramw)
& =
\int \left( \param - \paramlim \right)^{\otimes 2} \statdist{\step, \nlupdates} (\rmd \paramw)
\\
& \qquad 
- \step \nlupdates \hf{\paramlim} \int \left( \param - \paramlim \right)^{\otimes 2} \statdist{\step, \nlupdates} (\rmd \paramw)
- \step \nlupdates \int \left( \param - \paramlim \right)^{\otimes 2} \statdist{\step, \nlupdates} (\rmd \paramw) \hf{\paramlim} 
\\
& \qquad 
+ \frac{\step^2 \nlupdates}{\nagent}
\covfunc(\paramlim)
+ O(\step^{3} \nlupdates^3)
+ O(\step^{5/2} \nlupdates^{2})
\eqsp.
\end{align*}
Simplifying this expression, we obtain
\begin{align*}
\int \left( \param - \paramlim \right)^{\otimes 2} \statdist{\step, \nlupdates} (\rmd \paramw)
& =
\frac{\step}{\nagent}
\invopcov
\covfunc(\paramlim)
+ O(\step^{2} \nlupdates^2)
+ O(\step^{3/2} \nlupdates)
\eqsp,
\end{align*}
where we recall that
\begin{align*}
\invopcov = \left( 
\Id \otimes \hf{\paramlim}
+ \hf{\paramlim} \otimes \Id
\right)^{-1}
\eqsp,
\end{align*}
Plugging this expression in \eqref{eq:het-decomp-bias-same-square-fact}, we obtain
\begin{align*}
\statdistlim{\step, \nlupdates} - \paramlim
& =
\frac{\step(\nlupdates-1)}{2\nagent} 
\hf{\paramlim}^{-1} 
\sum_{c=1}^\nagent 
(\hnf[c]{\paramlim} - \hf{\paramlim})
\gnf[c]{\paramlim}
\\ \nonumber
& \quad
- \frac{\step}{2 \nagent} 
\hf{\paramlim}^{-1}
\hhf{\paramlim}  
\invopcov \covfunc(\paramlim)
+ O(\step^{2} \nlupdates^2)
+ O(\step^{3/2} \nlupdates)
\eqsp,
\end{align*}
which is the result of the theorem.
\end{proof}

\section{Analysis of Federated Richardson-Romberg Extrapolation}

\subsection{Convergence of Richardson-Romberg Iterates -- Proof of \Cref{thm:RR-non-avg}}
\label{sec:proof-cv-rr}

\richardsonrombergconvergenceiterates*

\begin{proof}

\textbf{Bound on the bias.}
Recall that the iterates of \FedAvg with Richardson-Romberg extrapolation are
\begin{align*}
\vartheta_t^{(\step,H)}
& =
2 \param[t]^{(\step,\nlupdates)} - \param[t]^{(2\step,\nlupdates)}
\eqsp,
\end{align*}
where $ \param[t]^{(\step)}$ are \FedAvg's iterates with step size $\step$ and $\param[t]^{(2 \step)}$ are \FedAvg's iterates with step $2 \step$.
By \Cref{thm:bias-var-heterogeneous}, we have that
\begin{align}
\label{eq:proof-bias-rr-one}
\statdistlim{\step, \nlupdates} \!-\! \paramlim
& = \frac{\step}{2 \nagent}
\biasSto + \frac{\step(\nlupdates-1)}{2}  
\biasHetero
+ O(\step^{2} \nlupdates^2 + \step^{{3}/{2}} \nlupdates)
\eqsp,
\\
\label{eq:proof-bias-rr-two}
\statdistlim{2\step, \nlupdates} \!-\! \paramlim
& = \frac{2\step}{2 \nagent}
\biasSto + \frac{2\step(\nlupdates-1)}{2}  
\biasHetero
+ O(\step^{2} \nlupdates^2 + \step^{{3}/{2}} \nlupdates)
\eqsp.
\end{align}
Multiplying \eqref{eq:proof-bias-rr-one} by two and subtracting \eqref{eq:proof-bias-rr-two}, we obtain the first part of the theorem.

\textbf{Communication complexity.}
To bound the number of required communications, we decompose the error as
\begin{align*}
\vartheta_t^{(\step,H)} - \paramlim 
& = 
2 \param[t]^{(\step,\nlupdates)} - \param[t]^{(2 \step, \nlupdates)} - \paramlim
\\
& = 
2 \param[t]^{(\step,\nlupdates)} - 2 \statdistlim{\step, \nlupdates}
- \param[t]^{(2 \step, \nlupdates)} + \statdistlim{2 \step, \nlupdates}
- \paramlim + 2 \statdistlim{\step, \nlupdates} - \statdistlim{2 \step, \nlupdates}
\\
& = 
2 \param[t]^{(\step,\nlupdates)} - 2 \statdistlim{\step, \nlupdates}
- \param[t]^{(2 \step, \nlupdates)} + \statdistlim{2 \step, \nlupdates}
- \paramlim + \statdistlimv{\step,H}
\eqsp.
\end{align*}
Using Jensen's inequality, we thus obtain the following bound on the squared error,
\begin{align}
\norm{ \vartheta_t^{(\step,H)} - \paramlim }^2 
& \le
3 \norm{ 2 \param[t]^{(\step)} - 2 \statdistlim{\step, \nlupdates} }^2
+ 3 \norm{ \param[t]^{(2\step)} - \statdistlim{2\step, \nlupdates} }^2
+ 3 \norm{\statdistlimv{\step,\nlupdates} - \paramlim}^2
\eqsp.
\end{align}
By \Cref{prop:conv-neighborhood-stat-dist}, we can bound the first two terms as
\begin{align}
\PE\left[ \norm{ \param[t]^{(2 \step)} - \statdistlim{2 \step, \nlupdates} }^2 \right]
& \le
(1 - 2 \step \strcvx)^{\nlupdates t} \Big\{ 4 \norm{ \param[0] - \paramlim }^2
+ \frac{24 \nlupdates^2 \step^2 \explip^2 \heterboundgrad^2}{\strcvx^2} 
+ \frac{32 \step}{\strcvx} \Msmoothcstvar^{2} \Big\}
    + \frac{8 \step}{\strcvx} \Msmoothcstvarbis^2
\eqsp,
\\
\PE\left[ \norm{ 2 \param[t]^{(\step)} - 2 \statdistlim{\step, \nlupdates} }^2 \right]
& \le
(1 - \step \strcvx)^{\nlupdates t} \Big\{ 16 \norm{ \param[0] - \paramlim }^2
+ \frac{96 \nlupdates^2 \step^2 \explip^2 \heterboundgrad^2}{\strcvx^2} 
+ \frac{128 \step}{\strcvx} \Msmoothcstvar^{2} \Big\}
    + \frac{32 \step}{\strcvx} \Msmoothcstvarbis^2
\eqsp.
\end{align}
By \Cref{thm:bias-var-heterogeneous}, we have
\begin{align}
\norm{\statdistlimv{\step,\nlupdates} - \paramlim}^2
= O(\step^4 \nlupdates^4 \!+\! \step^{3} \nlupdates^2)
\eqsp.
\end{align}
Thus, the iterates of \FedAvg with Richardson-Romberg extrapolation (without averaging) satisfy
\begin{align}
\PE[ \norm{ \vartheta_t^{(\step,H)} - \paramlim }^2 ]
=
O\left( 
(1 - \step \strcvx)^{\nlupdates t}  \Big\{ \norm{ \param[0]  \!-\! \paramlim }^2 + \frac{\nlupdates^2 \step^2 \explip^2 \heterboundgrad^2}{\strcvx^2} 
+ \frac{\step}{\strcvx} \Msmoothcstvar^{2} \Big\} 
+ \step^4 \nlupdates^4 + \step^{3} \nlupdates^2
+ \frac{\step}{\mu} \Msmoothcstvarbis^{1/2}
\right)
\eqsp.
\end{align}
To obtain $\PE[\norm{\param[t] - \paramlim}^2] = O(\epsilon^2)$, we require
\begin{align}
\step = O(\epsilon^2)
\eqsp,\qquad
\step^4 \nlupdates^4 = O(\epsilon^2)
\eqsp,\qquad
\step^3 \nlupdates^2 = O(\epsilon^2)
\eqsp,\qquad
T = O\left(\frac{1}{\step \strcvx \nlupdates} \log\left(\frac{1}{\epsilon}\right)\right)
\eqsp.
\end{align}
Thus, we require $\nlupdates^4 = O(1/\epsilon^{6})$ and $\nlupdates^3 = O(1/\epsilon^4)$, which necessitates $\nlupdates = O(1/\epsilon^{4/3})$, which yields $\step \nlupdates = O(\epsilon^{2/3})$.
As a result, the required number of communication to reach mean squared error of order $O(\epsilon^2)$ is
\begin{align}
    T = O\left(\frac{1}{\epsilon^{2/3}} \log\left(\frac{1}{\epsilon}\right)\right)
    \eqsp,
\end{align}
which gives the second part of the result.
\end{proof}

\subsection{Averaged Richardson-Romberg Iterates -- Proof of \Cref{theo:RR-bias}}
\label{sec:proof-cv-averaged-rr}
Finally, we prove the following theorem.
\richardsonrombergconvergence*
\begin{proof}
The only statement to show is that under our assumptions, the iterates $\smash{\{ \bar{\theta}_T^{(\step,\nlupdates)} \}_{T \ge 1}}$ defined as
\begin{align*}
 \bar{\theta}_T^{(\step,\nlupdates)} = \frac{1}{T} \sum_{t=0}^{T-1} \theta_t^{(\step,H)} \eqsp,
\end{align*}
converge in $\mrl^2$ to $\statdistlim{\step, \nlupdates}$.
This is a consequence of \cite[Theorem 8]{durmus2024probability} whose assumptions are satisfied by \Cref{lem:bound-moments-homogeneous} and \Cref{prop:conv-stat-dist}.

Then, the identity $\statdistlimv{\step,H} - \paramlim
=
O(\step^2 \nlupdates^2 + \step^{3/2} \nlupdates)$ follows from \Cref{thm:RR-non-avg}.
\end{proof}

\section{Technical Lemma on Matrix Products}
\begin{lemma}
\label{lem:product_coupling_lemma}
For any matrix-valued sequences $(M_k)_{k \in \nset}$, $(M'_k)_{k\in \nset}$ and for any $K \in \nset$, it holds that:
\begin{equation*}
  \prod_{k=1}^K M_k - \prod_{k=1}^K M'_k
  = \sum_{k=1}^K \left\{\prod_{\ell=1}^{k-1} M_\ell \right\} \big(M_k - M'_k\big) \left\{\prod_{\ell=k+1}^M M'_\ell \right\} \eqsp .
\end{equation*}
\end{lemma}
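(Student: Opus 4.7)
The statement is the classical telescoping identity for a difference of two matrix products, so the plan is to prove it by a direct telescoping argument (no hard estimates are needed). The key trick is to interpolate between the two products by swapping one factor at a time.

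Concretely, I will introduce the mixed partial products
\begin{equation*}
  Q_k \;=\; \left\{\prod_{\ell=1}^{k} M_\ell\right\}\left\{\prod_{\ell=k+1}^{K} M'_\ell\right\}, \qquad k \in \{0,1,\dots,K\},
\end{equation*}
so that by construction $Q_0 = \prod_{\ell=1}^{K} M'_\ell$ and $Q_K = \prod_{\ell=1}^{K} M_\ell$. The whole proof then reduces to writing the difference of products as the telescoping sum
\begin{equation*}
  \prod_{k=1}^K M_k - \prod_{k=1}^K M'_k \;=\; Q_K - Q_0 \;=\; \sum_{k=1}^{K} (Q_k - Q_{k-1}).
\end{equation*}

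The only step to check is that each summand $Q_k - Q_{k-1}$ has the required form. This follows because $Q_k$ and $Q_{k-1}$ share the same left factor $\prod_{\ell=1}^{k-1} M_\ell$ and the same right factor $\prod_{\ell=k+1}^{K} M'_\ell$, and differ only in the $k$-th slot, where $Q_k$ carries $M_k$ and $Q_{k-1}$ carries $M'_k$. Factoring out the common left and right blocks yields
\begin{equation*}
  Q_k - Q_{k-1} \;=\; \left\{\prod_{\ell=1}^{k-1} M_\ell\right\}\bigl(M_k - M'_k\bigr)\left\{\prod_{\ell=k+1}^{K} M'_\ell\right\},
\end{equation*}
which is exactly the $k$-th term on the right-hand side of the lemma. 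Summing over $k$ concludes the proof.

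There is no real obstacle here; the only care needed is to handle the boundary cases $k=1$ and $k=K$ by adopting the standard convention that an empty product equals the identity matrix, which makes the expressions $\prod_{\ell=1}^{0} M_\ell = \Id$ and $\prod_{\ell=K+1}^{K} M'_\ell = \Id$ well defined. If preferred, the same identity can alternatively be established by induction on $K$: the base case $K=1$ is the trivial identity $M_1 - M'_1 = M_1 - M'_1$, and the inductive step uses the decomposition $\prod_{k=1}^{K+1} M_k - \prod_{k=1}^{K+1} M'_k = \bigl(\prod_{k=1}^{K} M_k\bigr)(M_{K+1} - M'_{K+1}) + \bigl(\prod_{k=1}^{K} M_k - \prod_{k=1}^{K} M'_k\bigr) M'_{K+1}$ followed by the induction hypothesis on the second term.
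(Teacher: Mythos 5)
Your proof is correct: the telescoping interpolation via the mixed products $Q_k$ is the standard argument for this identity, and the paper in fact states \Cref{lem:product_coupling_lemma} without any proof, so there is no authorial argument to compare against. One caveat: the paper's notation section defines $\prod_{\ell=1}^k M_\ell = M_k M_{k-1}\cdots M_1$ (decreasing index order), whereas your factorization of $Q_k - Q_{k-1}$ implicitly uses the increasing-order convention $M_1\cdots M_k$; with the paper's reversed convention the two blocks in each summand must be swapped (the product over $\ell>k$ of the $M'_\ell$ appears on the left and the product over $\ell<k$ of the $M_\ell$ on the right, or one relabels accordingly) for the identity to hold for non-commuting matrices, and the upper limit $M$ in the last product of the statement is a typo for $K$. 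Modulo this purely notational adjustment, your argument, including the empty-product convention at $k=1$ and $k=K$ and the optional induction variant, is complete.
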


\end{document}